%%%%%%%% ICML 2026 EXAMPLE LATEX SUBMISSION FILE %%%%%%%%%%%%%%%%%

\documentclass{article}

% Recommended, but optional, packages for figures and better typesetting:
\usepackage{microtype}
\usepackage{graphicx}
\usepackage{subcaption}
\usepackage{booktabs} % for professional tables

% hyperref makes hyperlinks in the resulting PDF.
% If your build breaks (sometimes temporarily if a hyperlink spans a page)
% please comment out the following usepackage line and replace
% \usepackage{icml2026} with \usepackage[nohyperref]{icml2026} above.
\usepackage{hyperref}

% Attempt to make hyperref and algorithmic work together better:

% Use the following line for the initial blind version submitted for review:
%\usepackage{icml2026}

% For preprint, use
%\usepackage[preprint]{icml2026}

% If accepted, instead use the following line for the camera-ready submission:
\usepackage[accepted]{icml2026}

\usepackage{amsmath}
\usepackage{amssymb}
\usepackage{mathtools}
\usepackage{amsthm}
\usepackage{multirow}

% if you use cleveref..
\usepackage[capitalize,noabbrev]{cleveref}

%%%%%%%%%%%%%%%%%%%%%%%%%%%%%%%%
% THEOREMS
%%%%%%%%%%%%%%%%%%%%%%%%%%%%%%%%
\theoremstyle{plain}
\newtheorem{theorem}{Theorem}[section]
\newtheorem{proposition}[theorem]{Proposition}
\newtheorem{lemma}[theorem]{Lemma}
\newtheorem{corollary}[theorem]{Corollary}
\theoremstyle{definition}

\theoremstyle{remark}
\newtheorem{remark}[theorem]{Remark}

% Todonotes is useful during development; simply uncomment the next line
%    and comment out the line below the next line to turn off comments
%\usepackage[disable,textsize=tiny]{todonotes}
\usepackage[textsize=tiny]{todonotes}

%%%%%%%%%%%%%%%%%%%%%%%%%%%%%%%%%%%%%%%%%%%%%%%%%%%%%%%%%%%%%%%%%%%%%%%%%%%%%%%%%%%%%%%%%%%%%%%%%%%%%%%%%%%%%%%%%%%%%%%%%%%%%%%%%%%%%%%%%%%%%%%%%%%%%%%%%%%%%%%%%%
%%%%%%%%%%%%%%%%%%%%%%%%%%%% Own definitions %%%%%%%%%%%%%%%%%%%%%%%%
%%%%%%%%%%%%%%%%%%%%%%%%%%%%%%%%%%%%%%%%%%%%%%%%%%%%%%%%%%%%%%%%%%%%%%%%%%%%%%%%%%%%%%%%%%%%%%%%%%%%%%%%%%%%%%%%%%%%%%%%%%%%%%%%%%%%%%%%%%%%%%%%%%%%%%%%%%%%%%%%%%
%\usepackage{graphicx}  
%\usepackage{subfigure}
\newcommand{\E}{\mathbb{E}}
\newcommand{\A}{\mathcal{A}}
\newcommand{\M}{\mathcal{M}}
        
\renewcommand{\P}{\mathbb{P}}
\newcommand{\D}{\mathcal{D}}
\newcommand{\B}{\mathcal{B}}
\newcommand{\1}{\mathbf{1}}
\newcommand{\R}{\mathbb{R}}
\newcommand{\Risk}{\mathcal{R}}
\DeclareMathOperator*{\argmin}{argmin}

\DeclareMathOperator{\essinf}{ess\,inf}

\newcommand{\dd}{\mathrm d}
\newcommand\abs[1]{\left\lvert#1\right\rvert}
\DeclareMathOperator{\LV}{LV}
\DeclareMathOperator{\TV}{TV}
\DeclareMathOperator{\KL}{KL}
% --- Algorithm name macros ---
% \newcommand{\lvbas}{\ensuremath{\mathrm{LV\text{-}BAS}}}
% \newcommand{\klpp}{\ensuremath{\mathrm{KL\text{-}BAS}_{\rm PP}}}
% \newcommand{\bdro}{\ensuremath{\mathrm{KL\text{-}BDRO}}}
% \newcommand{\klem}{\ensuremath{\mathrm{KL\text{-}Empirical}}}
% \newcommand{\orwdro}{\ensuremath{\mathrm{OR\text{-}WDRO}}}

\usepackage{tikz}
\usetikzlibrary{arrows.meta,positioning,fit,calc,shapes.callouts}
\usepackage{standalone}
\usepackage{xparse}
%%%%%%%%%%%%%%%%%%%%%%%%%%%%%%%
% Writer Comments
%%%%%%%%%%%%%%%%%%%%%%%%%%%%%%%
\definecolor{mygreen}{RGB}{9,121,105}
\definecolor{mypink}{RGB}{231,84,128}
\definecolor{myorange}{RGB}{210, 105, 30}
\newif\ifshowcomments
% Uncomment the following line to enable or disable comments:
%\showcommentstrue
\showcommentsfalse

\ifshowcomments
  \NewDocumentCommand{\td}{m}{{\color{blue} [\textbf{Theo}: #1]}}
  \NewDocumentCommand{\michele}{m}{{\color{myorange} [\textbf{Michele}: #1]}}
  \NewDocumentCommand{\tb}{m}{{\color{mygreen} [\textbf{Tom}: #1]}}
  \NewDocumentCommand{\mengqi}{m}{{\color{mypink} [\textbf{Mengqi}: #1]}}
\else
  \NewDocumentCommand{\td}{m}{}
  \NewDocumentCommand{\michele}{m}{}
  \NewDocumentCommand{\tb}{m}{}
  \NewDocumentCommand{\mengqi}{m}{}
\fi

\newif\ifshowdiff
%\showdifftrue   % or \showdifffalse
\showdifffalse

% The \icmltitle you define below is probably too long as a header.
% Therefore, a short form for the running title is supplied here:
%\icmltitlerunning{DRO with Bulk-Calibrated Credal Ambiguity Sets}
\icmltitlerunning{Bulk-Calibrated Credal Ambiguity Sets: Fast, Tractable Decision Making under Out-of-Sample Contamination}
\begin{document}

\twocolumn[
  \icmltitle{Bulk-Calibrated Credal Ambiguity Sets:\\
Fast, Tractable Decision Making under Out-of-Sample Contamination}

  % It is OKAY to include author information, even for blind submissions: the
  % style file will automatically remove it for you unless you've provided
  % the [accepted] option to the icml2026 package.

  % List of affiliations: The first argument should be a (short) identifier you
  % will use later to specify author affiliations Academic affiliations
  % should list Department, University, City, Region, Country Industry
  % affiliations should list Company, City, Region, Country

  % You can specify symbols, otherwise they are numbered in order. Ideally, you
  % should not use this facility. Affiliations will be numbered in order of
  % appearance and this is the preferred way.
\icmlsetsymbol{equal}{*}

\begin{icmlauthorlist}
\icmlauthor{Mengqi Chen}{yyy1}
\icmlauthor{Thomas B. Berrett}{yyy1}
\icmlauthor{Theodoros Damoulas}{yyy1,yyy3}
\icmlauthor{Michele Caprio}{yyy2}
%\icmlauthor{}{sch}
%\icmlauthor{}{sch}
\end{icmlauthorlist}

\icmlaffiliation{yyy1}{Department of Statistics, University of Warwick, Coventry, United Kingdom}
\icmlaffiliation{yyy2}{Department of Computer Science, University of Manchester, Manchester, United Kingdom}
\icmlaffiliation{yyy3}{Department of Computer Science, University of Warwick, Coventry, United Kingdom}

\icmlcorrespondingauthor{Mengqi Chen}{mengqi.chen.2@warwick.ac.uk}

  % You may provide any keywords that you find helpful for describing your
  % paper; these are used to populate the "keywords" metadata in the PDF but
  % will not be shown in the document
  \icmlkeywords{DRO, Imprecise Probabilities, Stochastic Optimisation, Huber Contamination}

  \vskip 0.3in
]

% this must go after the closing bracket ] following \twocolumn[ ...

% This command actually creates the footnote in the first column listing the
% affiliations and the copyright notice. The command takes one argument, which
% is text to display at the start of the footnote. The \icmlEqualContribution
% command is standard text for equal contribution. Remove it (just {}) if you
% do not need this facility.

% Use ONE of the following lines. DO NOT remove the command.
% If you have no special notice, KEEP empty braces:
\printAffiliationsAndNotice{}  % no special notice (required even if empty)
% Or, if applicable, use the standard equal contribution text:
% \printAffiliationsAndNotice{\icmlEqualContribution}

\begin{abstract}
Distributionally robust optimisation (DRO) minimises the worst-case expected loss over an ambiguity set that can capture distributional shifts in out-of-sample environments. While Huber (linear-vacuous) contamination is a classical minimal-assumption model for an $\varepsilon$-fraction of arbitrary perturbations, including it in an ambiguity set can make the worst-case risk infinite and the DRO objective vacuous unless one imposes strong boundedness or support assumptions. We address these challenges by introducing bulk-calibrated credal ambiguity sets: we learn a high-mass bulk set from data while considering contamination inside the bulk and bounding the remaining tail contribution separately. This leads to a closed-form, finite $\mathrm{mean}+\sup$ robust objective and tractable linear or second-order cone programs for common losses and bulk geometries. Through this framework, we highlight and exploit the equivalence between the imprecise probability (IP) notion of upper expectation and the worst-case risk, demonstrating how IP credal sets translate into DRO objectives with interpretable tolerance levels. Experiments on heavy-tailed inventory control, geographically shifted house-price regression, and demographically shifted text classification show competitive robustness-accuracy trade-offs and efficient optimisation times, using Bayesian, frequentist, or empirical reference distributions.
\end{abstract}

\section{Introduction}
Imprecise probability (IP) and distributionally robust optimisation (DRO) both formalise \emph{worst-case} decision-making under \emph{distributional uncertainty}: rather than committing to a single data-generating law, we protect decisions against the worst-case scenario in a set of plausible laws.

For a decision $x\in\mathcal X$ and random outcomes $\xi\in\Xi$, we measure performance by the expected loss $\E_{\xi\sim Q}[f_x(\xi)]$, where $f_x$ is a loss function and $Q$ is a probability law for $\xi$. In this paper, distributional uncertainty means that the test-time law $Q$ is only known to lie in a specified set of distributions. In IP theory \citep{levi2,walley1991statistical,intro_ip,decooman,Martin2026Possibilistic}, this set is a \emph{credal set} $\M$, that is, a convex and (weak$^\star$-) closed set of plausible laws. DRO represents uncertainty via an \emph{ambiguity set} $\A$, often given by a bounded divergence around a nominal centre \citep{kuhn2025distributionally}. In both views, the robust objective is the worst-case expected loss over the uncertainty set; when we take $\A=\M$, this coincides with the IP upper expectation:
\[
\overline{\E}_{Q\in\M}\big[f_x(\xi)\big]
=
\sup_{Q\in\M}\E_{\xi\sim Q}\big[f_x(\xi)\big].
\]
The DRO decision rule minimises this worst-case risk over $x\in\mathcal X$. Figure~\ref{fig:ip-dro} demonstrates this relationship \citep[see also][]{frohlich2024risk}. 

\begin{figure}[ht]
    \hspace{-15pt}
    \resizebox{0.50\textwidth}{!}{\usetikzlibrary{arrows}

\begin{tikzpicture}
%

%----- Centre (common objective) --------------------------------------
\node[align=center] at (-0.5,0) {\textbf{Common target}\\
$(1-\varepsilon)\E_{\P}[f_x(\xi)]
+\varepsilon\sup_{\xi\in\Xi} f_x(\xi)$};

%----- Right: evaluation viewpoints -----------------------------------
\node[align=center] at (2.0,1.4) {\textbf{Ambiguity set} $\mathcal{A}$\\
linear vacuous ball\\
$\{Q:\LV(Q,\P)\le \varepsilon\}$};

\node[align=center] at (2.0,-1.6) {\textbf{DRO}\\
Worst-case risk\\
$\displaystyle \sup_{Q\in\mathcal{A}}\,\E_{Q}[f_x(\xi)]$};

%----- Left: modelling sets -------------------------------------------
\node[align=center] at (-3.0,1.4) {\textbf{Credal set} $\mathcal{M}$\\
contamination set\\
$\{Q: Q =(1-\varepsilon)\P+\varepsilon R\}$};

\node[align=center] at (-3.0,-1.52) {\textbf{IP}\\
Upper expectation\\
 $\displaystyle \overline{\E}_{Q\in\mathcal{M}}[f_x(\xi)]$};

%----- Arrows from centre to corners (same template style) ------------
\draw[line width=0.75pt, ->] (-3.4,-1.1) .. controls (-4.55,-1.25) and (-4.55,-0.25) ..(-3.5,-0.25);
\draw[line width=0.75pt, ->] (2.6,-1.0) .. controls (3.55,-1.15) and (3.55,-0.25) ..(2.5,-0.25);

\draw[line width=0.75pt, ->] (-3.5,0.8) .. controls (-3.5,0.3) .. (-2.1,0.3);
\draw[line width=0.75pt, ->] (2.5,0.8) .. controls (2.5,0.3) .. (1.1,0.3);

%----- Vertical equivalence links (clean + compact) -------------------
\draw[line width=1pt,implies-implies,double equal sign distance] (0.2,1.8) -- (-1.4,1.8);
\draw[line width=1pt,implies-implies,double equal sign distance] (0.2,-1.1) -- (-1.4,-1.1);

\end{tikzpicture}}%%
    \caption{Equivalence between IP and DRO, with an example when $\A$ and $\M$ are Huber contamination sets.}
    \label{fig:ip-dro}
\end{figure}

A central model in robust statistics and in IP is Huber's $\varepsilon$-contamination \citep{huber1964robust}: a ``clean'' distribution perturbed by an $\varepsilon$-fraction of a vacuous (uninformative and arbitrary) distribution. In contrast, much of modern DRO in machine learning is built around $f$-divergences (e.g. general $f$-divergence: \citet{bental2013robust, namkoong2016advances}; Kullback-Leibler (KL) divergence: \citet{hu2013kullback}; total variation (TV) distance: \citet{jiang2018risk,fontem2026distributionally}) and Wasserstein neighbourhoods~\citep[e.g.][]{esfahani2018data,sinha2018certifiable,gao2016distributionally}, because they often give smooth objectives and clean dual forms (see the recent review by Kuhn et al. \yrcite{kuhn2025distributionally} for a full taxonomy). The influential work by Duchi \& Namkoong~\yrcite{duchi2021learning} notes that divergence-ball DRO formulations are complementary to classical Huber robustness, and calls for a clearer understanding of the connections and contrasts between the two lines of work. Our paper answers this call by translating Huber's $\varepsilon$-contamination set into a DRO objective that remains well posed in unbounded, continuous spaces and is computationally tractable.

Let $\P^\star$ be the unknown data-generating law on $\Xi\subseteq\mathbb R^d$, and let $f:\mathcal X\times\Xi\to\mathbb R$ be the loss function, writing $f_x(\xi):=f(x,\xi)$ for the loss of a decision $x\in\mathcal X$. We choose a reference distribution (centre) $\P_c$ based on i.i.d.~samples $\D:=\xi_{1:n}\overset{\mathrm{i.i.d.}}{\sim} \P^\star$ (for instance, a Bayesian posterior predictive as in~\citet{dellaporta2024decision,dellaporta2025decision}, a frequentist plug-in law $\P_{\hat\theta(\D)}$ fitted in a parametric family (e.g.~\citet{iyengar2023hedging}), or the empirical distribution $\hat \P_n$). Our target is to protect our decision when deployed in an out-of-sample (OOS) environment $\tilde \P\neq \P^\star$ modelled by a Huber $\varepsilon$-contamination: $\tilde \P = (1-\varepsilon)\P^\star + \varepsilon \tilde R$ with $\tilde R\in\mathcal{P}(\Xi)$, the set of all probability laws on $\Xi$.

From an IP perspective, $\varepsilon$ can be interpreted as the probability that the information source is unreliable \citep[Section 4.7.3]{intro_ip}. The upper and lower envelopes of the $\varepsilon$-contaminated credal set can be computed analytically \citep[Example 3]{wasserman1990}, facilitating the derivation of upper and lower expectations. The $\varepsilon$-contamination set also supports robust models with low computational complexity \citep{caprio2025ot}.

In unbounded spaces $\Xi$ with unbounded loss $f_x$, any ambiguity set that includes Huber contamination faces a practical challenge: the worst-case risk contains a supremum over $f_x$, which is infinite unless we restrict the adversary. This issue is recognised in the literature \citep{chan2024distributional}, and existing contamination-related DRO typically assume that $\Xi$ is bounded \citep{jiang2018risk, fontem2026distributionally} or rely on function-class restrictions (e.g., kernel-based DRO in \citet{staib2019mmd,zhu2021mmd}); see Appendix~\ref{app:tv-mmd-discussion} for a discussion. Our approach avoids requiring bounded $\Xi$ by learning a bulk set $\Xi_0$ with an explicit mass certificate, allocating the contamination budget within $\Xi_0$, and controlling the residual $\Xi_0^c$ by imposing a moment condition. We learn $\Xi_0(\D)$ from data in the ``accuracy--confidence'' form:
\begin{equation}
\label{cond:coverage}
    \Pr\left\{\P^\star(\Xi_0)\ge 1-\gamma\right\}\ge 1-\delta,
\end{equation}
for some $\gamma,\delta\in[0,1]$, where $\Pr$ is the probability measure on the sample space $\Xi^n$ induced by the random training sample $\mathcal D\sim (\mathbb P^\star)^n$ used to construct the data-dependent set $\Xi_0(\mathcal D)$. Certifiable values of $(\gamma,\delta)$ depend on sample size, which we will formalise in Lemma~\ref{lem:bulk-coverage-dkw}. This leads to a simple robust objective that trades off the mean under our in-bulk centre distribution with a worst-case loss over $\Xi_0$, and supports efficient linear programming (LP) or second-order cone programming (SOCP) formulations for common choices of $f_x$ and $\Xi_0$.

Classical contamination neighbourhoods are closely linked but often studied separately in DRO. The Huber contamination set coincides with the \emph{forward} linear-vacuous (LV) ball in IP, where LV is a one-sided divergence from IP theory (Proposition~\ref{prop:credal-ball}), and its worst-case risk is in the $\mathrm{mean}+\sup$ form (Theorem~\ref{thm:worst-case-risk}). The \emph{reverse} LV ball (the set of $Q$ where $\P^\star$ can be written as $(1-\varepsilon)Q+\varepsilon R$, i.e. trimming relative to $\P^\star$) yields CVaR, motivating outlier-robust DRO~\citep[e.g.][]{zhai2021doro,nietert2023outlier,nietert2024robust, li2025distributionally}. The symmetric TV ball contains both and combines a CVaR term with a sup term~\citep{jiang2018risk}. We give an alternative proof of this TV objective via a similarity decomposition argument~\citep{alvarez2012similarity}, providing a unifying perspective on how adding adversarial mass (forward) and deleting low-loss mass (reverse) interact in contamination neighbourhoods.

Our main contributions include:
\begin{enumerate}
\item We introduce bulk-calibrated ambiguity sets that make forward Huber contamination meaningful, instead of vacuous, in unbounded sample spaces with unbounded losses. This setting encompasses commonly used loss functions and data domains, and is relevant for ambiguity sets that admit Huber contamination.
\item We derive a finite worst-case risk objective for the ambiguity set, with tractable LP/SOCP counterparts for common losses and bulk geometries (Section~\ref{sec:bulk-restricted-CAS}, Table~\ref{tab:closed-form-sup}). We also provide data-driven bulk calibration with finite-sample bulk-mass guarantees and a high-probability risk certificate (Section~\ref{sec:bulk-calibration}); the calibration is computationally efficient ($O(m\log m)$) and supports flexible and block-wise specification of bulk sets.
\item In experiments (Section~\ref{sec:experiments}), LV-based credal ambiguity sets yield superior OOS performance and shorter convex-program solve times than classical DRO baselines on heavy-tailed newsvendor synthetic problems and two real-world ML tasks (California housing regression and CivilComments text classification). Moreover, we demonstrate flexible centre choices (Bayesian predictive, frequentist plug-in, or empirical).
\item We highlight and exploit the link between IP credal sets and DRO by viewing the IP upper expectation as a DRO worst-case risk. While much of the classical IP literature studies $\varepsilon$-contaminated credal sets in discrete or finite settings, we extend these constructions to continuous, unbounded spaces, providing a practical protocol for transferring frameworks and guarantees between the IP and DRO communities.
\end{enumerate}

In Section~\ref{sec:bulk-restricted-CAS} we define the bulk-restricted credal ambiguity set for a given bulk set $\Xi_0$. Section~\ref{sec:bulk-calibration} introduces our data-driven bulk-set calibration procedure, and Section~\ref{sec:tolerance-selection} discusses validation-based selection of the tolerance level $\varepsilon$ when deployment samples are not observed. Section~\ref{sec:experiments} studies the empirical performance of our bulk-calibrated credal ambiguity set in synthetic and real-world experiments, and provides practical insights for choosing the bulk-set geometry and hyperparameters. Finally, we discuss limitations and future work in Section~\ref{sec:conclusions}. Code to reproduce experiments can be found at \url{https://github.com/MengqiChenMC/credal-ambiguity-sets-code-repo}.

\section{Bulk-Restricted LV Credal Ambiguity Set}
\label{sec:bulk-restricted-CAS}
In this section, we start by defining the bulk-restricted credal ambiguity set and providing two crucial quantities associated with it: the worst-case risk and worst-case distribution. We then show that the ambiguity set can be written as a forward-LV ball, which places it in a familiar divergence-ball DRO form. Finally, we link the LV ball with two other commonly used contamination neighbourhoods (reverse LV and TV balls), providing a unifying interpretation.
\subsection{Definition and Properties}
Let $\P_c$ be a chosen centre distribution on $\Xi$ constructed from data $\D$. For example: a Bayesian posterior predictive, $\P_c(\cdot)=\int_{\Theta} p(\cdot\mid \theta)\Pi(\dd \theta\mid \D)$, a frequentist plug-in predictive $\P_c=\P_{\hat\theta}$ for a parametric family $\{\P_\theta\}_{\theta\in\Theta}$ with $\hat\theta=\hat\theta(\D)$, or an empirical distribution $\P_c=\hat \P_n = \frac{1}{n}\sum_{i=1}^n\delta_{\xi_i}$, which we use to form sample-average approximation (SAA) estimates of expectations under $\P^\star$; see Remark~\ref{rem:epsc-vanish} in Appendix~\ref{proof:certificate}.

Let \(\Xi_0\subseteq \Xi\) be a bounded measurable set. Write normalised restrictions for any probability measure $\P$ with $\P(\Xi_0)>0$:
\[\P_{\Xi_0}(A):=\frac{\P(A\cap \Xi_0)}{\P(\Xi_0)}.\]
We start by defining the general notion of the worst-case risk and our credal ambiguity set.

Given a decision $x\in\mathcal X$, an ambiguity set $\A\subseteq \mathcal P(\Xi)$, and a measurable loss function $f:\mathcal X\times\Xi\to\R$ (writing $f_x(\xi):=f(x,\xi)$), the \emph{worst-case risk} is
\begin{equation*}
    \Risk_{\A}(f_x):=\sup_{Q\in\A}\E_{\xi\sim Q}[f_x(\xi)]
\end{equation*}

For \(\varepsilon\in[0,1]\), we define our \emph{credal ambiguity set} as the \emph{support-restricted LV set}, which coincides with the Huber-contamination set on $\Xi_0$:
\[
\A^{\LV}_{\varepsilon,\Xi_0}(\P_{c,\Xi_0})
:=\left\{(1-\varepsilon)\P_{c,\Xi_0}+\varepsilon R: R\in\mathcal P(\Xi_0)\right\}.
\]

Under $\A^{\LV}_{\varepsilon,\Xi_0}(\P_{c,\Xi_0})$, the worst-case risk yields a straightforward closed-form, akin to the upper expectation for an $\varepsilon$-contaminated model in IP theory \citep[Equation 4.10]{intro_ip}:
\begin{theorem}[Worst-case risk of support-restricted LV set, proved in Appendix~\ref{proof:worst-case-risk}]
\label{thm:worst-case-risk}
\begin{equation*}
    \Risk_{\A^{\LV}_{\varepsilon,\Xi_0}(\P_{c,\Xi_0})}(f_x) = (1-\varepsilon)\E_{\xi\sim\P_{c,\Xi_0}}[f_x(\xi)] + \varepsilon \sup_{\xi\in \Xi_0} f_x(\xi).
\end{equation*}
\end{theorem}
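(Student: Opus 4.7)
The plan is a two-line reduction using linearity of expectation, followed by computing a sup of expectations of $f_x$ over $\mathcal{P}(\Xi_0)$. First I would fix $x$ and write an arbitrary element of the ambiguity set as $Q=(1-\varepsilon)\P_{c,\Xi_0}+\varepsilon R$ with $R\in\mathcal{P}(\Xi_0)$. By linearity,
\[
\E_{\xi\sim Q}[f_x(\xi)] = (1-\varepsilon)\E_{\xi\sim\P_{c,\Xi_0}}[f_x(\xi)] + \varepsilon\,\E_{\xi\sim R}[f_x(\xi)],
\]
so taking the supremum over $Q\in\A^{\LV}_{\varepsilon,\Xi_0}(\P_{c,\Xi_0})$ is equivalent to taking the supremum over $R\in\mathcal{P}(\Xi_0)$, which only hits the second term.

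The remaining claim is that
\[
\sup_{R\in\mathcal{P}(\Xi_0)}\E_{\xi\sim R}[f_x(\xi)] = \sup_{\xi\in\Xi_0} f_x(\xi).
\]
The inequality ``$\le$'' is immediate from $\E_R[f_x]\le \sup_{\Xi_0} f_x$ for any probability measure $R$ supported on $\Xi_0$. For ``$\ge$'', I would exhibit approximating measures: for each $\xi_0\in\Xi_0$, the Dirac measure $\delta_{\xi_0}$ belongs to $\mathcal{P}(\Xi_0)$ and gives $\E_{\delta_{\xi_0}}[f_x]=f_x(\xi_0)$; taking the sup over $\xi_0$ yields the desired bound. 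Combining the two halves gives the stated identity.

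The only subtlety (and the place where I expect the main, but minor, obstacle to sit) is the handling of $\sup_{\xi\in\Xi_0} f_x(\xi)$ when $f_x$ is unbounded on $\Xi_0$: strictly speaking, the argument still works, because the Dirac construction shows $\sup_R\E_R[f_x]\ge f_x(\xi_0)$ for every $\xi_0\in\Xi_0$ and both sides can be $+\infty$ simultaneously. In the finite case, the supremum is attained only in the limit (point masses need not lie in a smaller admissible class), so the conclusion is formulated as an equality of suprema rather than as attainment. Measurability of $f_x$ ensures the expectations are well defined and the decomposition is legal; no additional integrability is required beyond what is implicit in $\E_{\P_{c,\Xi_0}}[f_x]$ being defined. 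Substituting back recovers the claimed closed form $(1-\varepsilon)\E_{\P_{c,\Xi_0}}[f_x]+\varepsilon\sup_{\Xi_0} f_x$.
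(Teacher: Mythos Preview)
Your proposal is correct and follows essentially the same approach as the paper: linearity of expectation reduces the problem to $\sup_{R\in\mathcal P(\Xi_0)}\E_R[f_x]$, the upper bound is immediate, and the lower bound is obtained via Dirac measures (with the unbounded case handled by the same limiting argument). The paper's proof is organized slightly differently---it splits explicitly into the cases $M=+\infty$ and $M<\infty$ and writes out the $\eta$-approximation---but the underlying ideas are identical.
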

A related objective is trade-off robust optimisation (TRO) \citep{tsang2025tro}, which mixes the empirical distribution with a chosen ambiguity set to interpolate between empirical risk and a DRO worst-case risk. Although this can resemble a $\mathrm{mean}+\sup$ form, TRO is based on empirical risk minimisation and explicitly treats the mixing weight as a conservatism parameter for existing DRO ambiguity sets rather than as a Huber contamination ratio.

Our \emph{decision rule} is $\hat x \in \argmin_{x\in\mathcal X} \Risk_{\A^{\LV}_{\varepsilon,\Xi_0}(\P_{c,\Xi_0})}(f_x)$.
% \begin{equation*}
%     \hat x \in \argmin_{x\in\mathcal X} \Risk_{\A^{\LV}_{\varepsilon,\Xi_0}(\P_{c,\Xi_0})}(f_x).
% \end{equation*}

The truncated expectation $\E_{\xi\sim\P_{c,\Xi_0}}[f_x(\xi)]$ can be approximated via SAA with rejection sampling when it is not available analytically. A convergence diagnostic of this approximation for our synthetic newsvendor experiment (Section~\ref{sec:exp:newsvendor}) is reported in Appendix~\ref{app:newsvendor-sample-efficiency} (Figure~\ref{fig:SAA-convergence-diagnostic}).
% \begin{equation}
% \label{eq:optim}
% \hat x \in \argmin_{x\in\mathcal X} \Risk(\A^{\LV}_{\varepsilon,\Xi_0}(\P_{c,\Xi_0})).
% \end{equation}

Theorem~\ref{thm:worst-case-risk} immediately gives us the following corollary that identifies the worst-case distribution in $\A^{\LV}_{\varepsilon,\Xi_0}(\P_{c,\Xi_0})$, drawing a parallel with the upper envelope of an $\varepsilon$-contaminated credal set \citep[Example 3]{wasserman1990}. Unsurprisingly, the adversary is attained by putting $\varepsilon$ of the mass on the set of $\xi$ that attains the supremum inside $\Xi_0$.
\begin{corollary}[Worst-case distribution, proved in Appendix~\ref{proof:worst-case-distribution-lv}]
\label{cor:worst-case-distribution-lv}
Assume $\varepsilon>0$; otherwise $\A^{\LV}_{\varepsilon,\Xi_0}(\P_{c,\Xi_0})=\{\P_{c,\Xi_0}\}$. Fix $x\in\mathcal X$ and assume $M_x:=\sup_{\xi\in\Xi_0} f_x(\xi)<\infty$. Let $\Xi_0^{\max}(x):=\{\xi\in\Xi_0:\ f_x(\xi)=M_x\}$. If there exists $R^\star\in\mathcal P(\Xi_0)$ with $R^\star(\Xi_0^{\max}(x))=1$, then $Q^\star=(1-\varepsilon)\P_{c,\Xi_0}+\varepsilon R^\star$ is the worst-case distribution for $\A^{\LV}_{\varepsilon,\Xi_0}(\P_{c,\Xi_0})$. If $\Xi_0^{\max}(x)=\emptyset$, then the worst-case distribution is not attained.
\end{corollary}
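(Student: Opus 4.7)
The plan is to read off the identity $\Risk_{\A^{\LV}_{\varepsilon,\Xi_0}(\P_{c,\Xi_0})}(f_x) = (1-\varepsilon)\E_{\P_{c,\Xi_0}}[f_x(\xi)]+\varepsilon M_x$ from Thm.~\ref{thm:worst-case-risk} and then show that the candidate $Q^\star$ either saturates this value or is ruled out by the geometry of $\Xi_0^{\max}(x)$. So I would first verify $Q^\star\in\A^{\LV}_{\varepsilon,\Xi_0}(\P_{c,\Xi_0})$: by hypothesis $R^\star\in\mathcal P(\Xi_0)$, so $Q^\star$ has exactly the Huber form required by the definition of the support-restricted LV set.

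Next, I would compute $\E_{Q^\star}[f_x(\xi)]$ by linearity of expectation in the mixture, giving
\[
\E_{Q^\star}[f_x(\xi)]=(1-\varepsilon)\E_{\P_{c,\Xi_0}}[f_x(\xi)]+\varepsilon\E_{R^\star}[f_x(\xi)].
\]
Since $R^\star(\Xi_0^{\max}(x))=1$ and $f_x\equiv M_x$ on $\Xi_0^{\max}(x)$, the second expectation reduces to $M_x$. Matching this with the formula from Thm.~\ref{thm:worst-case-risk} shows $\E_{Q^\star}[f_x]=\Risk_{\A^{\LV}_{\varepsilon,\Xi_0}(\P_{c,\Xi_0})}(f_x)$, so $Q^\star$ is a maximiser.

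For the non-attainment clause, I assume $\varepsilon>0$ (when $\varepsilon=0$ the only element of the ambiguity set is $\P_{c,\Xi_0}$, which trivially attains). For any $Q=(1-\varepsilon)\P_{c,\Xi_0}+\varepsilon R$ with $R\in\mathcal P(\Xi_0)$, the standard measure-theoretic fact that $\int f_x\,\dd R = M_x$ forces $f_x=M_x$ $R$-a.s., i.e.\ $R(\Xi_0^{\max}(x))=1$. Under the hypothesis $\Xi_0^{\max}(x)=\emptyset$, no such $R$ exists, so $\E_R[f_x(\xi)]<M_x$ strictly and hence $\E_Q[f_x(\xi)]<\Risk_{\A^{\LV}_{\varepsilon,\Xi_0}(\P_{c,\Xi_0})}(f_x)$. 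Combined with Thm.~\ref{thm:worst-case-risk}, which guarantees the supremum equals $(1-\varepsilon)\E_{\P_{c,\Xi_0}}[f_x]+\varepsilon M_x$ via an approximating sequence $R_k$ concentrated near near-maximisers of $f_x$, this shows the sup is approached but never realised.

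The only delicate step is the ``$\int f_x\,\dd R = M_x\Rightarrow f_x=M_x$ $R$-a.s.'' implication used for non-attainment: it requires noting that $M_x-f_x\ge 0$ on $\Xi_0$ and that its $R$-integral vanishes, which pins the set $\{M_x-f_x>0\}$ to have $R$-measure zero. Everything else is a direct unpacking of the mixture definition and of Thm.~\ref{thm:worst-case-risk}.
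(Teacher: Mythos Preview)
Your proposal is correct and follows essentially the same approach as the paper: both verify membership of $Q^\star$, compute its expected loss via the mixture decomposition, match it against the Thm.~\ref{thm:worst-case-risk} value, and for non-attainment argue that $\E_R[f_x]=M_x$ would force $R(\Xi_0^{\max}(x))=1$ via the nonnegative-integrand-with-zero-integral argument. Your explicit handling of the $\varepsilon=0$ edge case is a small addition the paper omits.
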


\subsection{Credal Ambiguity Set as Forward LV ball}
The next proposition gives the classification of $\A^{\LV}_{\varepsilon,\Xi_0}(\P_{c,\Xi_0})$ via the LV distortion, linking the classical contamination set with a DRO-like divergence-based definition.
\begin{proposition}[LV distortion classification of the contamination set, proved in Appendix~\ref{proof:credal-ball}]
 \label{prop:credal-ball}
Fix $\varepsilon\in[0,1]$. The following sets are equal:
\begin{align*}
    \A^{\LV}_{\varepsilon,\Xi_0}(\P_{c,\Xi_0}) &= \left\{(1-\varepsilon)\P_{c,\Xi_0}+\varepsilon R: R\in\mathcal P(\Xi_0)\right\} \\
    \B^\varepsilon_{\LV}(\P_{c,\Xi_0}) &=\left\{Q\in \mathcal P(\Xi_0): \LV(Q,\P_{c,\Xi_0})\le\varepsilon\right\}
\end{align*}
where $\B^\varepsilon_{\LV}(\P_{c,\Xi_0})$ is the ball of radius $\varepsilon$ around $\P_{c,\Xi_0}$ defined by the IP notion of LV distortion \citep{Montes17082020}:
\[
\LV(Q,\P) := \sup_{A:\ \P(A)>0} \frac{\P(A)-Q(A)}{\P(A)}.
\]
 \end{proposition}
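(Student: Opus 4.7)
The plan is to prove the set equality by establishing the two inclusions separately, leveraging the explicit inversion $R=\varepsilon^{-1}\bigl(Q-(1-\varepsilon)\P_{c,\Xi_0}\bigr)$ of the mixture representation. The case $\varepsilon=0$ is degenerate: both sets reduce to $\{\P_{c,\Xi_0}\}$ (for $\A$ the mixture is forced to equal the centre; for $\B$ the condition $\LV(Q,\P_{c,\Xi_0})\le 0$ forces $Q(A)\ge \P_{c,\Xi_0}(A)$ for every $A$ of positive centre mass, hence equality by total mass). Assume henceforth $\varepsilon\in(0,1]$.

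For the forward inclusion $\A^{\LV}_{\varepsilon,\Xi_0}(\P_{c,\Xi_0})\subseteq \B^\varepsilon_{\LV}(\P_{c,\Xi_0})$, I would fix an arbitrary $Q=(1-\varepsilon)\P_{c,\Xi_0}+\varepsilon R$ with $R\in\mathcal P(\Xi_0)$ and compute, for every measurable $A$ with $\P_{c,\Xi_0}(A)>0$,
\[
\P_{c,\Xi_0}(A)-Q(A)=\varepsilon\bigl(\P_{c,\Xi_0}(A)-R(A)\bigr)\le \varepsilon\,\P_{c,\Xi_0}(A),
\]
using only $R(A)\ge 0$. Dividing by $\P_{c,\Xi_0}(A)$ and taking the supremum over admissible $A$ yields $\LV(Q,\P_{c,\Xi_0})\le\varepsilon$.

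For the reverse inclusion $\B^\varepsilon_{\LV}(\P_{c,\Xi_0})\subseteq \A^{\LV}_{\varepsilon,\Xi_0}(\P_{c,\Xi_0})$, given $Q$ with $\LV(Q,\P_{c,\Xi_0})\le\varepsilon$, I would introduce the candidate signed measure
\[
R:=\varepsilon^{-1}\bigl(Q-(1-\varepsilon)\P_{c,\Xi_0}\bigr),
\]
which is automatically $\sigma$-additive as a linear combination of probability measures. It remains to verify that $R$ is in fact a probability measure on $\Xi_0$: the total mass is $R(\Xi_0)=\varepsilon^{-1}\bigl(1-(1-\varepsilon)\bigr)=1$, and non-negativity splits into two cases. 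When $\P_{c,\Xi_0}(A)=0$, non-negativity of $Q$ gives $R(A)=Q(A)/\varepsilon\ge 0$ directly. When $\P_{c,\Xi_0}(A)>0$, the LV bound rearranges to $Q(A)\ge (1-\varepsilon)\P_{c,\Xi_0}(A)$, which is exactly the condition $R(A)\ge 0$. Hence $R\in\mathcal P(\Xi_0)$ and by construction $Q=(1-\varepsilon)\P_{c,\Xi_0}+\varepsilon R\in\A^{\LV}_{\varepsilon,\Xi_0}(\P_{c,\Xi_0})$.

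There is no substantive obstacle: the LV distortion is designed so that its sublevel sets coincide with $\varepsilon$-contamination envelopes, and both directions reduce to short algebraic manipulations of the defining inequality. The only mild subtlety is that the inverted object $R$ is a priori only a signed measure, and one must separately handle the $\P_{c,\Xi_0}$-null sets (where the LV condition is vacuous) to certify non-negativity everywhere.
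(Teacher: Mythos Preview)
Your proof is correct and follows essentially the same approach as the paper: the paper routes the argument through an intermediate eventwise-domination set $\M(\underline{\P_{c,\Xi_0}})=\{Q:Q(A)\ge(1-\varepsilon)\P_{c,\Xi_0}(A)\ \forall A\}$ and proves $\A=\M=\B$, but the key step in both is exactly your construction $R=\varepsilon^{-1}(Q-(1-\varepsilon)\P_{c,\Xi_0})$ and the verification that the LV bound forces $R\ge 0$. Your handling of the $\P_{c,\Xi_0}$-null case and the $\varepsilon=0$ edge case is slightly more explicit than the paper's, but the underlying logic is identical.
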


\subsection{Links to other Contamination Neighbourhoods}
A closely related and popular construction in outlier-robust DRO is the reverse LV ball, defined as \(\bigl\{ Q \in \mathcal P(\Xi) : \exists R\in\mathcal P(\Xi)\ \text{s.t.}\ \P = (1-\varepsilon)Q + \varepsilon R \bigr\}\) \citep{zhai2021doro,nietert2023outlier, li2025distributionally}. We compare the worst-case risks of the forward and reverse LV balls for a generic centre $\P$ and space $\Xi$; in our bulk-restricted setting we take $(\P,\Xi)=(\P_{c,\Xi_0},\Xi_0)$.
\begin{align}
\text{Forward LV}&:\quad
  (1-\varepsilon)\E_{\P}[f] + \varepsilon\sup_{\Xi} f,
  \label{lem:LV-risk} \\
&\text{(Theorem~\ref{thm:worst-case-risk} applied to }\P \text{ and }\Xi\text{)} \nonumber\\
\text{Reverse LV}&:\quad
  \mathrm{CVaR}^{\P}_{1-\varepsilon}(f).
  \label{lem:reverse-LV-risk}\\
&\text{\citet[Theorem 4.52]{FollmerSchied2016}} \nonumber
\end{align}
Here $\mathrm{CVaR}^{\P}_{1-\varepsilon}(f)$ denotes the conditional value at risk of $f$ at level $1-\varepsilon$ under $\P$. The TV distance is symmetric, and an $\varepsilon$-TV ball contains both the forward and reverse $\varepsilon$-LV balls. 
The worst-case risk of the $\varepsilon$-TV ball is closely related to \eqref{lem:LV-risk} and \eqref{lem:reverse-LV-risk}:
\begin{proposition}[\citet{kuhn2025distributionally}, Proposition 6.13]
 \label{lem:TV-risk}
    \begin{equation*}
        \Risk_{\B_{\TV}^\varepsilon(\P)}(f) =  (1-\varepsilon)\mathrm{CVaR}^{\P}_{1-\varepsilon}(f)
  + \varepsilon\sup_{\Xi} f,
    \end{equation*}
    where $\B_{\TV}^\varepsilon(\P):=\{Q\in\mathcal{P}(\Xi):\TV(Q,\P)\le \varepsilon\}$.
\end{proposition}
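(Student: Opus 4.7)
Following the similarity decomposition idea of \citet{alvarez2012similarity}, I would write any $Q\in\B_{\TV}^\varepsilon(\P)$ as sharing a common sub-probability measure with $\P$, so that the forward (Huber-style injection) and reverse (trimming) contributions to $\E_Q[f]$ become additive and can be optimised separately, and then maximise over $\eta:=\TV(Q,\P)\in[0,\varepsilon]$. Concretely, applying the Hahn--Jordan decomposition to $Q-\P$ yields mutually singular non-negative measures $(Q-\P)^+$ and $(\P-Q)^+$, both of mass $\eta=\tfrac12\|Q-\P\|_1$. With $\mu:=\P\wedge Q$ (mass $1-\eta$), there exist a probability measure $\bar\mu$ and mutually singular probability measures $R_-,R_+$ such that $\P=(1-\eta)\bar\mu+\eta R_-$ and $Q=(1-\eta)\bar\mu+\eta R_+$. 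Conversely, every such tuple with $\eta\in[0,\varepsilon]$ gives a valid $Q\in\B_{\TV}^\varepsilon(\P)$, so the supremum over $Q$ splits into an inner maximisation over $(\bar\mu,R_-,R_+)$ for fixed $\eta$ and an outer maximisation over $\eta$.

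\emph{Inner maxima for fixed $\eta$.} Linearity and the two decompositions give
\[
\E_Q[f]=\E_\P[f]-\eta\,\E_{R_-}[f]+\eta\,\E_{R_+}[f].
\]
The forward term $\eta\,\E_{R_+}[f]$ is maximised by taking $R_+$ concentrated near $\arg\sup_\Xi f$, yielding $\eta\sup_\Xi f$. The reverse term reduces to a Neyman--Pearson problem: choose a non-negative measure $(\P-Q)^+\le\P$ of total mass $\eta$ minimising $\int f\,d(\P-Q)^+$, solved by placing this mass on the lowest-$f$ $\eta$-quantile of $\P$. The residual expectation is $\E_\P[f]-\int f\,d(\P-Q)^+=(1-\eta)\mathrm{CVaR}^\P_{1-\eta}(f)$ by the standard tail-expectation representation of CVaR. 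The mutual-singularity constraint on $R_\pm$ does not reduce the supremum, because the optimiser $R_+$ can always be perturbed off the trimming support without changing $\E_{R_+}[f]$.

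\emph{Outer maximum in $\eta$.} Define $h(\eta):=(1-\eta)\mathrm{CVaR}^\P_{1-\eta}(f)+\eta\sup_\Xi f$ on $[0,\varepsilon]$. Writing $(1-\eta)\mathrm{CVaR}^\P_{1-\eta}(f)=\int_{\{f\ge t_\eta\}}f\,d\P$ with $\P(\{f\ge t_\eta\})=1-\eta$, the derivative (or a one-sided difference quotient when $\P$ has an atom at $t_\eta$) gives $h'(\eta)=\sup_\Xi f-t_\eta\ge 0$, so $h$ is non-decreasing and the supremum is attained at $\eta=\varepsilon$. Substituting yields the stated identity.

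\emph{Main obstacle.} The delicate step is making the decoupled inner maximisation fully rigorous when $\sup_\Xi f$ is not attained, when $\P$ has atoms at the CVaR quantile $t_\eta$, or when $R_\pm$ cannot be chosen to be strictly singular with the desired supports. All three issues are resolved by a joint approximation argument: a maximising sequence $R_+^{(k)}$ with $\E_{R_+^{(k)}}[f]\to\sup_\Xi f$ whose supports are chosen to avoid the trimming region, combined with Neyman--Pearson tie-breaking on $\{f=t_\eta\}$. These adjustments preserve both feasibility and the supremum, so passing to the limit closes the argument and recovers the formula of \citet{kuhn2025distributionally}.
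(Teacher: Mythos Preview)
Your proposal is correct and follows essentially the same route as the paper's proof: the similarity decomposition of \citet{alvarez2012similarity} splits any $Q$ in the TV ball into a shared component plus a reverse-LV (trimming/CVaR) part and a forward-LV (injection/$\sup$) part, each optimised separately, with a limiting sequence for tightness. The only noteworthy difference is in the outer maximisation over $\eta$: instead of your derivative argument on $h(\eta)$, the paper uses the set inclusion $\mathcal A^\leftarrow_\alpha(\P)\subseteq\mathcal A^\leftarrow_\varepsilon(\P)$ to replace $\mathrm{CVaR}^\P_{1-\alpha}$ by $\mathrm{CVaR}^\P_{1-\varepsilon}$ and then the elementary inequality $\sup_\Xi f\ge\mathrm{CVaR}^\P_{1-\varepsilon}(f)$ to shift the mixture weight from $\alpha$ to $\varepsilon$, which cleanly sidesteps the atom/tie-breaking issues you flagged.
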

Figure~\ref{fig:lv-reverse-tv-worstcase} visualises the corresponding worst-case distributions. Similar $\mathrm{CVaR}+\max$ forms arise in L\'evy--Prokhorov-type DRO and recover TV-DRO as a special case \citep{bennouna2025holistic}. Intuitively, the reverse LV ball corresponds to ``deleting'' up to an $\varepsilon$-fraction of low-loss mass from $\P$ (CVaR tail reweighting), while the forward LV ball corresponds to ``adding'' an $\varepsilon$-fraction of arbitrary mass on worst-case states. The TV ball allows both effects simultaneously, and its worst-case risk is a convex combination of the reverse-LV (CVaR) behaviour and the forward-LV (sup) behaviour. These neighbourhoods are typically studied in isolation. To investigate the close relationship between them, we give an alternative proof of Proposition~\ref{lem:TV-risk} to that of \citet{kuhn2025distributionally} in Appendix~\ref{proof:TV-risk-alter}. This proof leverages Theorem~\ref{thm:worst-case-risk}, equation \eqref{lem:reverse-LV-risk}, and the similarity decomposition for total variation~\citep[Proposition~2]{alvarez2012similarity}, allowing us to explore the intrinsic link between the three contamination neighbourhoods.

% \begin{proof}[Sketch of alternative proof of Proposition~\ref{lem:TV-risk}]
% For a fixed $\P$, consider a $Q\in \B_{\TV}^\varepsilon(\P)$. By the similarity decomposition for total variation~\citep[Proposition~2]{alvarez2012similarity}, there exist probability measures $C,R_1,R_2$ such that \(\P = (1-\varepsilon)C + \varepsilon R_1\), \(Q = (1-\varepsilon)C + \varepsilon R_2\). The first identity implies that $C$ is in the reverse-LV ball around $\P$, while we also have $\E_{R_2}[f]\le \sup f$, so the right hand side of Proposition~\ref{lem:TV-risk} is an upper bound on $\E_{\xi\sim Q}[f]$. A standard limiting argument then shows that the upper bound is tight, which completes the proof. Full details can be found in Appendix~\ref{proof:TV-risk-alter}.
% \end{proof}

\begin{figure}[ht]
    \centering
    \includegraphics[width=\linewidth]{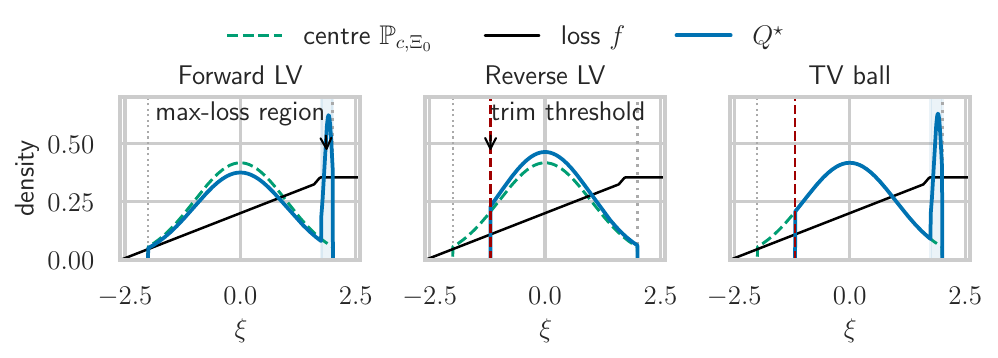}
    \caption{Worst-case distributions $Q^\star$ for $\sup_Q \E_{\xi\sim Q}[f]$ under forward LV, reverse LV, and TV balls around a centre $\P_{c,\Xi_0}$ (loss $f$ plateaus at a small region to avoid Dirac deltas).}
    \label{fig:lv-reverse-tv-worstcase}
\end{figure}

So far, we have fixed $(\Xi_0,\P_c, \varepsilon)$ and studied the tractable worst-case objective. We now turn to guarantees under the unknown $\P^\star$ when $\Xi_0$ and $\P_c$ are data-dependent.

\section{Calibrating \texorpdfstring{$\Xi_0$}{Xi0} via Score Selection}
\label{sec:bulk-calibration}
We aim to construct the bulk set $\Xi_0$ from $\D$ and certify the finite-sample coverage guarantee of condition \eqref{cond:coverage}.

\subsection{Score Fitting}
We observe i.i.d. data \(\D = \{\xi_i\}_{i=1}^n\sim (\P^\star)^n\) in \(\Xi\subseteq\mathbb{R}^d\).
We target bulk mass \(1-\gamma\in(0,1)\) with confidence \(1-\delta\in(0,1)\). We split the sample into two independent parts:
\[
\D_{\mathrm{fit}}=\{\xi_j\}_{j=1}^{n-m}, \quad \D_{\mathrm{select}}=\{\tilde \xi_j\}_{j=1}^{m}.
\]
To keep the optimisation tractable later, we work with a family of bulk sets induced by a scalar score,
\[
\left\{ \Xi_0(t) : t\in\mathbb{R} \right\},\quad
\Xi_0(t)=\{\xi: s(\xi)\le t\},
\]
where \(s:\Xi\to\mathbb{R}\) is a measurable \emph{score function} built on $\D_{\mathrm{fit}}$, so that increasing \(t\) enlarges \(\Xi_0(t)\).
Typical choices are: 
\textbf{ellipsoid} via \(s(\xi)=\| \Sigma_{\rm fit}^{-1/2}(\xi-\mu_{\rm fit})\|_2\); or \textbf{box} via \(s(\xi)=\max_i |\xi_i-\mu_{\mathrm {fit},i}|/w_i\). Here, \((\mu_{\rm fit},\Sigma_{\rm fit})\) are estimated from \(\D_{\mathrm{fit}}\) only. For common choices of $f_x$, these bulk sets give $\sup_{\xi\in\Xi_0(t)} f_x(\xi)$ in closed form and admit LP/SOCP reformulations: see Table~\ref{tab:closed-form-sup} for examples. 

\subsection{Selection}
Given $s(\cdot)$, let $Z:=s(\xi)$ for $\xi\sim \P^\star$ and let $F(t):=\P^\star(Z\le t)$ denote the theoretical CDF of the scores. Form the empirical CDF of the selection scores $F_m(t) := \frac{1}{m}\sum_{j=1}^m \1\{s(\tilde\xi_j)\le t\}$, $\tilde\xi_j\in\D_{\mathrm{select}}$, where $\1\{\cdot\}$ denotes the indicator function. By the Dvoretzky--Kiefer--Wolfowitz (DKW) inequality \citep{dvoretzky1956asymptotic, massart1990tight},
\begin{equation}
\label{eq:DKW}
    \Pr\Big\{ \sup_{t\in\mathbb{R}} \abs{F_m(t)-F(t)} \le r_{m,\delta} \Big\} \ge 1-\delta,
\end{equation}
where $r_{m,\delta}:= \sqrt{\frac{1}{2m}\log\Big(\frac{2}{\delta}\Big)}$. Define the uniform one-sided lower envelope %$L^{\mathrm{DKW}}(t):= \big[F_m(t)-r_{m,\delta}\big]_+$
\[
L^{\mathrm{DKW}}(t):= \big[F_m(t)-r_{m,\delta}\big]_+,
\]
where $[a]_+ := \max\{a,0\}$ for a generic $a\in\mathbb{R}$. Select the smallest $t$ that clears the target bulk mass:
\begin{equation*}
\hat{t}_{\rm DKW}:= \inf \{t: L^{\mathrm{DKW}}(t)\ge 1-\gamma\}, \; \widehat{\Xi}_0 := \Xi_0\big(\hat{t}_{\rm DKW}\big).
\end{equation*}
Such $\hat{t}_{\rm DKW}$ exists when $\gamma\ge r_{m,\delta}$, the smallest certifiable tail budget given $m$ and $\delta$. In practice, $\hat t_{\rm DKW}$ can be estimated via the $(1-\gamma+r_{m,\delta})$-empirical quantile of the selection scores and computed in $O(m\log m)$ time (see Algorithm~\ref{alg:dkw-selection} in Appendix~\ref{app:bulk-set-calibration-details}). We will show that $\widehat\Xi_0$ satisfies Condition~\eqref{cond:coverage} via Lemma~\ref{lem:bulk-coverage-dkw}.
\begin{remark}
Our method is inspired by conformal prediction: we calibrate a scalar score on a data subset, and then pick the smallest threshold that clears a target level. However, conformal prediction delivers finite-sample \emph{marginal coverage} for a future draw under exchangeability, whereas our risk certificate requires a \emph{high-probability bulk-mass}, which is delivered via a uniform concentration bound for the score CDF. 
%Recent work on minimum-volume conformal sets \citep{braun2025minimum} further emphasises the importance of the choice of score and its induced geometry. 
We also refer the reader to \citet{johnstone2021conformal} for a detailed discussion on nonconformity scores and their geometric implications for robust optimisation.
\end{remark}

\begin{lemma}[High-probability bulk-mass certificate, proved in Appendix~\ref{proof:bulk-coverage-dkw}]
%, proof in Appendix~\ref{proof:bulk-coverage-dkw}]
\label{lem:bulk-coverage-dkw}
Let $\D_{\mathrm{select}}=\{\tilde \xi_j\}_{j=1}^m$ be i.i.d. from $\P^\star$ and independent of the score $s(\cdot)$ and $\D_{\mathrm{fit}}$. Define $F_m$, $r_{m,\delta}$, $L^{\mathrm{DKW}}(\cdot)$ as above. When $\gamma\ge r_{m,\delta}$, $\hat{t}_{\rm DKW}$ exists and
\[
\Pr\Big\{\P^\star\big(\Xi_0(\hat t_{\rm DKW})\big)\ \ge\ 1-\gamma\Big\}\ \ge\ 1-\delta.
\]
\end{lemma}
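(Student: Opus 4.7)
The plan is to reduce the claim to the one-dimensional Dvoretzky--Kiefer--Wolfowitz bound~\eqref{eq:DKW} applied to the scores. First I would condition on $\D_{\mathrm{fit}}$, which fixes the score function $s(\cdot)$; by the assumed independence, $\D_{\mathrm{select}}$ remains i.i.d.\ from $\P^\star$, so the scalar scores $Z_j := s(\tilde\xi_j)$ are i.i.d.\ with (conditional) theoretical CDF $F(t)=\P^\star(s(\xi)\le t)=\P^\star(\Xi_0(t))$ and empirical CDF $F_m$. This reduction turns the abstract bulk-mass question on $\Xi$ into a one-sided CDF comparison on $\R$.

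Next I would apply~\eqref{eq:DKW} to the scalar sample to obtain the good event
\[
E := \Big\{\sup_{t\in\R}\abs{F_m(t)-F(t)}\le r_{m,\delta}\Big\},
\]
with $\Pr(E\mid\D_{\mathrm{fit}})\ge 1-\delta$. On $E$ the one-sided bound $F(t)\ge F_m(t)-r_{m,\delta}$ holds pointwise, and since $F$ is non-negative it upgrades to $F(t)\ge L^{\mathrm{DKW}}(t)$ for every $t\in\R$.

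The next step is to argue that $\hat t_{\rm DKW}$ is well-defined when $\gamma\ge r_{m,\delta}$ and that the envelope bound is preserved at the threshold itself. Because $F_m(t)\to 1$ as $t\to\infty$, we have $L^{\mathrm{DKW}}(t)\to 1-r_{m,\delta}\ge 1-\gamma$, so the set $\{t:L^{\mathrm{DKW}}(t)\ge 1-\gamma\}$ is non-empty. Since $L^{\mathrm{DKW}}$ is non-decreasing and right-continuous (inheriting these properties from $F_m$), this set is of the form $[\hat t_{\rm DKW},\infty)$ and the infimum is attained, giving $L^{\mathrm{DKW}}(\hat t_{\rm DKW})\ge 1-\gamma$. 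Chaining on $E$,
\[
\P^\star(\Xi_0(\hat t_{\rm DKW})) = F(\hat t_{\rm DKW}) \ge L^{\mathrm{DKW}}(\hat t_{\rm DKW}) \ge 1-\gamma,
\]
and the tower property removes the conditioning on $\D_{\mathrm{fit}}$ since the conditional bound is uniform in its realisation.

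The main obstacle I expect is justifying the attainment of the infimum defining $\hat t_{\rm DKW}$: the envelope $L^{\mathrm{DKW}}$ is a step function with jumps at the order statistics of the selection scores, so one must invoke right-continuity explicitly to conclude that the inequality $L^{\mathrm{DKW}}\ge 1-\gamma$ holds \emph{at} the threshold rather than only strictly above it. Without this care, one would only obtain coverage $1-\gamma-\eta$ for arbitrary $\eta>0$, weakening the finite-sample certificate; with it, the stated bound is exact.
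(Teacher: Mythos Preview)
Your proposal is correct and follows essentially the same approach as the paper's proof: condition on the score to reduce to scalar DKW, use the DKW event to get $F(t)\ge L^{\mathrm{DKW}}(t)$ pointwise, invoke right-continuity of $L^{\mathrm{DKW}}$ so the infimum is attained with $L^{\mathrm{DKW}}(\hat t_{\rm DKW})\ge 1-\gamma$, and chain. The only cosmetic difference is that the paper establishes non-emptiness via the finite maximum $Z_{\max}$ of the selection scores (where $F_m(Z_{\max})=1$) rather than via the limit $F_m(t)\to 1$, but this is immaterial.
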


\begin{table*}[t]
  \caption{Closed-form values \(\sup_{\xi\in\Xi_0(t_k)} f_x(\xi)\) for common loss \(f_x\) under two bulk-set geometries. Here the ellipsoid \(\Xi_0^{\text{ellip}}(t_k)=\{\xi:\|\Sigma_\mathrm{fit}^{-1/2}(\xi-\mu_\mathrm{fit})\|_2\le t_k\}\) and the box
  \(\Xi_0^{\text{box}}(t_k)=\{\xi:\max_i |(\xi_i-\mu_{\mathrm{fit},i})/w_i|\le t_k\}\). Write \(C_x:=a_x^\top\mu_\mathrm{fit}+b_x\), \(m_2(a_x):=\|\Sigma_\mathrm{fit}^{1/2}a_x\|_2\), and \(m_1(a_x):=\sum_i w_i|a_{x,i}|\). The quantities are derived from support-function calculations in convex optimisation \citep{boyd2004convex} and proved in Appendix~\ref{proof:closed-form-sup}.}
  \begin{center}
    \begin{small}
        \setlength{\tabcolsep}{6pt}
        \begin{tabular}{lcc}
          \toprule
          \(f_x(\xi)\) & Ellipsoid \(\sup_{\xi\in\Xi_0^{\text{ellip}}(t_k)} f_x(\xi)\) & Box \(\sup_{\xi\in\Xi_0^{\text{box}}(t_k)} f_x(\xi)\) \\
          \midrule
          Linear: \(a_x^\top \xi + b_x\)
            & \(C_x + t_k m_2(a_x)\)
            & \(C_x + t_k m_1(a_x)\) \\
          ReLU: \(\max\{0,a_x^\top \xi + b_x\}\)
            & \(\max\big\{0, C_x + t_k m_2(a_x)\big\}\)
            & \(\max\big\{0, C_x + t_k m_1(a_x)\big\}\) \\
          Absolute value: \(|a_x^\top \xi + b_x|\)
            & \(|C_x| + t_k m_2(a_x)\)
            & \(|C_x| + t_k m_1(a_x)\) \\
          Piecewise-linear: \(\max_{j\le J}\{a_{x,j}^\top \xi + b_{x,j}\}\)
            & \(\displaystyle \max_{j\le J}\big\{a_{x,j}^\top \mu_\mathrm{fit} + b_{x,j} + t_k m_2(a_{x,j})\big\}\)
            & \(\displaystyle \max_{j\le J}\big\{a_{x,j}^\top \mu_\mathrm{fit} + b_{x,j} + t_k m_1(a_{x,j})\big\}\) \\
          \bottomrule
        \end{tabular}
    \end{small}
  \end{center}
  \label{tab:closed-form-sup}
\end{table*}

\begin{remark}
\label{rem:blockwise-score-selection}
Our bulk set calibration extends directly to \emph{blockwise} bulk sets built as intersections. If $\xi=(\xi^{(1)},\dots,\xi^{(k)})$ with $\sum_{i=1}^k d_i=d$, we may choose separate bulk geometries and scores $s_i$ on each dimension block and calibrate thresholds using parameters $(\gamma_i,\delta_i)$ separately. $\widehat\Xi_0=\bigcap_{i=1}^k \{\xi:\ s_i(\xi^{(i)})\le \hat t_i\}$ still yields a valid bulk-mass certificate as long as $\sum_{i=1}^k \gamma_i\le \gamma$ and $\sum_{i=1}^k \delta_i\le \delta$. This is relevant when the dimensions are asymmetric or heterogeneous (the regression problem in Section~\ref{sec:ca-housing} will be an example). See Lemma~\ref{lem:blockwise-bulk-coverage-dkw} in the Appendix for more details.
\end{remark}

Lemma~\ref{lem:bulk-coverage-dkw} certifies the mass of the bulk set calibrated via our DKW-based procedure, but it does not imply uniqueness of the bulk set. In fact, the geometry of the bulk set, decided by the score function, is an important modelling choice. Two major considerations are: (i) keeping $\sup_{\xi\in\Xi_0} f_x(\xi)$ tractable (see examples in Table~\ref{tab:closed-form-sup}), and (ii) aligning the bulk set geometry with the type of contamination we want to protect against. We will discuss more on the latter in Section~\ref{sec:bulk-set-rule-of-thumb}.

While naive confidence-set truncations (e.g., the three-sigma rule) can also provide a bounded $\Xi_0$, they cannot, in general, certify \eqref{cond:coverage}, so the tail mass outside $\Xi_0$ is uncontrolled. See Appendix~\ref{app:tv-mmd-discussion} for counterexamples. With the bulk-mass guarantee in Lemma~\ref{lem:bulk-coverage-dkw}, we now give a high-probability risk certificate by separating bulk robustness from tail control.

\begin{theorem}[Risk certificate, proved in Appendix~\ref{proof:certificate}]
\label{thm:certificate}
Let $\P^\star$ denote the training distribution and let $\P_c$ be a data-driven centre distribution with $\P_c(\Xi_0)>0$. Assume that, with probability at least $1-\delta$, we have $\P^\star(\Xi_0)\ge 1-\gamma$ for $\gamma\in[0,1)$. Suppose $
\widetilde \P = (1-\varepsilon^\star)\P^\star + \varepsilon^\star \widetilde R$, 
$\varepsilon^\star\in[0,1)$, $\widetilde R\in\mathcal{P}(\Xi)$. 
Assume the in-bulk training centre mismatch $\varepsilon_c:=\LV(\P^\star_{\Xi_0}, \P_{c,\Xi_0}) <1$. Assume further that $f_x$ is nonnegative\footnote{$f_x$ is nonnegative for all losses considered in this paper. If $f_x$ can be negative, the same result holds for $\abs{f_x}$.} and that there exists $p>1$ with $M_p(x):=\big(\E_{\xi\sim \widetilde \P}[|f_x(\xi)|^p]\big)^{1/p}<\infty$ for all $x\in\mathcal X$. Let $\rho_{\Xi_0}:=\widetilde R(\Xi_0)/\widetilde \P(\Xi_0)$. With probability at least $1-\delta$, the following bound holds simultaneously for all $x\in\mathcal X$:
\begin{align*}
\E_{\xi\sim \widetilde \P}[f_x(\xi)] &\le (1-\varepsilon_{\mathrm{eff}})\E_{\xi\sim\P_{c,\Xi_0}}[f_x(\xi)]
+ \varepsilon_{\mathrm{eff}}\sup_{\xi\in\Xi_0} f_x(\xi)
\\ &+M_p(x)\Bigl((1-\varepsilon^\star)\gamma+\varepsilon^\star\bigl(1-\widetilde R(\Xi_0)\bigr)\Bigr)^{1/q},    
\end{align*}
where $q:=p/(p-1)$ and $\varepsilon_{\mathrm{eff}}
:= \varepsilon_c+\varepsilon^\star \rho_{\Xi_0} - \varepsilon^\star\varepsilon_c \rho_{\Xi_0}$ is the effective in-bulk tolerance.
\end{theorem}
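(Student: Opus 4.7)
The plan is to decompose $\E_{\tilde\P}[f_x(\xi)]$ into an in-bulk part $\E_{\tilde\P}[f_x\1_{\Xi_0}]$ and a tail part $\E_{\tilde\P}[f_x\1_{\Xi_0^c}]$, control the tail via Hölder's inequality using the moment assumption, and reduce the in-bulk part to Theorem~\ref{thm:worst-case-risk} after showing that $\tilde\P_{\Xi_0}$ lies in an LV ball of radius $\varepsilon_{\mathrm{eff}}$ around $\P_{c,\Xi_0}$. All of the analysis takes place on the $1-\delta$ event of Lemma~\ref{lem:bulk-coverage-dkw} on which $\P^\star(\Xi_0)\ge 1-\gamma$. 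For the tail, Hölder with conjugate exponents $(p,q)$ gives
\begin{equation*}
\E_{\tilde\P}[f_x\1_{\Xi_0^c}]\le M_p(x)\,\tilde\P(\Xi_0^c)^{1/q},
\end{equation*}
and expanding $\tilde\P=(1-\varepsilon^\star)\P^\star+\varepsilon^\star\tilde R$ together with $\P^\star(\Xi_0^c)\le\gamma$ and $\tilde R(\Xi_0^c)=1-\tilde R(\Xi_0)$ yields the stated tail term.

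For the in-bulk part, I first identify $\tilde\P_{\Xi_0}$ explicitly: for $A\subseteq\Xi_0$,
\begin{equation*}
\tilde\P(A)=(1-\varepsilon^\star)\P^\star(\Xi_0)\P^\star_{\Xi_0}(A)+\varepsilon^\star\tilde R(\Xi_0)\tilde R_{\Xi_0}(A),
\end{equation*}
and dividing by $\tilde\P(\Xi_0)$ gives
$\tilde\P_{\Xi_0}=(1-\varepsilon^\star\rho_{\Xi_0})\P^\star_{\Xi_0}+\varepsilon^\star\rho_{\Xi_0}\tilde R_{\Xi_0}$. The assumption $\LV(\P^\star_{\Xi_0},\P_{c,\Xi_0})\le\varepsilon_c$ combined with Proposition~\ref{prop:credal-ball} yields a representation $\P^\star_{\Xi_0}=(1-\varepsilon_c)\P_{c,\Xi_0}+\varepsilon_c R'$ for some $R'\in\mathcal P(\Xi_0)$. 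Substituting and collecting the coefficient of $\P_{c,\Xi_0}$ shows
\begin{equation*}
\tilde\P_{\Xi_0}=(1-\varepsilon_{\mathrm{eff}})\P_{c,\Xi_0}+\varepsilon_{\mathrm{eff}}R''
\end{equation*}
for some $R''\in\mathcal P(\Xi_0)$, so $\tilde\P_{\Xi_0}\in\A^{\LV}_{\varepsilon_{\mathrm{eff}},\Xi_0}(\P_{c,\Xi_0})$. Theorem~\ref{thm:worst-case-risk} then bounds $\E_{\tilde\P_{\Xi_0}}[f_x]$ by the claimed $\mathrm{mean}+\sup$ expression, and since $f_x\ge 0$ and $\tilde\P(\Xi_0)\le 1$, multiplying preserves the same upper bound on $\E_{\tilde\P}[f_x\1_{\Xi_0}]$.

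The main obstacle is the algebraic verification that composing the two mixture operations — restricting to $\Xi_0$ (which contributes $\varepsilon^\star\rho_{\Xi_0}$) and rewriting $\P^\star_{\Xi_0}$ as an $\varepsilon_c$-contamination of $\P_{c,\Xi_0}$ — collapses into a single LV contamination with radius exactly $\varepsilon_{\mathrm{eff}}=\varepsilon_c+\varepsilon^\star\rho_{\Xi_0}-\varepsilon^\star\varepsilon_c\rho_{\Xi_0}$. This is routine but requires careful bookkeeping of the normalisations, in particular checking that the residual mass $\varepsilon_{\mathrm{eff}}$ normalises $(1-\varepsilon^\star\rho_{\Xi_0})\varepsilon_c R'+\varepsilon^\star\rho_{\Xi_0}\tilde R_{\Xi_0}$ to a probability measure on $\Xi_0$. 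Once confirmed, adding the in-bulk and tail bounds on the certificate event gives the stated inequality, and uniformity in $x$ follows because the bulk-mass event does not depend on $x$ while the remaining estimates are pointwise in $x$.
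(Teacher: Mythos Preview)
Your proposal is correct and follows essentially the same route as the paper's proof: the paper likewise works on the event $\{\P^\star(\Xi_0)\ge 1-\gamma\}$, decomposes $\E_{\tilde\P}[f_x]$ into a bulk and a tail piece, controls the tail by H\"older with the bound $\tilde\P(\Xi_0^c)\le(1-\varepsilon^\star)\gamma+\varepsilon^\star(1-\tilde R(\Xi_0))$, and for the bulk shows $\tilde\P_{\Xi_0}=(1-\varepsilon^\star\rho_{\Xi_0})\P^\star_{\Xi_0}+\varepsilon^\star\rho_{\Xi_0}\tilde R_{\Xi_0}$ and then composes with $\P^\star_{\Xi_0}=(1-\varepsilon_c)\P_{c,\Xi_0}+\varepsilon_c R_c$ to obtain the single contamination with radius $\varepsilon_{\mathrm{eff}}=1-(1-\varepsilon_c)(1-\varepsilon^\star\rho_{\Xi_0})$. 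The only organisational difference is that the paper packages the composition step and the bulk/tail split into two small standalone lemmas and handles the edge case $\tilde R(\Xi_0)=0$ explicitly; your sketch should note that case (where $\tilde R_{\Xi_0}$ is undefined but $\tilde\P_{\Xi_0}=\P^\star_{\Xi_0}$ directly), but otherwise nothing is missing.
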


The effective tolerance $\varepsilon_{\mathrm{eff}}$ is an upper bound on the deployment in-bulk distortion $\tilde\varepsilon:=\LV(\widetilde \P_{\Xi_0},\P_{c,\Xi_0})$, and it combines in-bulk centre mismatch $\varepsilon_c$ and the portion of deployment contamination that lands inside the bulk $\varepsilon^\star\rho_{\Xi_0}$. See Remark~\ref{rem:epsc-vanish} in Appendix~\ref{proof:certificate} for a discussion on sufficient conditions for $\varepsilon_c$ to vanish asymptotically. We impose no additional restriction on the contaminant $\widetilde R$ beyond the global $p$-moment condition on $\widetilde \P$. If $\widetilde R(\Xi_0)$ is large, then more of the contamination lies in-bulk (increasing $\varepsilon_{\mathrm{eff}}$), while the tail term improves because the out-of-bulk mass $\varepsilon^\star(1-\widetilde R(\Xi_0))$ shrinks. Without deployment samples, Theorem~\ref{thm:certificate} serves as a structural decomposition since $\varepsilon^\star$, $\rho_{\Xi_0}$ and $M_p(x)$ are generally not observable from the training sample alone.

\section{Tolerance Selection: Calibrating \(\varepsilon\)}
\label{sec:tolerance-selection}
In the setting of our paper, training data are i.i.d.\ from \(\P^\star\), and the unknown test-time environment may follow a contaminated law \(\tilde \P=(1-\varepsilon^\star)\P^\star+\varepsilon^\star \tilde R\). Without extra information on \(\tilde \P\), the true contamination level \(\varepsilon^\star\) is not identifiable from training data alone. We therefore treat \(\varepsilon\) as a robustness budget that controls the level of contamination protection in our LV objective, analogous to how Wasserstein DRO treats its radius as a tunable robustness budget, and similar to the IP intuition discussed in the introduction. In practice, we choose \(\varepsilon\) by validation tuning: in the California housing experiment (Section~\ref{sec:ca-housing}), we use geo-block cross-validation (CV), and in the CivilComments experiment (Section~\ref{sec:exp_civilcomments}), we use minimax validation selection. 

Appendix~\ref{app:eps-calibration} gives a simple and computationally inexpensive score-based alignment diagnostic for the in-bulk centre mismatch \(\varepsilon_c\) (Lemma~\ref{lem:eps-lb-score}); it also provides a lemma that characterises the LV distortion (Lemma~\ref{lem:RN-LV}) if deployment samples are available (this is not the setting in our paper). 
\section{Experiments \& Empirical Evidence}
\label{sec:experiments}

\subsection{Bayesian Centre: Heavy-tailed Synthetic Newsvendor with Demand Spikes}
\label{sec:exp:newsvendor}

We study a $d$-item newsvendor problem. The goal is to choose the order sizes $x \in \mathbb{R}^d_{+}$ before observing the random demands $\xi \in \mathbb{R}^d$ of items, which requires balancing holding costs for over-ordering against backorder costs for under-ordering.
The per-sample cost is
\begin{equation}
f_x(\xi)=\sum_{j=1}^{d}\left(h[x_j-\xi_j]_{+}+b[\xi_j-x_j]_{+}\right),
\label{eq:newsvendor-loss}
\end{equation}
with holding cost $h=3$ and backorder cost $b=8$ (following \citet{shapiro2023bayesian,dellaporta2024decision}).
We set $d=5$ and generate training data i.i.d.\ from a multivariate Student-$t$ distribution with $\nu=3$ degrees of freedom. We fit the Bayesian model with Student-$t$ likelihood and a normal-inverse-Wishart prior, and generate posterior predictive samples via a Gibbs sampler (details in Appendix~\ref{app:newsvendor-details}).

To model rare demand surges (e.g.\ a sudden promotion or media exposure), we contaminate the test distribution. For contamination levels $\varepsilon_{\mathrm{cont}} \in \{0,0.1,0.2\}$, we generate an $\varepsilon_{\mathrm{cont}}$ fraction of test points from a ``spike'' component (Gaussian with a shifted mean, see Appendix~\ref{app:newsvendor-details}). %Since the training data is clean, any robustness gains come from the ambiguity set structure rather than fitting visible outliers.

We compare LV to KL-based Bayesian DRO baselines (KL-BDRO~\citep{shapiro2023bayesian}, $\text{KL-BAS}_{\text{PP}}$~\citep{dellaporta2024decision}\footnote{$\text{KL-BAS}_{\text{PE}}$~\citep{dellaporta2024decision} is not applicable because the Student-$t$ likelihood is not in the exponential family.}), the empirical baseline KL-Empirical~\citep{hu2013kullback} and the outlier-robust Wasserstein baseline OR-WDRO~\citep{nietert2023outlier}. The Bayesian methods are approximated via SAA using a fixed sampling budget $M=2500$. We sweep a range of the tolerance parameter $\varepsilon$ and report the OOS performance. For OR-WDRO, we fix the Wasserstein radius $\rho=5$ and sweep the outlier fraction $\varepsilon_{\mathrm{OR}}\in[0,0.5)$ (additional $\rho$ value sweep is in Appendix~\ref{app:orwdro-parameterisation}). We report (i) mean--variance frontiers of the OOS cost, and (ii) the mean--standard-deviation criterion $\text{MSD}:=\tfrac12(\mu+\sigma)$, where $\mu$ and $\sigma$ are the OOS mean and standard deviation (SD) in Figure~\ref{fig:newsvendor-student}.

\begin{figure*}[t]
  \vskip 0.15in
  \begin{center}
    % Row 1: frontiers (eps_cont = 0, 0.1, 0.2)
    \begin{minipage}{0.32\textwidth}
      \centering
      \includegraphics[width=\textwidth]{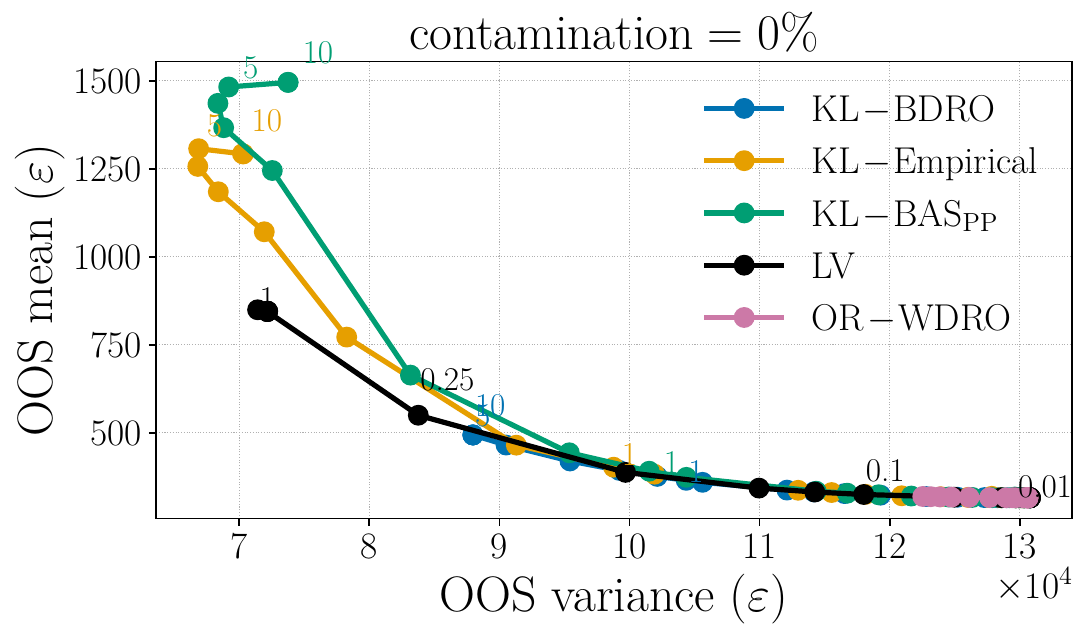}
    \end{minipage}\hfill
    \begin{minipage}{0.32\textwidth}
      \centering
      \includegraphics[width=\textwidth]{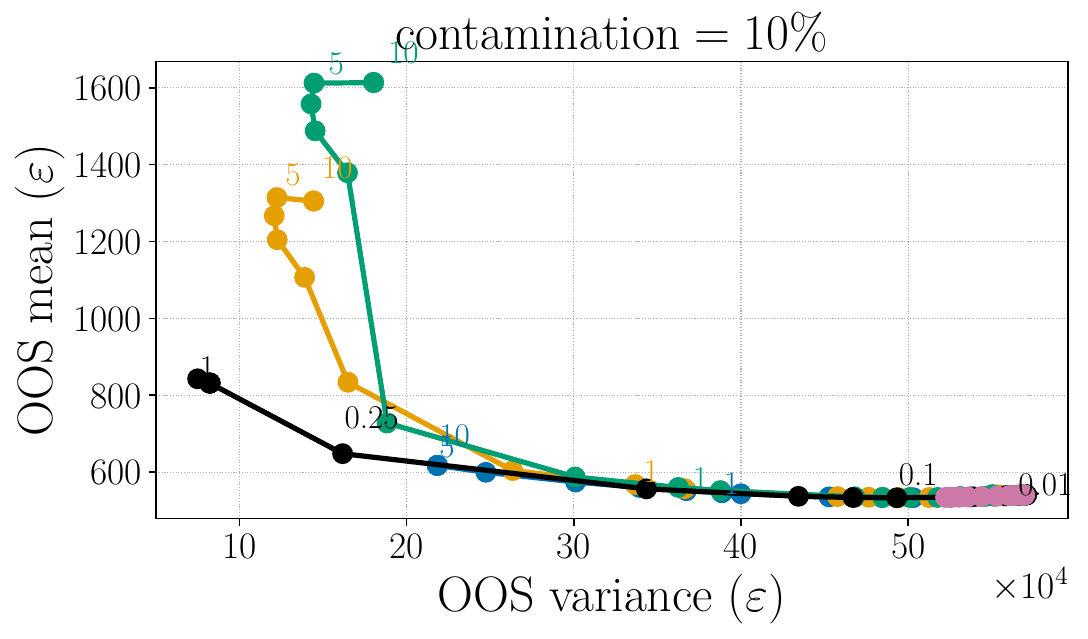}
    \end{minipage}\hfill
    \begin{minipage}{0.32\textwidth}
      \centering
      \includegraphics[width=\textwidth]{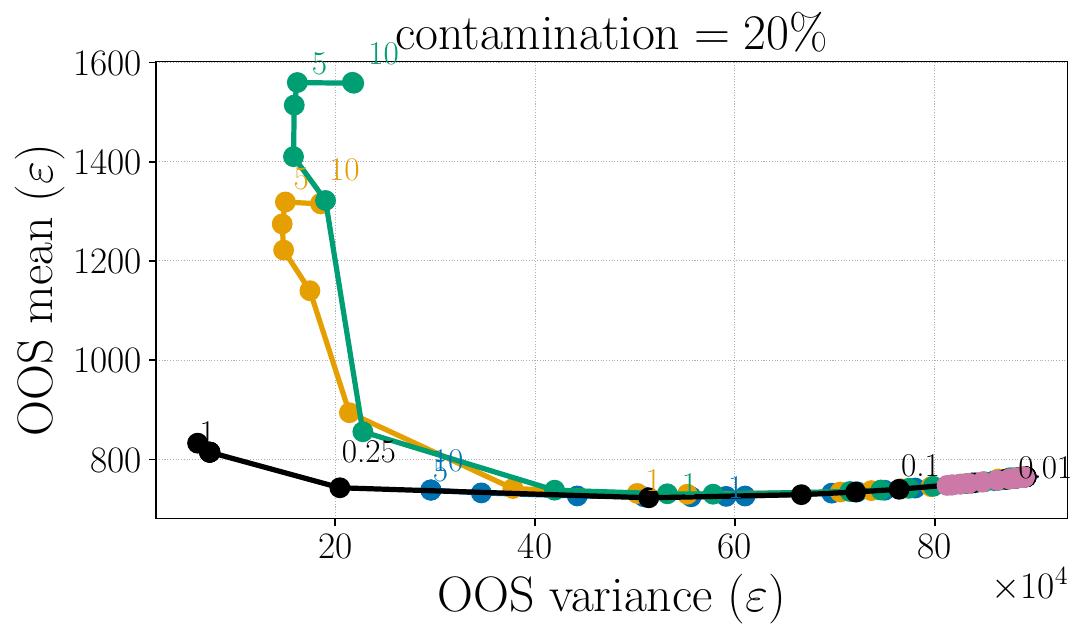}
    \end{minipage}
    % Row 2: MSD vs epsilon (eps_cont = 0, 0.1, 0.2)
    \begin{minipage}{0.32\textwidth}
      \centering
      \includegraphics[width=\textwidth]{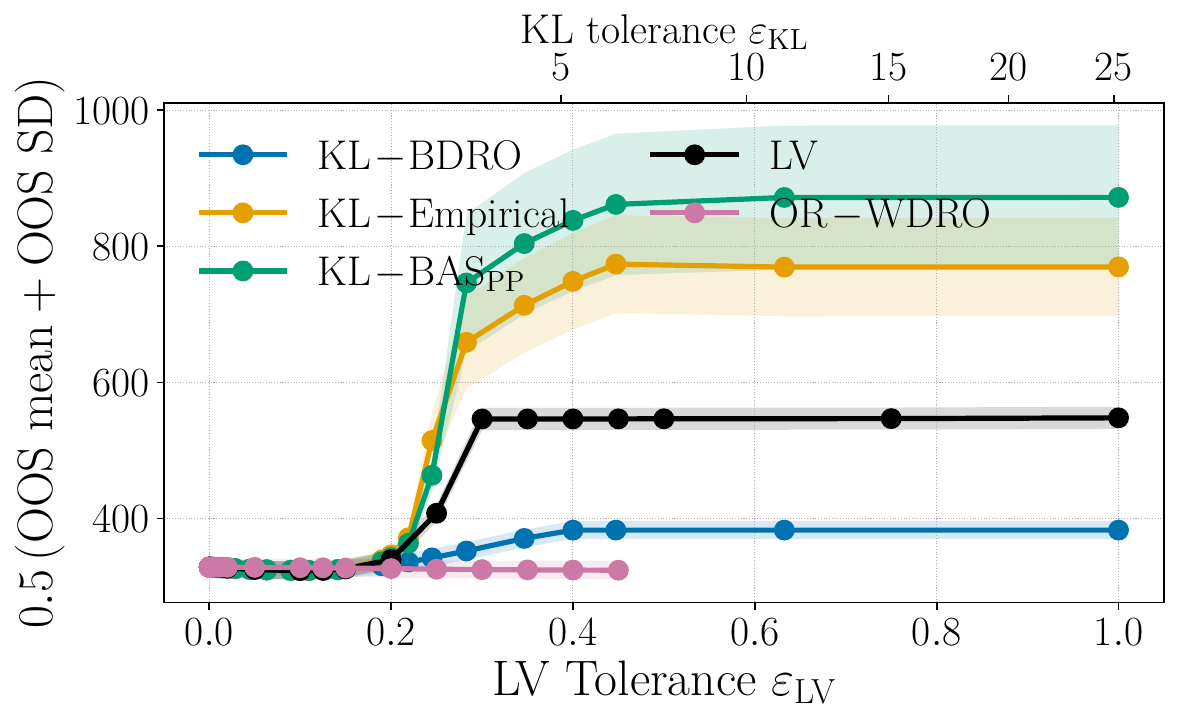}
    \end{minipage}\hfill
    \begin{minipage}{0.32\textwidth}
      \centering
      \includegraphics[width=\textwidth]{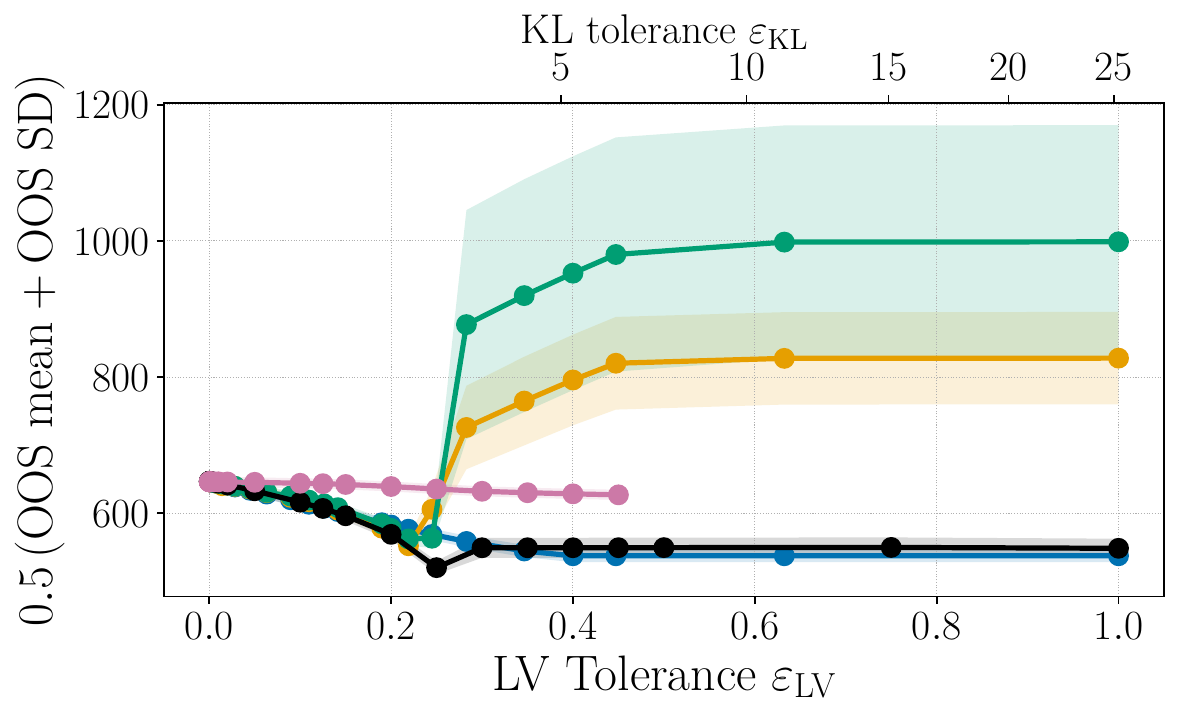}
    \end{minipage}\hfill
    \begin{minipage}{0.32\textwidth}
      \centering
      \includegraphics[width=\textwidth]{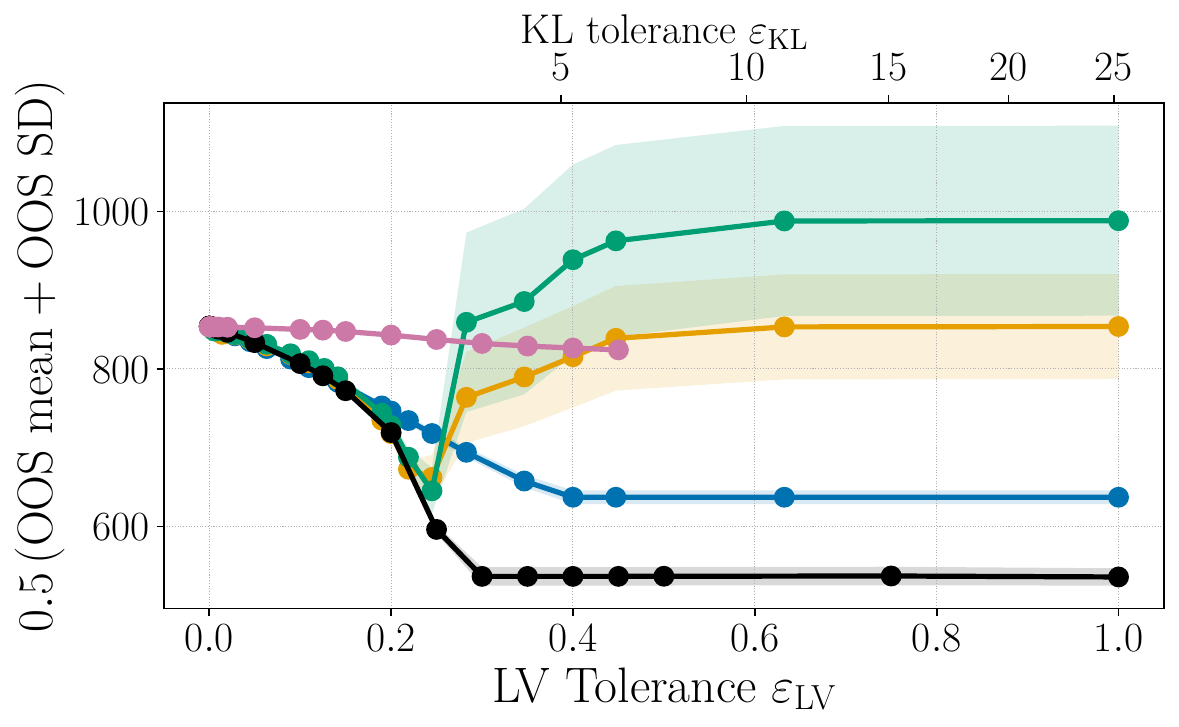}
    \end{minipage}

    \caption{Student-$t$ newsvendor (cost: lower-left is better). Top row: OOS mean--variance frontiers for a range of $\varepsilon_{\LV}\in(0,1]$; $\varepsilon_{\KL}\in (0,25]$. OR-WDRO uses $\varepsilon_{\mathrm{OR}}$ in $(0,0.5)$. Each point represents one $\varepsilon$ value. Some $\varepsilon_{\LV}$ and $\varepsilon_{\KL}$ are marked. Bottom row: $\text{MSD}=\tfrac12(\text{OOS mean}+\text{OOS SD})$ versus $\varepsilon$; shaded regions: $95\%$ confidence bands. From left to right: $\{0,0.1,0.2\}$ contamination.
    }
    \label{fig:newsvendor-student}
  \end{center}
  %\vskip -0.10in
\end{figure*}

\begin{table}[ht]
  \caption{Run times (seconds) for the Student-$t$ newsvendor experiments across 100 independent replications. Values reported are mean (SD). Test contamination does not affect runtime, so we report a single table for $0\%$ contamination.}
  \label{tab:newsvendor-runtime}
  \begin{center}
    \begin{small}
        \begin{tabular}{lccc}
          \toprule
          Algorithm & sampling & solve & total \\
          \midrule
          LV (ours)  & 1.0 (0.0)  & \textbf{0.3} (0.0) & \textbf{1.3} (0.0) \\
          $\text{KL-BAS}_{\text{PP}}$ & 1.0 (0.0)  & 2.6 (0.4) & 3.6 (0.4) \\
          KL-BDRO  & \textbf{0.1} (0.0) & 2.3 (0.4) & 2.4 (0.4) \\
          KL-Empirical  & --  & 1.7 (0.3) & 1.7 (0.3) \\
          OR-WDRO & --  & 23.5 (5.0) & 23.5 (5.0) \\
          \bottomrule
        \end{tabular}
    \end{small}
  \end{center}
\end{table}
Across contaminated test settings (middle and right columns of Figure~\ref{fig:newsvendor-student}), LV provides the strongest mean--variance trade-off and achieves the lowest MSD values. In the setting without contamination (left column), LV can be too conservative when $\varepsilon_{\LV}$ is large; in this regime OR-WDRO and KL-BDRO attain lower MSD. However, the minimum MSD achieved by LV remains similar to the best baselines even without contamination.

Although the frontiers of KL-BDRO are close to those of LV at small $\varepsilon_{\KL}$ tolerances, it saturates once the $\varepsilon_{\KL}$ exceeds a finite-scenario limit (the OOS mean--variance points for $\varepsilon_{\KL} = 5, 10$ are overlapping); see Remark~\ref{rem:kl-bdro-saturation} in Appendix~\ref{app:newsvendor-details}. LV continues to trade OOS mean for reduced OOS variance and can yield lower MSD under contamination. %(Appendix~\ref{rem:kl-plateau}).

Table~\ref{tab:newsvendor-runtime} reports end-to-end run times under the sampling budgets used for Figure~\ref{fig:newsvendor-student}. LV is fastest because it is \emph{sample-efficient}, which means that it can reach its OOS frontier with far fewer SAA samples than the KL-based baselines. In our implementation, LV solves a fixed-size truncated SAA with size $0.5M$ (a conservative choice for rejection sampling)\footnote{Because the number of accepted samples after truncation is random and our optimisation solver requires a fixed problem size.}, whereas KL-BDRO and $\text{KL-BAS}_{\text{PP}}$ use all $M$ samples in their SAA. Importantly, even with this smaller solver input, LV matches or improves OOS performance; the sample efficiency analysis in Appendix~\ref{app:newsvendor-sample-efficiency} shows that LV's performance is already stable with as few as $50$ truncated samples, while KL-based Bayesian methods require much larger SAA budgets to stabilise.

To assess the impact of the tail-budget hyperparameter $\gamma$, we perform a sensitivity analysis in Appendix~\ref{sec:gamma-sensitivity}, which shows that the performance of LV remains stable under a reasonable range. The analysis suggests choosing $\gamma$ just above the minimum certifiable value and leaving a small tail outside the bulk (about 10 scores, so $\gamma \approx r_{m,\delta}+10/m$) to balance stability and effectiveness.

\subsection{Frequentist Centre: California Housing Regression under Geographic Deployment Shift}
\label{sec:ca-housing}

We study the California housing \citep{pace1997spatial} linear regression task under a geographic deployment shift. The goal is to predict the house value from covariates and extrapolate from an Eastern region to a Western region, which shifts the coastal--inland mix. We train and tune on the Eastern 50\%, evaluate on the Western 20\%, and exclude the intermediate 30\% band to alleviate spatial correlation between training and testing. Our centre distribution for LV is a frequentist Gaussian copula centre \citep{liu2009copula}, and the bulk set is $\Xi_{0,X}\times \Xi_{0,Y}$, where $\Xi_{0,X}$ is an ellipsoid over the features and $\Xi_{0,Y}$ is an interval over housing prices (see Remark~\ref{rem:blockwise-score-selection}). Details on our frequentist centre are given in Appendix~\ref{app:ca-housing-copula-centre}. Hyperparameters are selected via geo-block CV inside the East region (Appendix~\ref{app:subsec-geo-cv}). We report test mean absolute error (MAE), root mean squared error (RMSE), the 98th percentile of absolute error $p_{98}(\abs{y-\hat y})$, and the 2\% tail risk $\mathrm{CVaR}_{2\%}(\abs{y-\hat y})$. 

\subsubsection{Bulk-set Geometry Choice}
\label{sec:bulk-set-rule-of-thumb}
In regression under deployment shift, it is natural for the conditional law \(Y\mid X\) to change. Using \(\Xi_{0,X}\times \Xi_{0,Y}\) decouples covariate geometry from outcome variability, so the tolerance level can trade off mean error against adverse outcomes within a plausible range. Appendix~\ref{app:ch-product-bulk-stability} explains that a joint ellipsoid bulk set in $(X,Y)$ is ineffective for regression because it can hard-code the training $Y-X$ dependence; additionally, box-type support assumptions can be overly conservative because they can allow every dimension to move independently for the adversary, as we discuss in Appendix~\ref{app:subsec:tv} and Appendix~\ref{app:ch-product-bulk-stability}.

In practice, we advocate for the following \emph{rule of thumb}: use Mahalanobis scores by default; use blockwise bulk sets to separate asymmetric or heterogeneous dimensions in problems such as regression; and only use box scores when independent coordinate-wise deviations are the intended adversary.

\subsubsection{Experiment Results}
We compare LV to two DRO baselines that trade off mean error and tail error: CVaR and Wasserstein~\citep{chen2018wdroregression}. All methods use a linear predictor and the absolute loss as the base loss. We also include ERM over mean absolute loss and a ridge regression baseline.

\begin{table}[ht]
\caption{California housing under geographic East $\rightarrow$ West deployment shift with a 30\% gap. Entries are mean (SD) over 100 replications. All error metrics are reported in units of $10^4$.}
\centering
\begin{small}
\setlength{\tabcolsep}{4pt}
\begin{tabular}{lcccc}
\toprule
Method & MAE & RMSE & $p_{98}$ loss & $\text{CVaR}_{2\%}$ \\
\midrule
LV \scriptsize{(ours)}    & \textbf{10.7} (0.72) & \textbf{12.2} (0.77) & \textbf{21.9} (1.23) & \textbf{23.9} (1.10) \\
    CVaR  & 13.0 (0.00) & 14.4 (0.00) & 24.7 (0.00) & 27.2 (0.00) \\
    Wass  & 12.3 (0.02) & 13.7 (0.02) & 24.3 (0.04) & 26.5 (0.03) \\
    Ridge & 13.2 (0.01) & 14.6 (0.01) & 25.2 (0.01) & 27.7 (0.01) \\
    ERM   & 12.7 (0.00) & 14.1 (0.00) & 24.7 (0.00) & 27.7 (0.00) \\
\bottomrule    
\end{tabular}
\end{small}
\label{tab:ca-housing-results}
\end{table}

\begin{table}[ht]
\caption{Runtime (seconds) for the DRO methods on the East $\rightarrow$ West deployment shift. Likelihood time is the additional likelihood-fitting, bulk-calibration and rejection-sampling time, which is non-zero only for LV in our implementation. Total time is the end-to-end time recorded per replication.}
\centering
\begin{small}
\setlength{\tabcolsep}{4pt}
\begin{tabular}{lcccc}
\toprule
Method & CV & Likelihood & Solve & Total \\
\midrule
LV   & \textbf{1.30} (0.02) & \textbf{0.01} (0.00) & \textbf{0.13} (0.00) & \textbf{1.44} (0.02) \\
CVaR  & 5.14 (0.09) & -- & 0.34 (0.01) & 5.47 (0.09) \\
Wass  & 5.80 (0.27) & -- & 0.36 (0.02) & 6.16 (0.29) \\
\bottomrule
\end{tabular}
\end{small}
\label{tab:ca-housing-runtime}
\end{table}

Table~\ref{tab:ca-housing-results} shows that LV performs best on all four test metrics. Compared to the strongest baseline on this split (Wasserstein), LV reduces MAE by 13\% and reduces both $p_{98}$ and $\mathrm{CVaR}_{2\%}$ losses by about 10\%. Table~\ref{tab:ca-housing-runtime} shows that LV is also the fastest DRO method in this setting. Additional baselines, parameter sweep trajectories and diagnostic oracle selection in Appendix~\ref{app:ca-housing} show LV maintains favourable MAE--$\mathrm{CVaR}_{2\%}$ trade-offs across hyperparameter settings and alternative split gap sizes (Figure~\ref{fig:ca-housing-trajectories} and Table~\ref{tab:ca-housing-oracle}).

\subsection{Empirical Centre: CivilComments Classification}
\label{sec:exp_civilcomments}

We study the CivilComments binary classification task from WILDS \citep{koh2021wilds}, which targets robustness under subpopulation-shift. The metrics are mean accuracy and worst-group accuracy on the test data. Worst-group accuracy is the minimum accuracy over 16 identity$\times$label slices.

We represent each comment using a fixed hashed $n$-gram feature map and train a linear classifier head. We avoid fine-tuning a pretrained transformer in this experiment to focus on the goal of comparing \emph{training objectives} under a common and tractable representation, rather than to study representation learning or to match state-of-the-art performance. Holding the feature map fixed makes the comparison more controlled and enables repeated runs and hyperparameter sweeps. All methods share the same input representation and head family, and differ only in the DRO-induced objective and its stochastic approximation.

Baselines are ERM, GroupDRO~\citep{sagawa2020distributionally}, CVaR, and $\chi^2$-DRO~\citep{namkoong2016advances}. We propose an LV extension of GroupDRO, which we call LV-Group. In this experiment, a data point is $\xi=(x,y,g)$, where $x$ is the feature vector, $y\in\{0,1\}$ is the label, and $g$ is the group indicator. Here, we write the classifier parameters as $\theta$ to avoid notation overlap. Let $f_{\theta}(\xi)$ be the per-example cross-entropy loss, and let $\hat \P_g$ be the empirical distribution of group $g$. GroupDRO considers the worst-case objective over uncertain group mixture weights,
\[
\min_{\theta} \sup_{\P \in \mathcal{C}_{\mathrm{G}}} \E_{\P}[f_{\theta}(\xi)],\;
\mathcal{C}_{\mathrm{G}}=\left\{\sum_{g=1}^G q_g\hat \P_g :q\in\Delta_G\right\},
\]
where $\Delta_G$ is the $(G-1)$-dimensional probability simplex. This ambiguity set is induced by all mixtures of the empirical group conditionals. Similar forms of $\mathcal{C}_{\mathrm{G}}$ appear in IP literature as the \emph{credal model averaging} credal set, used to obtain robust model-averaged predictions under prior model uncertainty \citep{corani2013credal}.

LV-Group restricts the centres $\P\in\mathcal{C}_{\mathrm{G}}$ to a label-free bulk set $\Xi_0$ (see Appendix~\ref{app:civilcomments}) and adds an $\varepsilon$-contamination layer,
\[
\mathcal{C}_{\mathrm{LVG}}(\varepsilon)=\left\{(1-\varepsilon)\P_{\Xi_0}+\varepsilon R:\P\in\mathcal{C}_{\mathrm{G}},\ R\in\mathcal{P}(\Xi_0)\right\},
\]
where $\P_{\Xi_0}$ denotes $\P$ restricted to $\Xi_0$ and renormalised. This matches the \emph{discounting} construction in IP \citep{shafer1976,walley1991statistical, moral2018discounting}: we downweight the structured credal set $\mathcal{C}_{\mathrm{G}}$ and allocate the remaining mass to a vacuous model. Discounting formalises \emph{source unreliability} by weakening the information encoded in a credal set $K$: an unreliability level $\varepsilon\in[0,1]$ yields a larger set $D(K,\varepsilon)$ with $D(K,0)=K$, monotone growth in $\varepsilon$, and a vacuous limit at $\varepsilon=1$. A principled view is to pick a divergence $D_i$ and set $D(K,\varepsilon)=\{q:\exists p\in K,\ D_i(p,q)\le\varepsilon\}$.%, where Shafer-style convex mixing with ignorance appears as a special case. 

The resulting robust risk decomposes as
\begin{align*}
    \sup_{Q\in\mathcal{C}_{\mathrm{LVG}}(\varepsilon)} \E_{Q}[f_{\theta}]=(1-\varepsilon)\sup_{\P\in\mathcal{C}_{\mathrm{G}}}\E_{\P_{\Xi_0}}[f_{\theta}]+\varepsilon\sup_{\xi\in\Xi_0}f_{\theta},
\end{align*}
where $f_\theta$ is short for $f_\theta(\xi)$. The first term targets shifts in group proportions, while the contamination term targets contaminations that the grouping does not capture. In addition to adding vacuous distortions to fixed probabilities, LV-Group is an example of adding an IP discounting layer on top of other base DRO ambiguity sets, targeting the base robustness and OOS contamination simultaneously.

In our implementation, the restriction to $\Xi_0$ is approximated by filtering the official training data to $\{\xi\in\Xi_0\}$ and running the objective on minibatches drawn from this bulk-filtered set. The sup term is approximated via a smoothmax for tractability. As an ablation, GroupDRO-B applies the GroupDRO objective to the same bulk-filtered training set.
\begin{table}[t]
  \caption{CivilComments test performance under minimax validation selection. We report mean accuracy and worst-group accuracy (min over 16 identity$\times$label slices). Accuracy values are mean (SD) over 20 replications. GroupDRO-B uses the GroupDRO objective trained on the bulk-filtered data. Gap = mean accuracy $-$ worst-group accuracy; smaller values indicate fairer performance across groups. Acc. is short for accuracy.}
  \label{tab:civilcomments_optionb}
  \centering
  \begin{small}
  \begin{tabular}{lccc}
    \toprule
    Method & Mean acc. & Worst-group acc. & Gap \\
    \midrule
    LV-Group (ours) & 0.828 (0.005) & \textbf{0.516} (0.010)  & \textbf{0.312}\\
    GroupDRO & 0.770 (0.002) & 0.456 (0.005) & 0.314 \\
    GroupDRO-B & 0.768 (0.003) & 0.453 (0.007) & 0.315 \\
    ERM & \textbf{0.891} (0.000) & 0.022 (0.001) & 0.869 \\
    CVaR & 0.835 (0.011) & 0.230 (0.051) & 0.605 \\
    $\chi^2$-DRO & 0.798 (0.021) & 0.247 (0.058) & 0.551 \\
    \bottomrule
  \end{tabular}
\end{small}
\end{table}

Table~\ref{tab:civilcomments_optionb} summarises the results. LV-Group attains the highest worst-group accuracy and improves mean accuracy over GroupDRO. ERM has the highest mean accuracy but fails on the worst-group metric: compared to it, LV-Group only has a $0.063$ lower mean accuracy but improves the worst-group accuracy by $0.494$. See Appendix~\ref{app:civilcomments} for full details and extra ablations.

\section{Discussion}
\label{sec:conclusions}
We propose credal ambiguity sets by linking the IP notion of upper expectation with the DRO worst-case risk. Specifically, we address the ill-posedness of Huber contamination in distributional uncertainty sets. In unbounded spaces with unbounded losses, any DRO objective where the ambiguity set includes the forward Huber contamination becomes infinite unless the adversary is constrained. We propose focusing on a data-calibrated bulk set ${\Xi}_0$ and allowing an ${\varepsilon}$-fraction of adversarial mass within ${\Xi}_0$. This results in a robust objective with a closed-form expression of $\mathrm{mean}+\sup$, remaining computationally tractable for standard losses and bulk geometries. Additionally, we provide a high-probability certificate that distinguishes in-bulk robustness from tail behaviour. Experiments demonstrate a favourable robustness--accuracy trade-off and reduced runtimes in convex optimisation across synthetic and real-world data, with flexible centre choices such as Bayesian predictive, frequentist plug-in, or empirical methods. This demonstrates that the IP--DRO relationship we exploit can be both practical and computationally advantageous compared to traditional divergence-ball DRO.

We also introduce a new perspective on how common contamination neighbourhoods relate. The forward LV ball corresponds to adding adversarial mass, the reverse LV ball to trimming low-loss points, and the TV ball combines both effects. Understanding this connection allows us to interpret robustness budgets and align the ambiguity model with the type of shift we aim to defend against.

Our California housing experiment illustrates how the geometry of the bulk set can encode conditional distributional shifts. Generally, this is important in problems with asymmetric or heterogeneous dimensions, where an isotropic notion of robustness may misalign with plausible shifts.

There are clear limitations. Our bulk calibration relies on a split-sample DKW selection step. To certify ${\gamma}$ and ${\delta}$ around 0.05, the sample size needs to be roughly of order $10^3$, which can be restrictive in small-data scenarios (see Figure~\ref{fig:dkw-gamma-vs-m} in Appendix~\ref{app:bulk-set-calibration-details}). Developing more data-efficient calibration methods for ${\Xi}_0$ would enhance the framework's applicability in settings where large selection sets are infeasible.

The framework opens several promising avenues for future research. A systematic approach to discounting in ambiguity sets could unify handling of both target drift and contamination, while preserving the upper-expectation interpretation. A natural extension is a discounted version of reverse LV balls, enabling separate modelling of training and deployment contamination, possibly with different ratios. This would produce a generalised TV-type model where the mixture and CVaR tail parameters are not required to be equal, broadening the application scope. In model-based DRO, discounting existing divergence-based ambiguity sets could concurrently address model uncertainty, misspecification, and OOS contamination. Another extension is online bulk set calibration: with new deployment data, we could refit or update the score model and the centre distribution, recompute scores, and rerun the same DKW bulk calibration procedure, while keeping the downstream optimisation objective unchanged. This could be linked to recent work on online conformal prediction methods \citep{gibbs2024conformal,gauthier2025evalue,hultberg2026anytime} to establish formal long-run or anytime guarantees. 

More broadly, a unified theory relating the geometry of bulk sets, the choice of centre $\P_c$, and loss function $f_x$ could offer principled approaches tailored to robustness objectives and computational constraints. Finally, imprecise probability encompasses many credal sets beyond linear vacuity, and characterising their upper expectations in forms conducive to efficient DRO objectives is a promising direction to expand the design space of robust decision rules. 

%\section*{Accessibility}

%\section*{Software and Data}

\section*{Acknowledgements}
The authors would like to thank Dr Charita Dellaporta and Jake Hobson for their valuable insights and useful feedback on an earlier version of the draft. We would also like to thank the anonymous reviewers and the Area Chair whose comments greatly improved the paper. Mengqi Chen is supported by the Warwick Statistics Centre for Doctoral Training and acknowledges funding from the University of Warwick. Thomas B. Berrett was supported by European Research Council Starting Grant 101163546. Theodoros Damoulas acknowledges support from a UKRI Turing AI acceleration Fellowship [EP/V02678X/1]. For the purpose of open access, the authors have applied a Creative Commons Attribution (CC-BY) license to any Author Accepted Manuscript version arising from this submission.

\section*{Impact Statement}
This paper presents work whose goal is to advance the field of machine learning, specifically by making contamination-based distributionally robust optimisation well-posed and computationally tractable in continuous, unbounded settings. Distributionally robust methods can improve reliability under distribution shift, which may reduce the risk of rare but severe outlier events in downstream decisions. However, they can also be misused to justify overly conservative policies or to obscure model misspecification if tolerance levels are chosen without validation or domain expertise. The trade-off between robustness to OOS shift and in-sample performance is a fundamental challenge for decision makers.

\bibliography{bibliography}

@InProceedings{dellaporta2024decision,
  title = 	 {Decision Making under the Exponential Family: Distributionally Robust Optimisation with {B}ayesian Ambiguity Sets},
  author =       {Dellaporta, Charita and O'Hara, Patrick and Damoulas, Theodoros},
  booktitle = 	 {Proceedings of the 42nd International Conference on Machine Learning},
  pages = 	 {13011--13043},
  year = 	 {2025},
  volume = 	 {267},
  month = 	 {13--19 Jul},
  publisher =    {PMLR},
}

@book{levi2,
	author = {Isaac Levi},
	date-added = {2020-02-12 20:36:11 +0000},
	date-modified = {2020-02-12 20:39:21 +0000},
	publisher = {London, UK : MIT Press},
	title = {The Enterprise of Knowledge},
	year = {1980}}

@book{intro_ip,
author = {Augustin, Thomas and Coolen, Frank P. A. and de Cooman, Gert and Troffaes, Matthias C. M.},
address = {West Sussex, England},
publisher = {John Wiley \& Sons},
title = {Introduction to imprecise probabilities},
year = {2014},
}

@book{decooman,
	author = {Troffaes, Matthias C. M.  and de Cooman, Gert},
	publisher = {Chichester, United Kingdom : John Wiley and Sons},
	title = {Lower Previsions},
    doi  = {10.1002/9781118762622},
	year = {2014}}

@Article{dellaporta2025decision,
AUTHOR = {Dellaporta, Charita and O'Hara, Patrick and Damoulas, Theodoros},
TITLE = {Decision-Making Under Model Misspecification: {DRO} with Robust {B}ayesian Ambiguity Sets},
JOURNAL = {Entropy},
VOLUME = {28},
YEAR = {2026},
NUMBER = {4},
ARTICLE-NUMBER = {430},
PubMedID = {42072555},
ISSN = {1099-4300},
DOI = {10.3390/e28040430}
}

@article{jiang2018risk,
  title={Risk-averse two-stage stochastic program with distributional ambiguity},
  author={Jiang, Ruiwei and Guan, Yongpei},
  journal={Operations Research},
  volume={66},
  number={5},
  pages={1390--1405},
  year={2018},
  publisher={INFORMS},
  doi = {10.1287/opre.2018.1729}
}

@article{dvoretzky1956asymptotic,
author = {A. Dvoretzky and J. Kiefer and J. Wolfowitz},
title = {{Asymptotic Minimax Character of the Sample Distribution Function and of the Classical Multinomial Estimator}},
volume = {27},
journal = {The Annals of Mathematical Statistics},
number = {3},
publisher = {Institute of Mathematical Statistics},
pages = {642 -- 669},
year = {1956},
doi = {10.1214/aoms/1177728174},
}

@article{massart1990tight,
author = {P. Massart},
title = {{The Tight Constant in the Dvoretzky-Kiefer-Wolfowitz Inequality}},
volume = {18},
journal = {The Annals of Probability},
number = {3},
publisher = {Institute of Mathematical Statistics},
pages = {1269 -- 1283},
keywords = {Brownian bridge, empirical process, Kolmogorov-Smirnov statistics},
year = {1990},
doi = {10.1214/aop/1176990746},
}

@article{li2025distributionally,
  title={Distributionally Robust Optimization with Adversarial Data Contamination},
  author={Li, Shuyao and Diakonikolas, Ilias and Diakonikolas, Jelena},
  journal={arXiv preprint arXiv:2507.10718},
  year={2025}
}

@article{alvarez2012similarity,
  title={Similarity of samples and trimming},
  author={Alvarez-Esteban, Pedro C and Del Barrio, Eustasio and Cuesta-Albertos, Juan A and Matr{\'a}n, Carlos},
  journal={Bernoulli},
  pages={606--634},
  year={2012},
  publisher={JSTOR}
}

@InProceedings{caprio2025ot,
  title = 	 {Optimal transport for $\varepsilon$-contaminated credal sets},
  author =       {Caprio, Michele},
  booktitle = 	 {Proceedings of the Fourteenth International Symposium on Imprecise Probabilities: Theories and Applications},
  pages = 	 {33--46},
  year = 	 {2025},
  editor = 	 {Destercke, S{\'e}bastien and Erreygers, Alexander and Nendel, Max and Riedel, Frank and Troffaes, Matthias C. M.},
  volume = 	 {290},
  series = 	 {Proceedings of Machine Learning Research},
  month = 	 {15--18 Jul},
  publisher =    {PMLR},
}

@book{FollmerSchied2016,
title = {Stochastic Finance: An Introduction in Discrete Time},
author = {Hans F{\"o}llmer and Alexander Schied},
publisher = {De Gruyter},
address = {Berlin, Boston},
doi = {10.1515/9783110463453},
isbn = {9783110463453},
year = {2016},
lastchecked = {2025-12-11}
}

@inproceedings{nietert2024robust,
  title={Robust distribution learning with local and global adversarial corruptions},
  author={Nietert, Sloan and Goldfeld, Ziv and Shafiee, Soroosh},
  booktitle={The Thirty Seventh Annual Conference on Learning Theory},
  pages={4007--4008},
  year={2024},
  organization={PMLR}
}

@article{nietert2023outlier,
  title={Outlier-robust {Wasserstein} {DRO}},
  author={Nietert, Sloan and Goldfeld, Ziv and Shafiee, Soroosh},
  journal={Advances in Neural Information Processing Systems},
  volume={36},
  pages={62792--62820},
  year={2023}
}

@article{wasserman1990,
author = {Larry A. Wasserman and Joseph B. Kadane},
title = {{Bayes' Theorem for Choquet Capacities}},
volume = {18},
journal = {The Annals of Statistics},
number = {3},
publisher = {Institute of Mathematical Statistics},
pages = {1328 -- 1339},
keywords = {2-alternating Choquet capacities, posterior bounds, robust Bayesian inference},
year = {1990},
doi = {10.1214/aos/1176347752},
}

@inproceedings{johnstone2021conformal,
  title={Conformal uncertainty sets for robust optimization},
  author={Johnstone, Chancellor and Cox, Bruce},
  booktitle={Conformal and Probabilistic Prediction and Applications},
  pages={72--90},
  year={2021},
  organization={PMLR}
}

@article{duchi2021learning,
author = {John C. Duchi and Hongseok Namkoong},
title = {{Learning models with uniform performance via distributionally robust optimization}},
volume = {49},
journal = {The Annals of Statistics},
number = {3},
publisher = {Institute of Mathematical Statistics},
pages = {1378 -- 1406},
keywords = {Minimax optimality, risk-averse learning, Robust optimization},
year = {2021},
doi = {10.1214/20-AOS2004},
}

@InProceedings{zhai2021doro,
  title = 	 {{DORO}: Distributional and Outlier Robust Optimization},
  author =       {Zhai, Runtian and Dan, Chen and Kolter, Zico and Ravikumar, Pradeep},
  booktitle = 	 {Proceedings of the 38th International Conference on Machine Learning},
  pages = 	 {12345--12355},
  year = 	 {2021},
  editor = 	 {Meila, Marina and Zhang, Tong},
  volume = 	 {139},
  series = 	 {Proceedings of Machine Learning Research},
  month = 	 {18--24 Jul},
  publisher =    {PMLR},
}

@article{bental2013robust,
author = {Ben-Tal, Aharon and den Hertog, Dick and De Waegenaere, Anja and Melenberg, Bertrand and Rennen, Gijs},
title = {Robust Solutions of Optimization Problems Affected by Uncertain Probabilities},
journal = {Management Science},
volume = {59},
number = {2},
pages = {341--357},
year = {2013},
doi = {10.1287/mnsc.1120.1641},
}

@article{fontem2026distributionally,
title = {Distributionally robust optimization with generalized total variation ambiguity sets},
journal = {European Journal of Operational Research},
volume = {328},
number = {3},
pages = {894--911},
year = {2026},
issn = {0377-2217},
doi = {https://doi.org/10.1016/j.ejor.2025.09.024},
author = {Belleh Fontem and Ran Ji},
keywords = {Robust optimization, Stochastic programming, Data-driven optimization, Finite-sample guarantee, Risk estimation},
}

@article{kuhn2025distributionally, 
  title={Distributionally robust optimization}, 
  volume={34},
  DOI={10.1017/S0962492924000084}, 
  journal={Acta Numerica}, 
  author={Kuhn, Daniel and Shafiee, Soroosh and Wiesemann, Wolfram}, 
  year={2025}, 
  pages={579--804}
}

@inproceedings{
sinha2018certifiable,
title={Certifiable Distributional Robustness with Principled Adversarial Training},
author={Aman Sinha and Hongseok Namkoong and John Duchi},
booktitle={International Conference on Learning Representations},
year={2018},
}

@article{gao2016distributionally,
author = {Gao, Rui and Kleywegt, Anton},
title = {Distributionally Robust Stochastic Optimization with {Wasserstein} Distance},
journal = {Mathematics of Operations Research},
volume = {48},
number = {2},
pages = {603--655},
year = {2023},
doi = {10.1287/moor.2022.1275},
}

@Article{esfahani2018data,
author={Mohajerin Esfahani, Peyman
and Kuhn, Daniel},
title={Data-driven distributionally robust optimization using the {Wasserstein} metric: performance guarantees and tractable reformulations},
journal={Mathematical Programming},
year={2018},
month={Sep},
day={01},
volume={171},
number={1},
pages={115--166},
issn={1436-4646},
doi={10.1007/s10107-017-1172-1},
}

@article{diamond2016cvxpy,
  author  = {Steven Diamond and Stephen Boyd},
  title   = {{CVXPY}: {A} {P}ython-embedded modeling language for convex optimization},
  journal = {Journal of Machine Learning Research},
  year    = {2016},
  volume  = {17},
  number  = {83},
  pages   = {1--5},
}

@article{agrawal2018rewriting,
  author  = {Agrawal, Akshay and Verschueren, Robin and Diamond, Steven and Boyd, Stephen},
  title   = {A rewriting system for convex optimization problems},
  journal = {Journal of Control and Decision},
  year    = {2018},
  volume  = {5},
  number  = {1},
  pages   = {42--60},
}

@manual{mosek,
   author = "{MOSEK ApS}",
   title = "{MOSEK} Optimizer API for Python. Version 11.0.",
   year = 2025,
    url = "https://docs.mosek.com/11.0/pythonapi/index.html"
 }

@article{shapiro2023bayesian,
author = {Shapiro, Alexander and Zhou, Enlu and Lin, Yifan},
title = {Bayesian Distributionally Robust Optimization},
journal = {SIAM Journal on Optimization},
volume = {33},
number = {2},
pages = {1279-1304},
year = {2023},
doi = {10.1137/21M1465548},
}

@misc{chan2024distributional,
      title={From Distributional Robustness to Robust Statistics: A Confidence Sets Perspective}, 
      author={Gabriel Chan and Bart {Van Parys} and Amine Bennouna},
      year={2024},
      eprint={2410.14008},
      archivePrefix={arXiv},
      primaryClass={math.OC},
      url={https://arxiv.org/abs/2410.14008}, 
}

@book{walley1991statistical,
  title={Statistical reasoning with imprecise probabilities},
  author={Walley, Peter},
  publisher={Chapman and Hall},
  year={1991}
}

@book{shafer1976,
  author    = {Shafer, Glenn},
  title     = {A Mathematical Theory of Evidence},
  publisher = {Princeton University Press},
  address   = {Princeton, NJ},
  year      = {1976}
}

@Inbook{moral2018discounting,
author="Moral, Seraf{\'i}n",
title="Discounting Imprecise Probabilities",
bookTitle="The Mathematics of the Uncertain: A Tribute to Pedro Gil",
year="2018",
publisher="Springer International Publishing",
address="Cham",
pages="685--697",
isbn="978-3-319-73848-2",
doi="10.1007/978-3-319-73848-2_63",
}

@inproceedings{
sagawa2020distributionally,
title={Distributionally Robust Neural Networks},
author={Shiori Sagawa and Pang Wei Koh and Tatsunori B. Hashimoto and Percy Liang},
booktitle={International Conference on Learning Representations},
year={2020},
}

@article{Montes17082020,
author = {Ignacio Montes and Enrique Miranda and S{\'e}bastien Destercke},
title = {Unifying neighbourhood and distortion models: part I -- new results on old models},
journal = {International Journal of General Systems},
volume = {49},
number = {6},
pages = {602--635},
year = {2020},
publisher = {Taylor \& Francis},
doi = {10.1080/03081079.2020.1778682},
}

@article{pace1997spatial,
title = {Sparse spatial autoregressions},
journal = {Statistics \& Probability Letters},
volume = {33},
number = {3},
pages = {291--297},
year = {1997},
issn = {0167-7152},
doi = {https://doi.org/10.1016/S0167-7152(96)00140-X},
author = {R. Kelley Pace and Ronald Barry},
}

@inproceedings{koh2021wilds,
    title = {{WILDS}: A Benchmark of in-the-Wild Distribution Shifts},
    author = {Pang Wei Koh and Shiori Sagawa and Henrik Marklund and Sang Michael Xie and Marvin Zhang and Akshay Balsubramani and Weihua Hu and Michihiro Yasunaga and Richard Lanas Phillips and Irena Gao and Tony Lee and Etienne David and Ian Stavness and Wei Guo and Berton A. Earnshaw and Imran S. Haque and Sara Beery and Jure Leskovec and Anshul Kundaje and Emma Pierson and Sergey Levine and Chelsea Finn and Percy Liang},
    booktitle = {International Conference on Machine Learning (ICML)},
    year = {2021}
  }

@inproceedings{corani2013credal,
  title={Credal model averaging of logistic regression for modeling the distribution of marmot burrows},
  author={Corani, G and Mignatti, A},
  booktitle={ISIPTA'13: Proceedings of the Seventh International Symposium on Imprecise Probability: Theories and Applications, SIPTA, Compiegne},
  pages={233--243},
  year={2013}
}

@InProceedings{iyengar2023hedging,
  title = 	 {Hedging against Complexity: Distributionally Robust Optimization with Parametric Approximation},
  author =       {Iyengar, Garud and Lam, Henry and Wang, Tianyu},
  booktitle = 	 {Proceedings of The 26th International Conference on Artificial Intelligence and Statistics},
  pages = 	 {9976--10011},
  year = 	 {2023},
  volume = 	 {206},
  series = 	 {Proceedings of Machine Learning Research},
  month = 	 {25--27 Apr},
  publisher =    {PMLR},
}

@article{frohlich2024risk,
  author  = {Christian Fr{{\"o}}hlich and Robert C. Williamson},
  title   = {Risk Measures and Upper Probabilities: Coherence and Stratification},
  journal = {Journal of Machine Learning Research},
  year    = {2024},
  volume  = {25},
  number  = {207},
  pages   = {1--100},
}

@inproceedings{namkoong2016advances,
 author = {Namkoong, Hongseok and Duchi, John C},
 booktitle = {Advances in Neural Information Processing Systems},
 pages = {},
 publisher = {Curran Associates, Inc.},
 title = {Stochastic Gradient Methods for Distributionally Robust Optimization with f-divergences},
 volume = {29},
 year = {2016}
}

@article{liu2009copula,
  author  = {Han Liu and John Lafferty and Larry Wasserman},
  title   = {The Nonparanormal: Semiparametric Estimation of High Dimensional Undirected Graphs},
  journal = {Journal of Machine Learning Research},
  year    = {2009},
  volume  = {10},
  number  = {80},
  pages   = {2295--2328},
}

@article{tsang2025tro,
author = {Tsang, Man Yiu and Shehadeh, Karmel S.},
title = {On the Tradeoff Between Distributional Belief and Ambiguity: Conservatism, Finite-Sample Guarantees, and Asymptotic Properties},
journal = {INFORMS Journal on Optimization},
volume = {7},
number = {3},
pages = {240-263},
year = {2025},
doi = {10.1287/ijoo.2024.0047},
}

@article{hu2013kullback,
  title={Kullback-{L}eibler divergence constrained distributionally robust optimization},
  author={Hu, Zhaolin and Hong, L Jeff},
  journal={Available from Optimization Online},
  year={2013}
}

@article{chen2018wdroregression,
  author  = {Ruidi Chen and Ioannis Ch. Paschalidis},
  title   = {A Robust Learning Approach for Regression Models Based on Distributionally Robust Optimization},
  journal = {Journal of Machine Learning Research},
  year    = {2018},
  volume  = {19},
  number  = {13},
  pages   = {1--48},
}

@misc{bennouna2025holistic,
      title={Holistic Robust Data-Driven Decisions}, 
      author={Amine Bennouna and Bart {Van Parys} and Ryan Lucas},
      year={2025},
      eprint={2207.09560},
      archivePrefix={arXiv},
      primaryClass={stat.ML},
      url={https://arxiv.org/abs/2207.09560}, 
}

@book{Cover2006,
  author = {Cover, Thomas M. and Thomas, Joy A.},
  publisher = {Wiley-Interscience},
  title = {Elements of Information Theory},
  year = 2006
}

@inproceedings{paszke2017automatic,
author = {Ansel, Jason and Yang, Edward and He, Horace and Gimelshein, Natalia and Jain, Animesh and Voznesensky, Michael and Bao, Bin and Bell, Peter and Berard, David and Burovski, Evgeni and Chauhan, Geeta and Chourdia, Anjali and Constable, Will and Desmaison, Alban and DeVito, Zachary and Ellison, Elias and Feng, Will and Gong, Jiong and Gschwind, Michael and Hirsh, Brian and Huang, Sherlock and Kalambarkar, Kshiteej and Kirsch, Laurent and Lazos, Michael and Lezcano, Mario and Liang, Yanbo and Liang, Jason and Lu, Yinghai and Luk, CK and Maher, Bert and Pan, Yunjie and Puhrsch, Christian and Reso, Matthias and Saroufim, Mark and Siraichi, Marcos Yukio and Suk, Helen and Suo, Michael and Tillet, Phil and Wang, Eikan and Wang, Xiaodong and Wen, William and Zhang, Shunting and Zhao, Xu and Zhou, Keren and Zou, Richard and Mathews, Ajit and Chanan, Gregory and Wu, Peng and Chintala, Soumith},
booktitle = {29th ACM International Conference on Architectural Support for Programming Languages and Operating Systems, Volume 2 (ASPLOS '24)},
doi = {10.1145/3620665.3640366},
month = apr,
publisher = {ACM},
title = {{PyTorch 2: Faster Machine Learning Through Dynamic Python Bytecode Transformation and Graph Compilation}},
url = {https://docs.pytorch.org/assets/pytorch2-2.pdf},
year = {2024}
}

@InProceedings{zhu2021mmd,
  title = 	 {Kernel Distributionally Robust Optimization: Generalized Duality Theorem and Stochastic Approximation },
  author =       {Zhu, Jia-Jie and Jitkrittum, Wittawat and Diehl, Moritz and Sch{\"o}lkopf, Bernhard},
  booktitle = 	 {Proceedings of The 24th International Conference on Artificial Intelligence and Statistics},
  pages = 	 {280--288},
  year = 	 {2021},
  editor = 	 {Banerjee, Arindam and Fukumizu, Kenji},
  volume = 	 {130},
  series = 	 {Proceedings of Machine Learning Research},
  month = 	 {13--15 Apr},
  publisher =    {PMLR},
  pdf = 	 {http://proceedings.mlr.press/v130/zhu21a/zhu21a.pdf},
}

@inproceedings{staib2019mmd,
 author = {Staib, Matthew and Jegelka, Stefanie},
 booktitle = {Advances in Neural Information Processing Systems},
 editor = {H. Wallach and H. Larochelle and A. Beygelzimer and F. d\textquotesingle Alch\'{e}-Buc and E. Fox and R. Garnett},
 pages = {},
 publisher = {Curran Associates, Inc.},
 title = {Distributionally Robust Optimization and Generalization in Kernel Methods},
 volume = {32},
 year = {2019}
}

@article{huber1964robust,
author = {Peter J. Huber},
title = {{Robust Estimation of a Location Parameter}},
volume = {35},
journal = {The Annals of Mathematical Statistics},
number = {1},
publisher = {Institute of Mathematical Statistics},
pages = {73 -- 101},
year = {1964},
doi = {10.1214/aoms/1177703732},
}

@article{gretton2012mmd,
  author  = {Arthur Gretton and Karsten M. Borgwardt and Malte J. Rasch and Bernhard Sch{{\"o}}lkopf and Alexander Smola},
  title   = {A Kernel Two-Sample Test},
  journal = {Journal of Machine Learning Research},
  year    = {2012},
  volume  = {13},
  number  = {25},
  pages   = {723--773},
}

@book{boyd2004convex,
  title={Convex Optimization},
  author={Boyd, Stephen and Vandenberghe, Lieven},
  year={2004},
  publisher={Cambridge University Press}
}

@article{Martin2026Possibilistic,
author = {Ryan Martin},
title = {Possibilistic Inferential Models: A Review},
journal = {Journal of the American Statistical Association},
volume = {121},
number = {553},
pages = {807--826},
year = {2026},
publisher = {Taylor \& Francis},
doi = {10.1080/01621459.2025.2606127},
}

@misc{liu2025dropythonlibrary,
      title={{DRO}: A {P}ython Library for Distributionally Robust Optimization in Machine Learning}, 
      author={Jiashuo Liu and Tianyu Wang and Henry Lam and Hongseok Namkoong and Jose Blanchet},
      year={2025},
      eprint={2505.23565},
      archivePrefix={arXiv},
      primaryClass={cs.LG},
      url={https://arxiv.org/abs/2505.23565}, 
}

@article{gibbs2024conformal,
  author  = {Isaac Gibbs and Emmanuel J. Cand{{\`e}}s},
  title   = {Conformal Inference for Online Prediction with Arbitrary Distribution Shifts},
  journal = {Journal of Machine Learning Research},
  year    = {2024},
  volume  = {25},
  number  = {162},
  pages   = {1--36},
  url     = {http://jmlr.org/papers/v25/22-1218.html}
}

@misc{gauthier2025evalue,
      title={E-Values Expand the Scope of Conformal Prediction}, 
      author={Etienne Gauthier and Francis Bach and Michael I. Jordan},
      year={2025},
      eprint={2503.13050},
      archivePrefix={arXiv},
      primaryClass={stat.ML},
      url={https://arxiv.org/abs/2503.13050}, 
}

@misc{hultberg2026anytime,
      title={Anytime-Valid Conformal Risk Control}, 
      author={Bror Hultberg and Dave Zachariah and Ant{\^o}nio H. Ribeiro},
      year={2026},
      eprint={2602.04364},
      archivePrefix={arXiv},
      primaryClass={stat.ML},
      url={https://arxiv.org/abs/2602.04364}, 
}
\bibliographystyle{icml2026}

%%%%%%%%%%%%%%%%%%%%%%%%%%%%%%%%%%%%%%%%%%%%%%%%%%%%%%%%%%%%%%%%%%%%%%%%%%%%%%%
%%%%%%%%%%%%%%%%%%%%%%%%%%%%%%%%%%%%%%%%%%%%%%%%%%%%%%%%%%%%%%%%%%%%%%%%%%%%%%%
% APPENDIX
%%%%%%%%%%%%%%%%%%%%%%%%%%%%%%%%%%%%%%%%%%%%%%%%%%%%%%%%%%%%%%%%%%%%%%%%%%%%%%%
%%%%%%%%%%%%%%%%%%%%%%%%%%%%%%%%%%%%%%%%%%%%%%%%%%%%%%%%%%%%%%%%%%%%%%%%%%%%%%%
\newpage
\appendix
\onecolumn

\begin{center}
\section*{\centering Supplementary Material for: \\Bulk-Calibrated Credal Ambiguity Sets:\\
Fast, Tractable Decision Making under Out-of-Sample Contamination}
\end{center}
The Supplementary Material is organised as follows:
Appendix~\ref{app:bulk-set-calibration-details} provides additional examples, plots, and pseudocode for the DKW bulk-set calibration;
Appendix~\ref{app:proofs} collects full proofs of the results stated in the paper;
Appendix~\ref{app:blockwise-bulk} presents the blockwise intersection-set extension of the bulk construction and its coverage guarantee;
Appendix~\ref{app:eps-calibration} gives additional discussion of tolerance selection for~$\varepsilon$ and related diagnostics;
Appendix~\ref{app:tv-mmd-discussion} discusses the limitations of existing treatments of the worst-case risk in contamination-related DRO;
Appendix~\ref{app:newsvendor-details}, Appendix~\ref{app:ca-housing}, and Appendix~\ref{app:civilcomments} report additional details and results for the synthetic newsvendor, California housing regression, and CivilComments classification experiments.
\section{Bulk Set Calibration: Examples and Algorithm Details}
\label{app:bulk-set-calibration-details}
Figures~\ref{fig:bulk-dkw}--\ref{fig:ecdf-dkw} illustrate DKW selection in two-dimensional settings. We demonstrate two shapes: an ellipsoid and a box. Algorithm~\ref{alg:dkw-selection} provides the pseudocode for DKW bulk-set calibration. Figure~\ref{fig:dkw-gamma-vs-m} visualises the sample size requirements in DKW selection: for a fixed confidence level $1-\delta$, the smallest certifiable tail budget $\gamma$ decreases only at the $m^{-1/2}$ rate, so achieving tight certificates typically requires a non-trivial held-out selection set.

\begin{figure}[ht]
\centering
% -------- Row 1: Gaussian --------
 \begin{subfigure}[t]{0.33\textwidth}
        \centering
        \includegraphics[width=\linewidth]{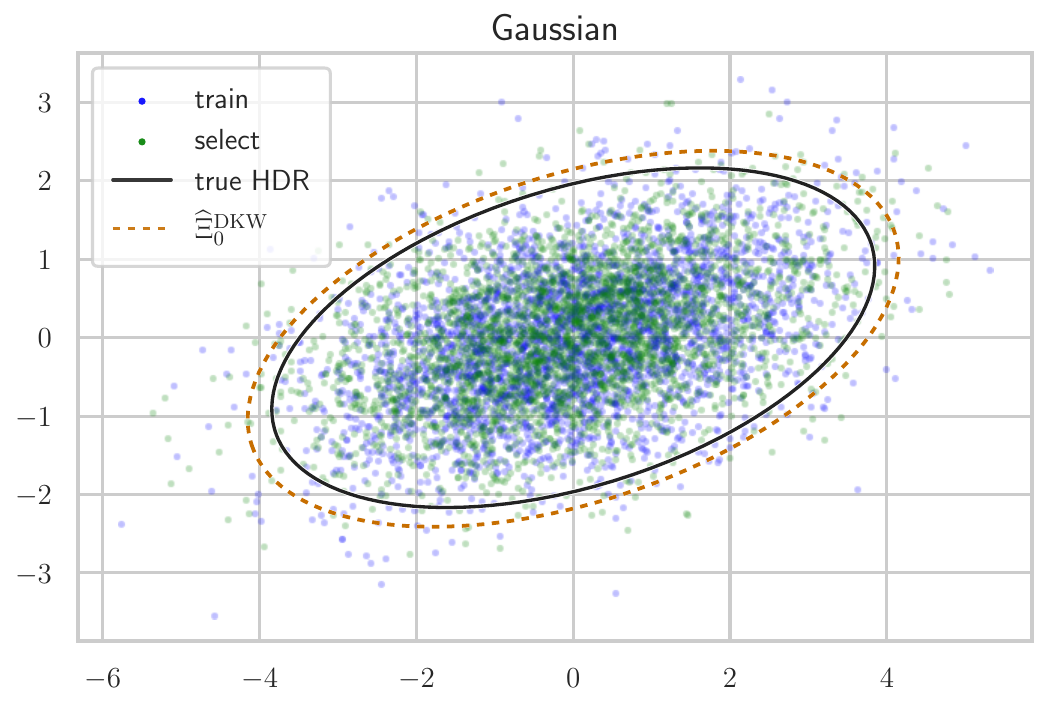}
        %\caption{Gaussian DGP — Ellipsoid}
    \end{subfigure}%
    \hfill 
\begin{subfigure}[t]{0.33\textwidth}
        \centering
        \includegraphics[width=\linewidth]{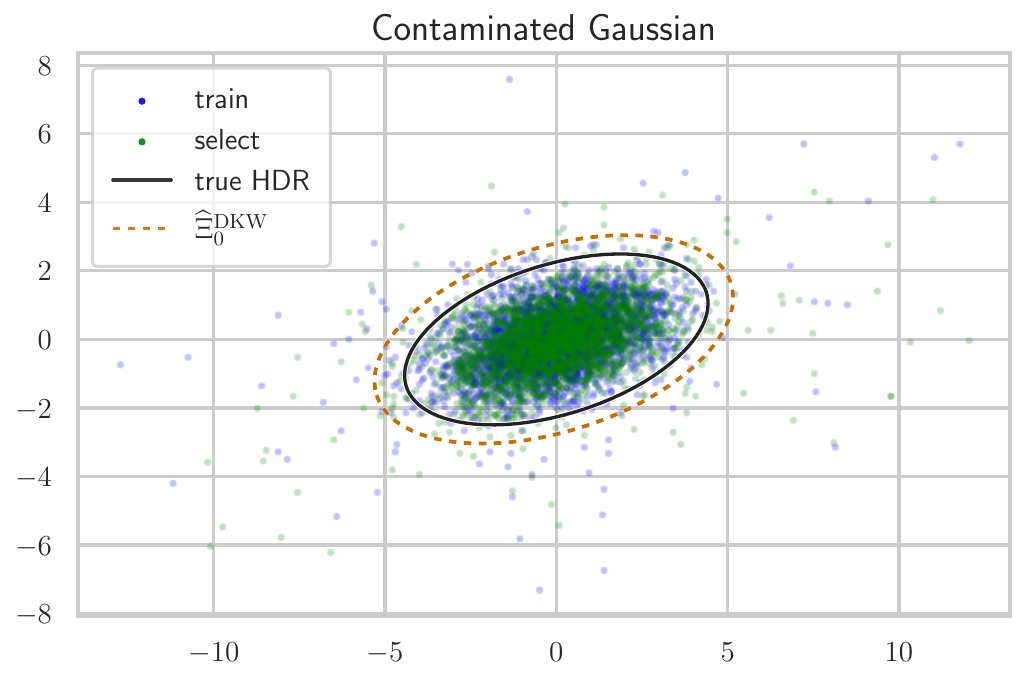}
        %\caption{Contaminated Gaussian DGP — Ellipsoid}
    \end{subfigure}%
    \hfill
    \begin{subfigure}[t]{0.33\textwidth}
        \centering
        \includegraphics[width=\linewidth]{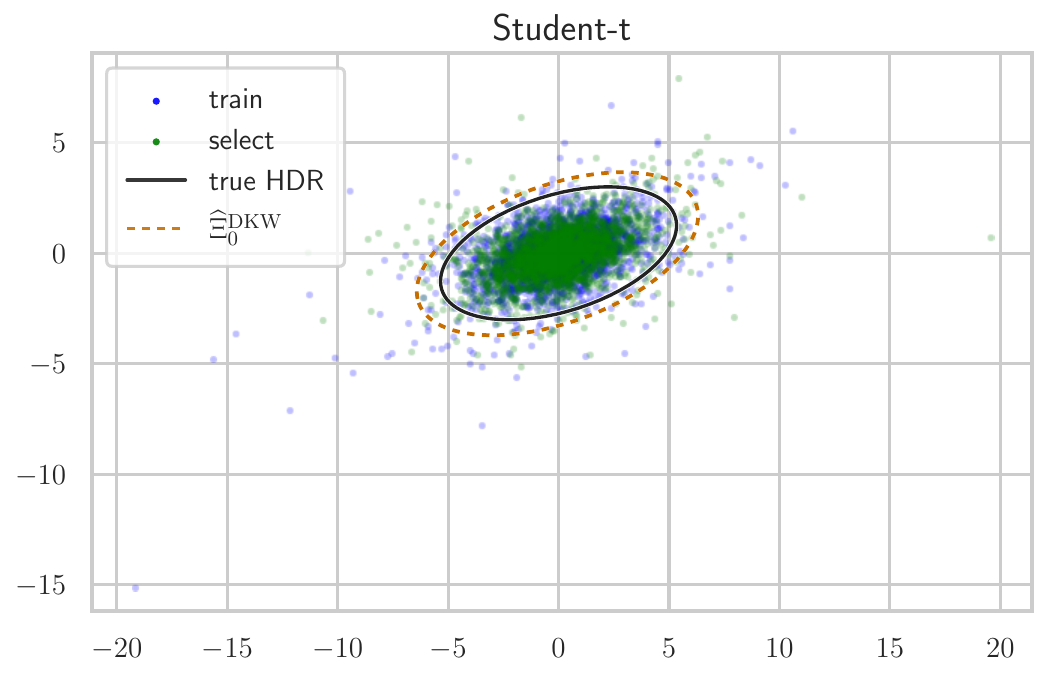}
        %\caption{Student-$t$ DGP — Ellipsoid}
    \end{subfigure}%
    
%\vspace{0.6em}

% -------- Row 2: Contaminated Gaussian --------
\begin{subfigure}[t]{0.33\textwidth}
        \centering
        \includegraphics[width=\linewidth]{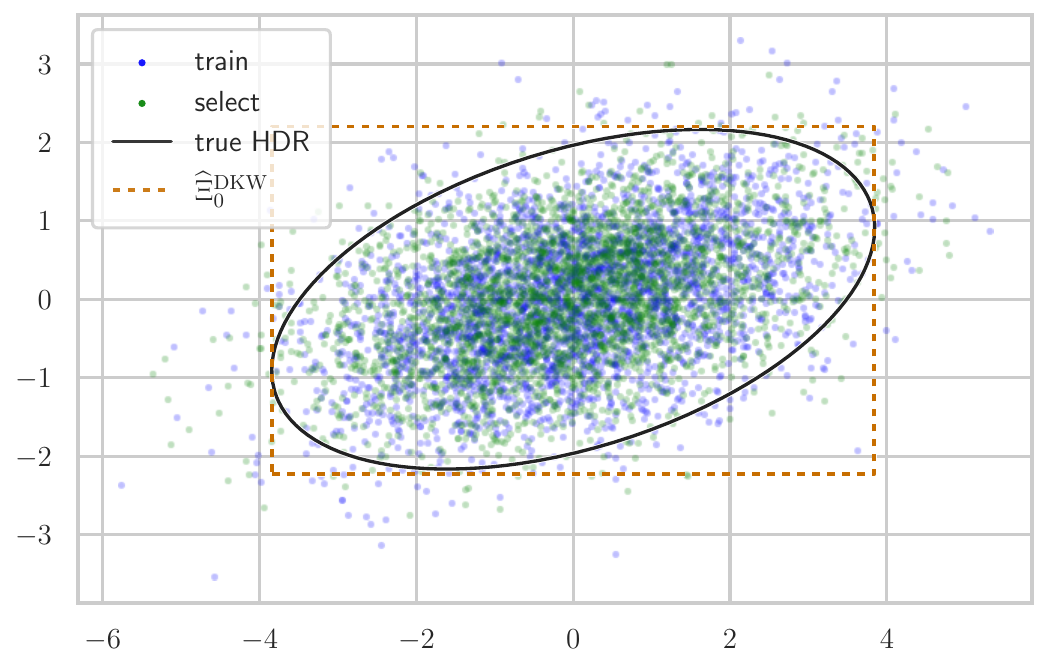}
        %\caption{Gaussian DGP — Box}
    \end{subfigure}
    \hfill 
    \begin{subfigure}[t]{0.33\textwidth}
        \centering
        \includegraphics[width=\linewidth]{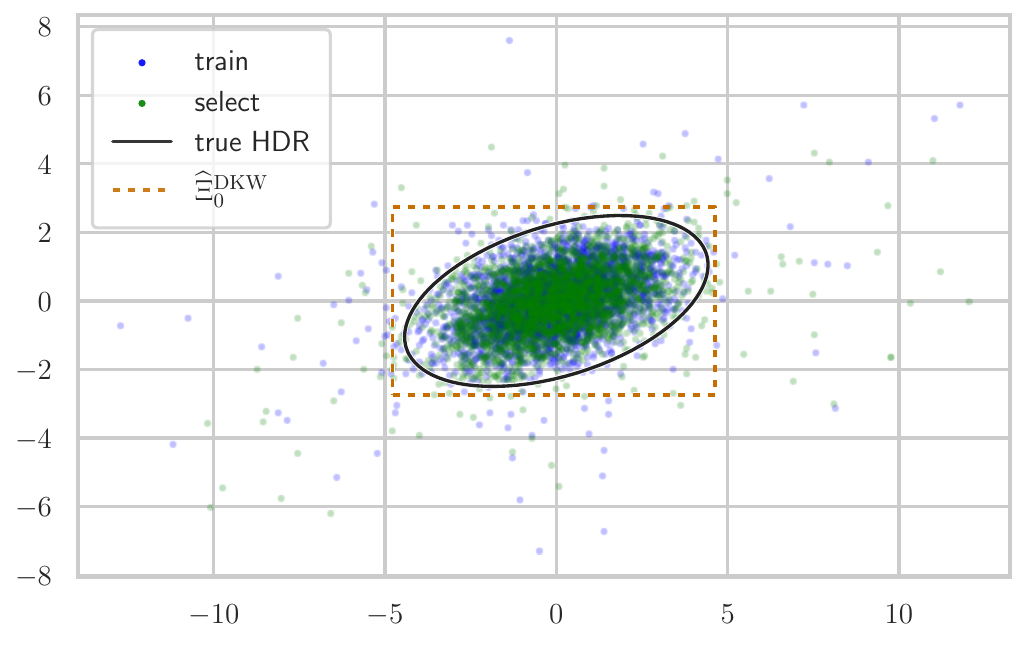}
        %\caption{Contaminated Gaussian DGP — Box}
    \end{subfigure}
    \hfill 
    \begin{subfigure}[t]{0.33\textwidth}
        \centering
        \includegraphics[width=\linewidth]{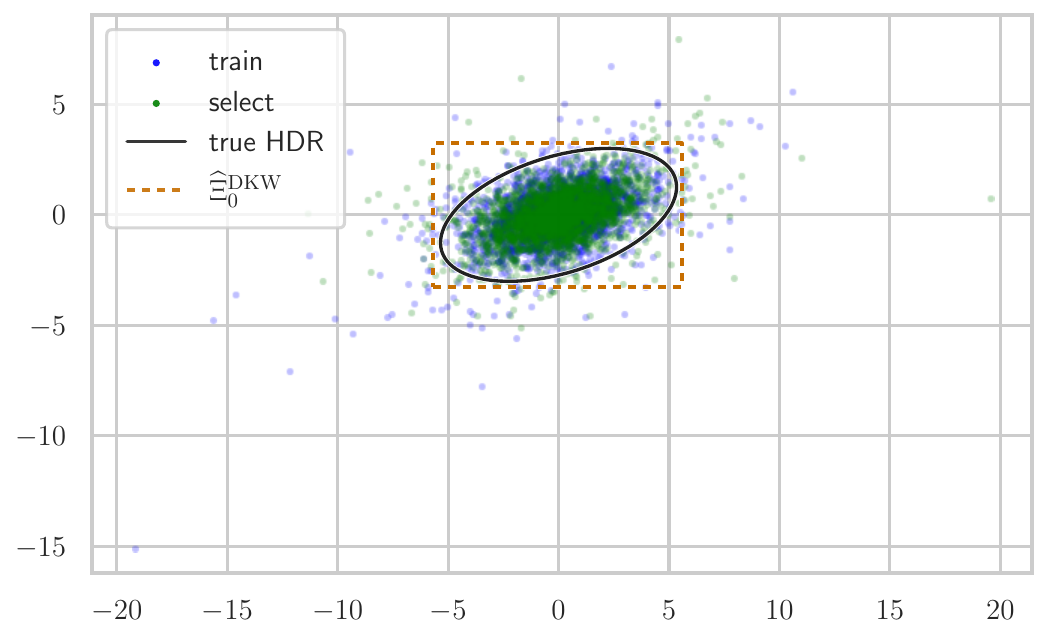}
        %\caption{Student-$t$ — Box}
    \end{subfigure}

\caption{Bulk sets for three two-dimensional data-generating processes (left to right): Gaussian, contaminated Gaussian, and Student-\(t\). For each process we show (top to bottom): (i) the certified ellipsoidal bulk set \(\widehat{\Xi}_0\) at mass \(1-\gamma\) and guarantee \(1-\delta\); and (ii) the certified axis-aligned box bulk set. All plots correspond to \(\gamma=0.05\), \(\delta=0.05\), and \(n=6000\) with $\abs{\D_{\mathrm{fit}}} = \abs{\D_{\mathrm{select}}} = 3000$. The black ellipsoid marks the $1-\gamma$ highest density region (HDR) of each distribution.}
\label{fig:bulk-dkw}
\end{figure}

\begin{figure}[ht]
\centering
% -------- Row 1: Gaussian --------
 \begin{subfigure}[t]{0.33\textwidth}
        \centering
        \includegraphics[width=\linewidth]{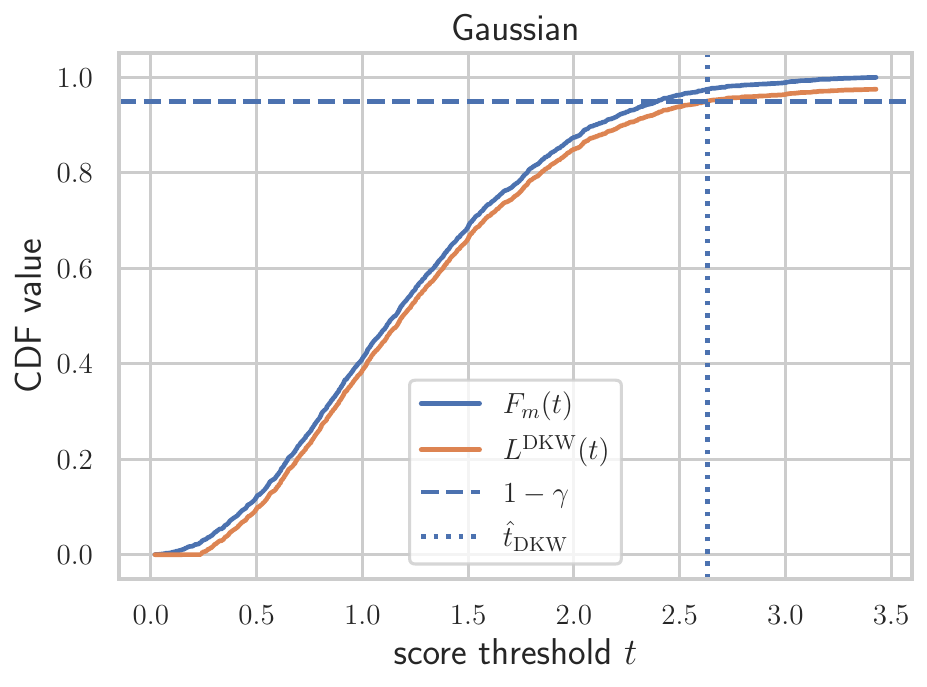}
        %\caption{Gaussian DGP — Ellipsoid}
    \end{subfigure}%
    \hfill 
    \begin{subfigure}[t]{0.33\textwidth}
        \centering
        \includegraphics[width=\linewidth]{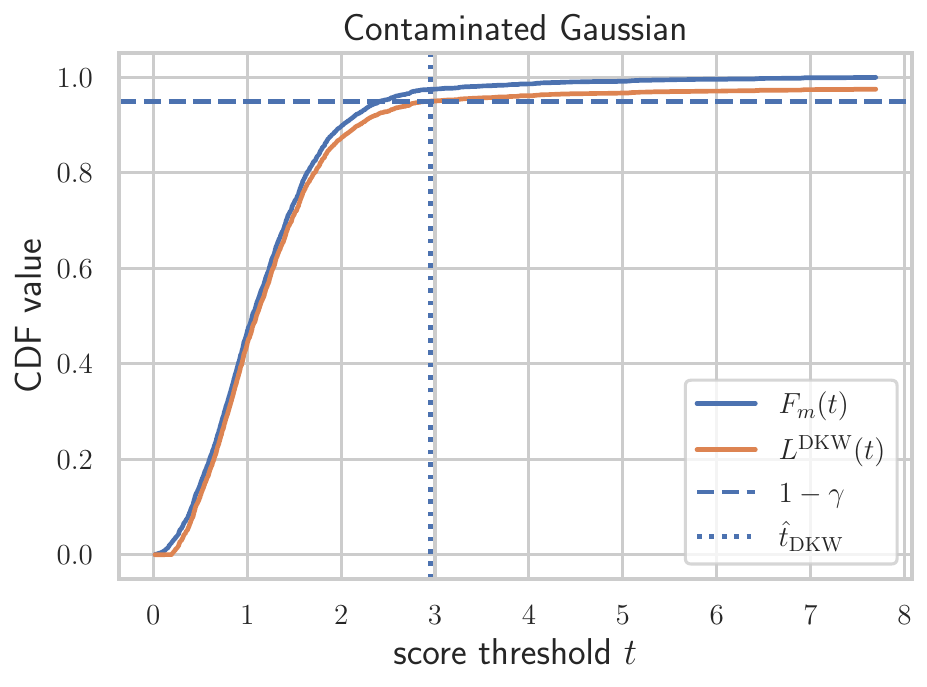}
        %\caption{Gaussian DGP — Box}
    \end{subfigure}
    \hfill 
    \begin{subfigure}[t]{0.33\textwidth}
        \centering
        \includegraphics[width=\linewidth]{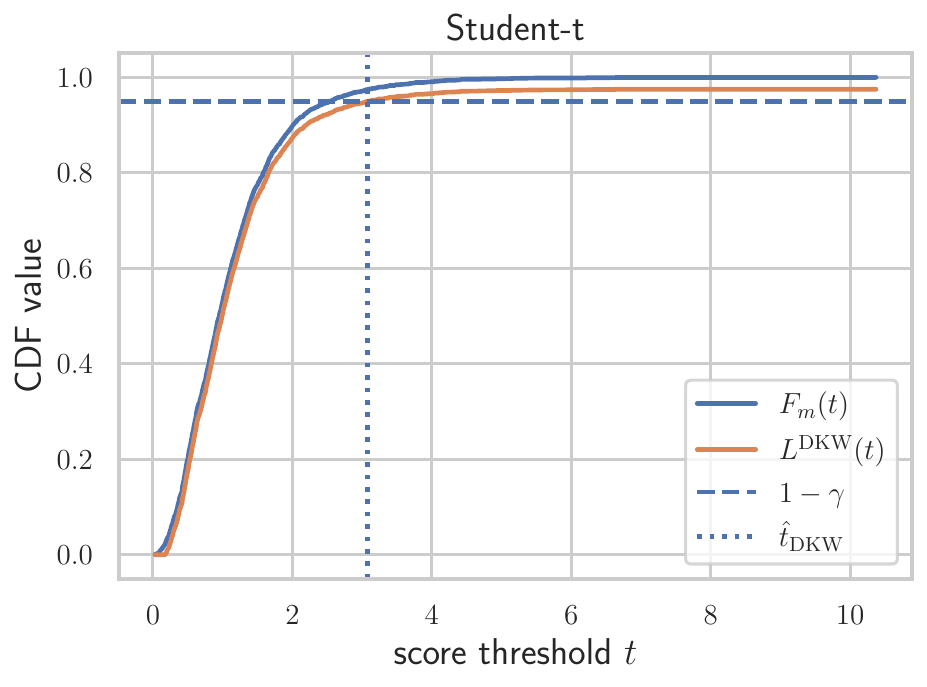}
        %\caption{Gaussian DGP — Box}
    \end{subfigure}
%\vspace{0.6em}
 
\caption{Empirical CDF and DKW lower envelope for the ellipsoid bulk sets displayed in Figure~\ref{fig:bulk-dkw} under three data-generating processes: Gaussian (left), contaminated Gaussian (middle), and Student-\(t\) (right).}
\label{fig:ecdf-dkw}
\end{figure}

\begin{figure}[ht]
\centering
\includegraphics[width=0.60\linewidth]{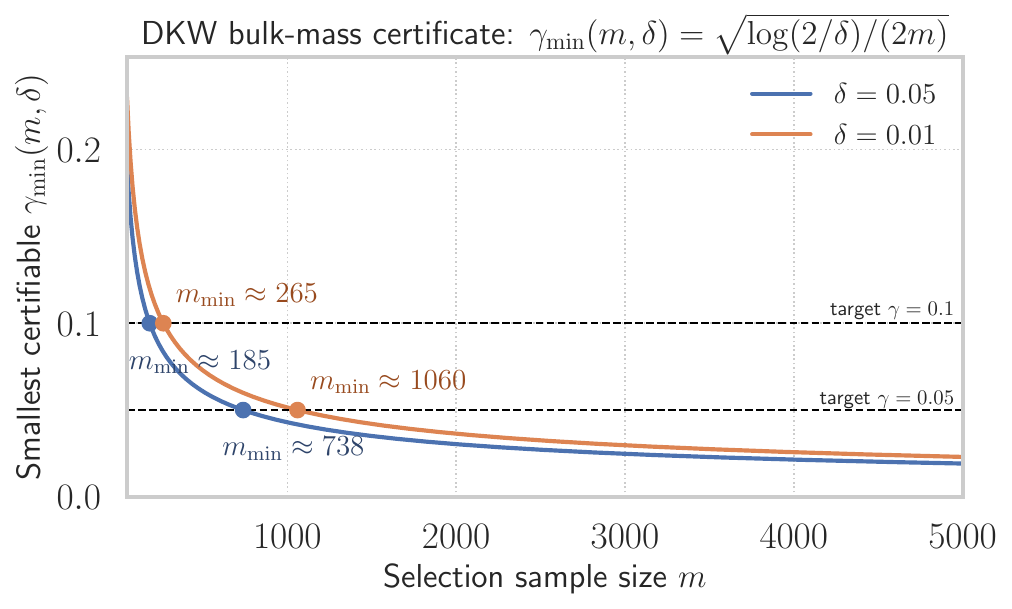}
\caption{Relationship between confidence and selection sample size in DKW bulk-set calibration. For a selection sample size $m$ and confidence $1-\delta$, the bulk-mass target $1-\gamma$ is certifiable only when $\gamma\ge r_{m,\delta}:=\sqrt{\log(2/\delta)/(2m)}$. Markers indicate the minimum $m$ required to certify $\gamma\in\{0.05,0.10\}$ at $\delta\in\{0.05,0.01\}$.}
\label{fig:dkw-gamma-vs-m}
\end{figure}

\begin{algorithm*}[ht]
  \caption{DKW Bulk-Set Score Selection}
  \label{alg:dkw-selection}
  \begin{algorithmic}
     \STATE {\bfseries Input:} score-fitting data $\D_{\mathrm{fit}}$; DKW selection data $\D_{\mathrm{select}}$; target $\gamma\in(0,1)$ and $\delta\in(0,1)$.
    \STATE {\bfseries Step 1: Fix nested shape family.}
    \STATE Fit a score $s(\cdot)$ on $\D_{\mathrm{fit}}$; define $\Xi_0(t)=\{\xi:\ s(\xi)\le t\}$.
    \STATE {\bfseries Step 2: DKW selection on held-out selection scores.}
    \STATE For each $\tilde\xi_j\in \D_{\mathrm{select}}$, set $z_j \leftarrow s(\tilde\xi_j)$.
    \STATE Let $m \leftarrow |\D_{\mathrm{select}}|$.
    \STATE $r \leftarrow \sqrt{\log(2/\delta)/(2m)}$.
    \IF{$\gamma < r$}
      \STATE {\bfseries return} ``no certificate; smallest certifiable $\gamma$ = $r$'' (increase $m$, or relax $\gamma$ and $\delta$).
    \ELSE
      \STATE Sort $z_{1:m}$ to obtain $z_{(1)} \le \cdots \le z_{(m)}$.
      \STATE $j^\star \leftarrow \lceil m(1-\gamma + r)\rceil$.
      \STATE $\hat t \leftarrow z_{(j^\star)}$.
      \STATE $\widehat{\Xi}_0 \leftarrow \Xi_0(\hat t)$.
      \STATE {\bfseries return} $(\hat t,\ \widehat{\Xi}_0)$ and certificate
      \STATE \hspace{1.6em} $\Pr\{\P^\star(\Xi_0(\hat t)) \ge 1-\gamma\} \ge 1-\delta$.
    \ENDIF
    \STATE {\it Complexity:} $O(m\log m)$ time (one sort); $O(m)$ space.
  \end{algorithmic}
\end{algorithm*}
\section{Proofs}
\label{app:proofs}
\subsection{Proof of Theorem~\ref{thm:worst-case-risk}: Worst-case Risk of the Support-restricted LV Set}
\label{proof:worst-case-risk}
\begin{proof}
Assume $\varepsilon>0$, otherwise $\A^{\LV}_{\varepsilon,\Xi_0}(\P_{c,\Xi_0})=\{\P_{c,\Xi_0}\}$ and the result is immediate. Let $M=\sup_{\xi\in \Xi_0} f_x(\xi)\in[-\infty,\infty]$ and write $Q=(1-\varepsilon)\P_{c,\Xi_0}+\varepsilon R$ with $\operatorname{supp}(R)\subseteq \Xi_0$.
Then
\[
\int f_x\text{d}Q=(1-\varepsilon)\int f_x\text{d}\P_{c,\Xi_0}+\varepsilon\int f_x\text{d}R
\le (1-\varepsilon)\E_{\xi\sim \P_{c,\Xi_0}}[f_x]+\varepsilon M,
\]
since $Q$ assigns all its mass to $\Xi_0$. Taking the supremum over $Q\in\A^{\LV}_{\varepsilon,\Xi_0}(\P_{c,\Xi_0})$ yields
\[
\sup_{Q\in\A^{\LV}_{\varepsilon,\Xi_0}(\P_{c,\Xi_0})} \int f_x\text{d}Q \le (1-\varepsilon)\E_{\xi\sim \P_{c,\Xi_0}}[f_x]+\varepsilon M.
\]
For the reverse inequality, if $M=+\infty$ choose $\xi_n\in \Xi_0$ with $f_x(\xi_n)\uparrow+\infty$ and set $R_n=\delta_{\xi_n}$, which obeys $\operatorname{supp}(R_n)\subseteq \Xi_0$.
Then
\[
\int f_x\text{d}\big((1-\varepsilon)\P_{c,\Xi_0}+\varepsilon R_n\big)
=(1-\varepsilon)\E_{\xi\sim \P_{c,\Xi_0}}[f_x(\xi)]+\varepsilon f_x(\xi_n)\xrightarrow[n\to\infty]{}+\infty
=(1-\varepsilon)\E_{\xi\sim \P_{c,\Xi_0}}[f_x(\xi)]+\varepsilon M,
\]
so the supremum is $+\infty$ and the equality holds.

If $M<\infty$, fix $\eta>0$ and choose $\xi_\eta\in \Xi_0$ with $f_x(\xi_\eta)>M-\eta$ (by definition of $M$).
Let $R_\eta=\delta_{\xi_\eta}$, again admissible.
Then
\[
\int f_x\text{d}\big((1-\varepsilon)\P_{c,\Xi_0}+\varepsilon R_\eta\big)
=(1-\varepsilon)\E_{\xi\sim\P_{c,\Xi_0}}[f_x(\xi)]+\varepsilon f_x(\xi_\eta)>(1-\varepsilon)\E_{\xi\sim \P_{c,\Xi_0}}[f_x]+\varepsilon(M-\eta).
\]
Taking the supremum over admissible $R$ and letting $\eta\downarrow0$ gives
\[
\sup_{Q\in\A^{\LV}_{\varepsilon,\Xi_0}(\P_{c,\Xi_0})} \int f_x\text{d}Q \ge (1-\varepsilon)\E_{\xi\sim \P_{c,\Xi_0}}[f_x]+\varepsilon M.
\]
Combine with the upper bound to conclude equality.
\end{proof}

\subsection{Proof of Corollary~\ref{cor:worst-case-distribution-lv}: Worst-case Distribution}
\label{proof:worst-case-distribution-lv}
\begin{proof}
Fix $x\in\mathcal X$. For any $Q\in\A^{\LV}_{\varepsilon,\Xi_0}(\P_{c,\Xi_0})$ we can write
$Q=(1-\varepsilon)\P_{c,\Xi_0}+\varepsilon R$ for some $R\in\mathcal P(\Xi_0)$, then
\begin{equation}\label{eq:lv-mixture-expectation}
\E_{\xi\sim Q}[f_x(\xi)]
=(1-\varepsilon)\E_{\xi\sim\P_{c,\Xi_0}}[f_x(\xi)] + \varepsilon \E_{\xi\sim R}[f_x(\xi)].
\end{equation}
Since $\operatorname{supp}(R)\subseteq \Xi_0$, we have $f_x(\xi)\le M_x$ for all $\xi$ in the support of $R$, and thus
$\E_{\xi\sim R}[f_x(\xi)]\le M_x$. Substituting into \eqref{eq:lv-mixture-expectation} yields
\[
\E_{\xi\sim Q}[f_x(\xi)]
\le (1-\varepsilon)\E_{\xi\sim\P_{c,\Xi_0}}[f_x(\xi)] + \varepsilon M_x.
\]
By Theorem~\ref{thm:worst-case-risk}, the right-hand side equals
$\Risk_{\A^{\LV}_{\varepsilon,\Xi_0}(\P_{c,\Xi_0})}(f_x)$, so this upper bound is sharp.

Assume there exists $R^\star\in\mathcal P(\Xi_0)$ with $R^\star(\Xi_0^{\max}(x))=1$.
Then $f_x(\xi)=M_x$ holds $R^\star$-almost surely, so $\E_{\xi\sim R^\star}[f_x(\xi)]=M_x$.
Plugging $R^\star$ into \eqref{eq:lv-mixture-expectation} gives
\[
\E_{\xi\sim Q^\star}[f_x(\xi)]
=(1-\varepsilon)\E_{\xi\sim\P_{c,\Xi_0}}[f_x(\xi)] + \varepsilon M_x
=\Risk_{\A^{\LV}_{\varepsilon,\Xi_0}(\P_{c,\Xi_0})}(f_x),
\]
which proves $Q^\star$ is worst-case.

Assume $M_x<\infty$ and $\Xi_0^{\max}(x)=\emptyset$. Then $f_x(\xi)<M_x$ for all $\xi\in\Xi_0$,
so the function $g(\xi):=M_x-f_x(\xi)$ is strictly positive on $\Xi_0$.
For any $R\in\mathcal P(\Xi_0)$, if $\E_{\xi\sim R}[f_x(\xi)]=M_x$ then
\[
0=\E_{\xi\sim R}[M_x-f_x(\xi)]=\E_{\xi\sim R}[g(\xi)].
\]
Since $g(\xi)\ge 0$ for all $\xi$ and $g(\xi)>0$ on $\Xi_0$, the equality $\E_{\xi\sim R}[g]=0$ implies
$g(\xi)=0$ $R$-almost surely, which is impossible because $\Xi_0^{\max}(x)=\emptyset$.
Therefore $\E_{\xi\sim R}[f_x(\xi)]<M_x$ for every $R\in\mathcal P(\Xi_0)$, and by \eqref{eq:lv-mixture-expectation},
$\E_{\xi\sim Q}[f_x(\xi)]<(1-\varepsilon)\E_{\xi\sim\P_{c,\Xi_0}}[f_x(\xi)] + \varepsilon M_x$ for every admissible $Q$.
This shows non-attainment.
\end{proof}

\subsection{Proof of Proposition~\ref{prop:credal-ball}: LV Distortion Classification of the Contamination Set}
\label{proof:credal-ball}
\begin{proof}
Recall the definitions:
\begin{align*}
    \A^{\LV}_{\varepsilon,\Xi_0}(\P_{c,\Xi_0}) &= \left\{Q: Q=(1-\varepsilon)\P_{c,\Xi_0}+\varepsilon R, R\in\mathcal P(\Xi_0)\right\} \\
    \B^\varepsilon_{\LV}(\P_{c,\Xi_0}) &=\left\{Q\in \mathcal P(\Xi_0): \LV(Q,\P_{c,\Xi_0})\le\varepsilon\right\},
\end{align*}
and 
\[
\LV(Q,\P_{c,\Xi_0}) = \sup_{A : \P_{c,\Xi_0}(A)>0} \frac{\P_{c,\Xi_0}(A)-Q(A)}{\P_{c,\Xi_0}(A)}.
\]
We introduce the credal set defined by eventwise domination:
\begin{equation}
\label{eq:def-M}
    \M(\underline{\P_{c,\Xi_0}})
:= \{Q\in \mathcal P(\Xi_0) : Q(A) \geq (1-\varepsilon)\P_{c,\Xi_0}(A),
\forall A \in \mathcal{B}(\Xi_0)\}.
\end{equation}

We will show that $\A^{\LV}_{\varepsilon,\Xi_0}(\P_{c,\Xi_0}) = \M(\underline{\P_{c,\Xi_0}}) = \B^\varepsilon_{\LV}(\P_{c,\Xi_0})$.

\textbf{First equality:} $\A^{\LV}_{\varepsilon,\Xi_0}(\P_{c,\Xi_0}) = \M(\underline{\P_{c,\Xi_0}})$.

To show that $\A^{\LV}_{\varepsilon,\Xi_0}(\P_{c,\Xi_0})\subseteq \M(\underline{\P_{c,\Xi_0}})$, we choose any $Q\in \A^{\LV}_{\varepsilon,\Xi_0}(\P_{c,\Xi_0})$. Then $Q=(1-\varepsilon)\P_{c,\Xi_0}+\varepsilon R$ for some $R\in\mathcal P(\Xi_0)$. For any $A\in\mathcal{B}(\Xi_0)$, $Q(A)=(1-\varepsilon)\P_{c,\Xi_0}(A)+\varepsilon R(A)\ \ge\ (1-\varepsilon)\P_{c,\Xi_0}(A)$ since $R(A)\ge 0$. Hence $Q$ satisfies the eventwise lower bound for all $A\in\mathcal{B}(\Xi_0)$.

To show the converse, let $Q\in\mathcal P(\Xi_0)$ satisfy $Q(A)\ \ge\ (1-\varepsilon)\P_{c,\Xi_0}(A)$ for all $A\in\mathcal{B}(\Xi_0)$. Define the set function $S:\mathcal{B}(\Xi_0)\to[0,\infty)$ by
\[
S(A):=Q(A)-(1-\varepsilon)\P_{c,\Xi_0}(A).
\]
Then $S(A)\ge 0$ for all $A$ by assumption. Moreover, since both $Q$ and $\P_{c,\Xi_0}$ are countably additive, $S$ is countably additive as a difference of two measures. Its total mass is
\[
S(\Xi_0)=Q(\Xi_0)-(1-\varepsilon)\P_{c,\Xi_0}(\Xi_0)=1-(1-\varepsilon)=\varepsilon.
\]
If $\varepsilon>0$, define $R:=S/\varepsilon$. Then $R$ is a nonnegative and countably additive probability measure on $(\Xi_0,\mathcal{B}(\Xi_0))$, and by construction
\[
Q(A)=(1-\varepsilon)\P_{c,\Xi_0}(A)+\varepsilon R(A)\qquad\forall A\in\mathcal{B}(\Xi_0),
\]
i.e.\ $Q\in \A^{\LV}_{\varepsilon,\Xi_0}(\P_{c,\Xi_0})$. Therefore, $\M(\underline{\P_{c,\Xi_0}}) \subseteq \A^{\LV}_{\varepsilon,\Xi_0}(\P_{c,\Xi_0})$ and we have
\begin{equation*}
    \A^{\LV}_{\varepsilon,\Xi_0}(\P_{c,\Xi_0}) =  \M(\underline{\P_{c,\Xi_0}}).
\end{equation*}
\textbf{Second equality:} $\M(\underline{\P_{c,\Xi_0}})=\B^\varepsilon_{\LV}(\P_{c,\Xi_0})$.

For any $Q\in\mathcal P(\Xi_0)$, we have the following chain:
\begin{align*}
& Q(A) \ge (1-\varepsilon)\P_{c,\Xi_0}(A) \quad\text{for all }A\in\mathcal{B}(\Xi_0)\\
\iff & \P_{c,\Xi_0}(A)-Q(A) \le \varepsilon \P_{c,\Xi_0}(A) \quad\text{for all }A\in\mathcal{B}(\Xi_0)\\
\iff & \P_{c,\Xi_0}(A)-Q(A) \le \varepsilon \P_{c,\Xi_0}(A) \quad\text{for all }A\in\mathcal{B}(\Xi_0)\text{ with }\P_{c,\Xi_0}(A)>0\\
\iff & \frac{\P_{c,\Xi_0}(A)-Q(A)}{\P_{c,\Xi_0}(A)} \le \varepsilon
\quad\text{for all }A\in\mathcal{B}(\Xi_0)\text{ with }\P_{c,\Xi_0}(A)>0 \\
\iff & \sup_{A:\P_{c,\Xi_0}(A)>0}\frac{\P_{c,\Xi_0}(A)-Q(A)}{\P_{c,\Xi_0}(A)} \le \varepsilon.
\end{align*}
The last quantity is the definition of $\LV(Q,\P_{c,\Xi_0})$, which proves
\[\M(\underline{\P_{c,\Xi_0}})
=\B^\varepsilon_{\LV}(\P_{c,\Xi_0}).
\]
\emph{Edge case.} When $\varepsilon=0$ the three sets reduce to $\{\P_{c,\Xi_0}\}$: $\A^{\LV}_{\varepsilon,\Xi_0}(\P_{c,\Xi_0}) = \{\P_{c,\Xi_0}\}$ by definition; \eqref{eq:def-M} gives $Q(A)\ge \P_{c,\Xi_0}(A)$ for all $A$ and applying it to complements forces $\M(\underline{\P_{c,\Xi_0}})=\{\P_{c,\Xi_0}\}$; the $\LV$ condition becomes $\LV(Q,\P_{c,\Xi_0})\le 0$, which gives $\B^\varepsilon_{\LV}(\P_{c,\Xi_0})=\{\P_{c,\Xi_0}\}$.
\end{proof}

%%%%%%%%%%%%%%%%%%%%%%%%%%%%%%%%%%%%%%%%%%%%%%%%%%%%%%%%%%%%%%%%%%%%%%%%%%%%%%%
%%%%%%%%%%%%%%%%%%%%%%%%%%%%%%%%%%%%%%%%%%%%%%%%%%%%%%%%%%%%%%%%%%%%%%%%%%%%%%%
\subsection{Alternative Proof of Proposition~\ref{lem:TV-risk}: Worst-case Risk of a TV Ball}
\label{proof:TV-risk-alter}
\begin{proof}[Alternative proof of Proposition~\ref{lem:TV-risk}]
Assume $\E_{\P}[|f|]<\infty$ and let $M:=\sup_{\xi\in\Xi} f(\xi)$. If $M=+\infty$ and $\varepsilon>0$, pick $\xi_n\in\Xi$ such that $f(\xi_n)\uparrow +\infty$ and define
$Q_n:=(1-\varepsilon)\P+\varepsilon\delta_{\xi_n}$.
Then for every $A\in\mathcal F$,
\[
\P(A)-Q_n(A)=\varepsilon\bigl(\P(A)-\delta_{\xi_n}(A)\bigr),
\]
so $\TV(\P,Q_n)=\varepsilon\,\TV(\P,\delta_{\xi_n})\le \varepsilon$ and thus $Q_n\in \B_{\TV}^\varepsilon(\P)$.
Moreover,
\[
\E_{Q_n}[f]=(1-\varepsilon)\E_{\P}[f]+\varepsilon f(\xi_n)\xrightarrow[n\to\infty]{}+\infty.
\]
Hence $\Risk_{\B_{\TV}^\varepsilon(\P)}(f)=+\infty$. We assume $M<\infty$ for the rest of the proof.

Fix $Q$ with $\TV(\P,Q)\le \varepsilon$. If $\TV(\P,Q)=0$ then $Q=\P$ and
\[
\E_{Q}[f]=\E_\P[f]\le \mathrm{CVaR}^{\P}_{1-\varepsilon}(f)\le (1-\varepsilon)\mathrm{CVaR}^{\P}_{1-\varepsilon}(f)+\varepsilon M,
\]
so Proposition~\ref{lem:TV-risk} holds. Thus we may assume $\TV(\P,Q)>0$.

Let $\alpha:=\TV(\P,Q)\in(0,\varepsilon]$. By \citet[Proposition~2]{alvarez2012similarity}, there exist probability measures $C,R_1,R_2$ such that
\begin{equation}
  \label{eq:similarity-decomp}
  \P = (1-\alpha)C + \alpha R_1,
  \qquad
  Q = (1-\alpha)C + \alpha R_2.
\end{equation}
In particular, from the first identity in \eqref{eq:similarity-decomp} we know that $C$ lies in the \emph{reverse LV ball} of radius $\alpha$ around $\P$: \(\mathcal{A}_{\alpha}^{\leftarrow}(\P):=\bigl\{ Q \in \mathcal P(\Xi) : \exists R\in\mathcal P(\Xi)\ \text{s.t.}\ \P = (1-\alpha)Q + \alpha R \bigr\}\). Taking expectations of $f$ under the second identity in \eqref{eq:similarity-decomp} yields
\begin{equation}
\label{eq:TV-le}
\begin{aligned}
\E_{Q}[f]
&=(1-\alpha)\E_C[f] + \alpha \E_{R_2}[f] \\
&\le (1-\alpha)\E_C[f] + \alpha M \\
&\le (1-\alpha)\sup_{C\in\mathcal A^{\leftarrow}_\alpha(\P)} \E_C[f] + \alpha M \\
&= (1-\alpha)\mathrm{CVaR}^{\P}_{1-\alpha}(f) + \alpha M. \quad \text{by \citet[Theorem 4.52]{FollmerSchied2016}}
\end{aligned}
\end{equation}
Since $\alpha\le \varepsilon$, we have the set inclusion $\mathcal A^{\leftarrow}_\alpha(\P)\subseteq \mathcal A^{\leftarrow}_\varepsilon(\P)$.
Indeed, if $\P=(1-\alpha)C+\alpha R$ for some probability measures $(C,R)$, then
\[
\P=(1-\varepsilon)C+\varepsilon R',
\qquad
R':=\frac{\varepsilon-\alpha}{\varepsilon}\,C+\frac{\alpha}{\varepsilon}\,R\in\mathcal P(\Xi).
\]
Therefore,
\[
\sup_{C\in\mathcal A^{\leftarrow}_\alpha(\P)}\E_C[f]
\le
\sup_{C\in\mathcal A^{\leftarrow}_\varepsilon(\P)}\E_C[f]
=
\mathrm{CVaR}^{\P}_{1-\varepsilon}(f).
\]
Substituting this bound into \eqref{eq:TV-le} gives
\[
\E_{Q}[f]
\le (1-\alpha)\mathrm{CVaR}^{\P}_{1-\varepsilon}(f)+\alpha M
\le (1-\varepsilon)\mathrm{CVaR}^{\P}_{1-\varepsilon}(f)+\varepsilon M,
\]
where the last inequality uses $\alpha\le \varepsilon$ and $M\ge \mathrm{CVaR}^{\P}_{1-\varepsilon}(f)$.
We now show that the right-hand side is tight. By the representation
\[
\mathrm{CVaR}^{\P}_{1-\varepsilon}(f)=\sup_{C\in\mathcal A^{\leftarrow}_\varepsilon(\P)}\E_C[f],
\]
there exists a sequence of probability measures $(C_n)_{n\ge 1}\subseteq \mathcal A^{\leftarrow}_\varepsilon(\P)$ such that
\[
\lim_{n\to\infty} \E_{C_n}[f]
= \sup_{C\in\mathcal A^{\leftarrow}_\varepsilon(\P)}\E_C[f]
= \mathrm{CVaR}^{\P}_{1-\varepsilon}(f).
\]
For each $n$, the definition of $\mathcal A^{\leftarrow}_\varepsilon(\P)$ guarantees the existence of some $R_{1,n}$ with
\[
\P = (1-\varepsilon)C_n + \varepsilon R_{1,n}.
\]

Fix $n$, take $R_{2,n}$ supported on $\{\xi: f(\xi)\ge M-1/n\}$, which is nonempty by the definition of $\sup$. Define
\[
Q_n := (1-\varepsilon)C_n + \varepsilon R_{2,n}.
\]
Then, by construction,
\[
\E_{Q_n}[f]
= (1-\varepsilon)\E_{C_n}[f] + \varepsilon \E_{R_{2,n}}[f]
\ge (1-\varepsilon)\E_{C_n}[f] + \varepsilon (M-1/n).
\]

Moreover, using $\P=(1-\varepsilon)C_n+\varepsilon R_{1,n}$ and $Q_n=(1-\varepsilon)C_n+\varepsilon R_{2,n}$, we have for every $A\in\mathcal F$
\[
\P(A)-Q_n(A)=\varepsilon\bigl(R_{1,n}(A)-R_{2,n}(A)\bigr).
\]
Taking the supremum over $A$ yields
\[
\TV(\P,Q_n)=\varepsilon\,\TV(R_{1,n},R_{2,n})\le \varepsilon,
\]
so each $Q_n$ lies in the TV ball $\B_{\TV}^\varepsilon(\P)$.

Thus,
\[
\sup_{Q:\TV(\P,Q)\le \varepsilon} \E_{Q}[f]
\ge \E_{Q_n}[f]
\ge (1-\varepsilon)\E_{C_n}[f] + \varepsilon(M-1/n).
\]
Letting $n\to\infty$ and using $\E_{C_n}[f]\to \mathrm{CVaR}^{\P}_{1-\varepsilon}(f)$ yields
\[
\sup_{Q:\TV(\P,Q)\le \varepsilon} \E_{Q}[f]
\ge (1-\varepsilon)\mathrm{CVaR}^{\P}_{1-\varepsilon}(f) + \varepsilon M.
\]

Combined with the upper bound \eqref{eq:TV-le}, this proves Proposition~\ref{lem:TV-risk}.
\end{proof}

\subsection{Proof of Lemma~\ref{lem:bulk-coverage-dkw}: High-probability Bulk-mass Certificate}
\label{proof:bulk-coverage-dkw}
\begin{proof}
Because $\D_{\mathrm{select}}$ is independent of $s(\cdot)$ and $\D_{\mathrm{fit}}$, the random variables $Z_j:=s(\tilde\xi_j)$ are i.i.d.\ with common CDF $F(t)$ conditional on $s(\cdot)$. All statements proved below conditional on $s(\cdot)$ also hold unconditionally after averaging over $s(\cdot)$.

Existence of $\hat t_{\rm DKW}$: let $Z_{\max}:=\max_{1\le j\le m} Z_j$. Since $F_m\left(Z_{\max}\right)=1$, we have $L^{\rm DKW}\left(Z_{\max}\right)=1-r_{m,\delta}$, so the selection set $\{t:L^{\rm DKW}(t)\ge 1-\gamma\}$ is nonempty whenever $\gamma\ge r_{m,\delta}$.

Define the DKW event
\[
\mathcal{E}_{\rm DKW}:= \Bigl\{\sup_{t\in\mathbb{R}}\big|F_m(t)-F(t)\big|\le r_{m,\delta}\Bigr\}.
\]
By \eqref{eq:DKW}, $\Pr(\mathcal{E}_{\rm DKW})\ge 1-\delta$. On $\mathcal{E}_{\rm DKW}$ we have, for every $t\in\mathbb{R}$, $F(t) \ge F_m(t)-r_{m,\delta}$. Combining this with $F(t)\ge 0$ gives
\[
F(t) \ge [F_m(t)-r_{m,\delta}]_+ = L^{\mathrm{DKW}}(t).
\]

Since $F_m$ (and hence $L^{\mathrm{DKW}}$) is right-continuous, the infimum in the definition of $\hat t_{\rm DKW}$ is attained and $L^{\mathrm{DKW}}(\hat t_{\rm DKW})\ge 1-\gamma$. Combining this with $F(t)\ge L^{\mathrm{DKW}}(t)$ gives, on $\mathcal{E}_{\rm DKW}$,
\[
F(\hat t_{\rm DKW})\ \ge\ L^{\mathrm{DKW}}(\hat t_{\rm DKW})\ \ge\ 1-\gamma.
\]
Since $F(t)=\P^\star\{\xi: s(\xi)\le t\}=\P^\star\big(\Xi_0(t)\big)$, this is equivalent to
$\P^\star\big(\Xi_0(\hat t_{\rm DKW})\big)\ge 1-\gamma$. Taking probabilities and using $\Pr(\mathcal{E}_{\rm DKW})\ge 1-\delta$ finishes the proof.

Moreover, with the generalised inverse $F_m^{-1}(u):=\inf\{t:F_m(t)\ge u\}$,
\[
L^{\rm DKW}\big(F_m^{-1}(1-\gamma+r_{m,\delta})\big)\ \ge\ (1-\gamma+r_{m,\delta})-r_{m,\delta}\ =\ 1-\gamma,
\]
so the empirical quantile $\hat t_{\rm emp}:=F_m^{-1}(1-\gamma+r_{m,\delta})$ is a valid choice; computing $\hat t_{\rm emp}$ reduces to sorting $\{Z_j\}_{j=1}^m$ and is $O(m\log m)$.
\end{proof}

\subsection{Proof of Theorem~\ref{thm:certificate}: Risk Certificate}
\label{proof:certificate}
Before proving Theorem~\ref{thm:certificate}, we first state and prove the two following lemmas.
\begin{lemma}[Composition of in-bulk contaminations]
\label{lem:compose-contamination}
Let $\Xi_0\in\mathcal F$. Let $C,P,Q\in\mathcal P(\Xi)$ and let $R_1,R_2\in\mathcal P(\Xi_0)$. Throughout, we use the convention $\mathcal P(\Xi_0):=\{R\in\mathcal P(\Xi):R(\Xi_0)=1\}$, i.e. probability measures on $\Xi$ supported on $\Xi_0$.
Assume that for some $\alpha,\beta\in[0,1]$,
\[
Q=(1-\alpha)P+\alpha R_1,
\qquad
P=(1-\beta)C+\beta R_2.
\]
Define $\theta:=\alpha+\beta-\alpha\beta = 1-(1-\alpha)(1-\beta)\in[0,1]$.
If $\theta>0$, define
\[
R:=\frac{(1-\alpha)\beta}{\theta}\,R_2+\frac{\alpha}{\theta}\,R_1 \ \in\ \mathcal P(\Xi_0).
\]
Then
\[
Q=(1-\theta)C+\theta R.
\]
If $\theta=0$ (equivalently $\alpha=\beta=0$), then $Q=C$.
\end{lemma}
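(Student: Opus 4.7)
The plan is to prove the identity by direct substitution followed by a routine algebraic check. First, I would substitute the decomposition $P=(1-\beta)C+\beta R_2$ into $Q=(1-\alpha)P+\alpha R_1$ to obtain
\[
Q=(1-\alpha)(1-\beta)C+(1-\alpha)\beta R_2+\alpha R_1.
\]
This is an unconditional identity in $\mathcal P(\Xi)$ that holds without any assumption on $\theta$.

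Next, I would verify the two arithmetic identities $(1-\alpha)(1-\beta)=1-\theta$ and $(1-\alpha)\beta+\alpha=\theta$, both of which are immediate from the definition $\theta=\alpha+\beta-\alpha\beta$. Substituting these back rewrites the display as
\[
Q=(1-\theta)C+\bigl[(1-\alpha)\beta\bigr]R_2+\alpha R_1.
\]
In the case $\theta>0$, I would factor out $\theta$ from the last two terms and identify the bracket with $\theta R$, where $R$ is defined as in the statement. It then remains to check that $R\in\mathcal P(\Xi_0)$: the coefficients $(1-\alpha)\beta/\theta$ and $\alpha/\theta$ are nonnegative and sum to $\theta/\theta=1$ by the identity $(1-\alpha)\beta+\alpha=\theta$, so $R$ is a convex combination of $R_1,R_2\in\mathcal P(\Xi_0)$, hence lies in $\mathcal P(\Xi_0)$ since this set is convex.

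For the degenerate case $\theta=0$, I would note that $\theta=1-(1-\alpha)(1-\beta)=0$ together with $\alpha,\beta\in[0,1]$ forces $(1-\alpha)(1-\beta)=1$, hence $\alpha=\beta=0$; then the hypotheses collapse to $Q=P=C$, which matches the stated conclusion. The main obstacle is essentially bookkeeping: ensuring the mixture weights are both nonnegative and sum to one so that $R$ is a genuine probability measure on $\Xi_0$, and explicitly handling $\theta=0$ so that the definition of $R$ (which divides by $\theta$) is not required in that edge case.
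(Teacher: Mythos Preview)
Your proposal is correct and follows essentially the same approach as the paper: direct substitution of $P=(1-\beta)C+\beta R_2$ into $Q=(1-\alpha)P+\alpha R_1$, verification of the two arithmetic identities $(1-\alpha)(1-\beta)=1-\theta$ and $(1-\alpha)\beta+\alpha=\theta$, and the check that $R$ is a bona fide element of $\mathcal P(\Xi_0)$, together with the separate handling of the degenerate case $\theta=0$. The only cosmetic difference is that the paper expands $(1-\theta)C+\theta R$ and simplifies it to $Q$, whereas you start from $Q$ and factor; both routes use the same identities and the same edge-case argument.
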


\begin{proof}
If $\theta=0$, then $\alpha=\beta=0$, hence $Q=P$ and $P=C$, so $Q=C$.

Assume now that $\theta>0$. First note that $(1-\alpha)\beta\ge 0$ and $\alpha\ge 0$.
Moreover,
\[
(1-\alpha)\beta+\alpha = \beta-\alpha\beta+\alpha = \alpha+\beta-\alpha\beta=\theta.
\]
Therefore the coefficients
\[
\frac{(1-\alpha)\beta}{\theta}\ge 0,
\qquad
\frac{\alpha}{\theta}\ge 0,
\qquad
\frac{(1-\alpha)\beta}{\theta}+\frac{\alpha}{\theta}=1,
\]
form a convex combination, so $R$ is a probability measure. Since both $R_1$ and $R_2$ are supported on $\Xi_0$,
their convex combination $R$ is also supported on $\Xi_0$, hence $R\in\mathcal P(\Xi_0)$.

Finally, using $1-\theta=(1-\alpha)(1-\beta)$, we expand
\begin{align*}
(1-\theta)C+\theta R
&=(1-\theta)C + (1-\alpha)\beta R_2 + \alpha R_1 \\
&=(1-\alpha)(1-\beta)C + (1-\alpha)\beta R_2 + \alpha R_1 \\
&=(1-\alpha)\bigl((1-\beta)C+\beta R_2\bigr) + \alpha R_1 \\
&=(1-\alpha)P+\alpha R_1 \\
&=Q,
\end{align*}
which proves the claim.
\end{proof}

\begin{lemma}[Tail decomposition]\label{thm:tail-certificate}
Fix a Borel set $\Xi_0\subseteq\Xi$ and assume $\P_c(\Xi_0)>0$. Let $f_x:\Xi\to\R$ be measurable in $\xi$. Given a probability distribution $Q$, let $p>1$ and $q:=p/(p-1)$, and assume
\[
M_p(x):=\Big(\E_{\xi\sim Q}\big[|f_x(\xi)|^p\big]\Big)^{1/p}<\infty.
\]
Assume additionally that $f_x(\xi)\ge 0$ for all $\xi\in\Xi$. Suppose with probability at least $1-\delta$, there exist $\varepsilon\in[0,1]$ and $\gamma\in[0,1)$ such that the following holds:
\begin{enumerate}
\item $Q(\Xi_0)\ge 1-\gamma$;
\item $Q_{\Xi_0}\in \A^{\LV}_{\varepsilon,\Xi_0}(\P_{c,\Xi_0})$.  \label{cond:inclusion}
\end{enumerate}
Then, with probability at least $1-\delta$,
\begin{equation}\label{eq:certificate}
\E_{\xi\sim Q}[f_x(\xi)]\le(1-\varepsilon)\E_{\xi\sim \P_{c,\Xi_0}}[f_x]+\varepsilon\sup_{\xi\in\Xi_0} f_x(\xi) + M_p(x)\gamma^{1/q},
\end{equation}
\end{lemma}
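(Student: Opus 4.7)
The plan is to split $\E_{\xi\sim Q}[f_x(\xi)]$ into an integral over the bulk $\Xi_0$ and an integral over the tail $\Xi_0^c$, bound the bulk contribution using condition~(2) together with Theorem~\ref{thm:worst-case-risk}, and bound the tail contribution via Hölder's inequality using condition~(1) and the $p$-moment hypothesis. The probability statement is carried by working on the single event $\mathcal{E}$ (of probability at least $1-\delta$) on which conditions~(1) and~(2) both hold simultaneously; all inequalities below are deterministic on $\mathcal{E}$, and the final conclusion inherits the $1-\delta$ probability.

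For the bulk part, the identity
\[
\int_{\Xi_0} f_x\, dQ \;=\; Q(\Xi_0)\,\E_{\xi\sim Q_{\Xi_0}}[f_x(\xi)]
\]
follows directly from the definition of the normalised restriction $Q_{\Xi_0}$, which is well defined since $\gamma<1$ forces $Q(\Xi_0)>0$ on $\mathcal{E}$. Because $f_x\ge 0$ and $Q(\Xi_0)\le 1$, this expression is at most $\E_{\xi\sim Q_{\Xi_0}}[f_x]$, and by condition~(2) the latter is bounded above by $\Risk_{\A^{\LV}_{\varepsilon,\Xi_0}(\P_{c,\Xi_0})}(f_x)$, which Theorem~\ref{thm:worst-case-risk} evaluates in closed form as $(1-\varepsilon)\E_{\xi\sim\P_{c,\Xi_0}}[f_x]+\varepsilon\sup_{\xi\in\Xi_0} f_x(\xi)$.

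For the tail part, Hölder's inequality with exponents $p$ and $q=p/(p-1)$ gives
\[
\int_{\Xi_0^c} f_x\, dQ \;=\; \int \1\{\xi\in\Xi_0^c\}\, f_x(\xi)\, dQ(\xi) \;\le\; M_p(x)\, Q(\Xi_0^c)^{1/q},
\]
using $\1_{\Xi_0^c}^q=\1_{\Xi_0^c}$ and the definition of $M_p(x)$. Condition~(1) then yields $Q(\Xi_0^c)=1-Q(\Xi_0)\le \gamma$, so the tail is at most $M_p(x)\,\gamma^{1/q}$. Adding the bulk and tail bounds gives \eqref{eq:certificate} on $\mathcal{E}$, hence with probability at least $1-\delta$.

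I do not anticipate any serious obstacle: the argument is a clean bulk/tail decomposition plus Hölder's inequality and the already-proved worst-case formula. The only subtleties worth stating carefully are that $Q(\Xi_0)>0$ makes $Q_{\Xi_0}$ well defined, that nonnegativity of $f_x$ is what allows discarding the factor $Q(\Xi_0)\le 1$ without sign issues, and that one should emphasise that $M_p(x)$ is computed under $Q$ itself (not $\P^\star$), which is exactly what makes Hölder applicable without needing an auxiliary absolute-continuity argument.
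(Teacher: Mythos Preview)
The proposal is correct and follows essentially the same approach as the paper: a bulk/tail split, H\"older on the tail, and the LV worst-case bound on the bulk (the paper writes out the mixture $Q_{\Xi_0}=(1-\varepsilon)\P_{c,\Xi_0}+\varepsilon R$ directly rather than invoking Theorem~\ref{thm:worst-case-risk}, and drops the factor $Q(\Xi_0)\le 1$ at the end rather than before, but these are cosmetic). Your remarks about $Q(\Xi_0)>0$, the role of nonnegativity, and $M_p(x)$ being under $Q$ are all on point.
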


\begin{proof}
Let $\mathcal E$ denote the event on which the conditions hold; by assumption, $\Pr(\mathcal E)\ge 1-\delta$. Throughout, we work on $\mathcal E$.
Set $\gamma^\star:=Q(\Xi_0^c)\le\gamma$. Decompose the $Q$-expectation into the contribution from $\Xi_0$ and its complement:
\begin{equation}\label{eq:decomp}
\E_{\xi\sim Q}[f_x]
=\int_{\Xi_0} f_x\mathrm dQ+\int_{\Xi_0^c} f_x\mathrm dQ
=(1-\gamma^\star)\E_{\xi\sim Q_{\Xi_0}}[f_x]\ +\ \E_{\xi\sim Q}\big[f_x\1_{\Xi_0^c}\big],
\end{equation}
where the equality $(1-\gamma^\star)\E_{\xi\sim Q_{\Xi_0}}[f_x]=\int_{\Xi_0} f_x\mathrm dQ$ follows from the definition of the normalised restriction $Q_{\Xi_0}$.

Since $Q_{\Xi_0}\in \A^{\LV}_{\varepsilon,\Xi_0}(\P_{c,\Xi_0})$, there exists some $R\in\mathcal P(\Xi_0)$ such that 
$Q_{\Xi_0}=(1-\varepsilon)\P_{c,\Xi_0}+\varepsilon R$.
Hence, using linearity and that $R$ is supported on $\Xi_0$,
\[
\E_{\xi\sim Q_{\Xi_0}}[f_x]
=(1-\varepsilon)\E_{\xi\sim \P_{c,\Xi_0}}[f_x]+\varepsilon\E_{\xi\sim R}[f_x]
\ \le\ (1-\varepsilon)\E_{\xi\sim \P_{c,\Xi_0}}[f_x]+\varepsilon\sup_{\xi\in\Xi_0} f_x(\xi).
\]
Multiplying by $(1-\gamma^\star)$ and using \eqref{eq:decomp} gives
\begin{equation}\label{eq:bulk}
\int_{\Xi_0} f_x\mathrm dQ
\le (1-\gamma^\star)\Big[(1-\varepsilon)\E_{\xi\sim \P_{c,\Xi_0}}[f_x]+\varepsilon\sup_{\xi\in\Xi_0} f_x(\xi)\Big].
\end{equation}

By H\"older's inequality with conjugate exponents $(p,q)$,
\[
\int_{\Xi_0^c} f_x\mathrm dQ
=\E_{\xi\sim Q}\big[f_x\1_{\Xi_0^c}\big]
\le \big(\E_{\xi\sim Q}[|f_x|^p]\big)^{1/p}\big(\E_{\xi\sim Q}[\1_{\Xi_0^c}^q]\big)^{1/q}
= M_p(x)(\gamma^\star)^{1/q}.
\]
Using $\gamma^\star\le\gamma$ yields
\begin{equation}\label{eq:tail}
\int_{\Xi_0^c} f_x\mathrm dQ \ \le\ M_p(x)\gamma^{1/q}.
\end{equation}

Adding \eqref{eq:bulk} and \eqref{eq:tail} and dropping the factor $(1-\gamma^\star)\le 1$ yields
\[
\E_{\xi\sim Q}[f_x]
\le (1-\varepsilon)\E_{\xi\sim \P_{c,\Xi_0}}[f_x]+\varepsilon\sup_{\xi\in\Xi_0} f_x(\xi) + M_p(x)\gamma^{1/q},
\]
which is exactly \eqref{eq:certificate}. Since all steps hold on $\mathcal E$ and $\Pr(\mathcal E)\ge 1-\delta$, the bound holds with probability at least $1-\delta$.
\end{proof}

We are now ready to prove the theorem.
\begin{proof}[Proof of Theorem~\ref{thm:certificate}]
Let
\[
\mathcal E := \{\P^\star(\Xi_0)\ge 1-\gamma\}.
\]
By assumption, $\Pr(\mathcal E)\ge 1-\delta$. Throughout the proof we work on the event $\mathcal E$.

Using $\widetilde \P=(1-\varepsilon^\star)\P^\star+\varepsilon^\star\widetilde R$, we have
\begin{align}
\widetilde\P(\Xi_0)
&=(1-\varepsilon^\star)\P^\star(\Xi_0)+\varepsilon^\star\widetilde R(\Xi_0)
\notag\\
&\ge (1-\varepsilon^\star)\P^\star(\Xi_0)
\ge (1-\varepsilon^\star)(1-\gamma),
\label{eq:tildeP-bulk-mass}
\end{align}
so in particular $\widetilde\P(\Xi_0)>0$.
Moreover,
\begin{align}
\widetilde\P(\Xi_0^c)
&=(1-\varepsilon^\star)\P^\star(\Xi_0^c)+\varepsilon^\star\widetilde R(\Xi_0^c)
\notag\\
&=(1-\varepsilon^\star)\P^\star(\Xi_0^c)+\varepsilon^\star\bigl(1-\widetilde R(\Xi_0)\bigr)
\notag\\
&\le (1-\varepsilon^\star)\gamma+\varepsilon^\star\bigl(1-\widetilde R(\Xi_0)\bigr).
\label{eq:tildeP-tail-mass}
\end{align}

Let $r:=\widetilde R(\Xi_0)\in[0,1]$ and define
\[
\eta := \frac{\varepsilon^\star r}{\widetilde\P(\Xi_0)}.
\]
We check that $\eta\in[0,1]$: clearly $\eta\ge 0$, and since
$\widetilde\P(\Xi_0)=(1-\varepsilon^\star)\P^\star(\Xi_0)+\varepsilon^\star r\ge \varepsilon^\star r$,
we have $\eta\le 1$.

If $r=0$, then $\widetilde\P(A\cap\Xi_0)=(1-\varepsilon^\star)\P^\star(A\cap\Xi_0)$ for all $A\in\mathcal F$, hence
$\widetilde\P_{\Xi_0}(A)=\P^\star_{\Xi_0}(A)$.

If $r>0$, define $\widetilde R_{\Xi_0}(A):=\widetilde R(A\cap\Xi_0)/\widetilde R(\Xi_0)$.
For all $A\in\mathcal F$,
\begin{align*}
    \widetilde\P_{\Xi_0}(A) &= \frac{\widetilde \P (A\cap \Xi_0)}{\widetilde \P(\Xi_0)} \\
    &=\frac{(1-\varepsilon^\star)\P^\star(A\cap \Xi_0) + \varepsilon^\star \widetilde R(A\cap \Xi_0)}{\widetilde \P(\Xi_0)}\\
    &=\frac{(1-\varepsilon^\star)\P_{\Xi_0}^\star(A)\P^\star(\Xi_0) + \varepsilon^\star \widetilde R_{\Xi_0}(A)\widetilde R(\Xi_0)}{\widetilde \P(\Xi_0)} \\
    &=(1-\varepsilon^\star)\frac{\P^\star(\Xi_0)}{\widetilde \P(\Xi_0)}\cdot \P_{\Xi_0}^\star(A) + \eta \widetilde R_{\Xi_0}(A).
\end{align*}
Note that
$\widetilde \P(\Xi_0) = (1-\varepsilon^\star)\P^\star (\Xi_0)+\varepsilon^\star \widetilde R(\Xi_0)$, which gives
\[
\P^\star (\Xi_0) = \frac{\widetilde \P(\Xi_0) - \varepsilon^\star \widetilde R(\Xi_0)}{(1-\varepsilon^\star)}.
\]
Therefore
\begin{align*}
    (1-\varepsilon^\star)\frac{\P^\star(\Xi_0)}{\widetilde \P(\Xi_0)} = (1-\varepsilon^\star)\cdot \frac{\widetilde \P(\Xi_0) - \varepsilon^\star \widetilde R(\Xi_0)}{(1-\varepsilon^\star)} \cdot \frac{1}{\widetilde \P(\Xi_0)} = 1-\eta,
\end{align*}
which gives
\begin{equation}
\label{eq:tildeP-trunc-mixture}
\widetilde\P_{\Xi_0}(A) = (1-\eta)\P^\star_{\Xi_0}(A) + \eta\,\widetilde R_{\Xi_0}(A).
\end{equation}

By assumption, $\P^\star_{\Xi_0}\in\A^{\LV}_{\varepsilon_c,\Xi_0}(\P_{c,\Xi_0})$, so there exists $R_c\in\mathcal P(\Xi_0)$ such that
\begin{equation}
\label{eq:Pc-mismatch}
\P^\star_{\Xi_0}=(1-\varepsilon_c)\P_{c,\Xi_0}+\varepsilon_c R_c.
\end{equation}

If $\widetilde R(\Xi_0)=0$, then we have $\widetilde\P_{\Xi_0}=\P^\star_{\Xi_0}$ and we may take
$\varepsilon_{\mathrm{eff}}:=\varepsilon_c$ and $R_{\mathrm{eff}}:=R_c$.

Assume now that $\widetilde R(\Xi_0)>0$ so that \eqref{eq:tildeP-trunc-mixture} holds.
Substituting \eqref{eq:Pc-mismatch} into \eqref{eq:tildeP-trunc-mixture} yields
\begin{align*}
\widetilde\P_{\Xi_0}
&=(1-\eta)\bigl((1-\varepsilon_c)\P_{c,\Xi_0}+\varepsilon_c R_c\bigr)+\eta\,\widetilde R_{\Xi_0} \\
&=(1-\eta)(1-\varepsilon_c)\P_{c,\Xi_0} + (1-\eta)\varepsilon_c R_c + \eta\,\widetilde R_{\Xi_0}.
\end{align*}
Define
\[
\varepsilon_{\mathrm{eff}}
:= 1-(1-\eta)(1-\varepsilon_c)
=\eta+\varepsilon_c-\eta\varepsilon_c
=1-(1-\varepsilon_c)\Bigl(1-\frac{\varepsilon^\star \widetilde R(\Xi_0)}{\widetilde\P(\Xi_0)}\Bigr).
\]
If $\varepsilon_{\mathrm{eff}}=0$, then $\eta=\varepsilon_c=0$ and hence $\widetilde\P_{\Xi_0}=\P_{c,\Xi_0}$, so the inclusion
$\widetilde\P_{\Xi_0}\in\A^{\LV}_{\varepsilon_{\mathrm{eff}},\Xi_0}(\P_{c,\Xi_0})$ holds trivially.

Assume now that $\varepsilon_{\mathrm{eff}}>0$. Define
\[
R_{\mathrm{eff}}
:= \frac{(1-\eta)\varepsilon_c}{\varepsilon_{\mathrm{eff}}}R_c
+ \frac{\eta}{\varepsilon_{\mathrm{eff}}}\widetilde R_{\Xi_0}.
\]
The coefficients are nonnegative and sum to one because
\[
\frac{(1-\eta)\varepsilon_c}{\varepsilon_{\mathrm{eff}}} + \frac{\eta}{\varepsilon_{\mathrm{eff}}}
= \frac{(1-\eta)\varepsilon_c+\eta}{\varepsilon_{\mathrm{eff}}}
= \frac{\eta+\varepsilon_c-\eta\varepsilon_c}{\varepsilon_{\mathrm{eff}}}
=1.
\]
Since $R_c,\widetilde R_{\Xi_0}\in\mathcal P(\Xi_0)$, their convex combination $R_{\mathrm{eff}}$ is also in $\mathcal P(\Xi_0)$.
Therefore,
\[
\widetilde\P_{\Xi_0} = (1-\varepsilon_{\mathrm{eff}})\P_{c,\Xi_0} + \varepsilon_{\mathrm{eff}} R_{\mathrm{eff}},
\]
which is exactly the statement that
\[
\widetilde\P_{\Xi_0}\in \A^{\LV}_{\varepsilon_{\mathrm{eff}},\Xi_0}(\P_{c,\Xi_0}).
\]

On $\mathcal E$, \eqref{eq:tildeP-tail-mass} shows that
\[
\widetilde\P(\Xi_0^c)\le (1-\varepsilon^\star)\gamma+\varepsilon^\star\bigl(1-\widetilde R(\Xi_0)\bigr).
\]
By assumption, $M_p(x)=\big(\E_{\xi\sim \widetilde\P}[|f_x|^p]\big)^{1/p}<\infty$ for all $x$.
Therefore we may apply Lemma~\ref{thm:tail-certificate} with $Q=\widetilde\P$, tolerance $\varepsilon=\varepsilon_{\mathrm{eff}}$, and with tail budget $\gamma^\star:=\widetilde\P(\Xi_0^c)$.
This gives, for all $x\in\mathcal X$,
\[
\E_{\xi\sim \widetilde\P}[f_x(\xi)]
\le (1-\varepsilon_{\mathrm{eff}})\E_{\xi\sim\P_{c,\Xi_0}}[f_x(\xi)]
+ \varepsilon_{\mathrm{eff}}\sup_{\xi\in\Xi_0} f_x(\xi)
+ M_p(x)\bigl(\gamma^\star\bigr)^{1/q}.
\]
Using $\gamma^\star=\widetilde\P(\Xi_0^c)\le (1-\varepsilon^\star)\gamma+\varepsilon^\star\bigl(1-\widetilde R(\Xi_0)\bigr)$ from \eqref{eq:tildeP-tail-mass} yields
\[
\E_{\xi\sim \widetilde\P}[f_x(\xi)]
\le (1-\varepsilon_{\mathrm{eff}})\E_{\xi\sim\P_{c,\Xi_0}}[f_x(\xi)]
+ \varepsilon_{\mathrm{eff}}\sup_{\xi\in\Xi_0} f_x(\xi)
+ M_p(x)\Bigl((1-\varepsilon^\star)\gamma+\varepsilon^\star\bigl(1-\widetilde R(\Xi_0)\bigr)\Bigr)^{1/q}.
\]

All steps above hold on $\mathcal E$, and $\Pr(\mathcal E)\ge 1-\delta$.
Therefore the bound holds with probability at least $1-\delta$.
\end{proof}

\begin{remark}[When the in-bulk centre mismatch $\varepsilon_c$ vanishes]
\label{rem:epsc-vanish}
Fix a bulk set $\Xi_0$ with $\P^\star(\Xi_0)>0$ and $\P_c(\Xi_0)>0$, and define
\[
\varepsilon_c:=\LV(\P^\star_{\Xi_0},\P_{c,\Xi_0})
=\sup_{A:\P_{c,\Xi_0}(A)>0}\frac{\P_{c,\Xi_0}(A)-\P^\star_{\Xi_0}(A)}{\P_{c,\Xi_0}(A)}\in[0,1].
\]
A convenient sufficient condition for $\varepsilon_c=o_p(1)$ is an \emph{upper density-ratio} bound on $\Xi_0$:
if $\P_{c,\Xi_0}\ll \P^\star_{\Xi_0}$ and there exists $\Delta_n=o_p(1)$ with $\Delta_n\ge 0$ such that
\begin{equation}
\label{eq:ratio-suff}
\operatorname*{ess\,sup}_{\xi\sim \P^\star_{\Xi_0}}
\frac{\dd \P_{c,\Xi_0}}{\dd \P^\star_{\Xi_0}}(\xi)
 \le 1+\Delta_n,
\end{equation}
then for every measurable $A\subseteq \Xi_0$,
$\P_{c,\Xi_0}(A)\le(1+\Delta_n)\P^\star_{\Xi_0}(A)$ and hence
$\P^\star_{\Xi_0}(A)\ge \frac{1}{1+\Delta_n}\P_{c,\Xi_0}(A)$.
Therefore $\P^\star_{\Xi_0}\in \A^{\LV}_{\Delta_n/(1+\Delta_n),\Xi_0}(\P_{c,\Xi_0})$ and
\[
\varepsilon_c\ \le\ \frac{\Delta_n}{1+\Delta_n}\ =\ o_p(1).
\]
\begin{enumerate}
\item \textbf{Bayesian centre.}
Suppose $\P^\star=\P_{\theta^\star}$ for some $\theta^\star$ and, on $\Xi_0$, the model admits densities
$p_\theta$ with respect to a common dominating measure $\mu$. Assume $\inf_{\xi\in\Xi_0}p_{\theta^\star}(\xi)\ge m_0>0$ and that the posterior predictive density $p_c(\xi):=\int p_\theta(\xi)\Pi(\dd\theta\mid \D_n)$ satisfies
\[
\sup_{\xi\in\Xi_0}\big|p_c(\xi)-p_{\theta^\star}(\xi)\big| = o_p(1).
\]
Then $\P_c(\Xi_0)\to \P^\star(\Xi_0)$ and $\sup_{\xi\in\Xi_0} p_c(\xi)/p_{\theta^\star}(\xi)=1+o_p(1)$, which implies \eqref{eq:ratio-suff} and hence $\varepsilon_c=o_p(1)$.
\item \textbf{Frequentist plug-in centre.}
Let $\hat\theta_n\to\theta^\star$ in probability and set $\P_c:=\P_{\hat\theta_n}$. Under the same density regularity on $\Xi_0$ (common dominating measure and $\inf_{\Xi_0}p_{\theta^\star}\ge m_0>0$), if $\sup_{\xi\in\Xi_0}|p_{\hat\theta_n}(\xi)-p_{\theta^\star}(\xi)|=o_p(1)$, then again \eqref{eq:ratio-suff} holds and $\varepsilon_c=o_p(1)$.
\item \textbf{Empirical centre (SAA interpretation).}
If we take the empirical measure $\widehat \P_n$ as a literal centre in an atomless model (e.g.\ $\P^\star_{\Xi_0}$ admits a density), then we have $\LV(\P^\star_{\Xi_0},\widehat \P_{n,\Xi_0})=1$ because $\widehat \P_{n,\Xi_0}$ assigns positive mass to singletons that have $\P^\star_{\Xi_0}$-mass $0$. In our empirical option we instead treat $\P^\star$ as the underlying centre and use $\widehat \P_n$ only as an SAA for expectations (to approximate $\E_{\P^\star_{\Xi_0}}[f_x]$). In this regime it is not meaningful to define $\varepsilon_c$ as a predictive/estimation mismatch; the relevant error term is the usual SAA deviation $\big|\E_{\widehat \P_{n,\Xi_0}}[f_x]-\E_{\P^\star_{\Xi_0}}[f_x]\big|$, controlled by standard LLN and concentration assumptions. This interpretation is further demonstrated in Table~\ref{tab:centre-saa-targets}.
\end{enumerate}
\end{remark}

\begin{table}[ht]
\centering
\setlength{\tabcolsep}{4pt}
\renewcommand{\arraystretch}{1.45}
\caption{Theoretical centre and SAA target for each centre type. The theoretical target in each row is \((1-\varepsilon)\E_{\xi\sim\P_{c,\Xi_0}}[f_x(\xi)]+\varepsilon\sup_{\xi\in\Xi_0}f_x(\xi)\), with \(\P_c\) instantiated by the centre choice.}
\begin{small}
\begin{tabular}{p{0.15\textwidth}p{0.18\textwidth}p{0.62\textwidth}}
\toprule
\textbf{Centre type} & \textbf{Theoretical centre} & \textbf{SAA target} \\
\midrule
Bayesian posterior predictive
&
\(\P_c\) with density \(p_c(\xi)=\int p_\theta(\xi)\Pi(\dd\theta\mid\D_n)\)
&
\((1-\varepsilon)\frac{1}{M}\sum_{m=1}^M f_x(\xi_m^{\mathrm B})+\varepsilon\sup_{\xi\in\Xi_0}f_x(\xi)\), \(\xi_m^{\mathrm B}\overset{\mathrm{i.i.d.}}{\sim}\P_{c,\Xi_0}\) via rejection sampling. \\
\addlinespace[0.6em]
Frequentist plug-in
&
\(\P_c=\P_{\hat\theta_n}\)
&
\((1-\varepsilon)\frac{1}{M}\sum_{m=1}^M f_x(\xi_m^{\mathrm F})+\varepsilon\sup_{\xi\in\Xi_0}f_x(\xi)\), \(\xi_m^{\mathrm F}\overset{\mathrm{i.i.d.}}{\sim}\P_{\hat\theta_n,\Xi_0}\) via rejection sampling. \\
\addlinespace[0.6em]
\multirow{2}{=}{Empirical centre (SAA interpretation)}
&
\multirow{2}{=}{\(\P_c=\P^\star\)}
&
\(\displaystyle
(1-\varepsilon)
\dfrac{\sum_{i=1}^n f_x(\xi_i)\mathbf 1\{\xi_i\in\Xi_0\}}
      {\sum_{i=1}^n \mathbf 1\{\xi_i\in\Xi_0\}}+\varepsilon\sup_{\xi\in\Xi_0}f_x(\xi)
\)
\\
\addlinespace[0.2em]
\bottomrule
\end{tabular}
\end{small}
\label{tab:centre-saa-targets}
\end{table}

\subsection{Proof of Results in Table~\ref{tab:closed-form-sup}: Closed-form In-bulk Supremum}
\label{proof:closed-form-sup}
Since the first three rows (linear, ReLU, absolute value) are special cases of the piecewise-linear function, we formalise the following lemma for the piecewise-linear case:
\begin{lemma}[Closed-form in-bulk supremum for piecewise-linear losses]
\label{lem:closed-form-sup-piecewise}
Assume \(\Sigma_\mathrm{fit}\in\mathbb R^{d\times d}\) is symmetric positive definite and \(w\in\mathbb R^d_{++}\).
Fix \(t\ge 0\) and define the bulk sets
\[
\Xi_0^{\mathrm{ellip}}(t):=\left\{\xi\in\mathbb R^d: \big\|\Sigma_\mathrm{fit}^{-1/2}(\xi-\mu_\mathrm{fit})\big\|_2\le t\right\},\qquad
\Xi_0^{\mathrm{box}}(t):=\left\{\xi\in\mathbb R^d: \max_i\left|\frac{\xi_i-\mu_{\mathrm{fit},i}}{w_i}\right|\le t\right\}.
\]
Let \(f_x:\mathbb R^d\to\mathbb R\) be piecewise-linear of the form
\[
f_x(\xi)=\max_{j\in\{1,\dots,J\}}\{a_{x,j}^\top \xi+b_{x,j}\},
\qquad J<\infty.
\]
Write \(m_2(a):=\|\Sigma_\mathrm{fit}^{1/2}a\|_2\) and \(m_1(a):=\sum_{i=1}^d w_i|a_i|\).
Then
\begin{align*}
\sup_{\xi\in\Xi_0^{\mathrm{ellip}}(t)} f_x(\xi)
&=
\max_{j\le J}\Big\{a_{x,j}^\top \mu_\mathrm{fit}+b_{x,j}+tm_2(a_{x,j})\Big\},\\
\sup_{\xi\in\Xi_0^{\mathrm{box}}(t)} f_x(\xi)
&=
\max_{j\le J}\Big\{a_{x,j}^\top \mu_\mathrm{fit}+b_{x,j}+tm_1(a_{x,j})\Big\}.
\end{align*}
\end{lemma}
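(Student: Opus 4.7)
The plan is to reduce the supremum of the piecewise-linear loss to $J$ separate affine maximisations, each of which is a standard support-function computation over an ellipsoid or a box.

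First, since $f_x(\xi)=\max_{j\le J}\{a_{x,j}^\top \xi+b_{x,j}\}$ is a pointwise maximum over a \emph{finite} index set, I would interchange the outer supremum with the finite max:
\[
\sup_{\xi\in \Xi_0}\max_{j\le J}\{a_{x,j}^\top \xi+b_{x,j}\}
=\max_{j\le J}\sup_{\xi\in \Xi_0}\{a_{x,j}^\top \xi+b_{x,j}\}.
\]
This identity holds for any set $\Xi_0$ and any finite family of functions, so it applies to both $\Xi_0^{\mathrm{ellip}}(t)$ and $\Xi_0^{\mathrm{box}}(t)$ without further work.

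Second, for each fixed $j$, I would translate coordinates by $u:=\xi-\mu_{\mathrm{fit}}$, so
\[
\sup_{\xi\in \Xi_0}(a_{x,j}^\top \xi+b_{x,j})
=a_{x,j}^\top \mu_{\mathrm{fit}}+b_{x,j}+\sup_{u\in K}a_{x,j}^\top u,
\]
where $K$ is either the centred ellipsoid $\{u:\|\Sigma_{\mathrm{fit}}^{-1/2}u\|_2\le t\}$ or the centred box $\{u:|u_i|\le tw_i,\ i=1,\dots,d\}$. This reduces the whole lemma to computing the support function $h_K(a):=\sup_{u\in K}a^\top u$ for these two sets.

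Third, I would compute each support function. For the ellipsoid, set $v:=\Sigma_{\mathrm{fit}}^{-1/2}u$, so $\{u\in K\}$ becomes $\{\|v\|_2\le t\}$ and $a^\top u=(\Sigma_{\mathrm{fit}}^{1/2}a)^\top v$; Cauchy--Schwarz gives $h_K(a)=t\|\Sigma_{\mathrm{fit}}^{1/2}a\|_2=tm_2(a)$, attained at $v^\star=t\,\Sigma_{\mathrm{fit}}^{1/2}a/\|\Sigma_{\mathrm{fit}}^{1/2}a\|_2$ when $a\neq 0$ (and trivially when $a=0$). For the box, the objective separates as $a^\top u=\sum_i a_i u_i$ under independent constraints $|u_i|\le tw_i$, so the maximum is achieved coordinatewise at $u_i^\star=tw_i\operatorname{sign}(a_i)$, giving $h_K(a)=t\sum_i w_i|a_i|=tm_1(a)$. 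Substituting back and recombining with the outer max over $j$ yields the two stated formulas.

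I do not anticipate a genuine obstacle: the argument is routine convex analysis once the sup--max interchange in the first step is made explicit. The only minor care is in the edge cases $a_{x,j}=0$ or $t=0$, which are handled automatically since both $m_1(0)$ and $m_2(0)$ vanish and the centred constraint sets contain $u=0$.
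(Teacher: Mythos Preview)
Your proposal is correct and follows essentially the same route as the paper: both first swap the supremum with the finite maximum, then reduce each affine piece to a support-function computation via the translation $\xi\mapsto \xi-\mu_{\mathrm{fit}}$, handling the ellipsoid by Cauchy--Schwarz after whitening and the box by coordinatewise maximisation. The paper's proof is nearly identical in structure and detail, including the treatment of the $a=0$ edge case.
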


\begin{proof}
For any nonempty set \(S\subseteq\mathbb R^d\) and any functions \(\{g_j\}_{j=1}^J\), we have
\begin{equation}
\label{eq:sup-max-swap}
\sup_{\xi\in S}\max_{j\le J} g_j(\xi)=\max_{j\le J}\sup_{\xi\in S} g_j(\xi).
\end{equation}
Indeed, for every \(\xi\in S\),
\(\max_{j} g_j(\xi)\le \max_j \sup_{\zeta\in S} g_j(\zeta)\), hence
\(\sup_{\xi\in S}\max_j g_j(\xi)\le \max_j \sup_{\xi\in S} g_j(\xi)\).
Conversely, for each fixed \(j\),
\(\sup_{\xi\in S} g_j(\xi)\le \sup_{\xi\in S}\max_k g_k(\xi)\); taking \(\max_j\) gives the reverse inequality.

Fix \(a\in\mathbb R^d\), \(b\in\mathbb R\), and let \(S=\Xi_0^{\mathrm{ellip}}(t)\).
We compute
\[
\sup_{\xi\in S}(a^\top \xi+b)=a^\top\mu_\mathrm{fit}+b+\sup_{\|\Sigma_\mathrm{fit}^{-1/2}(\xi-\mu_\mathrm{fit})\|_2\le t} a^\top(\xi-\mu_\mathrm{fit}).
\]
Make the change of variables \(u:=\Sigma_\mathrm{fit}^{-1/2}(\xi-\mu_\mathrm{fit})\), so \(\xi=\mu_\mathrm{fit}+\Sigma_\mathrm{fit}^{1/2}u\) and \(\|u\|_2\le t\).
Then
\[
a^\top(\xi-\mu_\mathrm{fit})=a^\top\Sigma_\mathrm{fit}^{1/2}u = u^\top(\Sigma_\mathrm{fit}^{1/2}a).
\]
By Cauchy--Schwarz,
\[
u^\top(\Sigma_\mathrm{fit}^{1/2}a)\le \|u\|_2\ \|\Sigma_\mathrm{fit}^{1/2}a\|_2 \le t\,\|\Sigma_\mathrm{fit}^{1/2}a\|_2.
\]
If \(\Sigma_\mathrm{fit}^{1/2}a\neq 0\), equality holds by choosing \(u=t\,\Sigma_\mathrm{fit}^{1/2}a/\|\Sigma_\mathrm{fit}^{1/2}a\|_2\); if \(\Sigma_\mathrm{fit}^{1/2}a=0\), then \(u^\top(\Sigma_\mathrm{fit}^{1/2}a)=0\) for all \(u\).
Therefore,
\begin{equation}
\label{eq:sup-ellip}
    \sup_{\xi\in\Xi_0^{\mathrm{ellip}}(t)}(a^\top\xi+b)=a^\top\mu_\mathrm{fit}+b+t\,\|\Sigma_\mathrm{fit}^{1/2}a\|_2.
\end{equation}

Fix \(a\in\mathbb R^d\), \(b\in\mathbb R\), and let \(S=\Xi_0^{\mathrm{box}}(t)\).
Write \(\xi_i=\mu_{\mathrm{fit},i}+w_i u_i\) with \(|u_i|\le t\), so
\[
a^\top\xi+b = a^\top\mu_\mathrm{fit}+b+\sum_{i=1}^d (w_i a_i)u_i.
\]
For any \((u_i)\) with \(|u_i|\le t\),
\[
\sum_{i=1}^d (w_i a_i)u_i
\le \sum_{i=1}^d w_i|a_i|\,|u_i|
\le t\sum_{i=1}^d w_i|a_i|=t\,m_1(a).
\]
Equality is achieved by taking \(u_i=t\,\mathrm{sign}(a_i)\) (and any choice in \([-t,t]\) when \(a_i=0\)).
Thus,
\begin{equation}
\label{eq:sup-box}
    \sup_{\xi\in\Xi_0^{\mathrm{box}}(t)}(a^\top\xi+b)=a^\top\mu_\mathrm{fit}+b+t\sum_{i=1}^d w_i|a_i|.
\end{equation}
Apply \eqref{eq:sup-max-swap} with \(g_j(\xi)=a_{x,j}^\top\xi+b_{x,j}\), and then apply \eqref{eq:sup-ellip} (ellipsoid) or \eqref{eq:sup-box} (box) to each \(g_j\) to complete the proof.
\end{proof}

\begin{remark}[Tractability and special cases.]
The expressions above reduce \(\sup_{\xi\in\Xi_0(t)} f_x(\xi)\) to a finite maximum of convex terms.
\begin{enumerate}
    \item For ellipsoids, each piece contributes \(a_{x,j}^\top\mu_\mathrm{fit}+b_{x,j}+t\|\Sigma_\mathrm{fit}^{1/2}a_{x,j}\|_2\). If \(a_{x,j}\) depends affinely on the decision variable, this term is SOCP-representable via an epigraph constraint \(s_j\ge \|\Sigma_\mathrm{fit}^{1/2}a_{x,j}\|_2\).
    \item For boxes, each piece contributes \(a_{x,j}^\top\mu_\mathrm{fit}+b_{x,j}+t\sum_i w_i|a_{x,j,i}|\). If \(a_{x,j}\) depends affinely on the decision variable, this term is LP-representable by linearising absolute values with auxiliary variables \(u_{j,i}\ge a_{x,j,i}\), \(u_{j,i}\ge -a_{x,j,i}\).
\end{enumerate}
The first three rows of Table~\ref{tab:closed-form-sup} are immediate special cases of the lemma:
\begin{enumerate}
    \item linear loss uses \(J=1\).
    \item ReLU uses \(J=2\) with \(\max\{0,a_x^\top\xi+b_x\}=\max\{0^\top\xi+0,\ a_x^\top\xi+b_x\}\).
    \item Absolute value uses \(J=2\) with \( |a_x^\top\xi+b_x|=\max\{a_x^\top\xi+b_x,\ -a_x^\top\xi-b_x\}\).
\end{enumerate}
\end{remark}
\section{Blockwise Bulk Construction}
\label{app:blockwise-bulk}

\begin{lemma}[Blockwise bulk-mass certificate via DKW score selection]
\label{lem:blockwise-bulk-coverage-dkw}
Let $\{\tilde\xi_j\}_{j=1}^m$ be i.i.d.\ from $\P^\star$ and independent of $\D_{\mathrm{fit}}$.
Fix an integer $k\ge 1$ and a block decomposition $\xi=(\xi^{(1)},\dots,\xi^{(k)})$ with $\xi^{(i)}\in\mathbb R^{d_i}$ and $\sum_{i=1}^k d_i=d$.
For each $i\in\{1,\dots,k\}$, using $\D_{\mathrm{fit}}$ only, fix a measurable score
$s_i:\mathbb R^{d_i}\to\mathbb R$ and define the nested family
\[
\Xi_{0,i}(t):=\{\xi\in\Xi:\ s_i(\xi^{(i)})\le t\},\qquad t\in\mathbb R.
\]
Let $(\gamma_i,\delta_i)\in(0,1)^2$ be target mass/confidence levels for each block, and define
\[
F_i(t):=\P^\star\{s_i(\tilde\xi^{(i)})\le t\},\qquad
F_{i,m}(t):=\frac{1}{m}\sum_{j=1}^m \mathbf{1}\{s_i(\tilde\xi^{(i)}_j)\le t\},
\qquad
r_{m,\delta_i}:=\sqrt{\frac{1}{2m}\log\Big(\frac{2}{\delta_i}\Big)}.
\]
Assume $\gamma_i\ge r_{m,\delta_i}$ for all $i$.
Define the one-sided DKW envelope for each block by
\[
L_i^{\mathrm{DKW}}(t):=\big[F_{i,m}(t)-r_{m,\delta_i}\big]_+,
\qquad
\hat t_i:=\inf\{t:\ L_i^{\mathrm{DKW}}(t)\ge 1-\gamma_i\},
\qquad
\widehat\Xi_{0,i}:=\Xi_{0,i}(\hat t_i),
\]
and form the blockwise intersection bulk set
\[
\widehat\Xi_0:=\bigcap_{i=1}^k \widehat\Xi_{0,i}.
\]
Then, with probability at least $1-\sum_{i=1}^k \delta_i$,
\[
\P^\star(\widehat\Xi_0)\ \ge\ 1-\sum_{i=1}^k \gamma_i.
\]
In particular, if $\sum_{i=1}^k\gamma_i\le \gamma$ and $\sum_{i=1}^k\delta_i\le \delta$, then
\[
\Pr\Big\{\P^\star(\widehat\Xi_0)\ge 1-\gamma\Big\}\ \ge\ 1-\delta.
\]
\end{lemma}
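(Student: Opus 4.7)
The plan is to reduce the multi-block certificate to $k$ independent applications of the single-block result Lemma~\ref{lem:bulk-coverage-dkw} and then combine them via a union bound. I would proceed as follows.

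First, I would verify the hypotheses of Lemma~\ref{lem:bulk-coverage-dkw} on each coordinate block. For each $i$, the score $s_i$ depends only on $\D_{\mathrm{fit}}$, while the selection sample $\{\tilde\xi_j\}_{j=1}^m$ is i.i.d.\ from $\P^\star$ and independent of $\D_{\mathrm{fit}}$. Projecting onto the $i$th block, $\{\tilde\xi_j^{(i)}\}_{j=1}^m$ is therefore an i.i.d.\ sample from the $i$th marginal of $\P^\star$, independent of $s_i$. Hence the one-dimensional scores $s_i(\tilde\xi_j^{(i)})$ are i.i.d.\ with CDF $F_i$, and all conditions of Lemma~\ref{lem:bulk-coverage-dkw} are met with the per-block parameters $(\gamma_i,\delta_i)$ (using the assumption $\gamma_i\ge r_{m,\delta_i}$ to ensure existence of $\hat t_i$).

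Next, I would record the per-block events. Define
\[
\mathcal E_i:=\bigl\{\P^\star(\widehat\Xi_{0,i})\ge 1-\gamma_i\bigr\},\qquad i=1,\dots,k.
\]
By Lemma~\ref{lem:bulk-coverage-dkw}, $\Pr(\mathcal E_i)\ge 1-\delta_i$. By a straightforward union bound, setting $\mathcal E:=\bigcap_{i=1}^k \mathcal E_i$,
\[
\Pr(\mathcal E)\ \ge\ 1-\sum_{i=1}^k \Pr(\mathcal E_i^c)\ \ge\ 1-\sum_{i=1}^k \delta_i.
\]
I emphasise that independence of the $\mathcal E_i$ is not needed here; the DKW events share the selection sample, but the union bound is distribution-free.

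Finally, I would convert the intersection of high-mass events into a single mass bound on $\widehat\Xi_0=\bigcap_i \widehat\Xi_{0,i}$. On $\mathcal E$, applying a second union bound to the complements gives
\[
\P^\star(\widehat\Xi_0^c)
=\P^\star\!\Bigl(\bigcup_{i=1}^k \widehat\Xi_{0,i}^c\Bigr)
\le\sum_{i=1}^k \P^\star(\widehat\Xi_{0,i}^c)
\le\sum_{i=1}^k \gamma_i,
\]
hence $\P^\star(\widehat\Xi_0)\ge 1-\sum_i\gamma_i$ on an event of probability at least $1-\sum_i\delta_i$. The final ``in particular'' clause then follows by monotonicity of the probability bounds in $\gamma$ and $\delta$. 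There is no real obstacle; the only point requiring a brief check is the measurability/independence setup for the per-block DKW step, which is why I would make that reduction explicit at the start.
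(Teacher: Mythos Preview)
Your proposal is correct and follows essentially the same approach as the paper: per-block DKW coverage (you invoke Lemma~\ref{lem:bulk-coverage-dkw} as a black box, the paper inlines the same DKW argument), a union bound over the $k$ confidence events, and then De~Morgan plus a second union bound to convert simultaneous per-block mass bounds into a single intersection-mass bound. The only cosmetic difference is that the paper applies the union bound to the underlying DKW events $A_i$ rather than the coverage events $\mathcal E_i$, but since $A_i\subseteq\mathcal E_i$ this is equivalent.
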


\begin{proof}
Condition on $\D_{\mathrm{fit}}$, so that each score $s_i$ (hence each family $\{\Xi_{0,i}(t)\}$) is fixed.
For each $i$, apply the DKW inequality \eqref{eq:DKW} to the one-dimensional random variable
$Z_{i,j}:=s_i(\tilde\xi_j^{(i)})$ with confidence level $\delta_i$:
\[
\Pr\Big\{\sup_{t\in\mathbb R}\abs{F_{i,m}(t)-F_i(t)}\le r_{m,\delta_i}\ \Big|\ \D_{\mathrm{fit}}\Big\}\ \ge\ 1-\delta_i.
\]
Let
\[
A_i:=\Big\{\sup_{t\in\mathbb R}\abs{F_{i,m}(t)-F_i(t)}\le r_{m,\delta_i}\Big\}.
\]
By the union bound,
\[
\Pr\Big(\bigcap_{i=1}^k A_i\ \Big|\ \D_{\mathrm{fit}}\Big)\ \ge\ 1-\sum_{i=1}^k \delta_i.
\]
On the event $A_i$, for every $t\in\mathbb R$ we have
\[
F_i(t)\ \ge\ F_{i,m}(t)-r_{m,\delta_i}.
\]
Since $F_i(t)\ge 0$ for all $t$, it follows that
\[
F_i(t)\ \ge\ \max\{F_{i,m}(t)-r_{m,\delta_i},\,0\}
\ =\ \big[F_{i,m}(t)-r_{m,\delta_i}\big]_+\ =\ L_i^{\mathrm{DKW}}(t).
\]
Note that $F_{i,m}$ (hence $L_i^{\mathrm{DKW}}$) is nondecreasing and right-continuous in $t$, and since $\gamma_i\ge r_{m,\delta_i}$ the set $\{t:\ L_i^{\mathrm{DKW}}(t)\ge 1-\gamma_i\}$ is nonempty (e.g.\ it contains $t=\max_j s_i(\tilde\xi_j^{(i)})$). Therefore the infimum $\hat t_i$ is well-defined and satisfies $L_i^{\mathrm{DKW}}(\hat t_i)\ge 1-\gamma_i$.
\[
\P^\star(\widehat\Xi_{0,i})
=
\P^\star\big(\Xi_{0,i}(\hat t_i)\big)
=
F_i(\hat t_i)
\ \ge\ L_i^{\mathrm{DKW}}(\hat t_i)
\ \ge\ 1-\gamma_i.
\]
Therefore, on $\bigcap_{i=1}^k A_i$, we simultaneously have $\P^\star(\widehat\Xi_{0,i}^c)\le \gamma_i$ for all $i$.
Using $\widehat\Xi_0=\bigcap_{i=1}^k \widehat\Xi_{0,i}$ and De Morgan's law,
\[
\P^\star(\widehat\Xi_0^c)
=
\P^\star\Big(\bigcup_{i=1}^k \widehat\Xi_{0,i}^c\Big)
\ \le\ \sum_{i=1}^k \P^\star(\widehat\Xi_{0,i}^c)
\ \le\ \sum_{i=1}^k \gamma_i,
\]
where we used the union bound for probabilities in the first inequality.
Equivalently, on $\bigcap_{i=1}^k A_i$,
\[
\P^\star(\widehat\Xi_0)\ \ge\ 1-\sum_{i=1}^k \gamma_i.
\]
Combining with $\Pr(\cap_i A_i\mid \D_{\mathrm{fit}})\ge 1-\sum_i \delta_i$ and then removing the conditioning yields
\[
\Pr\Big\{\P^\star(\widehat\Xi_0)\ge 1-\sum_{i=1}^k \gamma_i\Big\}\ \ge\ 1-\sum_{i=1}^k \delta_i.
\]
The final statement follows immediately when $\sum_i\gamma_i\le \gamma$ and $\sum_i\delta_i\le \delta$.
\end{proof}

\section{Discussion on Tolerance Selection: Calibrating \(\varepsilon\)}
\label{app:eps-calibration}

Here we separate two scenarios:
\begin{enumerate}
    \item At decision time, we only observe training data from \(\P^\star\), so the deployment contamination level in \(\tilde \P=(1-\varepsilon^\star)\P^\star+\varepsilon^\star \widetilde R\) is not identifiable from training data alone without strong additional assumptions. In this regime, \(\varepsilon\) is treated as a robustness budget and tuned via validation. The score-based diagnostic in Lemma~\ref{lem:eps-lb-score} is only used to flag potential centre misalignment $\varepsilon_c$, and hence provides a lower bound on the effective distortion $\varepsilon_{\mathrm{eff}}\ge \varepsilon_c$.
    \item If we have i.i.d. samples from \(\tilde \P\), then we provide a lemma characterising \(\varepsilon_{\Xi_0}=\LV(\tilde \P_{\Xi_0},\P_{c,\Xi_0})\) as one minus the $\essinf$ of the Radon--Nikodym derivative. This is \emph{not} the setting in our paper: we only include it for completeness. 
\end{enumerate}

\subsection{A Score-based Lower-bound Diagnostic for $\varepsilon_c$}
\label{app:eps-score}

We show that nested score sets inside the calibrated bulk yield a simple lower bound on the in-bulk mismatch parameter $\varepsilon_c$. The diagnostic uses only the selection scores already computed for DKW calibration and the centre samples already generated for rejection sampling.

\begin{lemma}[Score-based lower bound for $\varepsilon_c$]
\label{lem:eps-lb-score}
Fix a measurable score $s:\Xi\to\mathbb R$ and thresholds $t_1<\cdots<t_K$.
Let $\Xi_0:=\{\xi:s(\xi)\le t_K\}$ and $\Xi_0(t_k):=\{\xi:s(\xi)\le t_k\}\subseteq \Xi_0$.
Assume $\P^\star(\Xi_0)\ge 1-\gamma$ for some $\gamma\in[0,1)$ and $\P_c(\Xi_0)>0$.

Assume further that the in-bulk training law is within a forward LV ball around the centre, i.e.
\begin{equation}
\label{eq:epsc-inbulk-assump}
\P^\star_{\Xi_0}\in \A^{\LV}_{\varepsilon_c,\Xi_0}(\P_{c,\Xi_0})
\quad\text{for some }\varepsilon_c\in[0,1).
\end{equation}
Let $\D_{\mathrm{select}}=\{\tilde\xi_j\}_{j=1}^m$ be i.i.d. from $\P^\star$ and let $\{\xi_c^{(i)}\}_{i=1}^{N_c}$ be i.i.d. from $\P_c$. The probability statement below is with respect to the randomness of $\D_{\mathrm{select}}$ and $\{\xi_c^{(i)}\}_{i=1}^{N_c}$, and remains valid conditionally on $\P_c$ and on $s,t_1,\dots,t_K$.
Define, for $k=1,\dots,K$,
\[
\widehat{p}_k^\star:=\frac{1}{m} \sum_{j=1}^m \1\left\{s\left(\tilde{\xi}_j\right) \leq t_k\right\},
\qquad
\widehat{p}_k^g:=\frac{1}{N_c} \sum_{i=1}^{N_c} \1\left\{s\left(\xi_c^{(i)}\right) \leq t_k\right\}.
\]
Fix $\delta\in(0,1)$ and set the DKW radii
\[
r_{m,\delta/2}:=\sqrt{\frac{1}{2m}\log\!\left(\frac{4}{\delta}\right)},
\qquad
r_{c,\delta/2}:=\sqrt{\frac{1}{2N_c}\log\!\left(\frac{4}{\delta}\right)}.
\]
Define the computable lower-bound diagnostic
\begin{equation}
\label{eq:epsc-score-lb}
\underline\varepsilon^{\mathrm{score}}_{c,\delta}:=
\begin{cases}
\max\limits_{k: \widehat p_k^g>r_{c,\delta/2}}
\left[1-\frac{(\widehat p_k^\star+r_{m,\delta/2})(\widehat p_K^g+r_{c,\delta/2})}{(1-\gamma)(\widehat p_k^g-r_{c,\delta/2})}\right]_+,
& \text{if } \exists k \quad \text{s.t. }\widehat p_k^g>r_{c,\delta/2},\\
0, & \text{otherwise.}
\end{cases}
\end{equation}
Then, with probability at least $1-\delta$,
\[
\varepsilon_c  \ge \underline\varepsilon^{\mathrm{score}}_{c,\delta}.
\]
\end{lemma}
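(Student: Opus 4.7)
The plan is to derive the lemma from two ingredients: the LV event-wise lower-envelope characterisation already established in Proposition~\ref{prop:credal-ball}, and a double application of the DKW inequality to the empirical score CDFs under $\P^\star$ and under $\P_c$. Starting from the hypothesis $\P^\star_{\Xi_0}\in \A^{\LV}_{\varepsilon_c,\Xi_0}(\P_{c,\Xi_0})$, which by Proposition~\ref{prop:credal-ball} is equivalent to $\LV(\P^\star_{\Xi_0},\P_{c,\Xi_0})\le \varepsilon_c$, the LV definition gives the deterministic bound
\[
\varepsilon_c \ge 1 - \frac{\P^\star_{\Xi_0}(A)}{\P_{c,\Xi_0}(A)}
\]
for every measurable $A\subseteq \Xi_0$ with $\P_{c,\Xi_0}(A)>0$. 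Applying this to the nested score sets $A=\Xi_0(t_k)$ and rewriting the normalised restrictions using $\Xi_0(t_K)=\Xi_0$ yields the unnormalised form
\[
\varepsilon_c \ge 1 - \frac{\P^\star(\Xi_0(t_k))\,\P_c(\Xi_0)}{\P^\star(\Xi_0)\,\P_c(\Xi_0(t_k))}.
\]

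Next I would control each population mass via DKW. Because the scores $s$ and thresholds $t_1,\dots,t_K$ are fixed once we condition on $\D_{\mathrm{fit}}$, the one-dimensional random variables $s(\tilde\xi_j)$ and $s(\xi_c^{(i)})$ are i.i.d.\ from their respective CDFs; the DKW inequality applied at confidence $1-\delta/2$ to each sample, followed by a union bound, gives an event of probability at least $1-\delta$ on which $|\widehat p_k^\star - \P^\star(\Xi_0(t_k))|\le r_{m,\delta/2}$ and $|\widehat p_k^g - \P_c(\Xi_0(t_k))|\le r_{c,\delta/2}$ hold simultaneously for every $k$. On this event, upper-bounding the numerator masses by $\widehat p_k^\star + r_{m,\delta/2}$ and $\widehat p_K^g + r_{c,\delta/2}$, lower-bounding $\P^\star(\Xi_0)\ge 1-\gamma$ by hypothesis, and lower-bounding $\P_c(\Xi_0(t_k))\ge \widehat p_k^g - r_{c,\delta/2}$ turns the deterministic bound into
\[
\varepsilon_c \ge 1 - \frac{(\widehat p_k^\star + r_{m,\delta/2})(\widehat p_K^g + r_{c,\delta/2})}{(1-\gamma)(\widehat p_k^g - r_{c,\delta/2})}
\]
whenever $\widehat p_k^g > r_{c,\delta/2}$ so the denominator is strictly positive.

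Finally, since $\varepsilon_c\ge 0$ always, I may take the positive part of the right-hand side; and since the inequality holds uniformly over every $k$ in the admissible index set, I may maximise over those $k$, recovering $\underline\varepsilon^{\mathrm{score}}_{c,\delta}$ exactly as defined in \eqref{eq:epsc-score-lb}. The case where no threshold satisfies $\widehat p_k^g > r_{c,\delta/2}$ is vacuous: the diagnostic is defined to equal $0$, and the bound $\varepsilon_c\ge 0$ is trivially true.

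The main technical care is in the positivity of the empirical lower bound on $\P_c(\Xi_0(t_k))$: the ratio estimate is only informative when $\widehat p_k^g$ exceeds the DKW radius $r_{c,\delta/2}$, which is precisely what drives the restriction in the $\max$. Beyond that, the argument is a clean chaining of the event-wise LV characterisation with two independent DKW applications; no measurability or attainment issues arise because each $\Xi_0(t_k)$ is a measurable sublevel set by construction, and the probability statement conditions naturally on $\D_{\mathrm{fit}}$ (hence on the score and thresholds) before the union bound is applied.
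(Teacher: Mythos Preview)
Your proposal is correct and follows essentially the same approach as the paper: derive the deterministic eventwise bound $\varepsilon_c \ge 1-\P^\star_{\Xi_0}(\Xi_0(t_k))/\P_{c,\Xi_0}(\Xi_0(t_k))$, rewrite it in unnormalised masses using $\P^\star(\Xi_0)\ge 1-\gamma$, then apply DKW at level $\delta/2$ to each empirical score CDF and combine via a union bound. The only cosmetic difference is that you obtain the eventwise inequality from the LV definition via Proposition~\ref{prop:credal-ball}, whereas the paper reads it directly off the mixture representation $\P^\star_{\Xi_0}=(1-\varepsilon_c)\P_{c,\Xi_0}+\varepsilon_c R$ with $R\ge 0$; these are equivalent.
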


\begin{proof}
Since \eqref{eq:epsc-inbulk-assump} holds, there exists a probability measure $R\in\mathcal P(\Xi_0)$ such that, for all
$A\in\mathcal F$,
\begin{equation}
\label{eq:mixture-on-bulk}
\P^\star_{\Xi_0}(A)=(1-\varepsilon_c)\P_{c,\Xi_0}(A)+\varepsilon_c R(A).
\end{equation}
Fix $k\in\{1,\dots,K\}$ and apply \eqref{eq:mixture-on-bulk} with $A=\Xi_0(t_k)$.
Because $R$ is a probability measure, $R(\Xi_0(t_k))\ge 0$, hence
\[
\P^\star_{\Xi_0}(\Xi_0(t_k))
=(1-\varepsilon_c)\P_{c,\Xi_0}(\Xi_0(t_k))+\varepsilon_c R(\Xi_0(t_k))
\ \ge\ (1-\varepsilon_c)\P_{c,\Xi_0}(\Xi_0(t_k)).
\]
Assume $\P_{c,\Xi_0}(\Xi_0(t_k))>0$ (equivalently $\P_c(\Xi_0(t_k))>0$). Dividing both sides by
$\P_{c,\Xi_0}(\Xi_0(t_k))$ gives
\[
\frac{\P^\star_{\Xi_0}(\Xi_0(t_k))}{\P_{c,\Xi_0}(\Xi_0(t_k))}\ \ge\ 1-\varepsilon_c,
\]
and thus
\begin{equation}
\label{eq:det-epsc}
\varepsilon_c\ \ge\ 1-\frac{\P^\star_{\Xi_0}(\Xi_0(t_k))}{\P_{c,\Xi_0}(\Xi_0(t_k))}.
\end{equation}

We now rewrite the ratio in \eqref{eq:det-epsc} using unnormalised masses. By definition of truncation,
\[
\P^\star_{\Xi_0}(\Xi_0(t_k))=\frac{\P^\star(\Xi_0(t_k)\cap \Xi_0)}{\P^\star(\Xi_0)}
=\frac{\P^\star(\Xi_0(t_k))}{\P^\star(\Xi_0)},
\]
since $\Xi_0(t_k)\subseteq \Xi_0$. Similarly,
\[
\P_{c,\Xi_0}(\Xi_0(t_k))=\frac{\P_c(\Xi_0(t_k))}{\P_c(\Xi_0)}.
\]
Substituting these identities into \eqref{eq:det-epsc} yields
\begin{equation}
\label{eq:det-epsc-un}
\varepsilon_c\ \ge\ 1-\frac{\P^\star(\Xi_0(t_k))\,\P_c(\Xi_0)}{\P^\star(\Xi_0)\,\P_c(\Xi_0(t_k))}.
\end{equation}
Finally, using $\P^\star(\Xi_0)\ge 1-\gamma$ gives the weaker but simpler bound
\begin{equation}
\label{eq:det-epsc-un-gamma}
\varepsilon_c\ \ge\ 1-\frac{\P^\star(\Xi_0(t_k))\,\P_c(\Xi_0)}{(1-\gamma)\,\P_c(\Xi_0(t_k))}.
\end{equation}

Define the score CDFs
\[
F^\star(t):=\P^\star\{s(\xi)\le t\},\qquad F^c(t):=\P_c\{s(\xi)\le t\}.
\]
By construction, $F^\star(t_k)=\P^\star(\Xi_0(t_k))$ and $F^c(t_k)=\P_c(\Xi_0(t_k))$ for each $k$.
Also, $\widehat p_k^\star$ and $\widehat p_k^g$ are the corresponding empirical CDF values at $t_k$.

Let $\mathcal E_\star$ be the DKW event for $\D_{\mathrm{select}}$:
\[
\mathcal E_\star:=\left\{\sup_{t\in\mathbb R}\big|\widehat F_m^\star(t)-F^\star(t)\big|\le r_{m,\delta/2}\right\},
\]
where $\widehat F_m^\star$ is the empirical CDF of $\{s(\tilde\xi_j)\}_{j=1}^m$. By the DKW inequality,
$\Pr(\mathcal E_\star)\ge 1-\delta/2$. On $\mathcal E_\star$, for every $k$,
\begin{equation}
\label{eq:DKW-upper-star}
\P^\star(\Xi_0(t_k))=F^\star(t_k)\le \widehat F_m^\star(t_k)+r_{m,\delta/2}=\widehat p_k^\star+r_{m,\delta/2}.
\end{equation}

Similarly, let $\mathcal E_c$ be the DKW event for the centre samples:
\[
\mathcal E_c:=\left\{\sup_{t\in\mathbb R}\big|\widehat F_{N_c}^c(t)-F^c(t)\big|\le r_{c,\delta/2}\right\},
\]
where $\widehat F_{N_c}^c$ is the empirical CDF of $\{s(\xi_c^{(i)})\}_{i=1}^{N_c}$. Again by DKW (conditionally on $\P_c$),
$\Pr(\mathcal E_c)\ge 1-\delta/2$. On $\mathcal E_c$, for every $k$,
\begin{equation}
\label{eq:DKW-bounds-centre}
\P_c(\Xi_0(t_k))=F^c(t_k)\ge \widehat F_{N_c}^c(t_k)-r_{c,\delta/2}=\widehat p_k^g-r_{c,\delta/2},
\qquad
\P_c(\Xi_0)=F^c(t_K)\le \widehat p_K^g+r_{c,\delta/2}.
\end{equation}

By a union bound, $\Pr(\mathcal E_\star\cap \mathcal E_c)\ge 1-\delta$.

Fix any $k$ such that $\widehat p_k^g>r_{c,\delta/2}$, so that $\widehat p_k^g-r_{c,\delta/2}>0$.
On $\mathcal E_\star\cap \mathcal E_c$, substitute \eqref{eq:DKW-upper-star} and \eqref{eq:DKW-bounds-centre} into
\eqref{eq:det-epsc-un-gamma} to obtain
\[
\varepsilon_c
\ \ge\
1-\frac{(\widehat p_k^\star+r_{m,\delta/2})(\widehat p_K^g+r_{c,\delta/2})}{(1-\gamma)(\widehat p_k^g-r_{c,\delta/2})}.
\]
Taking $[\cdot]_+$ preserves the inequality, and taking the maximum over all $k$ with $\widehat p_k^g>r_{c,\delta/2}$ yields
\[
\varepsilon_c \ \ge\ \underline\varepsilon^{\mathrm{score}}_{c,\delta}
\]
on $\mathcal E_\star\cap \mathcal E_c$. Since $\Pr(\mathcal E_\star\cap \mathcal E_c)\ge 1-\delta$, the claim follows.
\end{proof}

\begin{remark}
The bound \eqref{eq:epsc-score-lb} is conservative by construction. Dropping the radii $r_{m,\delta/2},r_{c,\delta/2}$ yields a simpler point diagnostic,
but without a high-probability guarantee.
\end{remark}

\subsection{Characterising \(\varepsilon_{\Xi_0}\) when Both Measures are Observed}
Let \(\P\) denote the chosen centre on \(\Xi_0\) (typically \(\P_{c,\Xi_0}\)) and let \(Q\) denote the target law on \(\Xi_0\) (e.g.\ \(\P^\star_{\Xi_0}\) when train and test match, or \(\tilde \P_{\Xi_0}\) when a post-deployment batch is available). We focus on
\[
\varepsilon_{\Xi_0} := \LV(Q,\P).
\]
Lemma~\ref{lem:RN-LV} shows that \(\varepsilon_{\Xi_0}\) reduces to an essential infimum of a Radon--Nikodym derivative.

\begin{lemma}[Radon--Nikodym characterisation of LV]
\label{lem:RN-LV}
Let \((\Xi,\mathcal F)\) be a measurable space, and let \(\P\) and \(Q\) be probability measures on \((\Xi,\mathcal F)\). Let $Q = Q^{a}+Q^{s}$  be the Lebesgue decomposition of $Q$ with respect to $\P$, i.e. $Q^{a}\ll \P$ and $Q^{s}\perp \P$. Let \(f:=\frac{\dd Q^{a}}{\dd \P}\) be the Radon--Nikodym derivative. Then
\begin{equation}\label{eq:main-rn}
\LV(Q,\P) = 1-\essinf_{\P} f.
\end{equation}
In particular, if \(\P\) and \(Q\) admit densities \(p\) and \(q\) with respect to Lebesgue measure \(\lambda\) and \(Q\ll \P\), then
\begin{equation}\label{eq:main-density}
\LV(Q,\P) = 1-\essinf_{\P}\left(\frac{q}{p}\right) = 1-\essinf_{\{\xi:\,p(\xi)>0\}}\frac{q(\xi)}{p(\xi)}.
\end{equation}
\end{lemma}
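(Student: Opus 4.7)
My plan is to prove both inequalities in \eqref{eq:main-rn} directly from the definition of $\LV$, using the Lebesgue decomposition $Q=Q^{a}+Q^{s}$ to separate the absolutely continuous and singular contributions. Write $m:=\essinf_{\P} f$ for brevity, where $f=\dd Q^{a}/\dd\P$, and recall that by definition $f\ge m$ holds $\P$-almost everywhere while $\P\{f<c\}>0$ for every $c>m$. Because $Q^{s}\perp \P$, fix a measurable set $N$ with $\P(N)=0$ and $Q^{s}(N^{c})=0$; $N$ will absorb the singular mass in the tightness argument.

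For the upper bound $\LV(Q,\P)\le 1-m$, I would fix any $A\in\mathcal F$ with $\P(A)>0$ and write
\[
Q(A)=Q^{a}(A)+Q^{s}(A)\ge Q^{a}(A)=\int_{A} f\,\dd\P \ge m\,\P(A),
\]
where the first inequality uses $Q^{s}(A)\ge 0$ and the second uses $f\ge m$ $\P$-a.e. Dividing by $\P(A)$ gives $(\P(A)-Q(A))/\P(A)\le 1-m$, and taking the supremum over admissible $A$ yields $\LV(Q,\P)\le 1-m$.

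For the matching lower bound, for every $c>m$ the set $A_{c}:=\{f<c\}$ satisfies $\P(A_{c})>0$ by the definition of essential infimum. Consider the modified test set $B_{c}:=A_{c}\setminus N$. Since $\P(N)=0$, we have $\P(B_{c})=\P(A_{c})>0$, and since $Q^{s}$ is concentrated on $N$, we have $Q^{s}(B_{c})=0$, hence $Q(B_{c})=Q^{a}(B_{c})=\int_{B_{c}} f\,\dd\P\le c\,\P(B_{c})$. Therefore
\[
\LV(Q,\P)\ge \frac{\P(B_{c})-Q(B_{c})}{\P(B_{c})}\ge 1-c,
\]
and letting $c\downarrow m$ produces $\LV(Q,\P)\ge 1-m$. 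Combining the two bounds gives \eqref{eq:main-rn}. For \eqref{eq:main-density}, I note that $Q\ll\P$ forces $Q^{s}=0$ and $f=q/p$ on $\{p>0\}$, while $\P$-a.e.\ statements are by definition insensitive to the $\P$-null set $\{p=0\}$, so $\essinf_{\P}(q/p)$ coincides with the essential infimum restricted to $\{p>0\}$.

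The main subtle point is the tightness step: one might naively try $A_{c}$ itself, but then $Q^{s}(A_{c})$ could be arbitrarily large and spoil the lower bound. The trick is to use the singularity of $Q^{s}$ with respect to $\P$ to carve out $N$ without affecting the $\P$-measure of the test set; this is the only place where the Lebesgue decomposition genuinely enters. Everything else reduces to standard manipulations of Radon--Nikodym integrals and the definition of essential infimum.
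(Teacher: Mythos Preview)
Your proof is correct and uses essentially the same ideas as the paper's: both rely on $Q^{s}\ge 0$ for the upper bound and on the test sets $\{f<c\}$ intersected with the complement of the $Q^{s}$-support for the lower bound. The only difference is organisational: the paper first establishes the intermediate identity $\inf_{A:\P(A)>0} Q(A)/\P(A)=\inf_{A:\P(A)>0} Q^{a}(A)/\P(A)$ and then separately relates the latter to $\essinf_{\P} f$, whereas you merge these two steps and prove both inequalities for $\LV$ directly, which is slightly more streamlined.
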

\begin{proof}
Recall the definition:
\begin{equation}
    \LV(Q,\P)= \sup_{A\in\mathcal{F} : \P(A)>0} \frac{\P(A)-Q(A)}{\P(A)} = 1- \inf_{A\in\mathcal{F} : \P(A)>0} \frac{Q(A)}{\P(A)}.
\end{equation}
Let $Q = Q^{a}+Q^{s}$ be the Lebesgue decomposition of $Q$ with respect to $\P$. Then we have
\begin{equation}
\label{eq:lebesgue-1}
    \frac{Q(A)}{\P(A)} =  \frac{Q^{a}(A)+Q^{s}(A)}{\P(A)} \ge \frac{Q^{a}(A)}{\P(A)}, \quad \forall A\in\mathcal{F}: \P(A)>0 \implies \inf_{A\in\mathcal{F} : \P(A)>0} \frac{Q(A)}{\P(A)} \ge \inf_{A\in\mathcal{F} : \P(A)>0} \frac{Q^{a}(A)}{\P(A)}
\end{equation}
Let $S\in\mathcal{F}$ be a $\P$-null set supporting $Q^{s}$, i.e., $Q^{s}(S^c)=0$ and $\P(S)=0$, which exists by definition of singularity. Then for any $A\in\mathcal{F}$, we have $\P(A) = \P(A\cap S^c)$. Since $Q^{a}\ll \P$, we have $Q^{a}(S) = 0$ and  $Q^{a}(A) = Q^{a}(A\cap S^c)$, and $Q^{s}(A\cap S^c) = 0$. This means
\begin{equation*}
    \frac{Q^{a}(A)}{\P(A)} = \frac{Q^{a}(A\cap S^c)}{\P(A\cap S^c)}  = \frac{Q(A\cap S^c)}{\P(A\cap S^c)}\quad\forall A\in\mathcal{F}: \P(A)>0,
\end{equation*}
so
\begin{equation}
\label{eq:lebesgue-2}
    \inf_{A\in\mathcal{F} : \P(A)>0} \frac{Q^{a}(A)}{\P(A)} = \inf_{A\in\mathcal{F} : \P(A)>0} \frac{Q(A\cap S^c)}{\P(A\cap S^c)} = \inf_{B\subseteq S^c, B\in\mathcal{F}:\P(B)> 0}\frac{Q(B)}{\P(B)} \ge  \inf_{A\in\mathcal{F} : \P(A)>0} \frac{Q(A)}{\P(A)}
\end{equation}
Combining \eqref{eq:lebesgue-1} and \eqref{eq:lebesgue-2} shows 
\begin{equation}
\label{eq:lebesgue-id}
    \inf_{A\in\mathcal{F} : \P(A)>0} \frac{Q(A)}{\P(A)} = \inf_{A\in\mathcal{F} : \P(A)>0} \frac{Q^{a}(A)}{\P(A)}.
\end{equation}
Since \(Q^{a}\ll \P\), let \(f=\frac{\dd Q^{a}}{\dd \P}\) be the Radon--Nikodym derivative, and we show that
\begin{equation}\label{eq:inf-equals-essinf}
\inf_{A\in\mathcal F:\P(A)>0} \frac{Q^{a}(A)}{\P(A)}=\essinf_{\P} f.
\end{equation}

Write \(c:=\essinf_{\P} f\in[-\infty,\infty)\). By definition of essential infimum
\begin{equation}\label{eq:ae-lower}
\P\big(\{\xi\in\Xi:\ f(\xi)<c\}\big)=0.
\end{equation}
For any \(A\in\mathcal F\) with \(\P(A)>0\), \eqref{eq:ae-lower} gives $\frac{Q^{a}(A)}{\P(A)} = \frac{1}{\P(A)}\int_A f\dd \P \ge \frac{1}{\P(A)}\int_A c \ \dd \P = c$.
Taking the infimum  gives
\begin{equation}\label{eq:inf-ge-c}
\inf_{A\in\mathcal{F}: \P(A)>0} \frac{Q^{a}(A)}{\P(A)}\ \ge\ c.
\end{equation}

For every \(\varepsilon>0\) the set $A_\varepsilon := \{\xi\in\Xi: f(\xi)<c+\varepsilon\}$ is measurable and satisfies \(\P(A_\varepsilon)>0\). Since \(f\le c+\varepsilon\) \(\P\)-a.s. on \(A_\varepsilon\), we have
\[
\frac{Q^{a}(A_\varepsilon)}{\P(A_\varepsilon)}= \frac{1}{\P(A_\varepsilon)}\int_{A_\varepsilon} f\dd \P \le c+\varepsilon.
\]
Therefore
\begin{equation}\label{eq:inf-le-c}
\inf_{A:\P(A)>0} \frac{Q^{a}(A)}{\P(A)} \le c+\varepsilon
\quad\text{for all }\varepsilon>0,
\end{equation}
and letting \(\varepsilon\downarrow0\) yields the reverse inequality to \eqref{eq:inf-ge-c}, which proves \eqref{eq:inf-equals-essinf}.

For the density specialisation, suppose \(\P\) and \(Q\) admit densities \(p\) and \(q\) with respect to Lebesgue measure \(\lambda\), and \(Q\ll \P\). Since \(\P(\{p=0\})=\int_{\{p=0\}}p\dd\lambda=0\) and \(Q\ll \P\), we have \(Q(\{p=0\})=0\), i.e. \(\int_{\{p=0\}}q\dd\lambda=0\).
Define
\[
f(\xi)\ :=\
\begin{cases}
\dfrac{q(\xi)}{p(\xi)}, & \text{if }p(\xi)>0,\\[0.7ex]
0, & \text{if }p(\xi)=0,
\end{cases}
\]
which is \(\P\)-measurable. Then for any \(A\in\mathcal F\),
\[
\int_A f \dd \P = \int_{A\cap\{p>0\}} \frac{q}{p}\cdot p\ \dd \lambda + \underbrace{\int_{A\cap\{p=0\}} 0\cdot p \ \dd\lambda}_{=0} = \int_A q \dd\lambda = Q(A),
\]
so \(f=\frac{dQ}{d\P}\) \(\P\)-a.s. Since \(\P(\{p=0\})=0\), the essential infimum with respect to \(\P\) ignores the values of \(f\) on \(\{p=0\}\), and \eqref{eq:main-density} follows from \eqref{eq:main-rn}.
\end{proof}

\section{Discussion on other Contamination-related Ambiguity Sets}
\label{app:tv-mmd-discussion}

This appendix discusses two practical issues that arise when $\Xi$ is unbounded and the loss family $\{f_x\}_{x\in\mathcal X}$ is unbounded above. Throughout, $\P^\star$ denotes the data-generating law of $\xi$. First, discretising $\Xi$ can make the max-based objectives diverge as the number of scenarios grows, including more extreme values. Second, while bounding $\Xi$ via a confidence set $\Omega$ can stabilise the objective, without a certificate of the form $\P^\star(\Omega)\ge 1-\gamma$, the truncation is uncontrolled, so the robustness target depends on an uncalibrated modelling choice without any OOS coverage guarantee.

\subsection{TV Ambiguity Sets}
\label{app:subsec:tv}
For a TV ball around a centre $\P$, the worst-case risk satisfies
\[
\sup_{Q:\TV(Q,\P)\le \varepsilon}\E_Q[f_x]
=(1-\varepsilon)\mathrm{CVaR}^{\P}_{1-\varepsilon}(f_x)+\varepsilon\sup_{\xi\in\Xi} f_x(\xi),
\]
so if $\sup_{\Xi} f_x=\infty$, then the robust risk is infinite for every $\varepsilon>0$.
To ensure finiteness, \citet{jiang2018risk} work on a bounded sample space $\Omega\subset\Xi$ and assume $f_x$ is bounded on $\mathcal X\times\Omega$ (their Assumption (A1)). In their general-sample-space discussion \citep[Section 3.3]{jiang2018risk}, they approximate $\sup_{\xi\in\Omega} f_x(\xi)$ by a scenario max over sampled support points $\{\hat\xi^1,\dots,\hat\xi^{S_\Omega}\}\subset\Omega$. Their resulting SAA objective (omitting deterministic first-stage terms) takes the form
\begin{equation}
\label{eq:tv-jiang-guan-z}
\widehat{\Risk}_{\B_{\TV}^\varepsilon(\P)}(f_x)
:=\min_{x\in\mathcal X, \eta\in\R}
(1-\varepsilon)\eta
+\frac1S\sum_{s=1}^S\bigl(f_x(\xi^s)-\eta\bigr)^+
+\varepsilon\max_{1\le s\le S_\Omega} f_x(\hat\xi^s),
\qquad \varepsilon\in(0,1],
\end{equation}
where $\{\xi^s\}_{s=1}^S$ are scenarios used for the CVaR term and $\{\hat\xi^s\}_{s=1}^{S_\Omega}$ are support scenarios used for the max term.

If $f_x\ge 0$ for all $x\in\mathcal X$, then the first two terms in \eqref{eq:tv-jiang-guan-z} are nonnegative, hence
\begin{equation}
\label{eq:tv-z-lb}
\widehat{\Risk}_{\B_{\TV}^\varepsilon(\P)}(f_x) \ge \varepsilon\inf_{x\in\mathcal X}\max_{1\le s\le S_\Omega} f_x(\hat\xi^s).
\end{equation}
When the $\{\hat\xi^s\}_{s=1}^{S_\Omega}$'s are i.i.d.\ from $\P^\star$ with unbounded support, the lower bound in \eqref{eq:tv-z-lb} can diverge as $S_\Omega\to\infty$ for losses that grow at least linearly in the tail, because $\max_{s\le S_\Omega} f_x(\hat\xi^s)$ is driven by increasingly extreme samples while a single $x$ must hedge all scenarios simultaneously. Examples include:

\begin{enumerate}
\item (Newsvendor loss in Section~\ref{sec:exp:newsvendor}.) With
$f_x(\xi)=\sum_{j=1}^{d}\left(h[x_j-\xi_j]_{+}+b[\xi_j-x_j]_{+}\right)$ and $h,b>0$, fix any coordinate $i$ and set
$\alpha_i:=\min_{s\le S_\Omega}\hat\xi_i^s$ and $\beta_i:=\max_{s\le S_\Omega}\hat\xi_i^s$. A one-dimensional reduction gives
\[
\inf_{x\in\mathcal X}\max_{s\le S_\Omega} f_x(\hat\xi^s)\ge\inf_{x_i\in\R}\max_{s\le S_\Omega}\bigl(h[x_i-\hat\xi_i^s]_+ + b[\hat\xi_i^s-x_i]_+\bigr)
=\frac{hb}{h+b}(\beta_i-\alpha_i).
\]
If $\P^\star(\xi_i>t)>0$ for all $t$, then $\beta_i\to\infty$ almost surely as $S_\Omega\to\infty$, so the right-hand side diverges.

\item (Regression MAE in Section~\ref{sec:ca-housing}.) In the California housing regression we use
$f_{w,b}(\xi)=|Y-w^\top X-b|$ with decision $x=(w,b)$ and outcome $\xi=(X,Y)$.
Assume $\|w\|_2\le B_w$ and $|b|\le B_b$ for all $(w,b)\in\mathcal X$. If there exists $R<\infty$ such that
\begin{equation}
\label{eq:divergent_Y}
\P^\star\big(\|X\|_2\le R,\ |Y|>t\big)>0\qquad\forall t>0,
\end{equation}
then for any sample $\hat\xi^s=(\hat X^s,\hat Y^s)$ with $\|\hat X^s\|_2\le R$,
\[
f_{w,b}(\hat\xi^s)=|\hat Y^s-w^\top\hat X^s-b|
\ge |\hat Y^s|-\|w\|_2\|\hat X^s\|_2-|b|
\ge |\hat Y^s|-B_wR-B_b.
\]
Hence
\[
\inf_{(w,b)\in\mathcal X}\max_{s\le S_\Omega} f_{w,b}(\hat\xi^s)
\ \ge\
\max_{s\le S_\Omega:\ \|\hat X^s\|_2\le R}\bigl(|\hat Y^s|-B_wR-B_b\bigr),
\]
which diverges almost surely as $S_\Omega\to\infty$ under \eqref{eq:divergent_Y}.
\end{enumerate}

To justify a bounded $\Omega$ in practice, \citet[Section 2.1]{jiang2018risk} discuss estimating confidence sets (which they refer to as confidence regions) of the sample space. In particular, they propose using a ``componentwise three-sigma'' box or a Markov-type ellipsoid based on first and second moments. While the form of these confidence sets resembles our bulk-set geometries, without a high-probability certificate of the form $\P^\star(\Omega)\ge 1-\gamma$, the tail mass outside $\Omega$ is uncontrolled, so a term like $\sup_{\xi\in\Omega} f_x(\xi)$ lacks a certified OOS interpretation.

For the three-sigma box, consider $d\ge 10$ and $\xi\in\R^d$ distributed as
\[
\xi=\pm e_i \quad\text{with probability }1/(2d)\ \text{for each }i\in\{1,\dots,d\}.
\]
Then $\E[\xi]=0$ and $\mathrm{Var}(\xi_k)=1/d$. The three-sigma bound gives $|\xi_k|\le 3/\sqrt d<1$, hence $\Omega=[-3/\sqrt d,3/\sqrt d]^d$, but $\|\xi\|_\infty=1$ almost surely and thus $\P^\star(\Omega)=0$.

In contrast, our DKW bulk calibration can recover a meaningful truncation. For the box geometry, we can take $s(\xi)=\|\xi\|_\infty$. Since $s(\xi)=1$ almost surely under $\P^\star$, the calibrated threshold is $\hat t_{\mathrm{DKW}}=1$ and the resulting bulk set $\Xi_0=\{\xi:\|\xi\|_\infty\le 1\}$ satisfies $\P^\star(\Xi_0)=1$ and is certified with probability at least $1-\delta$ by Lemma~\ref{lem:bulk-coverage-dkw}.

For the Markov-type ellipsoid, \citet[Section~2.1]{jiang2018risk} define $\Omega_\varphi:=\{\xi:\ (\xi-\bar\mu)^\top\bar\Sigma^{-1}(\xi-\bar\mu)\le \varphi\}$ using sample moments $(\bar\mu,\bar\Sigma)$ and then invoke a second-moment bound to provide a lower bound on $\P^\star(\Omega_\varphi)$ by Markov's inequality. However, if $\P^\star$ is heavy-tailed with infinite second moment, the Markov bound is vacuous, and the coverage guarantee fails. 

Our DKW procedure avoids such moment assumptions: we can still use a Mahalanobis-type score
$s(\xi):=\|\hat\Sigma^{-1/2}(\xi-\hat\mu)\|_2$ (with $(\hat\mu,\hat\Sigma)$ fitted on a score-fitting set), and select $\hat t_{\mathrm{DKW}}$ from an independent selection set so that, with probability at least $1-\delta$,
\[
\P^\star\{s(\xi)\le \hat t_{\mathrm{DKW}}\}\ \ge\ 1-\gamma,
\]
yielding an ellipsoidal bulk set $\Xi_0=\{\xi:\ (\xi-\hat\mu)^\top\hat\Sigma^{-1}(\xi-\hat\mu)\le \hat t_{\mathrm{DKW}}^2\}$ without requiring finite second moments.

Additionally, the polyhedral-support assumption used for tractability \citep[Assumption (A2)]{jiang2018risk} can substantially inflate the sup term by ignoring dependence. For instance, if $\xi=(Z,-Z)$ with $|Z|\le 1$ almost surely, then the true support is the line segment $\{(z,-z):|z|\le 1\}$, but the box relaxation $[-1,1]^2$ allows both coordinates to move adversely and enlarges the sup term, for example under the linear loss function:
\[
\sup_{\xi\in[-1,1]^2}u^\top\xi=|u_1|+|u_2|,
\quad\text{whereas}\quad
\sup_{|z|\le 1}u^\top(z,-z)=|u_1-u_2|.
\]
Whenever a TV-type objective includes $\sup_{\Omega} f_x$, such relaxations can lead to overly conservative decisions.

This is why we choose an ellipsoidal bulk set by default when dimensions may be dependent (e.g., per-item demands in the newsvendor experiment, covariates in the California housing regression experiment, hashed features in the CivilComments experiment), and construct separate-dimension bulk sets when we intend to encode a targeted conditional shift (e.g., treating the outcome label separately in California housing regression). This keeps the ambiguity set aligned with the intended shift type and avoids over-conservatism arising from allowing adverse coordinate-wise movements to occur simultaneously.

\subsection{Kernel-based Ambiguity Sets}
Let $k$ be a reproducing kernel on $\Xi$ with a reproducing kernel Hilbert space (RKHS) $\mathcal H$, and assume $\kappa^2:=\sup_{\xi\in\Xi}k(\xi,\xi)<\infty$. Define the maximum mean discrepancy (MMD) as
\[
\mathrm{MMD}_k(\P,Q) = \|\mu_{\P}-\mu_Q\|_{\mathcal{H}},
\qquad
\mu_{\P}:=\int k(\cdot,\xi)\,\dd \P(\xi),
\]
following \citet{gretton2012mmd}. For any $\P,R\in\mathcal P(\Xi)$ and $\varepsilon\in[0,1]$, linearity of kernel mean embeddings gives
\[
\mathrm{MMD}_k\!\left(\P,(1-\varepsilon)\P+\varepsilon R\right)
=\varepsilon\|\mu_R-\mu_\P\|_{\mathcal H}
\le \varepsilon(\|\mu_R\|_{\mathcal H}+\|\mu_\P\|_{\mathcal H})
\le 2\kappa\,\varepsilon.
\]
Thus every Huber contamination $(1-\varepsilon)\P+\varepsilon R$ lies in an MMD ball of radius $2\kappa\varepsilon$, and in particular
\[
\B^\varepsilon_{\LV}(\P) \subseteq \B_{\mathrm{MMD}}^{2\kappa\varepsilon}(\P).
\]
When forward Huber contamination yields an infinite worst-case risk, the corresponding MMD worst-case risk is also infinite for any radius that contains such contaminations.

In MMD-DRO, we usually analyse an upper bound on the worst-case risk, which involves an RKHS-norm penalty. See, for example, \citet[Corollary~3.2]{staib2019mmd}:
\[
\sup_{Q:\mathrm{MMD}_k(Q,\P)\le\rho}\E_Q[f_x]\ \le\ \E_\P[f_x]+\rho\|f_x\|_{\mathcal H}.
\]
When $\kappa^2=\sup_{\xi}k(\xi,\xi)<\infty$, every $f\in\mathcal H$ is bounded:
\[
|f(\xi)|=|\langle f,k(\xi,\cdot)\rangle_{\mathcal H}|
\le \|f\|_{\mathcal H}\|k(\xi,\cdot)\|_{\mathcal H}
\le \kappa\|f\|_{\mathcal H}.
\]
Hence, if $f_x$ is unbounded above on $\Xi$, then $f_x\notin\mathcal H$ and the RKHS-penalised bound is not applicable (formally $\|f_x\|_{\mathcal H}=+\infty$).

Finally, Kernel DRO implementations may approximate worst-case distributions by restricting to discrete distributions supported on sampled candidates $\{\hat\xi^1,\dots,\hat\xi^{S_\Omega}\}$; see, e.g., the support-sampling procedure in \citet[Appendix~C.4--C.5]{zhu2021mmd}. Outside the compact-$\Xi$ and bounded-loss setting typically assumed for kernel duality arguments, such finite-support approximations can be driven by increasingly extreme samples as $S_\Omega$ grows, similar to the scenario-max instability in \eqref{eq:tv-z-lb}.

% =========================
% Appendix: Additional setup details for the Student-t newsvendor experiment
% =========================

\section{Synthetic Heavy-tailed Newsvendor (Section~\ref{sec:exp:newsvendor}): Additional Details and Results}
\label{app:newsvendor-details}

This appendix provides additional details (Sections~\ref{app:newsvendor-dgp}--\ref{sec:compute-details}) and results (Sections~\ref{app:orwdro-parameterisation}--\ref{sec:gamma-sensitivity}) for the synthetic heavy-tailed newsvendor experiment in Section~\ref{sec:exp:newsvendor}. 

Section~\ref{app:newsvendor-dgp} describes the demand distribution, bulk-set calibration split, and test distribution; Section~\ref{app:newsvendor-bayesian-saa} describes the Bayesian Student-\(t\) model and posterior predictive sampling procedure; in Section~\ref{app:newsvendor-tolerance-sweep}, we specify the tolerance grids; Section~\ref{app:newsvendor-optimisation-objectives} lists the optimisation objectives used by each method; Section~\ref{sec:compute-details} lists the software, solver, hardware, and resource settings.

Section~\ref{app:orwdro-parameterisation} gives the OR-WDRO parameterisation and the additional Wasserstein-radius sweep; Section~\ref{app:newsvendor-sample-efficiency} gives the SAA-budget sensitivity study for the Bayesian methods; finally, Section~\ref{sec:gamma-sensitivity} gives the sensitivity study for the bulk-set tail-budget parameter \(\gamma\).

\subsection{Data-generating Process and Spike Contamination}
\label{app:newsvendor-dgp}
We consider $d=5$ products.
Clean demand vectors are sampled i.i.d.\ as
\[
\xi \sim t_{\nu}(\mu, \Sigma),
\qquad
\nu=3,
\qquad
\mu = 30\,\mathbf{1}_d,
\]
with correlated scale matrix $\Sigma$ constructed as $\Sigma = D R D$, where $D=\mathrm{diag}(s_1,\dots,s_d)$ and
$s_j = 10\,(1+0.1(j-1))$.
We use a Toeplitz correlation matrix $R$ with $(R)_{ij}=\rho^{|i-j|}$ and $\rho=0.6$.
We generate $n_{\mathrm{train}}=2000$ training samples and $n_{\mathrm{test}}=500$ test samples, and repeat the experiment over $100$ independent replications. The bulk-set calibration process randomly splits the training data into $\abs{\D_{\mathrm{fit}}} = 1000$ samples for score fitting and $\abs{\D_{\mathrm{select}}} = 1000$ samples for score selection.

To model promotion-like shocks, we apply spike contamination to the test data:
\[
\tilde{\xi}
=
\begin{cases}
\xi^{\mathrm{spike}}, & \text{with probability } \varepsilon_{\mathrm{cont}},\\
\xi, & \text{with probability } 1-\varepsilon_{\mathrm{cont}},
\end{cases}
\qquad
\varepsilon_{\mathrm{cont}} \in \{0, 0.1, 0.2\},
\]
where $\xi \sim t_{\nu}(\mu,\Sigma)$ is a clean draw and $\xi^{\mathrm{spike}}$ is drawn from a high-demand Gaussian component
\[
\xi^{\mathrm{spike}} \sim \mathcal{N}(\mu_{\mathrm{spike}}, \Sigma_{\mathrm{spike}}).
\]
We set $\mu_{\mathrm{spike}} = \mu + 6\sigma$ and $\sigma_j=\sqrt{\Sigma_{jj}}$, and $\Sigma_{\mathrm{spike}} = 0.05\Sigma$. This produces a small proportion of test points exhibiting an across-the-board surge in demand, mimicking a short-lived promotion regime.

\subsection{Bayesian Model and SAA}
\label{app:newsvendor-bayesian-saa}
We place a normal-inverse-Wishart (NIW) prior on the unknown mean and scale matrix in the multivariate Student-$t$ likelihood (with fixed $\nu=3$). Since this Student-$t$ likelihood is not conjugate to the NIW prior, we approximate the posterior and posterior predictive by Gibbs sampling. We use the standard scale-mixture representation with latent Gamma weights. After burn-in, each retained Gibbs iterate $(\mu,\Sigma)$ is followed by a draw $\xi\sim t_{\nu}(\mu,\Sigma)$, which provides posterior predictives for SAA.\footnote{We use $200$ burn-in and no thinning. For numerical stability, we add a ridge term $10^{-6}I$ to the scale matrix.} All Bayesian methods (LV, KL-BDRO, and $\text{KL-BAS}_{\text{PP}}$) share this posterior predictive; KL-Empirical and OR-WDRO use the empirical distribution. We use the following sampling budgets:
\begin{itemize}
  \item $\text{KL-BAS}_{\text{PP}}$: $M_{\mathrm{pred}}=2500$ predictive samples.
  \item LV: $M_{\mathrm{pred}}=2500$ predictive samples.
  \item KL-BDRO: $M_{\mathrm{post}}=50$ posterior draws and $M_{\mathrm{pred}}=50$ predictive samples per draw.
\end{itemize}

\subsection{Tolerance Sweep}
\label{app:newsvendor-tolerance-sweep}
We sweep tolerances on two independent grids. For LV we use 24 $\varepsilon_{\LV}$ values in $(0,1]$, and OR-WDRO uses the same set but truncated at $0.5$, which gives 21 values. For KL-based methods we sweep KL radii $\varepsilon_{\KL}$ separately in $(0,25]$, with 24 values in total. In the MSD plots, we display the inverse mapping on a secondary axis to share a single x-axis scale:
\[
\varepsilon_{\KL} = \tfrac12\bigl(t\varepsilon_{\mathrm{LV}}\bigr)^2,
\qquad t:=\sqrt{50}.
\]
The quadratic scaling matches the Gaussian penalty-matching heuristic, but here $t$ is chosen purely for visual alignment (so that $\varepsilon_{\mathrm{LV}}=1$ corresponds to $\varepsilon_{\KL}=25$).

\subsection{Optimisation Objectives}
\label{app:newsvendor-optimisation-objectives}
\paragraph{KL-Empirical.}
For a KL radius $\varepsilon_{\KL}\ge 0$, KL-Empirical solves the forward-KL divergence-ball DRO problem centred at the empirical distribution \citep{hu2013kullback}:
\begin{equation}
\label{eq:nv-klem-objective}
\min_{x\in\R^d_+}\ \inf_{\lambda>0}\left\{
\lambda\varepsilon_{\KL}+\lambda\log\left(\frac{1}{n}\sum_{i=1}^n \exp\!\left(\frac{f_x(\xi_i)}{\lambda}\right)\right)
\right\}.
\end{equation}
%which follows from the Donsker--Varadhan dual representation of the forward-KL ball \citep{donsker1983asymptotic}.

\paragraph{$\text{KL-BAS}_{\text{PP}}$.}
$\text{KL-BAS}_{\text{PP}}$ uses the same KL objective as \eqref{eq:nv-klem-objective}, but centres the KL ball at the Bayesian posterior predictive $\P_c$ and approximates the log-MGF using posterior predictive scenarios $\tilde\xi_1,\dots,\tilde\xi_{M_{\mathrm{pred}}}\sim \P_c$. So its objective is \eqref{eq:nv-klem-objective} with $n$ replaced by $M_{\mathrm{pred}}$ and $\{\xi_i\}_{i=1}^n$ replaced by $\{\tilde \xi_j\}_{j=1}^{M_{\mathrm{pred}}}$.

\paragraph{KL-BDRO.}
KL-BDRO averages the KL-robust objective across posterior draws. Concretely, for posterior samples $\theta^{(1)},\dots,\theta^{(M_{\mathrm{post}})}$ (Student-$t$ parameters) and predictive scenarios
$\tilde\xi_{i,1},\dots,\tilde\xi_{i,M_{\mathrm{pred}}}\sim \P(\cdot\mid \theta^{(i)})$, KL-BDRO solves
\begin{equation}
\label{eq:nv-bdro-objective}
\min_{x\in\R^d_+}\ \frac{1}{M_{\mathrm{post}}}\sum_{i=1}^{M_{\mathrm{post}}}
\inf_{\lambda_i>0}\left\{
\lambda_i\varepsilon_{\KL}+\lambda_i\log\left(\frac{1}{M_{\mathrm{pred}}}\sum_{m=1}^{M_{\mathrm{pred}}} \exp\!\left(\frac{f_x(\tilde\xi_{i,m})}{\lambda_i}\right)\right)
\right\}.
\end{equation}

\begin{remark}[Saturation of KL-BDRO at large KL radii]
\label{rem:kl-bdro-saturation}
In \eqref{eq:nv-bdro-objective}, each inner KL ball is centred at the uniform empirical measure on the $M_{\mathrm{pred}}$ predictive scenarios,
$\hat \P_i:=\frac{1}{M_{\mathrm{pred}}}\sum_{m=1}^{M_{\mathrm{pred}}}\delta_{\tilde\xi_{i,m}}$.
Since $\KL(Q\|\hat \P_i)<\infty$ forces $Q$ to be supported on $\{\tilde\xi_{i,m}\}_{m=1}^{M_{\mathrm{pred}}}$,
once $\varepsilon_{\KL}\ge \log M_{\mathrm{pred}}$ the KL ball contains a Dirac mass on any scenario
(because $\KL(\delta_{\tilde\xi_{i,m}}\|\hat \P_i)=\log M_{\mathrm{pred}}$). This means that for all such radii,
the worst-case expectation inside \eqref{eq:nv-bdro-objective} reduces to
$\max_{m\le M_{\mathrm{pred}}}f_x(\tilde\xi_{i,m})$, and the KL-BDRO objective (and its minimiser) no longer changes
with $\varepsilon_{\KL}$ beyond this threshold (in Figure~\ref{fig:newsvendor-student} $\log{M_{\mathrm{pred}}}=\log{50} \approx 3.91$, after which the OOS performance stays constant for KL-BDRO). This relationship also follows straightforwardly from the Shannon entropy--KL relationship: since $\hat \P_i$ is uniform, $\KL(Q\|\hat \P_i)= \log{M_{\rm{pred}}} - H(Q) \le \log{M_{\rm{pred}}}$, where $H(Q)$ is the Shannon entropy of $Q$ \citep[Theorem 2.7.3]{Cover2006}. The same effect exists for $\text{KL-BAS}_{\text{PP}}$ and KL-Empirical, but they have larger saturation limits ($\log{2500}\approx 7.82$ and $\log{2000}\approx 7.60$, respectively) and demonstrate a wider range of OOS mean--variance trade-offs.
\end{remark}
\paragraph{LV.}
The ellipsoidal bulk set is
\begin{equation}
\label{eq:nv-ellipsoid-bulk}
\Xi_0  :=\left\{\xi\in\R^d: \|L^{-1}(\xi-\hat \mu)\|_2\le \hat{t}_{\mathrm{DKW}}\right\},
\qquad \hat \Sigma := LL^\top,
\end{equation}
where $(\hat \mu,\hat \Sigma)$ are fitted on $\D_{\mathrm{fit}}$ and $\hat{t}_{\mathrm{DKW}}$ chosen with $\D_{\mathrm{select}}$ for $\gamma = \delta = 0.05$. For a tolerance $\varepsilon_{\LV}\in[0,1]$, LV solves the objective
\begin{equation}
\label{eq:nv-lv-population-objective}
\min_{x\in\R^d_+}\ 
(1-\varepsilon_{\LV})\E_{\P_{c,\Xi_0}}\left[f_x(\xi)\right] + \varepsilon_{\LV}\sup_{\xi\in\Xi_0}f_x(\xi),
\end{equation}
where the truncated expectation $\E_{\P_{c,\Xi_0}}\left[f_x(\xi)\right] =\E_{\P_{c}}\left[f_x(\xi) \big| \xi\in\Xi_0\right]$ is approximated by an SAA over samples from $\P_c(\cdot\mid \Xi_0)$ (via rejection sampling).
For fixed $x$, the newsvendor loss is piecewise-linear in $\xi$:
\[
f_x(\xi)
=\sum_{j=1}^d \max\{-h(\xi_j-x_j), b(\xi_j-x_j)\}
=\max_{a\in\{-h,b\}^d}\{a^\top\xi-a^\top x\}.
\]
Therefore, the ellipsoid entry for piecewise-linear losses in Table~\ref{tab:closed-form-sup} applies directly with $(\mu_{\mathrm{fit}},\Sigma_{\mathrm{fit}},t_k)=(\hat \mu,\hat \Sigma,\hat{t}_{\rm DKW})$ and $(a_{x,j},b_{x,j})=(a,-a^\top x)$, yielding
\begin{equation}
\label{eq:nv-worstcase-closed-form}
\sup_{\xi\in\Xi_0} f_x(\xi) = \max_{a\in\{-h,b\}^d} \left\{a^\top(\hat \mu-x) + \hat{t}_{\rm DKW}\|L^\top a\|_2\right\}.
\end{equation}

\paragraph{OR-WDRO.}
We implement the outlier-robust Wasserstein DRO baseline of \citet{nietert2023outlier}. The optimisation target is solved via the convex conic reformulation in their Theorem~5.

\subsection{Computing and Implementation Details}
\label{sec:compute-details}
We implement all optimisation problems in Python~3.11 using \texttt{CVXPY} version 1.7.3 \citep{diamond2016cvxpy,agrawal2018rewriting} and solve the resulting conic programs with the \texttt{MOSEK} solver version 11.0.29 \citep{mosek}. Experiments are scheduled with SLURM and run on CPU nodes equipped with AMD EPYC processors (64 cores / 128 threads) and 384 GB RAM. Unless stated otherwise, each run used a single CPU core (one thread) and 10 GB of memory.

\subsection{OR-WDRO Parameterisation}\label{app:orwdro-parameterisation}
OR-WDRO has two hyperparameters: a Wasserstein radius $\rho$ and an outlier fraction $\varepsilon_{\mathrm{OR}}$. In the main experiments we fix $\rho=5$ and sweep $\varepsilon_{\mathrm{OR}}\in[0,0.5)$. As a sensitivity check, for the $20\%$ OOS contamination experiment, we fix $\varepsilon_{\mathrm{OR}} =0.2$ as the nominal contamination level and sweep $\rho\in[0,50]$; the resulting OOS performance, reported in Figure~\ref{fig:orwdro_appendix_sweeps}, varies only modestly over the sweep.

\begin{figure}[ht]
  \centering
  \begin{minipage}[t]{0.49\linewidth}
    \centering
    \includegraphics[width=\linewidth]{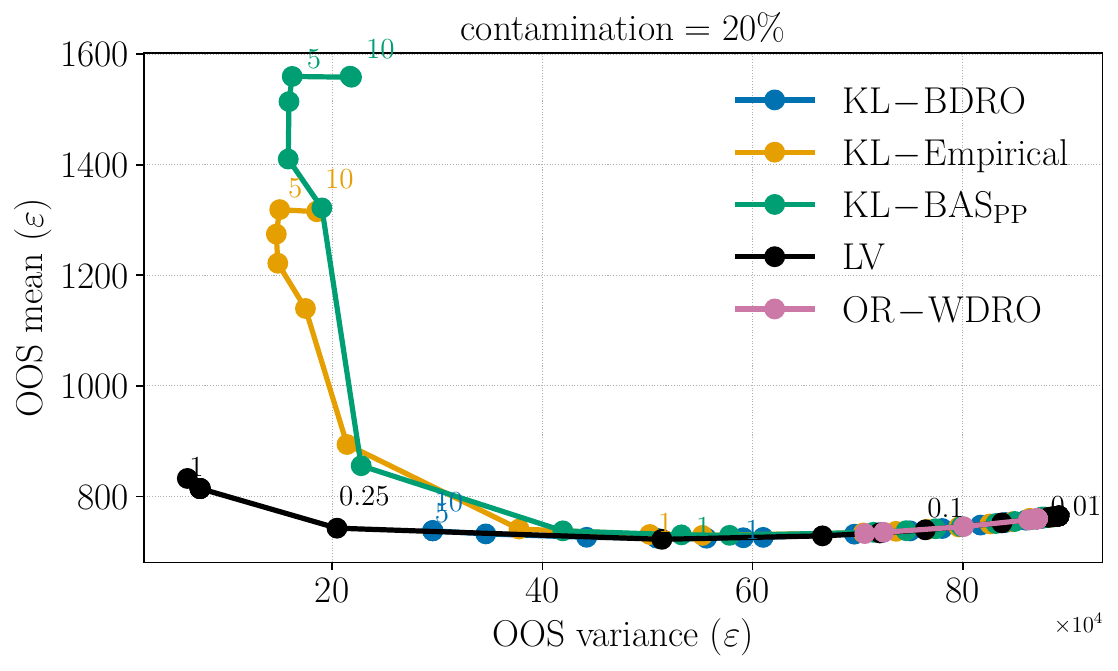}
  \end{minipage}\hfill
  \begin{minipage}[t]{0.49\linewidth}
    \centering
    \includegraphics[width=\linewidth]{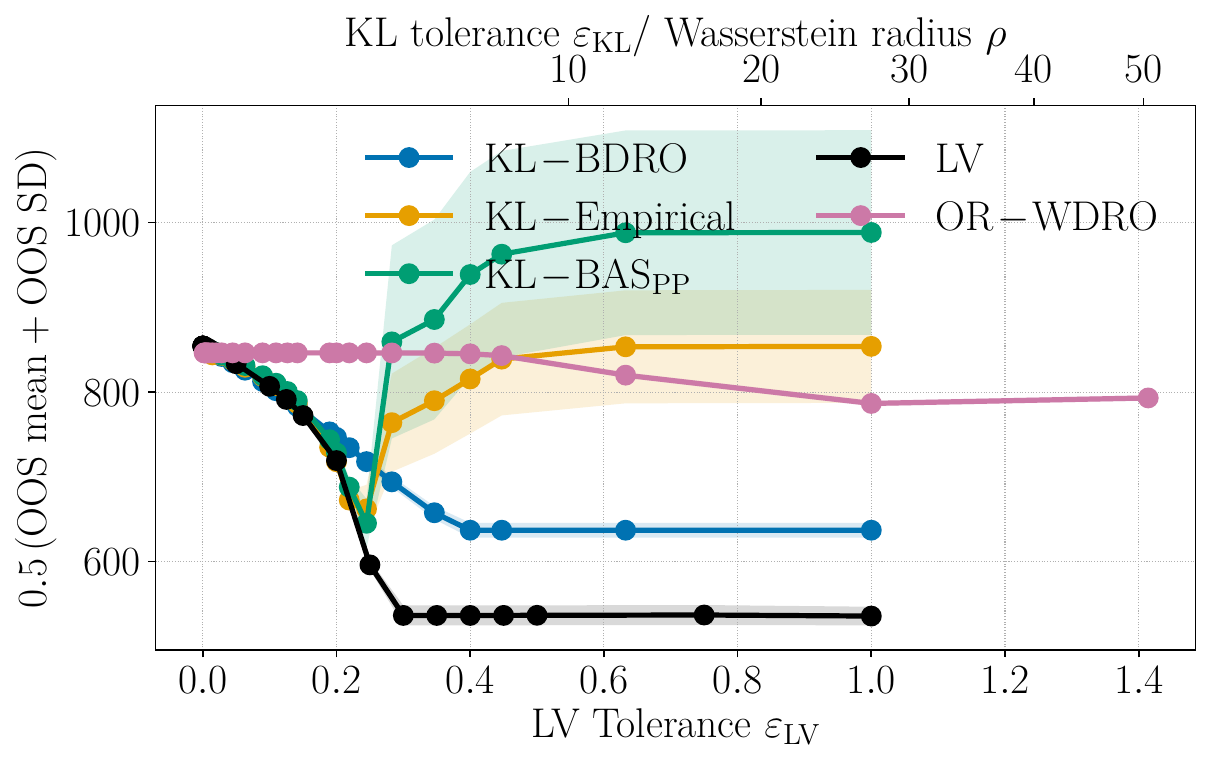}
  \end{minipage}

  \caption{Additional sensitivity plots for OR-WDRO in the synthetic newsvendor experiment with $20\%$ contamination. Left: OOS mean--variance frontier at $20\%$ contamination. Right: $\text{MSD} =\tfrac12(\text{OOS mean}+\text{OOS SD})$ versus the tolerance level (top axis shows the corresponding Wasserstein radius $\rho$ as plotted).}
  \label{fig:orwdro_appendix_sweeps}
\end{figure}

\subsection{Sensitivity to SAA Budgets: Sample Efficiency of Bayesian Methods}
\label{app:newsvendor-sample-efficiency}

In the Student-$t$ newsvendor experiment at contamination level $\varepsilon_{\mathrm{cont}}=0.2$, we study how sensitive the SAA in each Bayesian method (LV, KL-BDRO, and $\text{KL-BAS}_{\text{PP}}$) is to the Monte Carlo budget.

We vary the total model-sample budget
\[
M := M_{\mathrm{post}}\times M_{\mathrm{pred}}
\in \{25, 100, 400, 900, 1600, 2500, 3600, 4900\},
\]
holding fixed all other experimental settings. For LV and $\text{KL-BAS}_{\text{PP}}$ we only need posterior predictive samples: $M_{\mathrm{pred}}=M$. For KL-BDRO we use a balanced nested allocation $M_{\mathrm{post}}= M_{\mathrm{pred}}= \sqrt{M}$, so that the total number of model samples is comparable across methods.

In LV we estimate a truncated posterior predictive expectation by drawing $M_{\mathrm{pred}}$ predictive samples and keeping only those that fall inside the calibrated bulk set. The number of accepted samples is therefore random and is not known before sampling. Since \texttt{CVXPY} requires a fixed problem size, we build the optimisation model with a fixed scenario count and set it to $0.5 M_{\mathrm{pred}}$, which is a conservative lower bound on the expected acceptance rate in our settings. We take the first $0.5M_{\mathrm{pred}}$ accepted samples to approximate the optimisation target. The sample-efficiency results below show that LV already reaches its performance plateau at small $M_{\mathrm{post}}M_{\mathrm{pred}}$, so using this conservative fixed size does not change the qualitative conclusions.

Figure~\ref{fig:newsvendor-sample-efficiency-panels} shows, for each method, (top row) the OOS mean--variance frontiers traced out as the tolerance varies, and (bottom row) the corresponding $\text{MSD}=\tfrac12(\mu+\sigma)$ curves versus tolerance, with one curve per budget $M$.
To quantify stability of the entire MSD curve, Figure~\ref{fig:newsvendor-sample-efficiency-summary} reports the mean absolute deviation of the MSD curve relative to the highest-budget run ($M_{\max}=4900$), averaged over the tolerance grid:
\[
\Delta_{\text{MSD}}(M)
:=
\frac{1}{|\mathcal{E}|}\sum_{\varepsilon\in\mathcal{E}}
\left|
\widehat{\text{MSD}}_{M}(\varepsilon)
-
\widehat{\text{MSD}}_{M_{\max}}(\varepsilon)
\right|,
\]
where $\widehat{\text{MSD}}_{M}(\varepsilon)$ denotes the replication-averaged $\text{MSD}$ for tolerance $\varepsilon$ under budget $M$.

\begin{figure*}[t]
  \vskip 0.15in
  \begin{center}
    % Row 1: Mean--variance frontiers
    \begin{minipage}{0.32\textwidth}
      \centering
      \includegraphics[width=\textwidth]{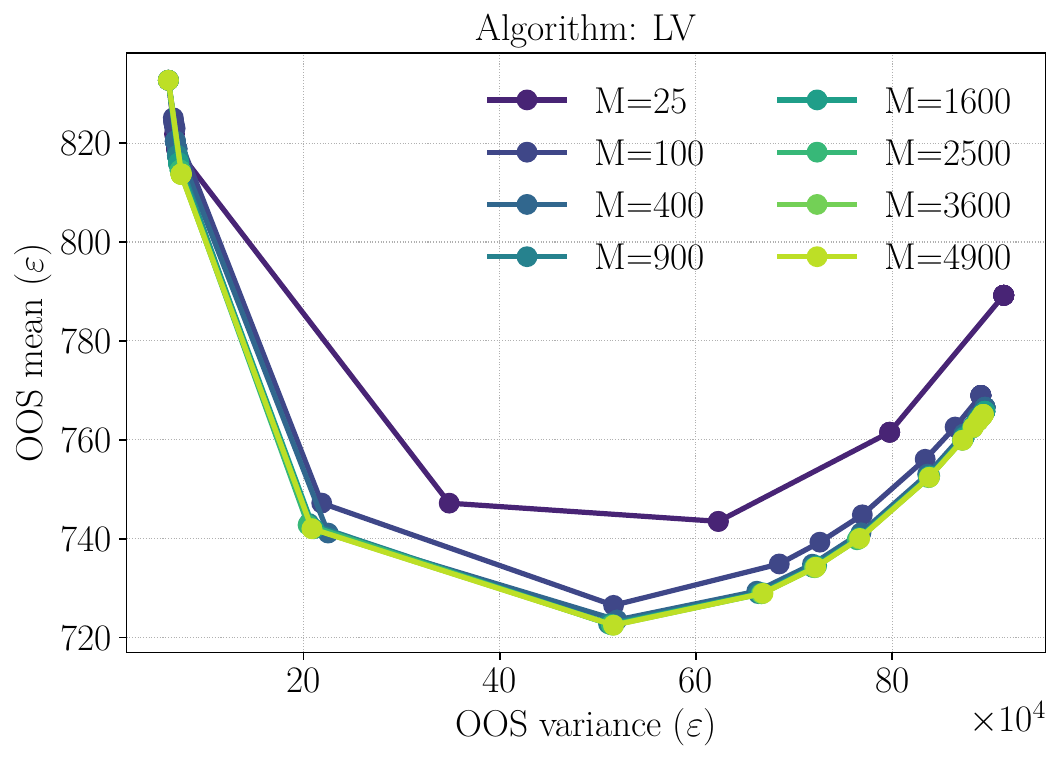}
    \end{minipage}\hfill
    \begin{minipage}{0.32\textwidth}
      \centering
      \includegraphics[width=\textwidth]{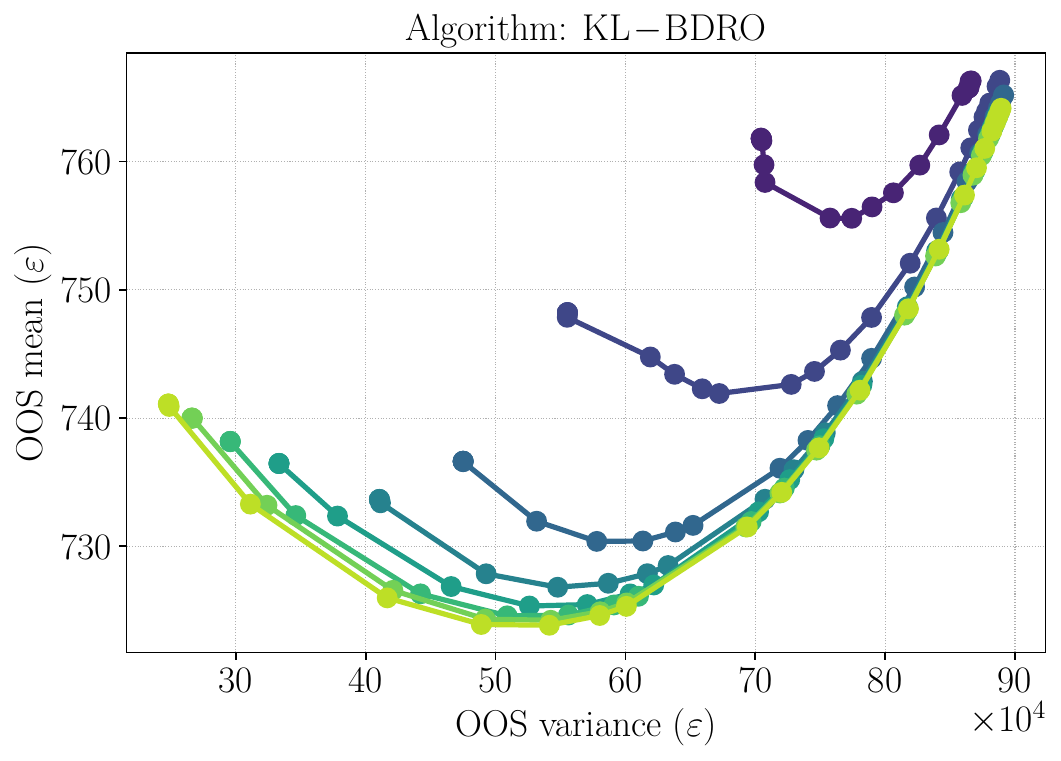}
    \end{minipage}\hfill
    \begin{minipage}{0.32\textwidth}
      \centering
      \includegraphics[width=\textwidth]{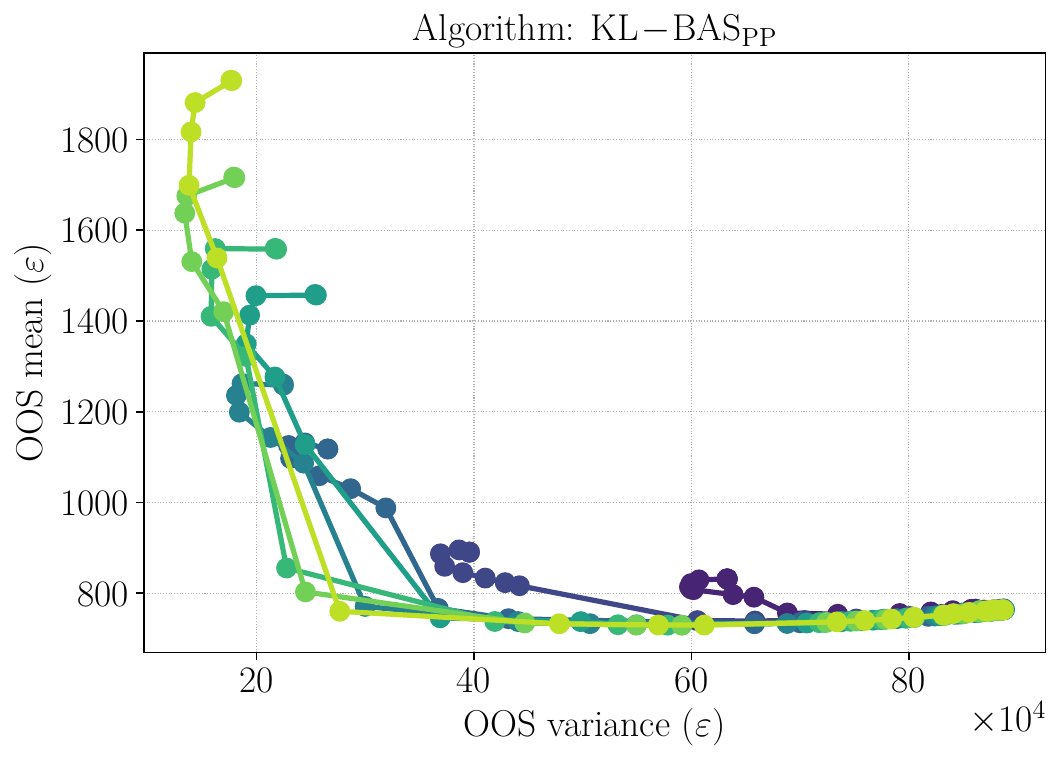}
    \end{minipage}
    % Row 2: MSD vs epsilon
    \begin{minipage}{0.32\textwidth}
      \centering
      \includegraphics[width=\textwidth]{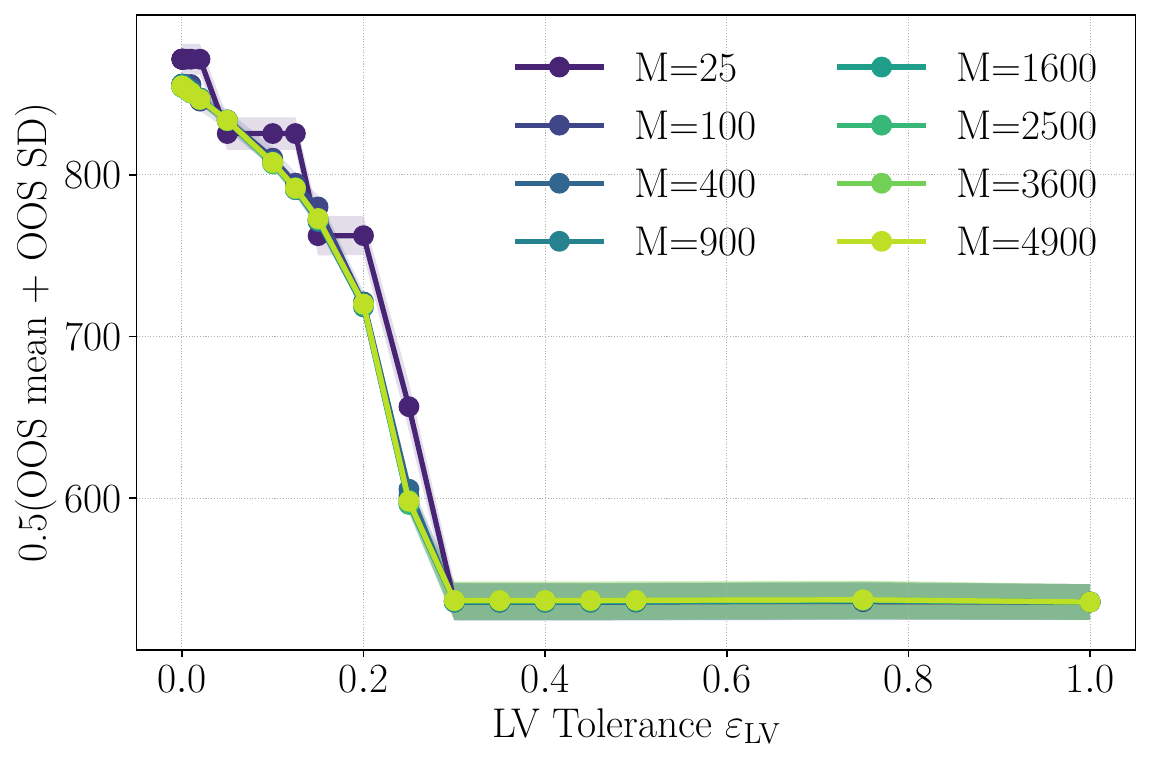}
    \end{minipage}\hfill
    \begin{minipage}{0.32\textwidth}
      \centering
      \includegraphics[width=\textwidth]{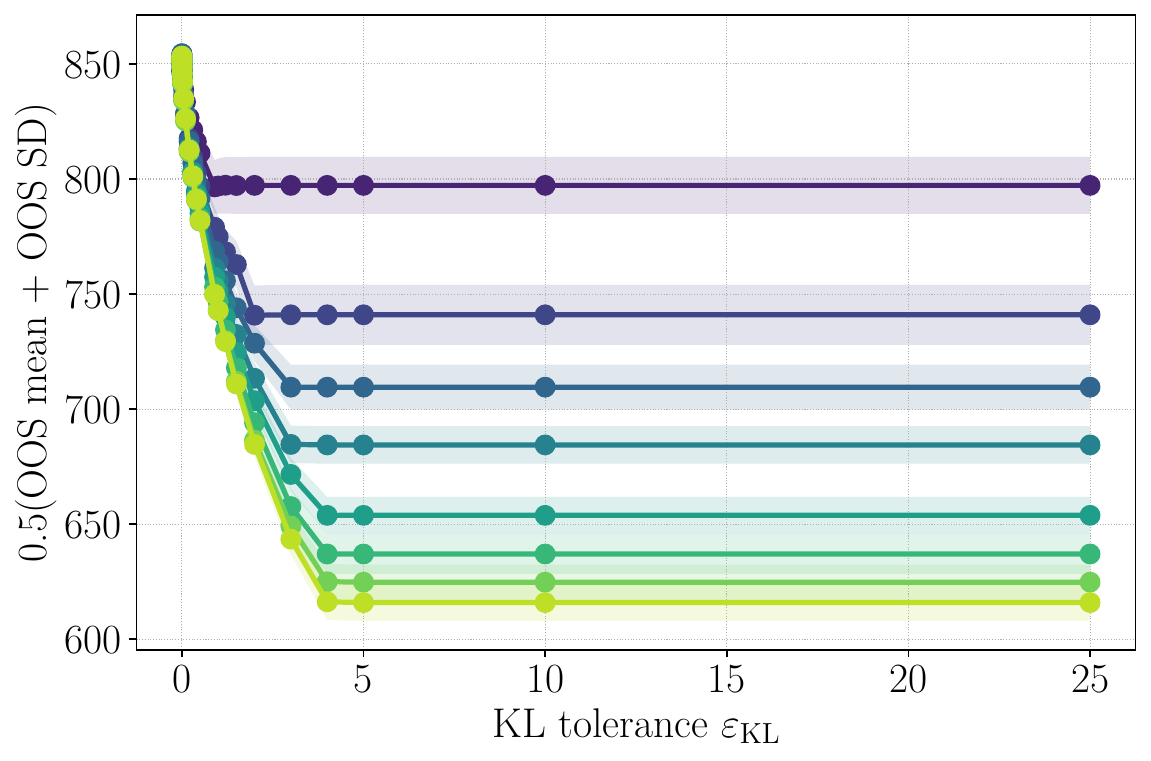}
    \end{minipage}\hfill
    \begin{minipage}{0.32\textwidth}
      \centering
      \includegraphics[width=\textwidth]{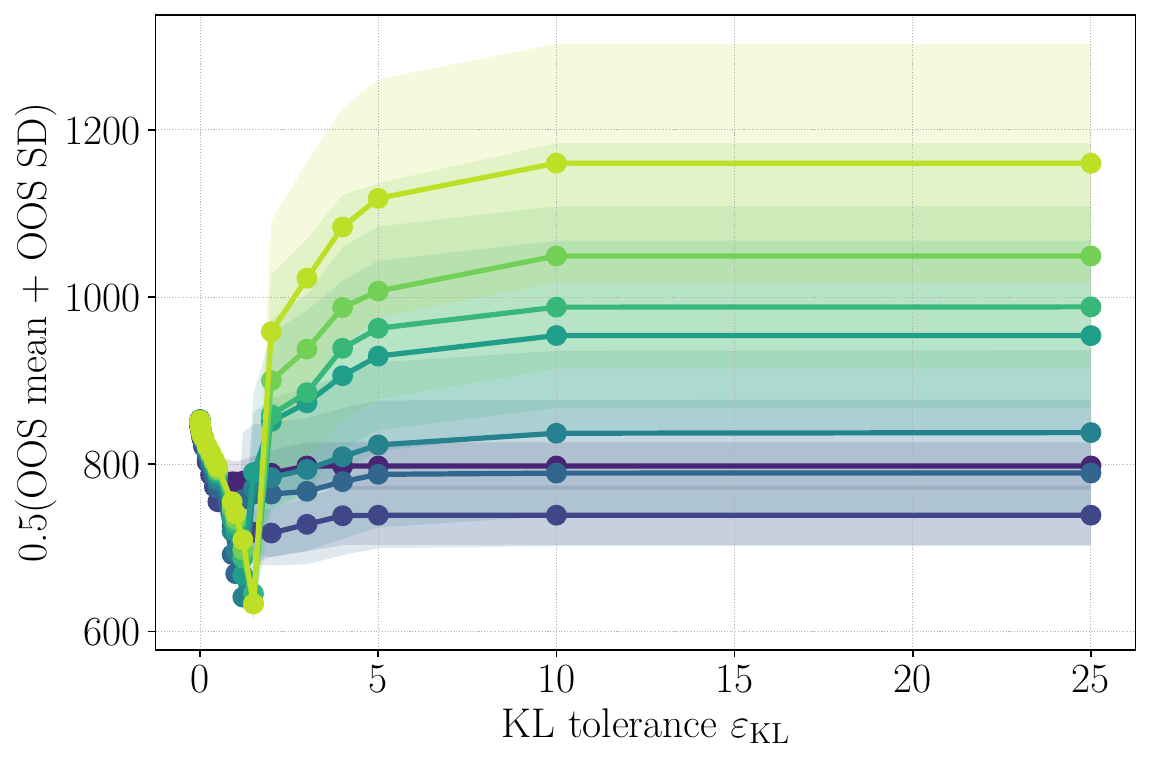}
    \end{minipage}

    \caption{
      Sample-efficiency diagnostics for Student-$t$ newsvendor under $20\%$ spike contamination. Left column: LV; middle column: KL-BDRO; right column: $\text{KL-BAS}_{\text{PP}}$. Top row: OOS mean--variance frontiers for different total model-sample budgets $M=M_{\mathrm{post}}M_{\mathrm{pred}}$. Bottom row: $\text{MSD}=\tfrac12(\mu+\sigma)$ versus tolerance for the same budgets.
    }
    \label{fig:newsvendor-sample-efficiency-panels}
  \end{center}
  %\vskip -0.10in
\end{figure*}

% ---------------------- FIGURE: SUMMARY METRIC ----------------------
\begin{figure*}[t]
  \vskip 0.15in
  \begin{center}
    \includegraphics[width=0.4\textwidth]{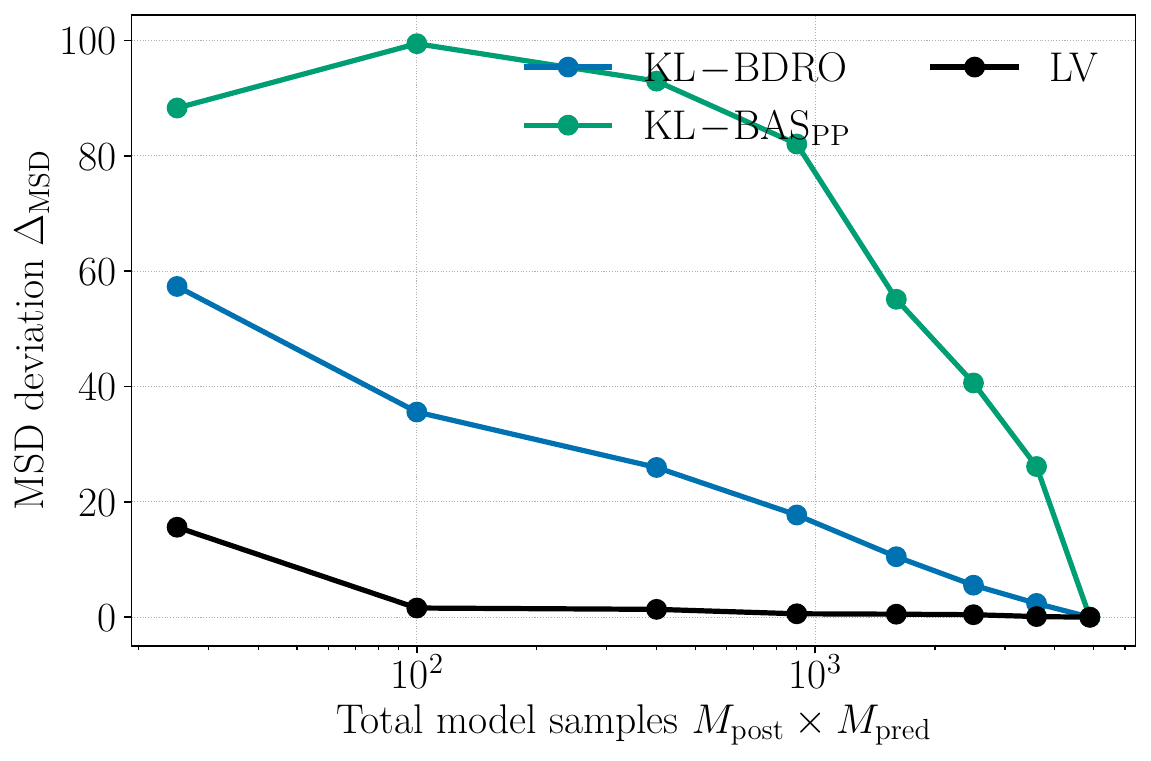}
    \caption{
      Curve-level stability versus total samples. Mean absolute deviation $\Delta_{\text{MSD}}(M)$ of the $\text{MSD}$ curve from the highest-budget run ($M_{\max}=4900$), averaged over the tolerance grid $\mathcal{E}$, for $\varepsilon_{\text{cont}}=0.2$. Lower values indicate higher sampling efficiency (greater stability under smaller Monte Carlo budgets).
    }
    \label{fig:newsvendor-sample-efficiency-summary}
  \end{center}
  %\vskip -0.10in
\end{figure*}

We observe that LV is highly sample efficient: across all budgets shown, the mean--variance frontiers and MSD curves in Figure~\ref{fig:newsvendor-sample-efficiency-panels} (left column) are visually indistinguishable. This is reflected quantitatively in Figure~\ref{fig:newsvendor-sample-efficiency-summary}: even at $M=100$, the average deviation from the $M=4900$ curve is negligible. This is because LV uses SAA only to approximate a truncated predictive expectation, which is easy to estimate and inherently stable. This is confirmed by the SAA convergence diagnostic for the truncated expectation reported in Figure~\ref{fig:SAA-convergence-diagnostic}.

\begin{figure*}[ht]
  \vskip 0.15in
  \begin{center}
    \includegraphics[width=0.6\textwidth]{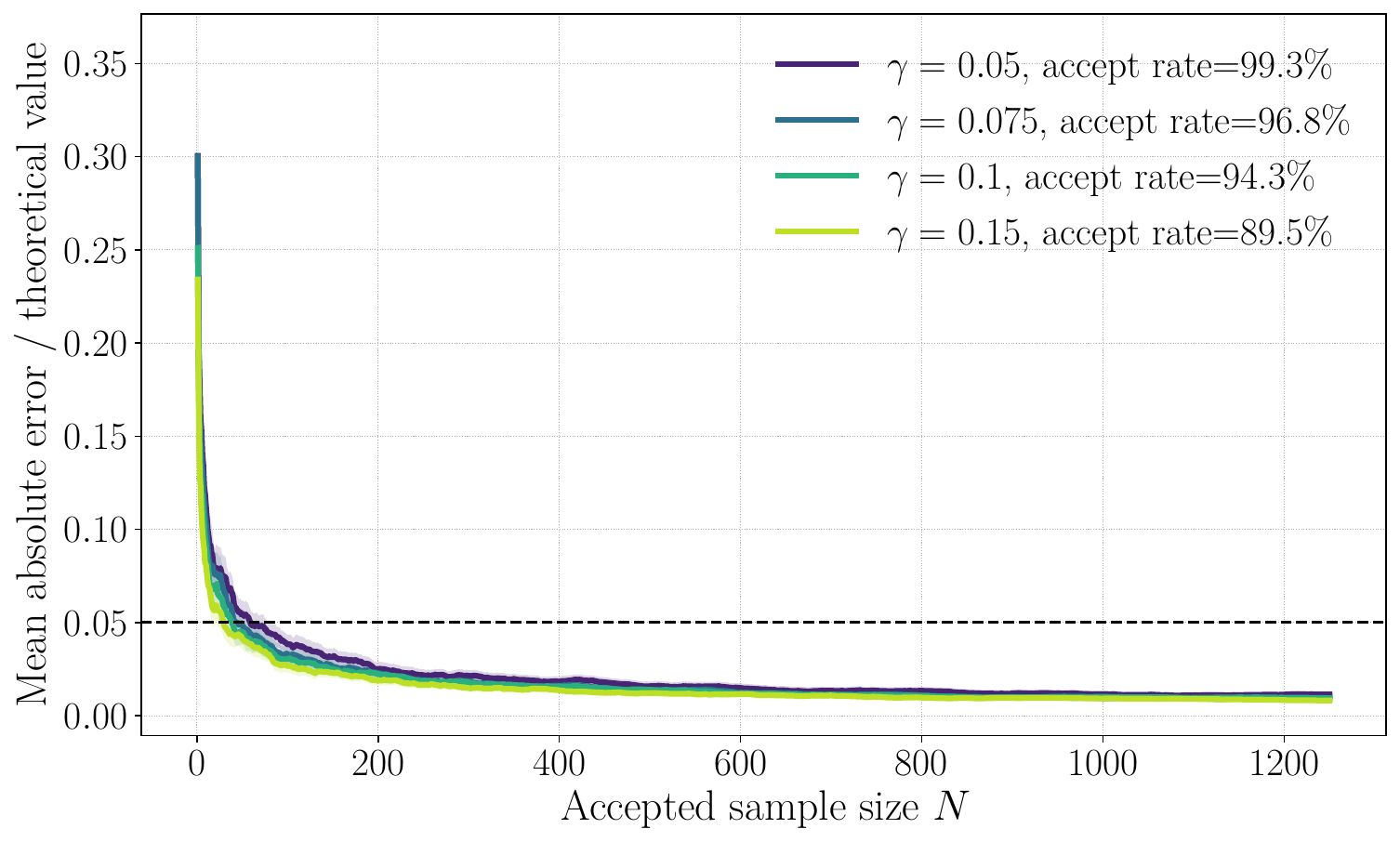}
    \caption{
      Convergence of the rejection sampling SAA to the truncated expectation. For each \(\gamma\), the curve plots the mean absolute error of the running SAA mean, normalised by the theoretical truncated expectation, as a function of the accepted sample size \(N\). Shaded bands show pointwise \(95\%\) confidence intervals over 100 independent replications. The legend reports the corresponding rejection-sampling accept rate for each \(\gamma\). The horizontal dashed line marks the \(5\%\) relative-error level.
    }
    \label{fig:SAA-convergence-diagnostic}
  \end{center}
  %\vskip -0.10in
\end{figure*}
On the contrary, KL-BDRO needs substantially larger budgets to stabilise. Figure~\ref{fig:newsvendor-sample-efficiency-panels} (middle column) shows a clear separation between small budgets and larger ones. This is confirmed by Figure~\ref{fig:newsvendor-sample-efficiency-summary}. KL-BDRO uses a nested Monte Carlo approximation in which posterior uncertainty and predictive uncertainty both enter the approximation of the robust objective. This double approximation is inherently more sample-hungry than the single-loop estimation required by LV. The SAA budget also affects the saturation limit as discussed in Remark~\ref{rem:kl-bdro-saturation}. With a small $M$, KL-BDRO can only explore a small trade-off region.

Unlike LV and KL-BDRO, increasing $M$ does not simply reduce Monte Carlo noise for $\text{KL-BAS}_{\text{PP}}$. In Figure~\ref{fig:newsvendor-sample-efficiency-panels} (right column), the large-tolerance regime becomes \emph{worse} at larger budgets: the frontier develops very low-variance but high-mean (high-cost) points, and the MSD curve increases sharply for larger tolerances at large $M$. $\text{KL-BAS}_{\text{PP}}$ admits the classical KL dual form
\[
\sup_{Q:\KL(Q\|\P)\le \rho} \E_{\xi\sim Q}[f_x(\xi)]
=
\inf_{\eta>0}\Big\{
\eta \rho + \eta \log \E_{\xi\sim \P}\big[\exp\big(f_x(\xi)/\eta\big)\big]
\Big\},
\]
so the Monte Carlo approximation must estimate the log-MGF of the loss under $\P$ via a log-sum-exp. This is well known to be dominated by rare extreme samples because of the exponential weighting. In our Student-$t$ setting, the predictive distribution is heavy-tailed while the newsvendor loss is unbounded and grows (at least) linearly in demand; this combination makes exponential-moment estimation particularly brittle. As $M$ increases, the probability of drawing extreme demands increases, and these rare draws can inflate the estimated log-MGF, pushing the optimiser towards overly conservative decisions at large tolerances.

\subsection{$\gamma$-Sensitivity Analysis}
\label{sec:gamma-sensitivity}
The DKW selection step requires hyperparameters $\delta$ and $\gamma$, and $\delta$ is fixed at a required significance level (e.g., $0.01$ or $0.05$). In this section, we perform a sensitivity analysis on the tail-budget parameter $\gamma$, which is fixed at $0.05$ in the main newsvendor experiment. The minimum certifiable $\gamma$ for this experiment, given our selection-set size, is $0.043$, and we studied $\gamma$ between $0.043$ and $0.15$. The results are displayed in Figure~\ref{fig:newsvendor-gamma-sensitivity}.

\begin{figure*}[ht]
  \vskip 0.15in
  \begin{center}
    \includegraphics[width=\textwidth]{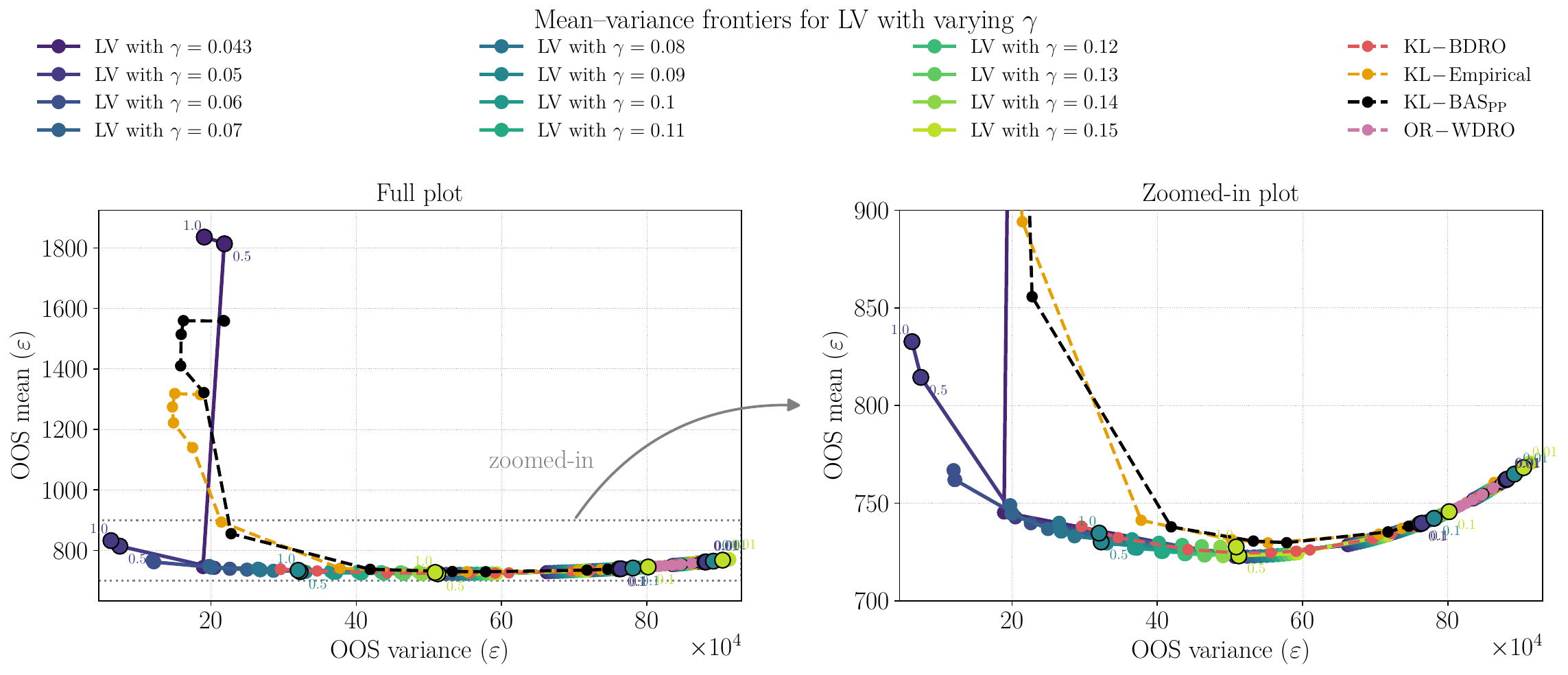}
    \caption{Synthetic newsvendor (cost: lower-left is better) $\gamma$-sensitivity analysis for LV under $20\%$ contamination. Each LV curve sweeps $\varepsilon_{\mathrm{LV}}$ for a fixed $\gamma$; selected $\varepsilon_{\mathrm{LV}}$ values are marked for $\gamma\in\{0.043,0.05,0.09,0.15\}$. Dashed curves are the KL and Wasserstein baselines. Left: full OOS mean--variance frontiers. Right: zoomed-in view of the same frontiers.}
    \label{fig:newsvendor-gamma-sensitivity}
  \end{center}
  %\vskip -0.10in
\end{figure*}
For $\gamma \in [0.05,0.10)$, the frontiers are similar and stay in the same favourable trade-off region relative to the baselines. Some unstable behaviour appears at the minimum certifiable value $0.043$, which means that the DKW threshold is the single largest selection score. At large $\varepsilon$, the optimisation can be overly driven by that point. Moving from $0.043$ to $0.05$ is sufficient to remove this effect. For larger $\gamma\ge0.10$, the bulk becomes too restrictive, and the supremum term becomes less informative, so LV can plateau in a low-mean, mid-variance region, similar to that of KL-BDRO. 

Therefore, when the selection sample size is approximately of order $10^3$, we recommend choosing $\gamma$ just above the minimum certifiable value to keep the supremum term effective while keeping a tail of around 10 selection scores to ensure stability. When the selection sample size is large ($>10^5$) and the minimum certifiable $\gamma<10^{-2}$, we recommend choosing $\gamma\in[0.01,0.05]$ to avoid emphasising overly rare scenarios.

\section{California Housing Regression (Section~\ref{sec:ca-housing}): Additional Details and Results}
\label{app:ca-housing}
We use the publicly available California housing dataset \citep{pace1997spatial}. We use the same optimisation solver and computing setup as in the synthetic newsvendor experiment (Appendix~\ref{sec:compute-details}). Throughout Appendix~\ref{app:ca-housing}, a California housing example is denoted by $\xi=(X,Y)$, where $X\in\mathbb{R}^d$ with $d=8$ is the covariate vector and $Y\in\mathbb{R}$ is the housing-price outcome; the decision variables are the linear predictor slope and intercept $(w,b)$, and the per-example absolute-error loss is $f_{w,b}(\xi):=|Y-w^\top X-b|$. In addition to the baselines reported in Section~\ref{sec:ca-housing}, this appendix includes two $f$-divergence baselines, KL-DRO and $\chi^2$-DRO, which are implemented using the Python DRO package \citep{liu2025dropythonlibrary}.

Subsection~\ref{app:geo-split} describes the East--West deployment split, gap ratios, and evaluation metrics; Section~\ref{app:ca-housing-copula-centre} describes the frequentist Gaussian copula centre \(\P_c\); Section~\ref{app:subsec-geo-cv} describes geo-block cross-validation; Section~\ref{app:cal-objectives} lists the optimisation objectives for LV and each baseline.

Section~\ref{app:ca-housing-fdro-results} gives the CV-selected results and runtime comparison with additional \(f\)-DRO baselines; Section~\ref{app-cal-sweeps} provides hyperparameter sweep trajectories for different gap ratios and oracle selection results; finally, Section~\ref{app:ch-product-bulk-stability} discusses the choice of the bulk set \(\Xi_{0,X}\times \Xi_{0,Y}\) in regression.

\subsection{Geographic Split, Gap Construction, and Evaluation}
\label{app:geo-split}
We use longitude to define an East--West deployment shift.
We sort the data by longitude and take the West as the deployment (test) region.
For a gap ratio $g \in \{0\%, 10\%, 20\%, 30\%\}$, we assign the Eastern 50\% to the training region, exclude a contiguous band of width $g$ on the train--test boundary, and assign the remaining Western $(50\%-g)$ to the test region. For the tables in \cref{sec:ca-housing}, we use $g=30\%$.

We standardise features using the training split only, and apply the same transform to all other splits. The outcome (housing price) is not standardised. We report all metrics in the original target scale (US dollars). We evaluate MAE, RMSE, $p_{98}(\abs{y-\hat y})$, and $\mathrm{CVaR}_{2\%}(\abs{y-\hat y})$ on the West test region.

\subsection{Frequentist Gaussian Copula Centre}
\label{app:ca-housing-copula-centre}

In the California housing experiment we instantiate the centre distribution \(\P_c\) as a simple semiparametric plug-in law for \(\xi=(X,Y)\) fitted on the East training split.
Specifically, we model the covariates \(X\in\mathbb{R}^d\) using a Gaussian copula with empirical marginals, and we model \(Y\mid X\) using a homoscedastic Gaussian linear regression.

For each coordinate \(j\in\{1,\dots,d\}\), let \(\widehat F_j\) be the empirical CDF of \(X_j\) on \(\D_{\mathrm{train}}\), and define pseudo-observations
\(U_{ij}:=\widehat F_j(X_{ij})\in(0,1)\).
Let \(Z_{ij}:=\Phi^{-1}(U_{ij})\), where \(\Phi\) is the standard normal CDF, and estimate the latent Gaussian covariance by
\[
\widehat \Sigma \ :=\ \mathrm{Cov}(Z)\ +\ \eta I_d
\]
for a small jitter \(\eta>0\).
To sample \(X\sim \P_c\), draw \(Z\sim\mathcal N(0,\widehat \Sigma)\), set \(U=\Phi(Z)\), and apply the coordinatewise inverse CDF \(X_j=\widehat F_j^{-1}(U_j)\).

We fit an OLS predictor \(\hat y(X)=\widehat w_{\mathrm{OLS}}^\top X+\widehat b_{\mathrm{OLS}}\) on \(\D_{\mathrm{train}}\), and set \(\widehat\sigma_Y^2\) to the training mean squared residual.
Conditional on a sampled \(X\), we then draw
\[
Y \ =\ \widehat w_{\mathrm{OLS}}^\top X + \widehat b_{\mathrm{OLS}} + \widehat\sigma_Y\,\varepsilon,
\qquad
\varepsilon\sim\mathcal N(0,1),
\]
which defines the joint sampling distribution \(\P_c\) on \(\xi=(X,Y)\).

\subsection{Geo-block Cross-validation}
\label{app:subsec-geo-cv}
We tune hyperparameters using geo-block CV within the East region only. Blocks are formed by binning longitude and latitude into a $6\times 6$ grid and discarding empty bins, yielding 34 non-empty geographic blocks in this split. We use 3 folds over blocks, and select hyperparameters by minimising the mean validation MAE across folds. We use fixed grids for each method:
\begin{itemize}
    \item LV and CVaR: $\varepsilon \in \{0.025, 0.05, 0.075, 0.10, 0.125, 0.15, 0.175, 0.20\}$.
    \item Wasserstein: $\rho \in \{0.05, 0.10, 0.50, 1.0, 1.5, 2.0, 2.5, 3.0\}$.
    \item KL-DRO: $\varepsilon_{\KL} \in \{0.0, 0.01, 0.02, 0.05, 0.10, 0.50, 1.0, 5.0\}$.
    \item $\chi^2$-DRO: $\varepsilon_{\chi^2} \in \{0.0, 0.05, 0.10, 0.20, 0.50, 1.0, 5.0, 10.0\}$.
    \item Ridge: $\lambda \in \{10^{-2}, 10^{-1}, 0.5, 1, 5, 10, 50, 100\}$.
\end{itemize}
%The chosen parameters for the results reported in Table~\ref{tab:ca-housing-results} are: LV $\varepsilon_{\LV} = 0.025$; CVaR  $\varepsilon_\mathrm{CVaR} = 0.2$; Wasserstein $\rho = 0.05$; Ridge $\lambda = 0.01$. 

\subsection{Optimisation Objectives}
\label{app:cal-objectives}
Let $\{\xi_i=(X_i,Y_i)\}_{i=1}^n$ denote the (East) training set used to fit a linear predictor
$\hat y(X)=w^\top X + b$ with $w\in\mathbb{R}^d$ and $b\in\mathbb{R}$.
Write the absolute residual loss as $f_{w,b}(\xi):=|Y-w^\top X-b|$.
Below we list the optimisation problems solved by each method.

\paragraph{ERM.}
\begin{equation}
\label{eq:ch-erm-lad-objective}
\min_{w,b}\ \frac{1}{n}\sum_{i=1}^n f_{w,b}(\xi_i).
\end{equation}

\paragraph{CVaR.}
For a tail mass parameter $\varepsilon\in(0,1)$, we minimise the empirical $\mathrm{CVaR}_{1-\varepsilon}$ of the absolute residual:
\begin{equation}
\label{eq:ch-cvar-lad-objective}
\min_{w,b,\tau\in\mathbb{R}}\ 
\tau+\frac{1}{\varepsilon n}\sum_{i=1}^n\big(f_{w,b}(\xi_i)-\tau\big)_+.
\end{equation}

\paragraph{Ridge.}
For $\lambda\ge 0$, we solve $\ell_2$-regularised least squares (intercept unpenalised):
\begin{equation}
\label{eq:ch-ridge-objective}
\min_{w,b}\ \frac{1}{n}\sum_{i=1}^n \big(Y_i-w^\top X_i-b\big)^2+\lambda\|w\|_2^2.
\end{equation}
\paragraph{Wasserstein.}
For $\rho\ge 0$, we use a joint $(X,Y)$ $1$-Wasserstein DRO baseline that allows transport in both covariates and labels. We equip the data space with the weighted Euclidean ground cost
\[
c\bigl((X,Y),(X',Y')\bigr)
:=\sqrt{\|X-X'\|_2^2+\kappa^2 (Y-Y')^2},
\]
where $\kappa>0$ controls the relative cost of transporting the label. Since we do not standardise $Y$ in this experiment, we set $\kappa:=1/\sigma_Y$ with $\sigma_Y$ the empirical SD of the training responses, so that label transport is measured in units of one training SD \citep{chen2018wdroregression}. For the LAD loss $f_{w,b}(\xi)=\abs{Y-w^\top X-b}$, the corresponding dual form yields the convex objective
\begin{equation}
\label{eq:ch-wass-lad-objective}
\min_{w,b}\ \frac{1}{n}\sum_{i=1}^n f_{w,b}(\xi_i)+\rho\,\|(w,1/\kappa)\|_2 = 
\min_{w,b}\ \frac{1}{n}\sum_{i=1}^n f_{w,b}(\xi_i)+\rho\,\|(w,\sigma_Y)\|_2,
\end{equation}
where the intercept $b$ is not regularised. 
\paragraph{KL.}
For a KL radius $\varepsilon_{\KL}\ge 0$, the KL-DRO objective is
\begin{equation}
\label{eq:ch-kl-lad-objective}
\min_{w,b}\ \inf_{\lambda>0}\left\{\lambda\varepsilon_{\KL}+\lambda\log\left(\frac{1}{n}\sum_{i=1}^n\exp\left(\frac{f_{w,b}(\xi_i)}{\lambda}\right)\right)\right\}.
\end{equation}

\paragraph{$\chi^2$.}
For a $\chi^2$ radius $\varepsilon_{\chi^2}\ge 0$, by \citet[Lemma 1]{duchi2021learning} we solve
\begin{equation}
\label{eq:ch-chi2-lad-objective}
\min_{w,b}\ \inf_{\lambda\in\R}
\left\{\lambda+\sqrt{(1+2\varepsilon_{\chi^2})\left(\frac{1}{n}\sum_{i=1}^n\big(f_{w,b}(\xi_i)-\lambda\big)_+^2\right)}\right\}.
\end{equation}

\paragraph{LV.}
We use the intersection bulk set $\Xi_0=\Xi_{0,X}\times \Xi_{0,Y}$ with $\gamma=0.1$ and $\delta=0.05$, and with the $X$-score and $Y$-score thresholds at levels $(\gamma/2,\delta/2)$ each. The DKW selection process provides the following:
\begin{equation}
\label{eq:ch-product-bulk}
\Xi_{0,X}:=\{X\in\mathbb{R}^d:\ \|L_X^{-1}(X-\mu_X)\|_2\le t_X\},
\qquad
\Xi_{0,Y}:=\{Y\in\mathbb{R}:\ |Y-\mu_Y|\le r_Y\},
\end{equation}
where $\mu_X\in\mathbb{R}^d$, $\mu_Y\in\mathbb{R}$, $t_X\ge 0$, $r_Y\ge 0$, and $L_X\in\mathbb{R}^{d\times d}$ is an invertible square-root factor (so $\Sigma_{XX}=L_XL_X^\top$). We use a random split of $80\%$ of the training data for score-fitting and $20\%$ for score selection. Although we standardise $X$ with the whole training dataset, the Mahalanobis score is invariant to such affine reparameterisations, so the DKW selection procedure is unaffected (up to a $10^{-8}$ ridge for numerical stability in the Cholesky decomposition). Given a fitted centre distribution $\P_c$ on $(X,Y)$, LV solves the population objective
\begin{equation}
\label{eq:ch-lv-population-objective}
\min_{w,b} (1-\varepsilon)\E_{\xi\sim\P_{c,\Xi_0}}\left[f_{w,b}(\xi)\right]
 + \varepsilon\sup_{\xi\in\Xi_0}f_{w,b}(\xi),
\end{equation}
with the truncated expectation $\E_{\xi\sim\P_{c,\Xi_0}}\left[f_{w,b}(\xi)\right]=\E_{\P_c}\left[f_{w,b}(\xi)\ \big|\ \xi\in\Xi_0\right]$, approximated by SAA via rejection sampling from $\P_c(\cdot\mid \Xi_0)$.
The next lemma gives the closed form for the worst-case term in \eqref{eq:ch-lv-population-objective} under the $\Xi_{0,X}\times\Xi_{0,Y}$ geometry.

\begin{lemma}[Worst-case absolute loss on $\Xi_{0,X}\times\Xi_{0,Y}$]
\label{lem:ch-worstcase-ellipsoid-x-interval-y}
Let $\Xi_0=\Xi_{0,X}\times\Xi_{0,Y}$ be defined as in \eqref{eq:ch-product-bulk}.
Then for any $w\in\mathbb{R}^d$ and $b\in\mathbb{R}$,
\begin{equation}
\label{eq:ch-worstcase-closed-form}
\sup_{\xi\in\Xi_0} f_{w,b}(\xi)
=
\big|\mu_Y-\mu_X^\top w-b\big|
\ +\ r_Y
\ +\ t_X\|L_X^\top w\|_2.
\end{equation}
\end{lemma}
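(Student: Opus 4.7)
The plan is to reduce the two-dimensional supremum to two one-dimensional support-function computations on the factor sets $\Xi_{0,X}$ and $\Xi_{0,Y}$. Write $g(x,y):=y-w^\top x-b$, so that $|g|=\max\{g,-g\}$. Combined with the elementary identity $\sup\max\{h_1,h_2\}=\max\{\sup h_1,\sup h_2\}$, this gives
\[
\sup_{(x,y)\in\Xi_0}|g(x,y)|=\max\Bigl\{\sup_{(x,y)\in\Xi_0}g(x,y),\ \sup_{(x,y)\in\Xi_0}\bigl(-g(x,y)\bigr)\Bigr\}.
\]
Because $\Xi_0=\Xi_{0,X}\times\Xi_{0,Y}$ is a product, each of these two linear suprema decouples into independent optimisations over $x$ and over $y$.

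For the $y$-part, the interval $\Xi_{0,Y}=[\mu_Y-r_Y,\mu_Y+r_Y]$ gives $\sup_{y} y=\mu_Y+r_Y$ and $\inf_{y} y=\mu_Y-r_Y$. For the $x$-part, the change of variable $u:=L_X^{-1}(x-\mu_X)$ turns $\Xi_{0,X}$ into $\{\|u\|_2\le t_X\}$ and yields $w^\top x=w^\top\mu_X+(L_X^\top w)^\top u$; Cauchy--Schwarz then produces $\sup_{x\in\Xi_{0,X}}w^\top x=w^\top\mu_X+t_X\|L_X^\top w\|_2$ with the matching infimum obtained by flipping the sign. Alternatively, Lemma~\ref{lem:closed-form-sup-piecewise} applied with $J=1$ delivers the same expression directly.

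Setting $C:=\mu_Y-\mu_X^\top w-b$ and assembling the two branches,
\[
\sup_{(x,y)\in\Xi_0}g(x,y)=C+r_Y+t_X\|L_X^\top w\|_2,\qquad \sup_{(x,y)\in\Xi_0}\bigl(-g(x,y)\bigr)=-C+r_Y+t_X\|L_X^\top w\|_2.
\]
Taking the maximum of the two and using $\max\{C,-C\}=|C|$ gives \eqref{eq:ch-worstcase-closed-form}. There is no substantive obstacle: the only step that warrants explicit justification is the $\sup\max=\max\sup$ identity (both inequalities follow from monotonicity of the supremum), and everything else is routine one-dimensional support-function arithmetic on an interval and a Mahalanobis ellipsoid.
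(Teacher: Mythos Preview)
Your proposal is correct and follows essentially the same argument as the paper: write $|g|=\max\{g,-g\}$, swap $\sup$ and $\max$, decouple the product set into separate $x$- and $y$-optimisations, and combine via $\max\{C,-C\}=|C|$. The paper invokes Lemma~\ref{lem:closed-form-sup-piecewise} for the ellipsoid support function while you spell out the Cauchy--Schwarz step directly (and note the lemma as an alternative); otherwise the two proofs are identical.
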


\begin{proof}
For $\xi=(X,Y)$, write
\[
f_{w,b}(\xi)=|Y-w^\top X-b|
=\max\{\,Y-w^\top X-b,\ -Y+w^\top X+b\,\}.
\]
Applying \eqref{eq:sup-max-swap} with $J=2$ and $S=\Xi_0=\Xi_{0,X}\times\Xi_{0,Y}$ gives
\[
\sup_{\xi\in\Xi_0}f_{w,b}(\xi)
=
\max\Big\{\sup_{(X,Y)\in\Xi_0}(Y-w^\top X-b),\ \sup_{(X,Y)\in\Xi_0}(-Y+w^\top X+b)\Big\}.
\]

For the first term, since $\Xi_0$ is a Cartesian product and the objective is affine and separable in $(X,Y)$,
\[
\sup_{(X,Y)\in\Xi_0}(Y-w^\top X-b)
=
-b+\sup_{Y\in\Xi_{0,Y}}Y+\sup_{X\in\Xi_{0,X}}(-w)^\top X.
\]
Because $\Xi_{0,Y}=\{\mu_Y+s:\ |s|\le r_Y\}$, we have $\sup_{Y\in\Xi_{0,Y}}Y=\mu_Y+r_Y$.
Moreover, by the ellipsoid calculation in Lemma~\ref{lem:closed-form-sup-piecewise} (the linear case $J=1$),
\[
\sup_{X\in\Xi_{0,X}}(-w)^\top X
=
(-w)^\top\mu_X+t_X\|L_X^\top(-w)\|_2
=
-\mu_X^\top w+t_X\|L_X^\top w\|_2.
\]
Thus
\[
\sup_{(X,Y)\in\Xi_0}(Y-w^\top X-b)
=
(\mu_Y-\mu_X^\top w-b)+r_Y+t_X\|L_X^\top w\|_2.
\]

Similarly,
\[
\sup_{(X,Y)\in\Xi_0}(-Y+w^\top X+b)
=
b+\sup_{X\in\Xi_{0,X}}w^\top X+\sup_{Y\in\Xi_{0,Y}}(-Y)
=
-(\mu_Y-\mu_X^\top w-b)+r_Y+t_X\|L_X^\top w\|_2,
\]
since $\sup_{Y\in\Xi_{0,Y}}(-Y)=-\mu_Y+r_Y$ and $\sup_{X\in\Xi_{0,X}}w^\top X=\mu_X^\top w+t_X\|L_X^\top w\|_2$.

Taking the maximum of these two values yields
\[
\sup_{\xi\in\Xi_0}f_{w,b}(\xi)
=
\big|\mu_Y-\mu_X^\top w-b\big|+r_Y+t_X\|L_X^\top w\|_2,
\]
which is \eqref{eq:ch-worstcase-closed-form}.
\end{proof}

\subsection{CV-selected Results with Additional \texorpdfstring{$f$}{f}-DRO Baselines}
\label{app:ca-housing-fdro-results}

Tables~\ref{tab:ca-housing-cv-all-methods}--\ref{tab:ca-housing-dro-runtime-extended} report the East $\to$ West shift with 30\% gap performance under geo-block CV selection, now including the additional KL-DRO and $\chi^2$-DRO baselines. LV remains the method with the smallest mean and tail errors under CV, and the fastest DRO method.

\begin{table}[ht]
\caption{California housing under geographic East $\rightarrow$ West deployment shift with a 30\% gap, including additional KL-DRO and $\chi^2$-DRO baselines. Entries are mean (SD) over 100 replications under geo-block CV selection. All error metrics are reported in units of $10^4$.}
\centering
\begin{small}
\setlength{\tabcolsep}{4pt}
\begin{tabular}{lcccc}
\toprule
Method & MAE & RMSE & $p_{98}(\abs{y-\hat y})$ & $\mathrm{CVaR}_{2\%}(\abs{y-\hat y})$ \\
\midrule
LV    & \textbf{10.69} (0.72) & \textbf{12.19} (0.77) & \textbf{21.86} (1.23) & \textbf{23.89} (1.10) \\
CVaR  & 12.99 (0.00) & 14.40 (0.00) & 24.73 (0.00) & 27.25 (0.00) \\
Wass  & 12.32 (0.02) & 13.75 (0.02) & 24.31 (0.04) & 26.50 (0.03) \\
KL-DRO & 12.66 (0.05) & 14.09 (0.05) & 24.70 (0.08) & 27.64 (0.08) \\
$\chi^2$-DRO & 12.68 (0.10) & 14.11 (0.10) & 24.74 (0.11) & 27.68 (0.10) \\
Ridge & 13.20 (0.01) & 14.60 (0.01) & 25.18 (0.01) & 27.66 (0.01) \\
ERM   & 12.66 (0.00) & 14.08 (0.00) & 24.71 (0.00) & 27.66 (0.00) \\
\bottomrule
\end{tabular}
\end{small}
\label{tab:ca-housing-cv-all-methods}
\end{table}

\begin{table}[ht]
\caption{Runtime comparison for DRO methods on the 30\% gap California East $\rightarrow$ West deployment shift, including the additional KL-DRO and $\chi^2$-DRO baselines. Values are mean (SD) seconds over 100 replications. Validation is geo-block CV; likelihood time is nonzero only for LV in our implementation.}
\centering
\begin{small}
\setlength{\tabcolsep}{5pt}
\begin{tabular}{lcccc}
\toprule
Method & Validation & Likelihood & Solve & Total \\
\midrule
LV   & \textbf{1.30} (0.02) & \textbf{0.01} (0.00) & \textbf{0.13} (0.00) & \textbf{1.44} (0.02) \\
CVaR  & 5.14 (0.09) & -- & 0.34 (0.01) & 5.47 (0.09) \\
Wass  & 5.80 (0.27) & -- & 0.36 (0.02) & 6.16 (0.29) \\
KL-DRO & 39.10 (3.77) & -- & 2.80 (0.40) & 41.90 (4.05) \\
$\chi^2$-DRO & 12.36 (0.83) & -- & 2.00 (0.96) & 14.36 (1.38) \\
\bottomrule
\end{tabular}
\end{small}
\label{tab:ca-housing-dro-runtime-extended}
\end{table}

\subsection{Alternative Gaps and Parameter Sweeps}
\label{app-cal-sweeps}
Figure~\ref{fig:ca-housing-trajectories} visualises the trade-off between average and tail performance under geographic shift for gap ratios in $\{0,0.1,0.2, 0.3\}$. For each panel (gap ratio indicated in the title), we sweep the method-specific robustness parameter and plot the resulting West-test performance in the $(\mathrm{MAE},\,\mathrm{CVaR}_{2\%}(\lvert y-\hat y\rvert))$ plane; ERM and the trivial train-median predictor appear as fixed reference points.
\begin{figure}[t]
\centering
\begin{subfigure}[t]{0.4\linewidth}
\centering
\includegraphics[width=\linewidth]{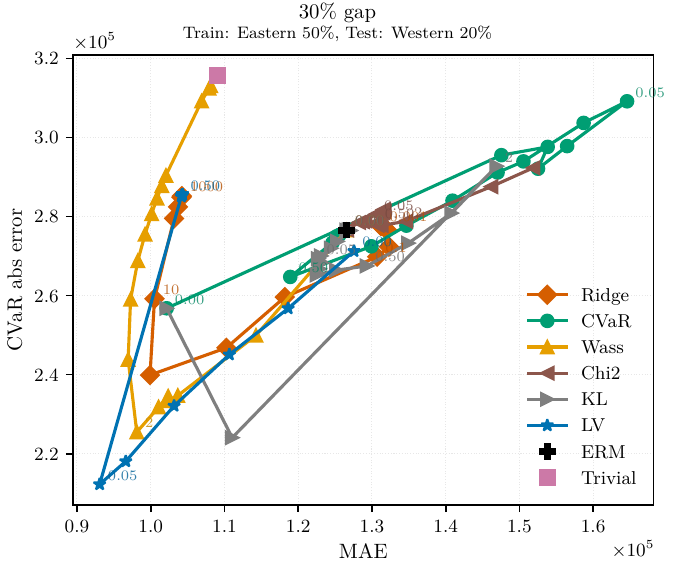}
\end{subfigure}
\begin{subfigure}[t]{0.4\linewidth}
\centering
\includegraphics[width=\linewidth]{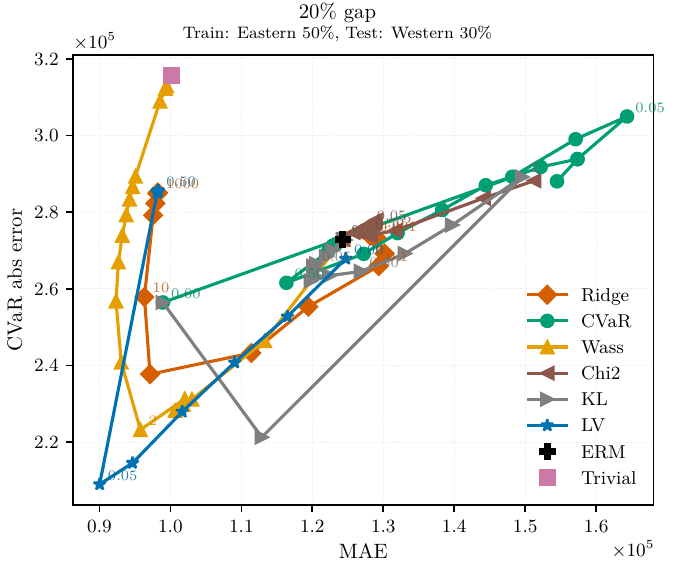}
\end{subfigure}
\vspace{0.2em}
\begin{subfigure}[t]{0.4\linewidth}
\centering
\includegraphics[width=\linewidth]{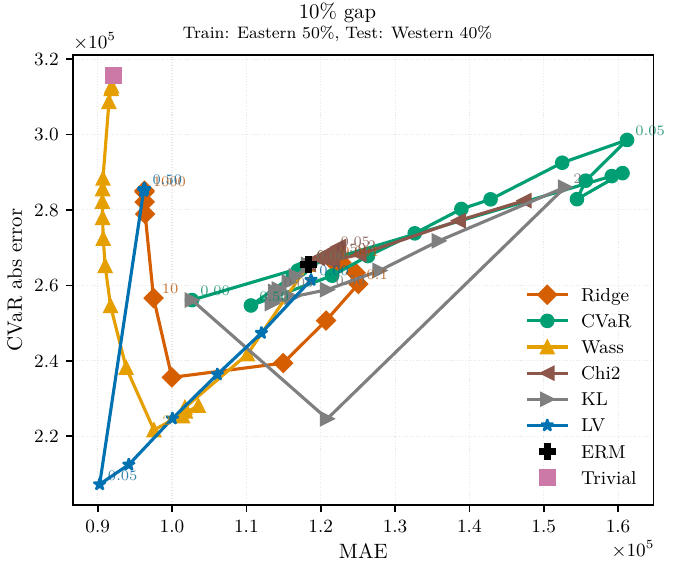}
\end{subfigure}
\begin{subfigure}[t]{0.4\linewidth}
\centering
\includegraphics[width=\linewidth]{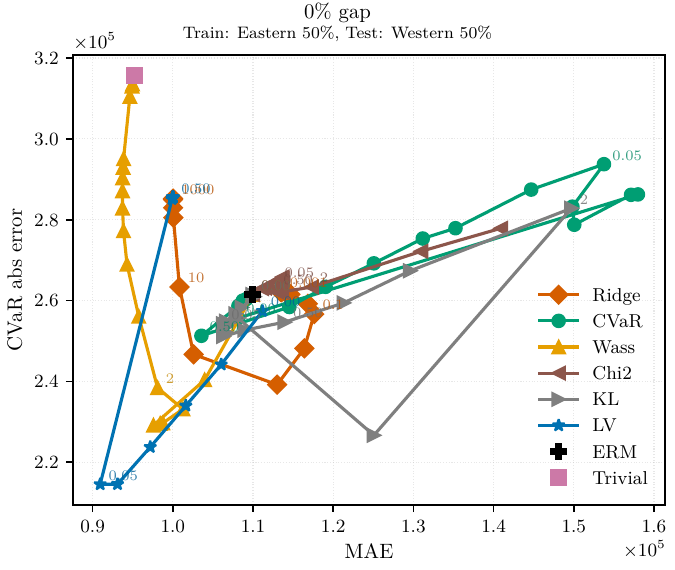}
\end{subfigure}
\caption{Parameter sweep trajectories on California housing regression under geographic shift. Each curve sweeps the method-specific robustness parameter. Some hyperparameter values are marked. The x-axis is MAE and the y-axis is $\mathrm{CVaR}_{2\%}(\abs{y-\hat y})$ on the West test set. Markers for ERM and the trivial train-median predictor are included as diagnostics when shown in the plots. Increasing the gap reduces the size of the West test region, as indicated by the titles.}
\label{fig:ca-housing-trajectories}
\end{figure}

We also report an oracle-tuned comparison in Table~\ref{tab:ca-housing-oracle}. For each method, we pick the swept hyperparameter that minimises test $\mathrm{CVaR}_{2\%}(\abs{y-\hat y})$, breaking ties by test MAE. This uses the test set for selection and is therefore only for diagnostic comparisons.

\begin{table}[t]
\caption{Oracle selection on the 30\% gap split using test data. For each method, the oracle chooses the swept parameter over a wider range that minimises test $\mathrm{CVaR}_{2\%}(\abs{y-\hat y})$, breaking ties by MAE. Entries are mean (SD) over 100 replications. All error metrics are reported in units of $10^4$.}
\centering
\begin{small}
\setlength{\tabcolsep}{4pt}
\begin{tabular}{lccccc}
\toprule
Method & Oracle parameter & MAE & RMSE & $p_{98}(\abs{y-\hat y})$ & $\mathrm{CVaR}_{2\%}(\abs{y-\hat y})$ \\
\midrule
    LV    & $\varepsilon$=0.05  & \textbf{9.31} (0.31) & \textbf{10.79} (0.32) & \textbf{19.80} (0.43) & \textbf{21.23} (0.38) \\
    CVaR  & $\varepsilon$=0.5   & 11.89 (0.00) & 13.29 (0.00) & 23.46 (0.00) & 26.47 (0.00) \\
    Wass  & $\rho$=2     & 9.81 (0.03) & 11.55 (0.03) & 21.52 (0.02) & 22.55 (0.03) \\
    KL-DRO & $\varepsilon_{\KL}$=5     & 11.11 (0.00) & 12.74 (0.00) & 21.42 (0.00) & 22.41 (0.00) \\
    $\chi^2$-DRO & $\varepsilon_{\chi^2}$=1     & 13.13 (0.00) & 14.53 (0.00) & 24.98 (0.00) & 27.77 (0.00) \\
    Ridge & $\lambda$=5     & 9.99 (0.01) & 11.93 (0.01) & 23.31 (0.05) & 23.99 (0.04) \\
    ERM   & --   & 12.66 (0.00) & 14.08 (0.00) & 24.71 (0.00) & 27.66 (0.00) \\
\bottomrule
\end{tabular}
\end{small}
\label{tab:ca-housing-oracle}
\end{table}

\begin{remark}[Note on the clipped-prediction ablation]
The California housing target is top-coded at housing price $5\times 10^5$. As an ablation, we recompute all metrics after clipping predictions at the price cap ($5\times 10^5$) for $0\%$ and $30\%$ gap ratios. Across the full hyperparameter sweep and all methods, clipping changes MAE, RMSE, $p_{98}(\abs{y-\hat y})$, and $\mathrm{CVaR}_{2\%}(\abs{y-\hat y})$ by at most 3.6\%, 1.9\%, 2.4\%, and 3.8\%. The qualitative trade-offs and relative comparisons are unchanged.
\end{remark}

\subsection{Discussion on the Bulk Set \texorpdfstring{$\Xi_{0,X}\times \Xi_{0,Y}$}{Xi0X x Xi0Y} in Regression}
\label{app:ch-product-bulk-stability}

This subsection explains why we model the bulk set as an intersection
\(\Xi_0 = \Xi_{0,X}\times \Xi_{0,Y}\) in the California housing experiment, as opposed to a simple joint ellipsoid or box in \((X,Y)\). Under deployment shift, \(Y\mid X\) often changes across regions. A joint ellipsoid can inadvertently encode the \emph{training} relationship between \(X\) and \(Y\) as part of the bulk geometry. Lemma~\ref{lem:joint-ellipsoid-eps-invariant} below shows the extreme version of this effect under an elliptical centre: \(\varepsilon\) stops selecting different trade-offs and the method collapses to a single predictor.

We consider the ellipsoid bulk set. For $X\in\R^d$, $Y\in \R$, let \(\xi := (X,Y)\in\mathbb{R}^{d+1}\).
Write \(\mu := (\mu_X,\mu_Y)\in\mathbb{R}^{d+1}\) and let \(\Sigma\in\mathbb{S}^{d+1}_{++}\).
Let \(L\in\mathbb{R}^{(d+1)\times(d+1)}\) satisfy
\(\Sigma = LL^\top\) (e.g.\ a Cholesky factor).
For \(t>0\), define the \emph{joint} ellipsoidal bulk set
\begin{equation}
\label{eq:joint-ellipsoid-bulk}
\Xi_0^{\mathrm{joint}}(t)
:= \Big\{\xi\in\mathbb{R}^{d+1}:\ \big\|L^{-1}(\xi-\mu)\big\|_2 \le t\Big\}.
\end{equation}
For a linear predictor \(X\mapsto w^\top X + b\) with \((w,b)\in\mathbb{R}^d\times\mathbb{R}\),
we have the absolute residual loss
\[
f_{w,b}(\xi) := |Y - w^\top X - b|.
\]
Consider the theoretical LV objective
\begin{equation}
\label{eq:lv-population-objective}
J_\varepsilon(w,b)
:= (1-\varepsilon)\E\!\left[f_{w,b}(\xi)\ \big|\ \xi\in\Xi_0^{\mathrm{joint}}(t)\right]
+
\varepsilon\sup_{\xi\in\Xi_0^{\mathrm{joint}}(t)} f_{w,b}(\xi),
\qquad \varepsilon\in[0,1].
\end{equation}

We now show that, under an elliptical-centre model aligned with the joint ellipsoid, the minimiser of \(J_\varepsilon\) does \emph{not} depend on \(\varepsilon\). This makes \(\varepsilon\) obsolete for decision-making: it only rescales the same objective.

\begin{lemma}[Joint-ellipsoid LV is \texorpdfstring{$\varepsilon$}{epsilon}-invariant]
\label{lem:joint-ellipsoid-eps-invariant}
Assume the centre distribution is elliptically symmetric in the following explicit form:
\[
\xi = \mu + L U,
\]
where \(U\in\mathbb{R}^{d+1}\) is \emph{spherically symmetric} (i.e.\ for every orthogonal matrix \(Q\), \(QU\stackrel{d}{=}U\)).
Let \(U_t\) denote \(U\) conditioned on the event \(\{\|U\|_2\le t\}\) (assume \(\Pr(\|U\|_2\le t)>0\) and \(\Pr(U\neq 0\mid \|U\|_2\le t)>0\)), which is equivalent to conditioning \(\xi\) on \(\Xi_0^{\mathrm{joint}}(t)\) in \eqref{eq:joint-ellipsoid-bulk}.
Define \(J_\varepsilon\) as in \eqref{eq:lv-population-objective}.

Then, for every \(\varepsilon\in[0,1]\), every minimiser \((w_\varepsilon,b_\varepsilon)\) of \(J_\varepsilon\) satisfies
\[
b_\varepsilon = \mu_Y - \mu_X^\top w_\varepsilon,
\qquad
w_\varepsilon \in \arg\min_{w\in\mathbb{R}^d} \big\|L^\top[-w;1]\big\|_2,
\]
and the argmin set is the same for all \(\varepsilon\in[0,1]\).
If the \(d\times d\) block \(\Sigma_{XX}\) is positive definite, the minimiser is unique and equals
\[
w_\varepsilon = \Sigma_{XX}^{-1}\Sigma_{XY},
\qquad
b_\varepsilon = \mu_Y - \mu_X^\top \Sigma_{XX}^{-1}\Sigma_{XY},
\qquad
\forall\varepsilon\in[0,1],
\]
where \(\Sigma_{XY}\in\mathbb{R}^d\) is the \(X\)-\(Y\) cross-covariance block.
\end{lemma}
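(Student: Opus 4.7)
The clean idea is to reparametrise the objective so that both the truncated expectation and the supremum depend on $(w,b)$ only through two scalars, and then show that each scalar can be optimised independently of $\varepsilon$.

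First, I would write the residual in augmented form. Let $v:=[-w;1]\in\mathbb{R}^{d+1}$, $\alpha:=v^\top\mu-b=\mu_Y-\mu_X^\top w-b$, and $\beta:=L^\top v\in\mathbb{R}^{d+1}$. Substituting $\xi=\mu+LU$ yields $\ell_{w,b}(\xi)=\abs{\alpha+\beta^\top U}$. Note that $L$ being invertible and $v$ having a $1$ in its last coordinate forces $\beta\ne 0$. Using this representation, the supremum piece follows immediately from Cauchy--Schwarz on $\{\|U\|_2\le t\}$:
\[
\sup_{\xi\in\Xi_0^{\mathrm{joint}}(t)}\ell_{w,b}(\xi)=\abs{\alpha}+t\,\|\beta\|_2.
\]

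Next, I would exploit the spherical symmetry of $U_t$ to reduce the conditional expectation to a one-dimensional quantity depending only on $(\abs{\alpha},\|\beta\|_2)$. Let $Q$ be any orthogonal matrix sending $\beta/\|\beta\|_2$ to the first standard basis vector $e_1$; since $U_t\stackrel{d}{=}QU_t$, we have $\beta^\top U_t\stackrel{d}{=}\|\beta\|_2\,Z$ with $Z:=e_1^\top U_t$. Moreover $Z$ is symmetric around $0$ (apply the orthogonal map $\mathrm{diag}(-1,1,\dots,1)$), so $g(\alpha):=\E[\abs{\alpha+\|\beta\|_2 Z}]$ is a convex, even function of $\alpha$. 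Hence $g$ is minimised at $\alpha=0$, and the same holds for $\abs{\alpha}+t\|\beta\|_2$. Therefore, for every fixed $w$, both terms in $J_\varepsilon$ are minimised over $b$ at the single value
\[
b^\star(w)=\mu_Y-\mu_X^\top w,
\]
which is independent of $\varepsilon$.

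Substituting $\alpha=0$ back gives
\[
J_\varepsilon(w,b^\star(w))=\bigl[(1-\varepsilon)\E\abs{Z}+\varepsilon t\bigr]\cdot\|L^\top[-w;1]\|_2.
\]
The bracket is a strictly positive constant for every $\varepsilon\in[0,1]$ (since $t>0$ and the assumption $\Pr(U\neq 0\mid\|U\|_2\le t)>0$ implies $\E\abs{Z}>0$ by spherical symmetry), so $\arg\min_w J_\varepsilon(w,b^\star(w))=\arg\min_w \|L^\top[-w;1]\|_2$, independently of $\varepsilon$. This proves the first claim.

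For the closed-form under $\Sigma_{XX}\succ 0$, I would simply expand $\|L^\top[-w;1]\|_2^2=[-w;1]^\top\Sigma[-w;1]=w^\top\Sigma_{XX}w-2w^\top\Sigma_{XY}+\Sigma_{YY}$, a strictly convex quadratic in $w$, whose unique minimiser is $w=\Sigma_{XX}^{-1}\Sigma_{XY}$, and then $b=\mu_Y-\mu_X^\top\Sigma_{XX}^{-1}\Sigma_{XY}$ via $b^\star(w)$. The main conceptual step is the symmetry reduction for the conditional expectation; once that is in place, the rest is a convexity argument followed by elementary calculus. I expect the small obstacle to be writing the spherical-symmetry reduction carefully enough to cover the degenerate directions $\beta$ near the last coordinate (handled by a single rotation argument as above).
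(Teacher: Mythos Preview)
Your proposal is correct and follows essentially the same route as the paper: reparametrise via the augmented vector $[-w;1]$, compute the supremum as $|\alpha|+t\|\beta\|_2$, use spherical symmetry of $U_t$ and a rotation to reduce $\beta^\top U_t$ to $\|\beta\|_2 Z$ with $Z$ symmetric, optimise $b$ at $\alpha=0$, and factor $J_\varepsilon(w,b^\star(w))$ as a positive $\varepsilon$-dependent scalar times $\|L^\top[-w;1]\|_2$. The only cosmetic difference is ordering---the paper first uses $U_t\stackrel{d}{=}-U_t$ to optimise $b$ and only afterwards invokes the rotation to get proportionality to $\|\beta\|_2$, whereas you do the rotation first; your observation that $\beta\neq 0$ always (since $L$ is invertible and $[-w;1]\neq 0$) is a slight tidiness gain over the paper's separate handling of the $v=0$ case.
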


\begin{proof}
For fixed \((w,b)\), define
\[
a := \begin{bmatrix}-w\\ 1\end{bmatrix}\in\mathbb{R}^{d+1}.
\]
Then for \(\xi=(X,Y)\),
\[
Y - w^\top X - b = a^\top \xi - b,
\qquad
f_{w,b}(\xi)=|a^\top \xi - b|.
\]
Also define the scalars and vectors
\[
c := a^\top\mu - b \in \mathbb{R},
\qquad
v := L^\top a \in \mathbb{R}^{d+1}.
\]
Using \(\xi=\mu+Lu\), we have \(a^\top\xi-b = c + a^\top L u = c + v^\top u\).

We apply Lemma~\ref{lem:closed-form-sup-piecewise} to the bulk set
$\Xi_0^{\mathrm{joint}}(t)=\{\xi:\ \|\Sigma^{-1/2}(\xi-\mu)\|_2\le t\}$ with $\Sigma:=LL^\top$.
This gives
\begin{equation}
\label{eq:sup-joint-ellipsoid}
\sup_{\xi\in\Xi_0^{\mathrm{joint}}(t)} |a^\top\xi-b|
=
|a^\top\mu-b| + t\|\Sigma^{1/2}a\|_2
=
|a^\top\mu-b| + t\|L^\top a\|_2.
\end{equation}
By assumption, \(U\) is spherically symmetric, i.e.\ \(QU\stackrel{d}{=}U\) for every orthogonal \(Q\).
Also, \(\|QU\|_2=\|U\|_2\), so the event \(\{\|U\|_2\le t\}\) is invariant under orthogonal transforms.
Hence, for any orthogonal \(Q\), the conditional distributions satisfy
\[
QU\ \big|\ \{\|U\|_2\le t\}
\ \stackrel{d}{=}\ 
U\ \big|\ \{\|U\|_2\le t\}.
\]
That is, \(U_t\) is also spherically symmetric.

Conditioning on \(\xi\in\Xi_0^{\mathrm{joint}}(t)\) is equivalent to conditioning on \(\|U\|_2\le t\), so
\[
\E\!\left[|a^\top\xi-b|\ \big|\ \xi\in\Xi_0^{\mathrm{joint}}(t)\right]
=
\E\!\left[|c + v^\top U_t|\right].
\]
Because \(U_t\stackrel{d}{=}-U_t\) (take \(Q=-I\)), we have \(v^\top U_t\stackrel{d}{=}-(v^\top U_t)\), so the random variable
\[
Z := v^\top U_t
\]
has a distribution symmetric about \(0\).
Define \(\phi(c):=\E[|c+Z|]\). The function \(\phi\) is convex in \(c\). Since \(Z\) is symmetric about \(0\), \(0\) is a median. Therefore, \(c=0\) minimises \(\E|c+Z|\), i.e. \(b=a^\top\mu\) minimises the conditional expectation term for fixed \(w\).

Independently, from \eqref{eq:sup-joint-ellipsoid}, the supremum term is \(|a^\top\mu-b| + t\|L^\top a\|_2\), and for fixed \(w\) the only dependence on \(b\) is through \(|a^\top\mu-b|\), which is also minimised at \(b=a^\top\mu\).

Hence, for every fixed \(w\) and every \(\varepsilon\in[0,1]\), the value of \(J_\varepsilon(w,b)\) is minimised over \(b\) by
\begin{equation}
\label{eq:b-star}
b^\star(w) = a^\top\mu = \mu_Y - \mu_X^\top w.
\end{equation}

Substitute \(b=b^\star(w)\), so \(c=a^\top\mu-b=0\).
Then the supremum term becomes, by \eqref{eq:sup-joint-ellipsoid},
\[
\sup_{\xi\in\Xi_0^{\mathrm{joint}}(t)} |a^\top\xi-b^\star(w)|
= t\|L^\top a\|_2.
\]
The conditional expectation term becomes
\[
\E\!\left[|v^\top U_t|\right]
=
\E\!\left[| (L^\top a)^\top U_t |\right].
\]
We now show this expectation is exactly proportional to \(\|L^\top a\|_2\).
If \(v=L^\top a=0\), the expectation is \(0\) and the proportionality holds trivially.
Otherwise, set \(u:=v/\|v\|_2\) so \(\|u\|_2=1\).
Choose an orthogonal matrix \(Q\) such that \(Qu=e_1\) (the first canonical basis vector).
Then
\[
v^\top U_t
=
\|v\|_2u^\top U_t
=
\|v\|_2(Qu)^\top (QU_t)
=
\|v\|_2e_1^\top (QU_t).
\]
Since \(U_t\) is spherically symmetric, \(QU_t\stackrel{d}{=}U_t\), hence
\[
|v^\top U_t|
\stackrel{d}{=}
\|v\|_2|e_1^\top U_t|.
\]
Taking expectations gives
\begin{equation}
\label{eq:mad-proportional}
\E[|v^\top U_t|]=\kappa_t\|v\|_2,\qquad\text{where }\kappa_t := \E\big[|e_1^\top U_t|\big] \in (0,t].
\end{equation}
Note that \(|e_1^\top U_t|\le \|U_t\|_2\le t\) holds a.s., hence \(\kappa_t\le t<\infty\); under the non-degeneracy assumption in the lemma statement, spherical symmetry implies \(\Pr(|e_1^\top U_t|>0)>0\) and thus \(\kappa_t>0\).
Combining the above, the objective at the optimal intercept \eqref{eq:b-star} is
\[J_\varepsilon\big(w,b^\star(w)\big)=(1-\varepsilon)\kappa_t\|L^\top a\|_2+\varepsilon t\|L^\top a\|_2=\big((1-\varepsilon)\kappa_t + \varepsilon t\big)\|L^\top[-w;1]\|_2.
\]
The prefactor \((1-\varepsilon)\kappa_t + \varepsilon t\) is a positive scalar depending on \(\varepsilon\) but not on \(w\).
Therefore, for any \(\varepsilon\in[0,1]\), the set of minimisers over \(w\) is exactly
\[
\arg\min_{w\in\mathbb{R}^d} \|L^\top[-w;1]\|_2,
\]
which does not depend on \(\varepsilon\).
This proves the \(\varepsilon\)-invariance statement.

Since \(\Sigma=LL^\top\) and \(L\) is invertible, minimising \(\|L^\top a\|_2\) is equivalent (by monotonicity of the square root) to minimising
\[
\|L^\top a\|_2^2 = a^\top LL^\top a = a^\top \Sigma a.
\]
Partition \(\Sigma\) as
\[
\Sigma =
\begin{bmatrix}
\Sigma_{XX} & \Sigma_{XY}\\
\Sigma_{YX} & \Sigma_{YY}
\end{bmatrix},
\qquad \Sigma_{YX}=\Sigma_{XY}^\top.
\]
Then, for \(a=[-w;1]\),
\[
a^\top\Sigma a
=
\begin{bmatrix}-w^\top & 1\end{bmatrix}
\begin{bmatrix}
\Sigma_{XX} & \Sigma_{XY}\\
\Sigma_{YX} & \Sigma_{YY}
\end{bmatrix}
\begin{bmatrix}-w\\ 1\end{bmatrix}
=
w^\top\Sigma_{XX}w - 2 w^\top \Sigma_{XY} + \Sigma_{YY}.
\]
If \(\Sigma_{XX}\succ 0\), this is a strictly convex quadratic in \(w\) with unique minimiser given by the first-order condition
\[
\nabla_w\big(w^\top\Sigma_{XX}w - 2 w^\top\Sigma_{XY}\big) = 2\Sigma_{XX}w - 2\Sigma_{XY} = 0,
\]
hence \(w=\Sigma_{XX}^{-1}\Sigma_{XY}\).
The corresponding intercept is \(b=\mu_Y - \mu_X^\top w\) by \eqref{eq:b-star}.
\end{proof}

A joint ellipsoid in \((X,Y)\) is a geometry where both ``typical'' behaviour (the conditional expectation term) and ``worst-case'' behaviour (the supremum term) are controlled by the \emph{same} quadratic form \(a^\top\Sigma a\), with \(a=[-w;1]\). Under an aligned elliptical centre, the expected absolute residual and the worst-case residual inside the ellipsoid are both exactly proportional to \(\|L^\top[-w;1]\|_2\). So \(\varepsilon\) only changes a scalar prefactor and cannot change which \((w,b)\) minimises the objective. This matches our observations in practice: in Figure~\ref{fig:mae-cvar-ellip}, we perform the same parameter sweep for LV with an ellipsoid bulk set. For different $\varepsilon$ values, the MAE and CVaR of LV barely move, and are clustered around ridge regression where the ridge parameter is small.

\begin{figure}[ht]
  \centering
  \includegraphics[width=0.4\columnwidth]{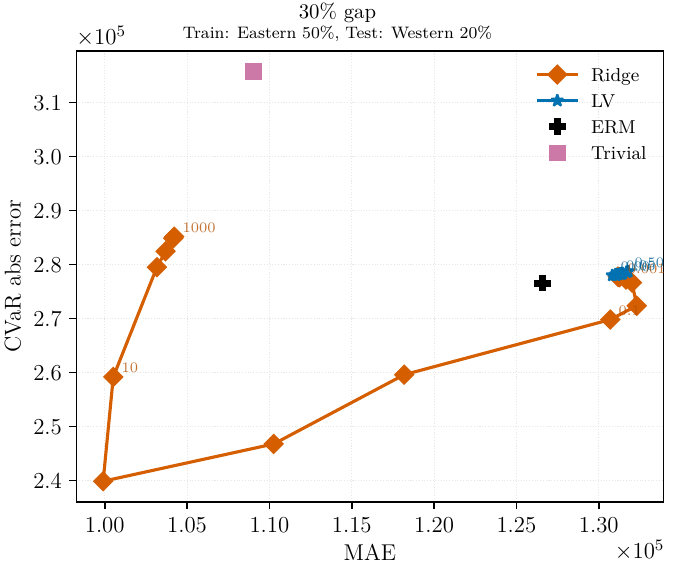}
  \caption{Parameter sweep trajectories of LV (ellipsoid bulk set) and ridge regression on California housing under East $\rightarrow$ West geographic shift with $30\%$ gap.}
  \label{fig:mae-cvar-ellip}
\end{figure}

By contrast, the intersection construction
\[
\Xi_0 = \Xi_{0,X}\times \Xi_{0,Y},
\qquad
\Xi_{0,X}\subset\mathbb{R}^d,\ \Xi_{0,Y}\subset\mathbb{R},
\]
is explicitly asymmetric in \((X,Y)\), which matches the asymmetry of regression itself. We use \(\Xi_{0,X}\) to describe where covariates are ``in distribution'' (an ellipsoid over features), and \(\Xi_{0,Y}\) to describe a plausible range of response values (an interval over prices).

Meanwhile, a joint box bulk set can be overly conservative because it permits independent worst-case movement along every dimension, as mentioned in Appendix~\ref{app:subsec:tv}. This is supported by an additional ablation where we rerun California housing with a box bulk set on $(X,Y)$, defined by the score function $s_{\mathrm{box}}(\xi):=\max_i \left|\xi_i-\mu_{\mathrm{fit},i}\right|/w_i$, where \(w_i := \sqrt{(\Sigma_{\mathrm{fit}})_{ii}}\) is the fitted marginal scale of coordinate \(i\), computed from \(\mathcal{D}_{\mathrm{fit}}\). Here we take $\gamma=0.05$ because there are no longer two separate dimension blocks. The resulting parameter sweeps are reported in Figure~\ref{fig:box-ca}. The MAE--CVaR trajectory for LV with the box bulk set is less stable than that for our $\Xi_{0,X}\times \Xi_{0,Y}$ construction, especially at small $\varepsilon$. This is consistent with the piecewise-linear optimisation target that a box bulk set implies, where small changes in $\varepsilon$ can essentially become tie-breakers between different extreme-point solutions. 

\begin{figure}[ht]
  \centering
  \includegraphics[width=0.4\columnwidth]{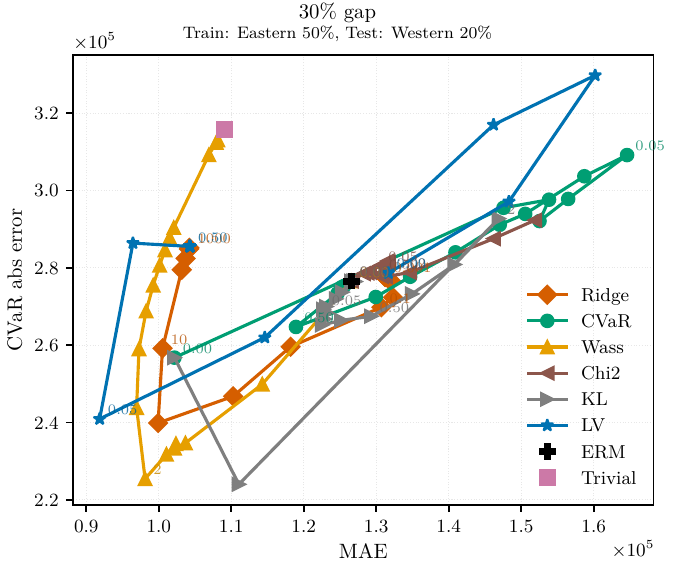}
  \caption{Parameter sweep trajectories of LV (box bulk set) and other baselines on California housing under East $\rightarrow$ West geographic shift with $30\%$ gap.}
  \label{fig:box-ca}
\end{figure}

\section{CivilComments Classification (Section~\ref{sec:exp_civilcomments}): Additional Details and Results}
\label{app:civilcomments}

This appendix provides additional details (Sections~\ref{app:civil-dataset}--\ref{app:civil-selection}) and results (Sections~\ref{app:civil-sweeps}--\ref{app:civilcomments-additional-ablations}) for the CivilComments classification experiment.

Section~\ref{app:civil-dataset} introduces the WILDS dataset splits and the worst-group metric; Section~\ref{app:civil-models} describes the classifier head, software, hardware, and runtime settings; Section~\ref{app:civil-objectives} lists the ERM, GroupDRO, CVaR, \(\chi^2\)-DRO, and LV-Group objectives; Section~\ref{app:civil-centre} gives details for the empirical centre and the bulk set; Section~\ref{app:civil-selection} describes minimax validation and checkpoint selection.

Section~\ref{app:civil-sweeps} gives the \(\varepsilon\)-sweep and LV-Empirical comparison; finally, Section~\ref{app:civilcomments-additional-ablations} reports the bulk-filtering, smoothmax, classifier-head, and feature-dimension ablations.

\subsection{Dataset and Task}
\label{app:civil-dataset}
  We use the CivilComments dataset from the \texttt{WILDS} Python package \citep{koh2021wilds}. We report results on the official WILDS test split, and use the official WILDS validation split for selection. To facilitate DKW selection, we further split the official WILDS training split into an 80\% subset for score fitting and a 20\% subset for score selection.

  Throughout Appendix~\ref{app:civilcomments}, a CivilComments example is denoted by $\xi=(x,y,g)$, where $x$ is the fixed hashed $n$-gram feature vector, $y\in\{0,1\}$ is the binary label, and $g$ is the group indicator. We write $\theta$ for the classifier parameters and $f_\theta(\xi)$ for the per-example cross-entropy loss.

  Worst-group accuracy is the minimum accuracy over the 16 overlapping identity$\times$label slices.
  We use 8 identity indicators and 2 labels.
  A slice is \{identity$_j>0$\} $\cap$ \{$y=k$\} for $k\in\{0,1\}$.
  The identities are:
  \[
    \{\text{male},\text{female},\text{LGBTQ},\text{Christian},\text{Muslim},\text{other religions},\text{black},\text{white}\}.
  \]

\subsection{Model, Features, and Implementation}
  \label{app:civil-models}
  We use hashed $n$-gram features with a lightweight classifier head. We use $D=4096$ hashed features and $n$-gram range = $(1,1)$, with a linear head (a single affine layer). This is implemented in Python~3.11 using \texttt{PyTorch} version 2.9.1 \citep{paszke2017automatic}. Experiments were scheduled with SLURM and run on GPU nodes equipped with AMD EPYC CPUs (32 cores / 64 threads), 576 GB RAM, and 4$\times$ NVIDIA GPUs (48 GB). Wall-clock time in this experiment is dominated by model training (around 90\% of runtime). The LV-Group discounting step adds a small overhead ($5-7\%$) compared to GroupDRO, and all methods have a similar end-to-end runtime for this experiment.

  \subsection{Training Objectives}
  \label{app:civil-objectives}
Let $\{\xi_i=(x_i,y_i,g_i)\}_{i=1}^n$ denote a minibatch of size $n$, and write $f_\theta(\xi_i)$ for the per-example cross-entropy loss.
  ERM minimises the empirical mean loss
  \[
    \frac{1}{n}\sum_{i=1}^n f_\theta(\xi_i).
  \]
  GroupDRO \citep{sagawa2020distributionally} maintains exponentiated-gradient weights $q\in\Delta_G$ over \emph{disjoint} groups and minimises the robust group-weighted mean loss
  \[
    \hat{R}_{\mathrm{GDRO}}(\theta)=\sum_{g=1}^G q_g\left(\frac{1}{n_g}\sum_{i:g_i=g}f_\theta(\xi_i)\right),
  \]
  where $g_i$ is the group label of example $i$ and $n_g$ is the number of minibatch examples in group $g$.
  
  CVaR and $\chi^2$-DRO are implemented as batch-level DRO objectives. CVaR minimises the empirical tail risk
  \[
    \min_{\eta\in\mathbb{R}}\left(\eta + \frac{1}{\varepsilon n}\sum_{i=1}^n (f_\theta(\xi_i)-\eta)_+\right),
    \qquad (u)_+ := \max\{u,0\}.
  \]
  $\chi^2$-DRO uses a $\chi^2$-divergence uncertainty set \citep{namkoong2016advances},
  \[
    \sup_{p\in\Delta_n}\sum_{i=1}^n p_i f_\theta(\xi_i)
    \quad\text{s.t.}\quad \frac{n}{2}\sum_{i=1}^n\Big(p_i-\frac{1}{n}\Big)^2\le \rho,
  \]
  and in our implementation $\chi^2$-DRO uses the $\chi^2$ loss. We parametrise the radius as $\rho=\varepsilon(n-1)/2$ so that $\varepsilon \in[0,1)$ sweeps the full feasible radius range (ERM corresponds to $\varepsilon=0$).
  
  LV-Group uses
  \[
    (1-\varepsilon)\,\hat{R}_{\mathrm{GDRO}}(\theta) + \varepsilon\,\mathrm{smoothmax}_T(\{f_\theta(\xi_i)\}),
  \]
  where $\mathrm{smoothmax}_T$ is the log-sum-exp approximation to $\max_i f_\theta(\xi_i)$:
  \[
    \mathrm{smoothmax}_T(\{f_\theta(\xi_i)\}) = m + T\log\Big(\frac{1}{n}\sum_i \exp((f_\theta(\xi_i)-m)/T) + 10^{-12}\Big),
    \quad m=\max_i f_\theta(\xi_i).
  \]
  We fix $T=0.1$.

\subsection{Empirical Centre and Bulk Set}
\label{app:civil-centre}
  LV objectives use an in-bulk subset $\mathcal{B}$.
  The bulk score depends only on the feature component $x$ of $\xi=(x,y,g)$.
  We fit a class-conditional diagonal Gaussian in hashed feature space on the training subset to obtain an empirical centre $\{(\mu_y,\sigma_y)\}_{y\in\{0,1\}}$.
  We score each example by the label-free diagonal Mahalanobis distance
  \[
    s(x)=\min_{y\in\{0,1\}}
    \sqrt{\sum_{j=1}^D \frac{(x_j-\mu_{y,j})^2}{\sigma_{y,j}^2}}.
  \]
We then estimate a threshold $\tau_{\gamma,\delta}$ with $\gamma = \delta = 0.05$ using a DKW-certified quantile on the score-selection split. Let $m$ be the number of calibration samples and set
  \[
    r_{m,\delta}=\sqrt{\frac{\log(2/\delta)}{2m}}.
  \]
  We take $q = 1-\gamma + r_{m,\delta}$ and set $\tau_{\gamma,\delta}$ to the $q$-empirical quantile of the selection scores. We define
  \[
    \Xi_0=\{\xi=(x,y,g): s(x)\le \tau_{\gamma,\delta}\}.
  \]
  We apply this threshold to the official WILDS training split and define the in-bulk training set $\mathcal{B}:=\{\xi=(x,y,g): \xi\in\Xi_0\}$. LV-based methods and GroupDRO-B train on $\mathcal{B}$, whereas non-LV baselines (ERM, GroupDRO, CVaR, $\chi^2$-DRO) train on the full training split.
  \subsection{Selection and Evaluation}
  \label{app:civil-selection}
  Table~\ref{tab:civilcomments_optionb} reports the mean and worst group accuracies selected by minimax validation for each method, using 20 replications. Minimax validation selection is implemented as follows: for each method, we choose the $\varepsilon$ that maximises the worst-group accuracy on the validation set. The chosen hyperparameters are: $\varepsilon_{\LV} = 0.45$ for LV-Group; $\varepsilon_{\mathrm{CVaR}} = 0.05$ for CVaR; and $\varepsilon_{\chi^2} = 0.5$ for $\chi^2$. We use epoch-wise checkpoint selection on the WILDS validation split, selecting the epoch with the lowest mean loss.

\subsection{Full Parameter Sweeps}
\label{app:civil-sweeps}
Figure~\ref{fig:civilcomments_pareto} reports the trade-offs across a sweep of hyperparameters in the range $\varepsilon \in[0,1]$ with 21 evenly spaced points. We additionally report LV-Empirical, which minimises 
\[
(1-\varepsilon) \frac{1}{n}\sum_{i=1}^n f_\theta(\xi_i) + \varepsilon\mathrm{smoothmax}_T(\{f_\theta(\xi_i)\}),
\]
without a GroupDRO term. Under minimax validation selection, it achieves the highest test mean accuracy $0.899\,(0.000)$ but worst-group accuracy $0.185\,(0.010)$ at $\varepsilon_{\LV-{\mathrm{Em}}} = 0.6$. Compared to LV-Group, it attains high mean accuracy while performing poorly on rare slices.

\begin{figure}[ht]
  \centering
  \includegraphics[width=0.49\columnwidth]{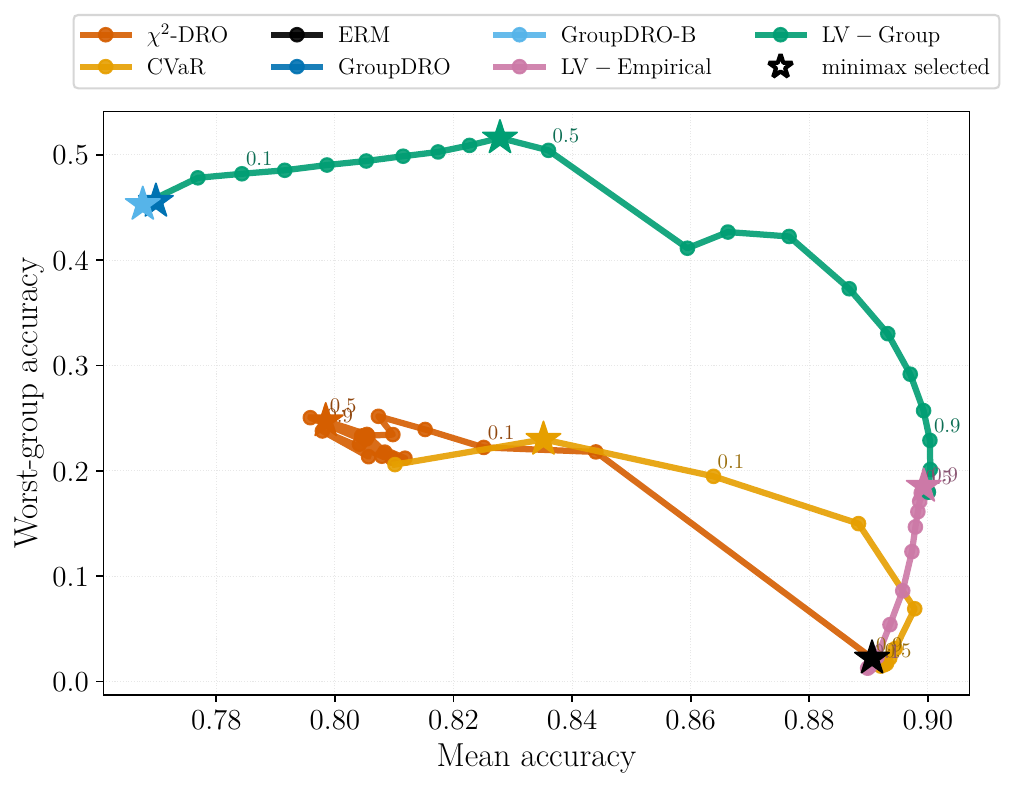}
  \caption{CivilComments test Pareto plot: worst-group accuracy vs mean accuracy. Each curve sweeps a hyperparameter grid. Some $\varepsilon$ values are marked. Stars mark the configuration selected by minimax validation, reported in Table~\ref{tab:civilcomments_optionb}.}
  \label{fig:civilcomments_pareto}
\end{figure}

\subsection{Additional Ablations}
\label{app:civilcomments-additional-ablations}

In this section, we report four additional CivilComments ablations. These control for the effect of bulk filtering, the smoothmax approximation, the classifier head, and the feature dimension.

\begin{figure}[ht]
  \centering
  \includegraphics[width=0.49\linewidth]{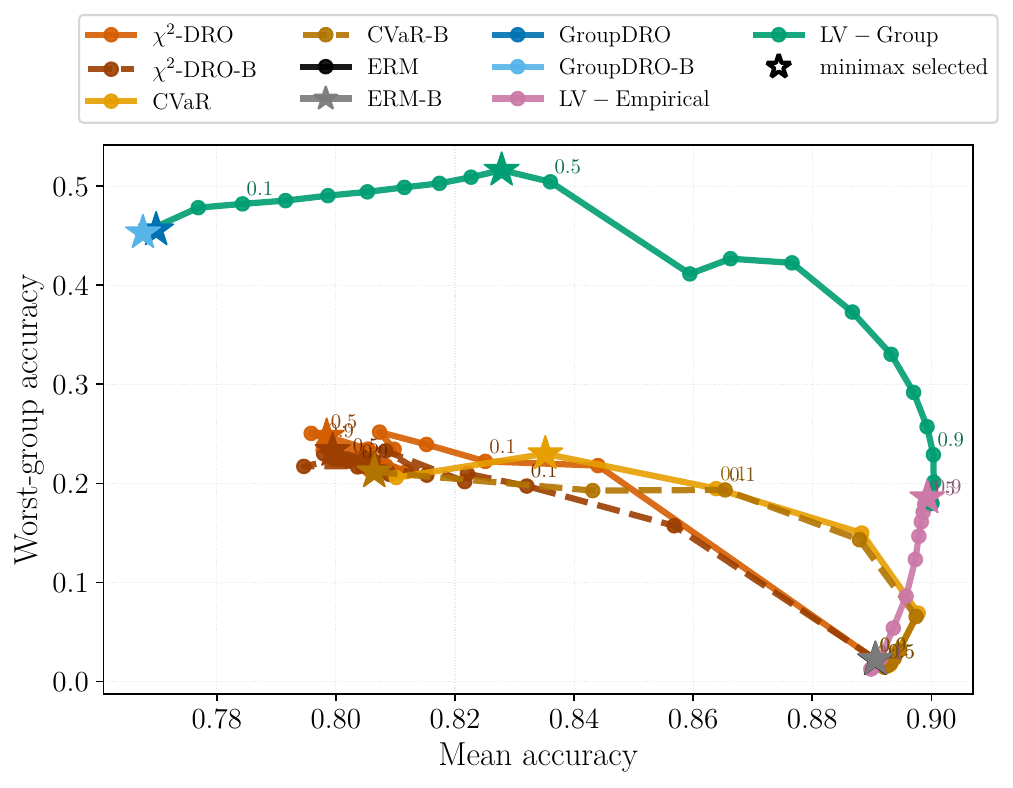}
  \caption{CivilComments test Pareto plot with bulk-filtered ablation in all baselines. $\chi^2$-DRO-B, CVaR-B, ERM-B, and GroupDRO-B are the $\chi^2$-DRO, CVaR, ERM, and GroupDRO baselines trained on the bulk-filtered training set used for LV-based methods. Each curve sweeps a hyperparameter grid. Some $\varepsilon$ values are marked. Stars mark the configuration selected by minimax validation. GroupDRO, GroupDRO-B, ERM, and ERM-B do not depend on $\varepsilon$ and are marked as fixed stars.}
  \label{fig:civilcomments-bulk-filtered-ablation}
\end{figure}

Figure~\ref{fig:civilcomments-bulk-filtered-ablation} includes the bulk-filtered version of all baselines, which remain close to their unfiltered counterparts.

\begin{figure}[ht]
  \centering
  \includegraphics[width=0.49\linewidth]{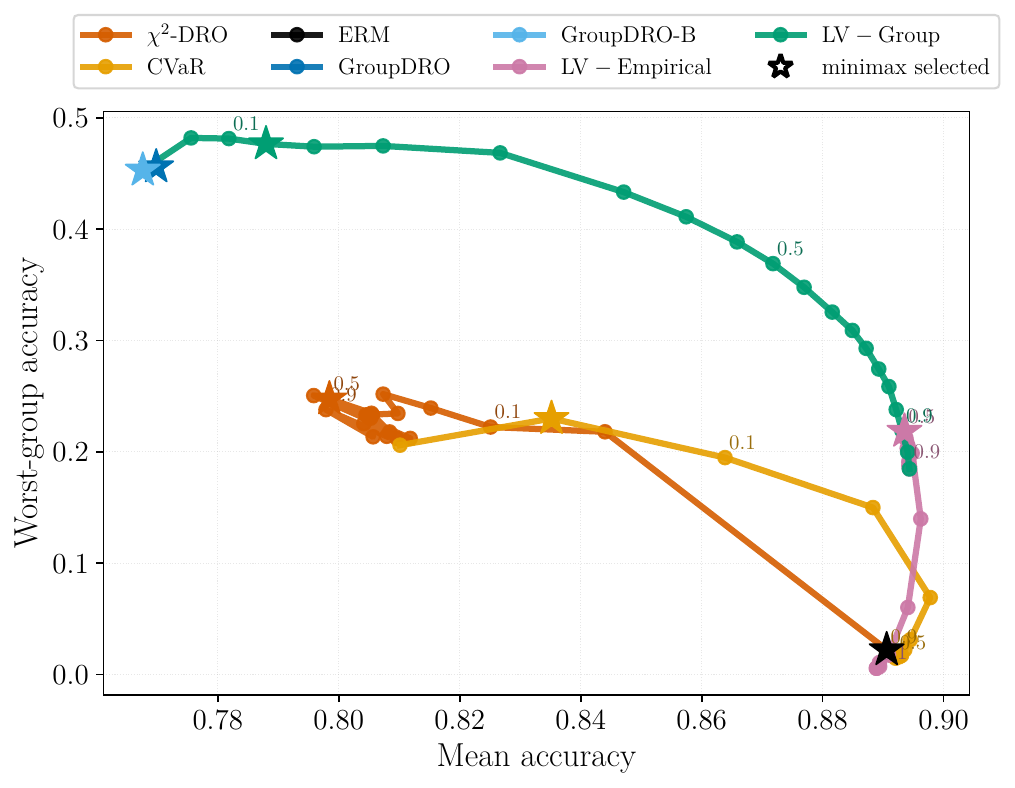}
  \caption{CivilComments test Pareto plot with a $10\times$ sharper smoothmax, using $T=0.01$ instead of the $T=0.1$ value used in the main experiment. Each curve sweeps a hyperparameter grid. Some $\varepsilon$ values are marked. Stars mark the configuration selected by minimax validation. GroupDRO and ERM do not depend on $\varepsilon$ and are marked as fixed stars.}
  \label{fig:civilcomments-sharper-smoothmax}
\end{figure}

Figure~\ref{fig:civilcomments-sharper-smoothmax} shows that the qualitative Pareto trade-off is stable under a $10\times$ sharper smoothmax approximation.

\begin{figure}[ht]
  \centering
  \begin{subfigure}[t]{0.49\linewidth}
    \centering
    \includegraphics[width=\linewidth]{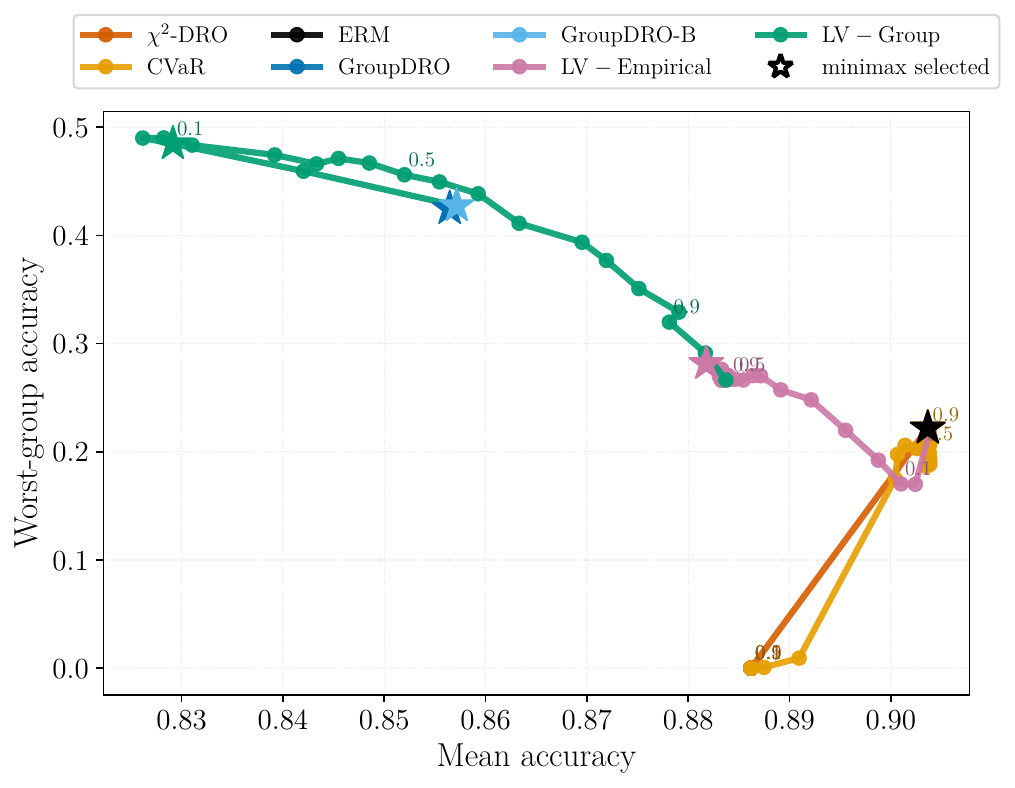}
    \caption{MLP head.}
    \label{fig:civilcomments-mlp-head}
  \end{subfigure}
  \hfill
  \begin{subfigure}[t]{0.49\linewidth}
    \centering
    \includegraphics[width=\linewidth]{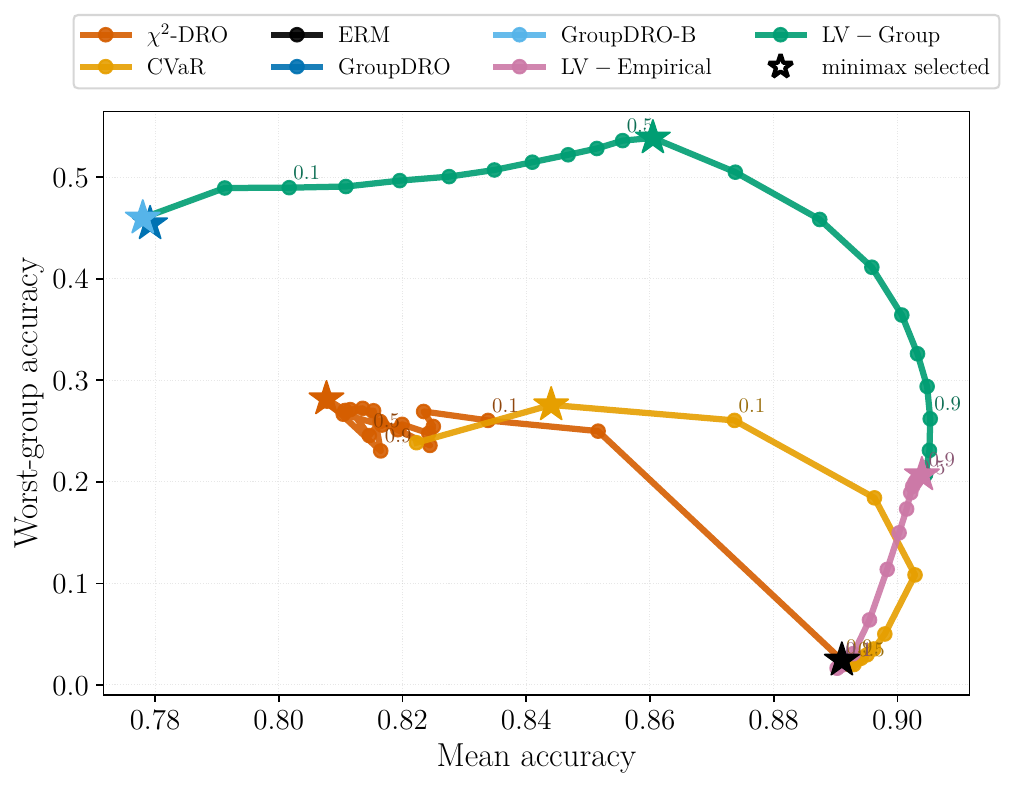}
    \caption{8192 hashed features.}
    \label{fig:civilcomments_pareto-8192}
  \end{subfigure}
  \caption{CivilComments test Pareto plots for classifier-head and feature-dimension ablations: worst-group accuracy vs mean accuracy. Left: the linear classifier head is replaced by an MLP head of hidden width 256. Right: the same parameter sweep is run with double the number of hashed features, $2D=8192$. Each curve sweeps a hyperparameter grid. Some $\varepsilon$ values are marked. Stars mark the configuration selected by minimax validation. GroupDRO and ERM do not depend on $\varepsilon$ and are marked as fixed stars.}
  \label{fig:civilcomments-head-feature-ablation}
\end{figure}

Figure~\ref{fig:civilcomments-mlp-head} shows that the LV-Group effect persists beyond the linear-head setting in the same fixed-feature pipeline. Under minimax selection, LV-Group attains mean accuracy $0.829$ and worst-group accuracy $0.485$, while GroupDRO attains mean accuracy $0.856$ and worst-group accuracy $0.425$; this corresponds to a $14.1\%$ relative improvement in worst-group accuracy with a $3.2\%$ relative drop in mean accuracy. Figure~\ref{fig:civilcomments_pareto-8192} shows that the qualitative Pareto trade-off is also stable when doubling the number of hashed features.

\end{document}

% This document was modified from the file originally made available by
% Pat Langley and Andrea Danyluk for ICML-2K. This version was created
% by Iain Murray in 2018, and modified by Alexandre Bouchard in
% 2019 and 2021 and by Csaba Szepesvari, Gang Niu and Sivan Sabato in 2022.
% Modified again in 2023 and 2024 by Sivan Sabato and Jonathan Scarlett.
% Previous contributors include Dan Roy, Lise Getoor and Tobias
% Scheffer, which was slightly modified from the 2010 version by
% Thorsten Joachims & Johannes Fuernkranz, slightly modified from the
% 2009 version by Kiri Wagstaff and Sam Roweis's 2008 version, which is
% slightly modified from Prasad Tadepalli's 2007 version which is a
% lightly changed version of the previous year's version by Andrew
% Moore, which was in turn edited from those of Kristian Kersting and
% Codrina Lauth. Alex Smola contributed to the algorithmic style files.